
\documentclass{article}

\usepackage[T1]{fontenc}
\usepackage{microtype}
\usepackage{graphicx}
\usepackage{subcaption}
\usepackage{booktabs} 

\usepackage{hyperref}


\usepackage[accepted]{icml2021}
\usepackage{math}
\usepackage{amsmath,amssymb,amsthm,mathrsfs,amsfonts,dsfont,bm,bbm}
\usepackage{nicefrac}       
\usepackage{microtype}      
\usepackage{algorithm}
\usepackage[noend]{algpseudocode}
\usepackage[table]{xcolor}
\usepackage{mathtools}
\usepackage{tabularx}
\usepackage{multirow}
\usepackage{enumitem}
\setlist[enumerate]{topsep=0pt,itemsep=-1ex,partopsep=1ex,parsep=1ex,leftmargin=*}
\usepackage[page,header]{appendix}
\usepackage{hhline}
\usepackage{xspace}
\usepackage{mathtools}
\usepackage{thmtools,thm-restate}

%
%
%
%
%
%
%
%
\newcommand{\M}{\mathcal{M}}

\newcommand{\A}{\mathcal{A}}
\newcommand{\poly}{\text{poly}}
\newcommand{\Cov}{\operatorname{Cov}}

\newtheorem{theorem}{Theorem}[section]
\newtheorem{lemma}{Lemma}[section]
\newtheorem{corollary}{Corollary}[section]

\newtheorem{assumption}{Assumption}[section]
\theoremstyle{definition}
\newtheorem{definition}{Definition}[section]

\newtheorem{remark}{Remark}[section]
\newtheorem*{lemma*}{Lemma}
\newtheorem{proposition}{Proposition}[section]

\newcommand{\Kcal}{\mathcal{K}}
\renewcommand{\S}{\mathcal{S}}

\renewcommand{\E}[2]{\mathbb{E}_{#1}\left[#2\right]}
\newcommand{\interior}[1]{%
  {\kern0pt#1}^{\mathrm{o}}%
}

\newcommand{\1}{\mathbbm{1}}

\newcommand{\Scal}{\mathcal{S}}
\newcommand{\Acal}{\mathcal{A}}
\newcommand{\EE}{\ensuremath{\mathbb{E}}}
\newcommand{\sa}{(s,a)}

\newcommand{\sL}{\mathcal{L}}
\newcommand{\filter}{{\textsc{Filter}}}
\newcommand{\one}{\mathbf{1}}
\DeclarePairedDelimiter{\ceil}{\lceil}{\rceil}

\newcommand{\bR}{\mathbb{R}}

\newcommand{\sever}{{\textsc{Sever}}}


\newcommand{\Ef}{\bar{f}}

\newcommand{\goodset}{I_{\mathrm{good}}}
\newcommand{\Sgood}{S_{\mathrm{good}}}
\newcommand{\Sbad}{S_{\mathrm{bad}}}
\newcommand{\mugood}{\mu_{\mathrm{good}}}
\newcommand{\mubad}{\mu_{\mathrm{bad}}}

\newcommand{\dom}{\mathcal{H}}



\allowdisplaybreaks
\icmltitlerunning{Robust Policy Gradient against Strong Data Corruption}

\begin{document}
\twocolumn[
\icmltitle{Robust Policy Gradient against Strong Data Corruption}




\begin{icmlauthorlist}
\icmlauthor{Xuezhou Zhang}{uwm}
\icmlauthor{Yiding Chen}{uwm}
\icmlauthor{Xiaojin Zhu}{uwm}
\icmlauthor{Wen Sun}{cornell}
\end{icmlauthorlist}

\icmlaffiliation{uwm}{University of Wisconsin--Madison}
\icmlaffiliation{cornell}{Cornell University}

\icmlcorrespondingauthor{Xuezhou Zhang}{xzhang784@wisc.edu}
\icmlcorrespondingauthor{Wen Sun}{ws455@cornell.com}
\icmlkeywords{Machine Learning, ICML}

\vskip 0.3in
]



\printAffiliationsAndNotice{}  
\begin{abstract}
	We study the problem of robust reinforcement learning under adversarial corruption on both rewards and transitions. Our attack model assumes an \textit{adaptive} adversary who can arbitrarily corrupt the reward and transition at every step within an episode, for at most $\epsilon$-fraction of the learning episodes.  Our attack model is strictly stronger than those considered in prior works.
	Our first result shows that no algorithm can find a better than $O(\epsilon)$-optimal policy under our attack model.
	Next, we show that surprisingly the natural policy gradient (NPG) method retains a natural robustness property if the reward corruption is bounded, and can find an $O(\sqrt{\epsilon})$-optimal policy.
	Consequently, we develop a Filtered Policy Gradient (FPG) algorithm that can tolerate even unbounded reward corruption and can find an $O(\epsilon^{1/4})$-optimal policy. We emphasize that FPG is the first that can achieve a meaningful learning guarantee when a constant fraction of episodes are corrupted.
	Complimentary to the theoretical results, we show that a neural implementation of FPG achieves strong robust learning performance on the MuJoCo continuous control benchmarks.
\end{abstract}

\section{Introduction}
Policy gradient methods are a popular class of Reinforcement Learning (RL) methods among practitioners, as they are
amenable to parametric policy classes \cite{schulman2015high, schulman2017proximal}, resilient to modeling assumption mismatches \cite{agarwal2019optimality, agarwal2020pc}, and
they directly optimizing the cost function of interest. However, one current drawback of these methods and most existing RL algorithms is the lack of robustness to data corruption, which severely limits their applications to high-stack decision-making domains with highly noisy data, such as autonomous driving, quantitative trading, or medical diagnosis. 

In fact, data corruption can be a larger threat in the RL paradigm than in traditional supervised learning, because supervised learning is often applied in a controlled environment where data are collected and cleaned by highly-skilled data scientists and domain experts, whereas RL agents are developed to learn in the wild using raw feedbacks from the environment. While the increasing autonomy and less supervision mark a step closer to the goal of general artificial intelligence, they also make the learning system more susceptible to data corruption: autonomous vehicles can misread traffic signs when the signs are contaminated by adversarial stickers \cite{eykholt2018robust}; chatbot can be mistaught by a small group of tweeter users to make misogynistic and racist remarks \cite{neff2016automation}; recommendation systems can be fooled by a small number of fake clicks/reviews/comments to rank products higher than they should be. Despite the many vulnerabilities, \textit{robustness} against data corruption in RL has not been extensively studied only until recently. 

The existing works on \textit{robust} RL are mostly theoretical and can be viewed as a successor of the adversarial bandit literature. However, several drawbacks of this line of approach make them insufficient to modern real-world threats faced by RL agents. We elaborate them below:
\begin{enumerate}[leftmargin=*]
	\item \textbf{Reward vs. transition contamination}: The majority of prior works on adversarial RL
	focus on reward contamination \cite{even2009online, neu2010online, neu2012adversarial, zimin2013online, rosenberg2019online, jin2020learning}, while in reality the adversary often has stronger control during the adversarial interactions. For example, when a chatbot interacts with an adversarial user, the user has full control over both the rewards and transitions during that conversation episode.
	\item \textbf{Density of contamination}: The existing works that do handle adversarial/time-varying transitions can only tolerate \textit{sublinear} number of interactions being corrupted \cite{lykouris2019corruption, cheung2019non, ornik2019learning, ortner2019variational}. These methods would fail when the adversary's attack budget also grows linearly with time, which is often the case in practice.
	\item \textbf{Practicability}: The majority of these work focuses on the setting of tabular MDPs and cannot be applied to real-world RL problems that have large state and action spaces and require function approximations.
\end{enumerate}
In this work, we address the above shortcomings by developing a variant of natural policy gradient (NPG) methods that, under the linear value function assumption, are provably robust against strongly adaptive adversaries, who can \textbf{arbitrarily contaminate} both rewards and transitions in $\epsilon$ fraction of all learning episodes. {Our algorithm does not need to know $\epsilon$, and is adaptive to the contamination level.} Specifically, it guarantees to find an $\tilde O(\epsilon^{1/4})$-optimal policy in a polynomial number of steps. Complementarily, we also present a corresponding lower-bound, showing that no algorithm can consistently find a better than $\Omega(\epsilon)$ optimal policy, even with infinite data. In addition to the theoretical results, we also develop a neural network implementation of our algorithm which is shown to achieve strong robustness performance on the MuJoCo continuous control benchmarks \cite{todorov2012mujoco}, proving that our algorithm can be applied to real-world, high-dimensional RL problems.

\section{Related Work}
\paragraph{RL in standard MDPs.} Learning MDPs with stochastic rewards and transitions is relatively well-studied for the tabular case (that is, a finite number of states and actions).
For example, in the episodic setting, the UCRL2 algorithm \cite{auer2009near} achieves $O(\sqrt{H^4S^2AT})$ regret, where $H$ is the episode length, $S$ is the state space size, $A$ is the action space size, and $T$ is the total number of steps. Later the UCBVI algorithm \cite{azar2017minimax, dann2017unifying} achieves the optimal $O(\sqrt{H^2SAT})$ regret matching the lower-bound \cite{osband2016lower, dann2015sample}. Recent work extends the analysis to various linear setting \cite{jin2020provably,yang2019reinforcement, yang2019sample, zanette2020frequentist, ayoub2020model, zhou2020provably,cai2019provably,du2019provably,kakade2020information} with known linear feature. For unknown feature, \cite{agarwal2020flambe} proposes a sample efficient algorithm that explicitly learns feature representation under the assumption that the transition matrix is low rank.
Beyond the linear settings, there are works assuming the function class has low Eluder dimension which so far is known to be small only for linear functions and generalized linear models \cite{osband2014model}. For more general function approximation, \cite{jiang2017contextual,sun2019model} showed that polynomial sample complexity is achievable as long as the MDP and the given function class together induce low Bellman rank and Witness rank, which include almost all prior models such as tabular MDP, linear MDPs \cite{yang2019reinforcement,jin2020provably}, Kernelized nonlinear regulators \cite{kakade2020information}, low rank MDP \cite{agarwal2020flambe}, and Bellman completion under linear functions \cite{zanette2020frequentist}.

\paragraph{Policy Gradient and Policy Optimization}
Policy Gradient \cite{williams1992simple,sutton1999policy} and Policy optimization methods are widely used in practice \cite{kakade2002approximately,schulman2015high, schulman2017proximal} and have demonstrated amazing performance on challenging applications \cite{berner2019dota,akkaya2019solving}. Unlike model-based approach or Bellman-backup based approaches, PG methods directly optimize the objective function and are often more robust to model-misspecification \cite{agarwal2020pc}. In addition to being robust to model-misspecification, we show in this work that vanilla NPG is also robust to constant fraction and bounded adversarial corruption on both rewards and transitions.

\paragraph{RL with adversarial rewards.}  Almost all prior works on adversarial RL study the setting where the reward functions can be adversarial but the transitions are still stochastic and remain unchanged throughout the learning process. Specifically, at the beginning of each episode, the adversary must decide on a reward function for this episode, and can not change it for the rest of the episode. Also, the majority of these works focus on tabular MDPs. Early works on adversarial MDPs assume a known transition
function and full-information feedback. For example,
\cite{even2009online} proposes the algorithm MDP-E and
proves a regret bound of $\tilde O(\tau \sqrt{T\log A})$ in the non-episodic setting, where $\tau$ is the mixing time of the MDP;
Later, \cite{zimin2013online} consider the episodic
setting and propose the O-REPS algorithm which applies
Online Mirror Descent over the space of occupancy measures, a key component adopted by \cite{rosenberg2019online} and \cite{jin2020learning}. O-REPS achieves the optimal regret
$\tilde O(\sqrt{H^2T\log(SA)})$ in this setting.
Several works consider the harder bandit feedback model while still assuming known transitions. The work \cite{neu2010online}
achieves regret $\tilde O(\sqrt{H^3AT}/\alpha)$ assuming that all states are reachable with some probability $\alpha$ under all policies. Later, \cite{neu2010online} eliminates the dependence on $\alpha$
but only achieves $O(T^{2/3})$ regret. The O-REPS algorithm
of \cite{zimin2013online} again achieves the optimal regret $\tilde O(\sqrt{H^3SAT})$.
To deal with unknown transitions, \cite{neu2012adversarial} proposes
the Follow the Perturbed Optimistic Policy algorithm and achieves $\tilde O(\sqrt{H^2S^2A^2T})$ regret given full-information feedback. Combining the idea of confidence sets and Online Mirror Descent, the UC-O-REPS algorithm of \cite{rosenberg2019online} improves the
regret to $\tilde O(\sqrt{H^2S^2AT})$.
A few recent works start to consider the hardest setting assuming unknown transition as well as bandit feedback. \cite{rosenberg2019online} achieves $O(T^{3/4})$ regret, which is improved by \cite{jin2020learning} to $\tilde O(\sqrt{H^2S^2AT})$, matching the regret of UC-O-REPS in the full information setting. Also, note that the lower
bound of $\Omega(\sqrt{H^2SAT})$ \cite{jin2018q} still applies. In summary, it is found that on tabular MDPs with oblivious reward contamination, an $O(\sqrt{T})$ regret can still be achieved. Recent improvements include best-of-both-worlds algorithms \cite{jin2020simultaneously}, data-dependent bound \cite{lee2020bias} and extension to linear function approximation \cite{neu2020online}.

\paragraph{RL with adversarial transitions and rewards.} Very few prior works study the problem of both adversarial transitions and adversarial rewards, in fact, only one that we are aware of \cite{lykouris2019corruption}. They study a setting where only a constant $C$ number of episodes can be corrupted by the adversary, and most of their technical effort dedicate to designing an algorithm that is agnostic to $C$, i.e. the algorithm doesn't need to know the contamination level ahead of time. As a result, their algorithm takes a multi-layer structure and cannot be easily implemented in practice. Their algorithm achieves a regret of $O(C\sqrt{T})$ for tabular MDPs and $O(C^2\sqrt{T})$ for linear MDPs, which unfortunately becomes vacuous when $C \geq \Omega(\sqrt{T})$ and $C \geq \Omega(T^{1/4})$, respectively. Note that the contamination ratio $C / T$ approaches zero when $T$ increases,  and hence their algorithm cannot handle constant fraction contamination.
Notably, in all of the above works, the adversary can \textit{partially adapt} to the learner's behavior, in the sense that the adversary can pick an adversary MDP $\M_k$ or reward function $r_k$ at the start of episode $k$ based on the history of interactions so far. However, it can no longer adapt its strategy after the episode starts, and therefore, the learner can still use a randomization strategy to trick the adversary.

A separate line of work studies the \textit{online MDP} setting, where the MDP is not adversarial but \textit{slowly} change over time, and the amount of change is bounded under a total-variation metric \cite{cheung2019non, ornik2019learning, ortner2019variational, domingues2020kernel}. Due to the slow-changing nature of the environment, algorithms in these works typically uses a sliding window approach where the algorithm keeps throwing away old data and only learns a policy from recent data, assuming that most of them come from the MDP that the agent is currently experiencing. These methods typically achieve a regret in the form of $O(\Delta^c K^{1-c})$, where $\Delta$ is the total variation bound. It is worth noting that all of these regrets become vacuous when the amount of variation is linear in time, i.e. $\Delta \geq \Omega(T)$. Separately, it is shown that when both the transitions and the rewards are adversarial in every episode, the problem is at least as hard as stochastic parity problem, for which no computationally efficient algorithm exists \cite{yadkori2013online}.

\paragraph{Learning robust controller.} A different type of robustness has also been considered in RL \cite{pinto2017robust, derman2020bayesian} and robust control \cite{zhou1998essentials, petersen2012robust}, where the goal is to learn a control policy that is robust to potential misalignment between the training and deployment environment. Such approaches are often conservative, i.e. the learned polices are sub-optimal even if there is no corruption. In comparison, our approach can learn as effectively as standard RL algorithms without corruption. Interestingly, parallel to our work, a line of concurrent work in the robust control literature \cite{zhang2020policy, zhang2020stability, zhang2021derivative} has also found that policy optimization method enjoys some implicit regularization/robustness property that can automatically converge to robust control policies. An interesting future direction could be to understand the connection between these two kind of robustness.

\paragraph{Robust statistics.} One of the most important discoveries in modern robust statistics is that there exists computationally efficient and robust estimator that can learn near-optimally even under the strongest adaptive adversary. 
For example, in the classic problem of Gaussian mean estimation, the recent works \cite{diakonikolas2016robust, lai2016agnostic} present the first computational and sample-efficient algorithms.
The algorithm in \cite{diakonikolas2016robust} can generate a robust mean estimate $\hat\mu$, such that $\|\hat\mu-\mu\|_2\leq O(\epsilon \sqrt{\log{(1/\epsilon)}})$ under $\epsilon$ corruption. 
Crucially, the error bound does not scale with the dimension $d$ of the problem, suggesting that the estimator remains robust even in high dimensional problems. 
Similar results have since been developed for robust mean estimation under weaker assumptions \cite{diakonikolas2017being}, and for supervised learning and unsupervised learning tasks \cite{charikar2017learning, diakonikolas2019sever}. We refer readers to \cite{diakonikolas2019recent} for a more thorough survey of recent advances in high-dimensional robust statistics.

\section{Problem Definitions}
\label{section:definitions}
A Markov Decision Process (MDP) $\mathcal{M} = (\Scal, \Acal, P, r, \gamma,\mu_0)$
is specified by a state space $\Scal$, an action space $\Acal$, a
transition model $P: \mathcal{S} \times \mathcal{A} \rightarrow \Delta(\mathcal{S})$ (where $\Delta(\mathcal{S})$ denotes a distribution over $\mathcal{S}$),
a (stochastic and possibly unbounded) reward function $r: \Scal\times \Acal \to \Delta(\R)$,
a discounting factor $\gamma \in [0, 1)$, and an initial state
distribution $\mu_0\in\Delta(\Scal)$, i.e. $s_0\sim \mu_0$. In this paper, we assume that $\Acal$ is a small and finite set, and denote $A = \lvert\Acal\rvert$. A policy $\pi: \Scal \to
\Delta(\Acal)$
specifies a decision-making strategy in which the agent chooses
actions based on the current state, i.e., $a \sim\pi(\cdot | s)$.

The value function $V^\pi: \Scal \to \mathbb{R}$ is
defined as the expected discounted sum of future rewards, starting at state $s$
and executing $\pi$, i.e.
$
V^\pi(s) := \EE \left[\sum_{t=0}^\infty \gamma^t  r(s_t, a_t)
| \pi, s_0 = s\right],
$
where the expectation is taken with respect to the randomness of the policy and environment $\mathcal{M}$.
Similarly, the \emph{state-action} value function $Q^\pi: \Scal
\times \Acal \to \mathbb{R}$
is defined as
$
Q^\pi(s,a) := \EE\left[\sum_{t=0}^\infty \gamma^t  r(s_t, a_t) | \pi,
s_0 = s, a_0 = a \right].
$

We define the discounted state-action
distribution $d_{s}^\pi$ of a policy $\pi$:
$
d_{s'}^\pi(s,a) := (1-\gamma) \sum_{t=0}^\infty \gamma^t {\Pr}^\pi(s_t=s,a_t=a|s_0=s'),
$
where $\Pr^\pi(s_t=s,a_t=a|s_0=s')$ is the 
probability that $s_t=s$ and $a_t=a$, after we execute $\pi$ from $t=0$ onwards starting at state
$s'$ in model $\mathcal{M}$. Similarly, we define $d^{\pi}_{s',a'}(s,a)$ as:
$
d^{\pi}_{s',a'}(s,a) := (1-\gamma) \sum_{t=0}^{\infty} \gamma^t {\Pr}^{\pi}(s_t = s, a_t = s | s_0=s', a_0 = a').
$
For any state-action distribution $\nu$, we write $d^{\pi}_{\nu}(s,a):= \sum_{(s',a')\in\mathcal{S}\times\mathcal{A}} \nu(s',a') d^{\pi}_{s',a'}(s,a)$. For ease of presentation, we assume that the agent can reset to $s_0\sim \mu_0$ at any point in the trajectory. We denote $d^{\pi}_{\nu}(s) = \sum_{a}d^{\pi}_{\nu}(s,a)$. 

The goal of the agent is to find a policy $\pi$
that maximizes the expected value from the starting state $s_0$, i.e. the optimization problem is:
$  \max_\pi V^{\pi}(\mu_0) \defeq \EE_{s\sim\mu_0}V^{\pi}(s)$,
where the $\max$ is over some policy class.

For completeness, we specify a $d^{\pi}_{\nu}$-sampler and an unbiased estimator of $Q^{\pi}(s,a)$ in Algorithm~\ref{alg:sampler_est}, which are standard in discounted MDPs~\cite{agarwal2019optimality,agarwal2020pc}. The $d^{\pi}_\nu$ sampler samples $\sa$ i.i.d from $d^{\pi}_{\nu}$, and the $Q^{\pi}$ sampler returns an unbiased estimate of $Q^{\pi}(s,a)$ for a given pair $(s,a)$ by a single roll-out from $\sa$. 
Later, when we define the contamination model and the sample complexity of learning, we treat each call of $d^{\pi}_{\nu}$-sampler (optionally followed by a $Q^{\pi}(s,a)$-estimator) as a \emph{single episode}, as in practice both of these procedures can be achieved in a single roll-out from $\mu_0$.

\begin{assumption}[Linear Q function]\label{ass:linearMDP}
	For the theoretical analysis, we focus on the setting of linear value function approximation. In particular, we assume that there exists a feature map $\phi: \S\times \A\rightarrow \R^d$, such that for any $(s,a)\in \S \times \A$ and any policy $\pi:\S\to \Delta_\A$, we have
	\begin{eqnarray}
		Q^\pi(s,a) = \phi(s,a)^\top w^\pi \text{, for some } \|w^\pi\|\leq W
	\end{eqnarray}
	We also assume that the feature is bounded, i.e. $\max_{s,a}\|\phi(s,a)\|_2\leq 1$, and the reward function has bounded first and second moments, i.e. $\E{}{r(s,a)}\in [0,1]$ and $\mbox{Var}(r(s,a))\leq \sigma^2$ for all $(s,a)$.
\end{assumption}

\begin{remark}
	Assumption \ref{ass:linearMDP} is satisfied, for example, in tabular MDPs and linear MDPs of \cite{jin2020provably} or \cite{yang2019sample}.
	Unlike most theoretical RL literature, we allow the reward to be stochastic and unbounded. Such a setup aligns better with applications with a low signal-to-noise ratio and motivates the requirement for nontrivial robust learning techniques.
\end{remark}
\paragraph{Notation.}
When clear from context, we write $d^\pi(s,a)$ and $d^\pi(s)$ to denote
$d_{\mu_0}^\pi(s,a)$ and $d^{\pi}_{\mu_0}(s)$ respectively.
For iterative algorithms which obtain policies at each episode, we let $V^{i}$,$Q^{i}$ and $A^{i}$ denote the
corresponding quantities associated with episode $i$.
For a vector $v$, we denote $\|v\|_2=\sqrt{\sum_i v_i^2}$, $\|v\|_1=\sum_i |v_i|$,
and $\|v\|_\infty=\max_i |v_i|$. 
We use $\text{Uniform}(\Acal)$ (in short $\text{Unif}_{\Acal}$) to represent a uniform distribution over the set $\Acal$.

\subsection{The Contamination Model}
In this paper, we study the robustness of policy gradient methods under the \textit{$\epsilon$-contamination model}, a widely studied adversarial model in the robust statistics literature, e.g. see \cite{diakonikolas2016robust}. In the classic robust mean estimation problem, given a dataset $D$ and a learning algorithm $f$, the $\epsilon$-contamination model assumes that the adversary has full knowledge of the dataset $D$ and the learning algorithm $f$, and can arbitrarily change $\epsilon$-fraction of the data in the dataset and then send the contaminated data to the learner. The goal of the learner is to identify an $O(\poly(\epsilon))$-optimal estimator of the mean despite the $\epsilon$-contamination. 

Unfortunately, the original $\epsilon$-contamination model is defined for the offline learning setting and does not directly generalize to the online setting, because it doesn't specify the availability of knowledge and the order of actions between the adversary and the learner in the time dimension. 
In this paper, we define the $\epsilon$-contamination model for online learning as follows:
\begin{definition}[$\epsilon$-contamination model for Reinforcement Learning]\label{def: eps_con}
	Given $\epsilon$ and the clean MDP $\M$, an $\epsilon$-contamination adversary operates as follows:
	\begin{enumerate}[leftmargin=*]
		\item The adversary has full knowledge of the MDP $\M$ and the learning algorithm, and observes all the historical interactions.I 
		\item At any time step $t$, the adversary observes the current state-action pair $(s_t,a_t)$, as well as the reward and next state returned by the environment, $(r_t,s_{t+1})$. He then can decide whether to replace $(r_t,s_{t+1})$ with an arbitrary reward and next state $(r^\dagger_t,s^\dagger_{t+1})\in\R \times \S$.
		\item The only constraint on the adversary is that if the learning process terminates after $K$ episodes, he can contaminate in at most $\epsilon K$ episodes.
	\end{enumerate}
\end{definition}
Compared to the standard adversarial models studied in online learning \cite{shalev2011online}, adversarial bandits \cite{bubeck2012regret, lykouris2018stochastic, gupta2019better} and adversarial RL \cite{lykouris2019corruption, jin2020learning}, the $\epsilon$-contamination model in Definition \ref{def: eps_con} is stronger in several ways:
(1) The adversary can adaptively attack after observing the action of the learner as well as the feedback from the clean environments;
(2) the adversary can perturb the data arbitrarily (any real-valued reward and any next state from the state space) rather than sampling it from a pre-specified bounded adversarial reward function or adversarial MDP.

Given the contamination model, our first result is a lower-bound, showing that under the $\epsilon$-contamination model, one can only hope to find an $O(\epsilon)$-optimal policy. Exact optimal policy identification is not possible even with infinite data.
\begin{theorem}[lower bound]\label{thm:lb}
	For any algorithm, there exists an MDP such that the algorithm fails to find an $\left(\frac{\epsilon}{2(1-\gamma)}\right)$-optimal policy under the $\epsilon$-contamination model with a probability of at least $1/4$.
\end{theorem}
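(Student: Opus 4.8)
The plan is a two--point indistinguishability argument adapted to the $\epsilon$-contamination model. I would construct two MDPs $\M^{(1)},\M^{(2)}$ whose sets of $\tfrac{\epsilon}{2(1-\gamma)}$-optimal policies are disjoint, together with an auxiliary ``neutral'' MDP $\M^{(0)}$, and then, for each of $\M^{(1)},\M^{(2)}$, exhibit an $\epsilon$-contamination adversary that rewrites the feedback so that the learner sees exactly the clean feedback distribution of $\M^{(0)}$. The learner's observations then have the same law in the two contaminated worlds, hence so does its output policy; since no policy is $\tfrac{\epsilon}{2(1-\gamma)}$-optimal in both $\M^{(1)}$ and $\M^{(2)}$, the output is bad in at least one of them, and a short counting step turns this into a failure probability of at least $1/4$ for one fixed instance.

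For the construction I would take the simplest MDP: a single state $\S=\{s\}$ with a self-loop, two actions $\A=\{1,2\}$, and only the reward differing across instances. Fix a slack constant $c\in(0,1)$, say $c=\tfrac12$. Let all rewards be Bernoulli with means: $\EE[r(s,1)]=(2-c)\epsilon,\ \EE[r(s,2)]=0$ in $\M^{(1)}$; $\EE[r(s,1)]=0,\ \EE[r(s,2)]=(2-c)\epsilon$ in $\M^{(2)}$; and $\EE[r(s,1)]=\EE[r(s,2)]=(1-\tfrac c2)\epsilon$ in $\M^{(0)}$ (assume $\epsilon$ below a constant so these are valid). With $\phi(s,a)=e_a\in\R^2$ the $Q$-function is linear with $\|w^\pi\|_2\le\sqrt2/(1-\gamma)$, the features are bounded by $1$, and the $[0,1]$-valued Bernoulli rewards have bounded moments, so Assumption~\ref{ass:linearMDP} holds. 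A stationary policy is a number $p=\pi(1\mid s)\in[0,1]$, with $V^\pi(\mu_0)=\tfrac{(2-c)\epsilon}{1-\gamma}\,p$ in $\M^{(1)}$ and $\tfrac{(2-c)\epsilon}{1-\gamma}(1-p)$ in $\M^{(2)}$; thus $\pi$ is $\tfrac{\epsilon}{2(1-\gamma)}$-optimal in $\M^{(1)}$ only if $p\ge\tfrac{3-2c}{2(2-c)}$ (equal to $\tfrac23$ when $c=\tfrac12$) and in $\M^{(2)}$ only if $1-p\ge\tfrac{3-2c}{2(2-c)}$; these two conditions are incompatible for every $c\in(0,1)$.

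Next I would check indistinguishability. For each action $a$, the reward laws of $\M^{(i)}$ and $\M^{(0)}$ are Bernoullis whose means differ by $(1-\tfrac c2)\epsilon$, so their total-variation distance is $(1-\tfrac c2)\epsilon<\epsilon$. Fixing the optimal coupling of these two Bernoullis, the adversary for $\M^{(i)}$ observes the clean reward each episode and, with probability at most $(1-\tfrac c2)\epsilon$ and independently across episodes, overwrites it with the coupled $\M^{(0)}$ reward (the next state is never touched). Over a horizon of $K$ episodes the number of overwrites is stochastically dominated by $\mathrm{Binomial}\big(K,(1-\tfrac c2)\epsilon\big)$, which a Chernoff bound keeps below $\epsilon K$ except with probability $\delta_K\to0$; letting the adversary cease overwriting once it reaches the budget makes it always admissible while altering the learner's view from a clean run of $\M^{(0)}$ only on this rare event. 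Hence in each contaminated world the law of the output $\widehat\pi$ is within total variation $\delta_K$ of the law $\mu$ produced on a genuine clean run of $\M^{(0)}$.

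Finally I would put the pieces together: writing $p$ for the random parameter of $\widehat\pi$ under $\mu$, the events ``$\widehat\pi$ is $\tfrac{\epsilon}{2(1-\gamma)}$-optimal in $\M^{(1)}$'' and ``\ldots in $\M^{(2)}$'' are disjoint by the gap computation, so their $\mu$-probabilities sum to at most $1$ and the two corresponding failure probabilities sum to at least $1$; hence for some fixed $i^\star$ the failure probability under $\mu$ is at least $\tfrac12$, and transferring through the $\delta_K$ total-variation bound gives that the algorithm fails to return a $\tfrac{\epsilon}{2(1-\gamma)}$-optimal policy for $\M^{(i^\star)}$ with probability at least $\tfrac12-\delta_K\ge\tfrac14$ for $K$ large enough. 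I expect the main obstacle to be respecting the \emph{hard} corruption budget $\epsilon K$ rather than an in-expectation budget --- this is exactly why the construction keeps the per-episode overwrite probability strictly below $\epsilon$ (the role of the slack $c$), so that concentration, not merely the mean, controls the number of corruptions; a secondary nuisance is handling algorithms with data-dependent stopping, which one sidesteps by fixing the horizon $K$ in advance (the adversary knows it).
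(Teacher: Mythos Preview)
Your high-level strategy matches the paper's exactly: a two-point indistinguishability argument in which the adversary uses a coupling to make one instance look like another, so the learner's output has the same law in both contaminated worlds and must fail on at least one. The technical choices differ: the paper perturbs the \emph{transition} kernel (three states, the two non-initial ones absorbing with deterministic rewards $1$ and $0$) rather than the reward law, and it has the adversary make $M_1$ look directly like $M_2$ instead of routing both through a neutral $\M^{(0)}$. Your neutral-MDP device and the slack constant $c$ paired with a Chernoff bound are a clean way to enforce the hard budget; the paper instead lets the per-episode disagreement probability be exactly $\epsilon$ and uses that the median of $\mathrm{Binomial}(T,\epsilon)$ is at most $\lceil T\epsilon\rceil$ to get success probability $\ge 1/2$.

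There is one genuine gap in your construction as written. In the paper's interaction model an episode is a multi-step rollout (one call to the $d^\pi_\nu$-sampler followed by a $Q^\pi$-estimator that runs for a $\mathrm{Geometric}(1-\gamma)$ number of steps), and the adversary's budget is counted in \emph{episodes}, not steps. In your single-state self-loop MDP, every step of such a rollout produces a fresh reward sample, so under the optimal Bernoulli coupling each of the $\sim 1/(1-\gamma)$ rewards in an episode independently disagrees with its $\M^{(0)}$ counterpart with probability $(1-\tfrac c2)\epsilon$. The probability that the episode needs \emph{some} overwrite is therefore $1-(1-(1-\tfrac c2)\epsilon)^{L}\approx (1-\tfrac c2)\epsilon/(1-\gamma)$, which exceeds $\epsilon$ once $\gamma$ is bounded away from $0$; your claim that the number of overwritten episodes is stochastically dominated by $\mathrm{Binomial}\bigl(K,(1-\tfrac c2)\epsilon\bigr)$ then fails, and the budget argument breaks. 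The paper's construction sidesteps this precisely because after the first transition the agent lands in an absorbing state on which $M_1$ and $M_2$ agree, so at most one step per episode can ever require correction regardless of how long the rollout runs. Your proof is easily repaired by the same device: add a single absorbing state (with identical reward in all three of $\M^{(0)},\M^{(1)},\M^{(2)}$) reached deterministically after the first step, so that each episode contains exactly one informative reward and your per-episode bound becomes correct for arbitrary $\gamma$.
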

The high-level idea is that we can construct two MDPs, $M$ and $M'$, with the following properties: 1. No policy can be $O(\epsilon/(1-\gamma))$ optimal on both MDP simultaneously. 2. An $\epsilon$-contamination adversary can with large probability mimic one MDP via contamination in the other, regardless of the learner's behavior. Therefore, under contamination, the learner will not be able to distinguish $M$ and $M'$ and must suffer $\Omega(\epsilon/(1-\gamma))$ gap on at least one of them.

\subsection{Background on NPG} 
Given a differentiable parameterized policy $\pi_{\theta}: \mathcal{S}\to\Delta(\mathcal{A})$, NPG
can be written in the following actor-critc style update form. With the dataset $\{s_i,a_i, \widehat{Q}^{\pi_{\theta}}(s_i,a_i)\}_{i=1}^N$ where $s_i,a_i\sim d^{\pi_\theta}_{\nu}$, and $\widehat{Q}^{\pi_\theta}(s_i,a_i)$ is unbiased estimate of $Q^{\pi_\theta}(s,a)$ (e.g., via $Q^{\pi}$-estimator), we have
\begin{align}\label{eq:NPG_update}
	&\widehat{w} \in \argmin_{w: \|w\|_2 \leq W} \sum_{i=1}^N \left( w^{\top}\nabla \log\pi_\theta(a_i|s_i) - \widehat{Q}^{\pi_{\theta}}(s_i,a_i) \right)^2\nonumber\\
	&\theta' = \theta + \eta \widehat{w}.
\end{align} 
In theoretical part of this work, we focus on softmax linear policy, i.e., $\pi_{\theta}(a|s) \propto \exp(\theta^{\top}\phi(s,a))$. In this case, note that $\nabla \log\pi_\theta(a|s) = \phi(s,a)$, and
it is not hard to verify that the policy update procedure is equivalent to:
\begin{align*}
	\pi_{\theta'}(a|s) \propto \pi_{\theta}(a|s) \exp\left( \eta \widehat{w}^{\top} \phi(s,a) \right), \quad \forall s,a,
\end{align*} which is equivalent to running Mirror Descent on each state with a reward vector $\widehat{w}^{\top}\phi(s,\cdot)\in\mathbb{R}^{|\A|}$. We refer readers to \cite{agarwal2019optimality} for more detailed explanation of NPG and the equivalence between the form in Eq.~\eqref{eq:NPG_update} and the classic form that uses Fisher information matrix. Similar to \cite{agarwal2019optimality}, we make the following assumption of having access to an exploratory reset distribution, under which it has been shown that NPG can converge to the optimal policy without contamination.

\begin{assumption}[Relative condition number]\label{assum:conditioning} With respect to any state-action distribution $\upsilon$,  define:
	\[
	\Sigma_\upsilon = \EE_{s,a \sim \upsilon}\left[ \phi_{s,a}\phi_{s,a}^\top\right],
	\]
	and define
	\[
	\sup_{w \in \R^d} \ \frac{w^\top \Sigma_{d^\star} w}
	{w^\top \Sigma_\nu w}
	=\kappa\text{, where }d^*(s,a) = d^{\pi^*}_{\mu_0}(s)\circ \text{Unif}_{\A}(a)
	\]
	We assume $\kappa$ is finite and small w.r.t. a reset distribution $\nu$ available to the learner at training time.
\end{assumption}

\section{The Natural Robustness of NPG Against Bounded corruption}\label{sec:npg}
Our first result shows that, surprisingly, NPG can already be robust against $\epsilon$-contamination, if the adversary can only generate small and bounded rewards. In particular, we assume that the adversarial rewards is bounded in $[0,1]$ (the feature $\phi(s,a)$ is already bounded).

\begin{theorem}[Natural robustness of NPG]\label{thm:npg_natural}
	Under assumptions \ref{ass:linearMDP} and \ref{assum:conditioning}, given a desired optimality gap $\alpha$, there exists a set of hyperparameters agnostic to the contamination level $\epsilon$, such that 
	Algorithm \ref{alg:q_npg_sample} guarantees with a $poly(1/\alpha, 1/(1-\gamma), |\A|,W,\sigma,\kappa)$ sample complexity that under $\epsilon$-contamination with adversarial rewards bounded in $[0,1]$, we have
	\begin{eqnarray}
		\EE\left[V^{*}(\mu_0) - V^{\hat\pi}(\mu_0)\right]
		\leq
		\tilde O\left(\max\left[\alpha, W\sqrt{\frac{|\Acal|\kappa\epsilon}{(1-\gamma)^3}} \mbox{ }\right]\right)\nonumber
	\end{eqnarray}
	where $\hat\pi$ is the uniform mixture of $\pi^{(1)}$ through $\pi^{(T)}$.
\end{theorem}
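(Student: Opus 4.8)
The plan is to run the standard mirror-descent analysis of NPG~\cite{agarwal2019optimality} and isolate the one place where the possibly-corrupted data enters the bound --- the least-squares critic step --- then show that bounded $\epsilon$-contamination inflates the critic's excess risk only additively, by $O(\epsilon)$. Since $\pi_\theta$ is softmax-linear, the update $\pi_{\theta^{(t+1)}}(\cdot|s)\propto\pi_{\theta^{(t)}}(\cdot|s)\exp(\eta\,\widehat w^{(t)\top}\phi(s,\cdot))$ is exactly entropic mirror descent run independently at every state with linear reward $\widehat w^{(t)\top}\phi(s,\cdot)$. I would combine the per-state mirror-descent regret bound with the performance difference lemma $V^*(\mu_0)-V^{\pi^{(t)}}(\mu_0)=\tfrac1{1-\gamma}\E{s\sim d^{\pi^*},a\sim\pi^*}{A^{\pi^{(t)}}(s,a)}$, using (i) that any state-dependent baseline in $A$ is invisible to $\langle\pi^*(\cdot|s)-\pi^{(t)}(\cdot|s),\cdot\rangle$, so $\widehat w^{(t)\top}\phi(s,\cdot)$ and $A^{\pi^{(t)}}(s,\cdot)$ may be compared directly; (ii) Cauchy--Schwarz followed by a change of action measure, which bounds each critic-error term by $\sqrt{|\A|}\,\|\widehat w^{(t)}-w^{\pi^{(t)}}\|_{\Sigma_{d^*}}$; and (iii) Assumption~\ref{assum:conditioning} to pass from $\Sigma_{d^*}$ to the data covariance $\Sigma_\nu$, while Assumption~\ref{ass:linearMDP} ($Q^{\pi^{(t)}}=\phi^\top w^{\pi^{(t)}}$) kills the approximation bias. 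With $\hat\pi$ the uniform mixture (so $V^{\hat\pi}(\mu_0)=\tfrac1T\sum_t V^{\pi^{(t)}}(\mu_0)$), this is the ``Q-NPG with symbolic critic error'' bound
\[
\mathbb{E}\big[V^*(\mu_0)-V^{\hat\pi}(\mu_0)\big]\le\frac{1}{1-\gamma}\Big(\tfrac{\log|\A|}{\eta T}+\tfrac{\eta W^2}{2}+2\sqrt{|\A|\kappa}\cdot\tfrac1T\sum_{t=1}^{T}\mathbb{E}\big[\|\widehat w^{(t)}-w^{\pi^{(t)}}\|_{\Sigma_\nu}\big]\Big),
\]
the expectation being over the sampling randomness and the adversary.

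The crux is to bound the corrupted critic. Fix iteration $t$, let $\epsilon_t$ be the fraction of its $N$ episodes corrupted (so $\tfrac1T\sum_t\epsilon_t\le\epsilon$), let $\widetilde L^{(t)}_N$ be the least-squares objective formed from the \emph{clean} data the environment would have returned, and $L^{(t)}_N$ that from the observed data. Because adversarial rewards lie in $[0,1]$ and $\gamma<1$, every rollout target obeys $|\widehat Q_i|\le\tfrac1{1-\gamma}$ and $|w^\top\phi_i|\le W$ on the feasible ball, so each squared term is $\le B^2:=(W+\tfrac1{1-\gamma})^2$ and $\sup_{\|w\|\le W}|L^{(t)}_N(w)-\widetilde L^{(t)}_N(w)|\le\epsilon_t B^2$; since $\widehat w^{(t)}$ minimizes $L^{(t)}_N$, this forces $\widetilde L^{(t)}_N(\widehat w^{(t)})\le\widetilde L^{(t)}_N(w^{\pi^{(t)}})+2\epsilon_t B^2$. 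A dimension-free uniform-convergence bound over $\{\|w\|\le W\}$ (Rademacher complexity $\le W/\sqrt N$ for linear predictors, bounded targets, rollout-noise variance $\le\poly(\sigma,\tfrac1{1-\gamma})$) converts this to the population statement $\|\widehat w^{(t)}-w^{\pi^{(t)}}\|_{\Sigma_\nu}^2\le 2\epsilon_t B^2+O(\poly(\sigma,W,\tfrac1{1-\gamma})\sqrt{\log(T/\delta)/N})$, uniformly in $t$ with probability $1-\delta$ (using also $\widetilde L^{(t)}_\nu(w)-\widetilde L^{(t)}_\nu(w^{\pi^{(t)}})=\|w-w^{\pi^{(t)}}\|_{\Sigma_\nu}^2$, the cross term vanishing at the minimizer). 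Taking square roots, averaging over $t$, and applying Jensen with $\tfrac1T\sum_t\epsilon_t\le\epsilon$ yields $\tfrac1T\sum_t\mathbb{E}\|\widehat w^{(t)}-w^{\pi^{(t)}}\|_{\Sigma_\nu}\le O(B\sqrt\epsilon)+O(\poly(\cdot)\,N^{-1/4})$, so the adversary's adaptive, unevenly-spread budget costs nothing beyond its total size once absorbed by Jensen.

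Finally I would pick hyperparameters independent of $\epsilon$: $\eta\asymp\sqrt{\log|\A|/(TW^2)}$ so the first two terms are $O(\alpha)$ for $T=\poly(W,\tfrac1\alpha,\tfrac1{1-\gamma},\log|\A|)$, and $N=\poly(|\A|,\kappa,\sigma,W,\tfrac1\alpha,\tfrac1{1-\gamma})$ large enough (and $\delta$ polynomially small) that the residual $N^{-1/4}$ and $\delta$ contributions are $\le\alpha$; then $K=NT=\poly(\cdot)$ and nothing depends on $\epsilon$. What remains is $O(\alpha)+\tfrac{\sqrt{|\A|\kappa}}{1-\gamma}O(B\sqrt\epsilon)$, which --- using $B=\Theta(W+\tfrac1{1-\gamma})$ together with the extra $\tfrac1{1-\gamma}$ from the performance difference lemma --- is $\tilde O(\max\{\alpha,\,W\sqrt{|\A|\kappa\epsilon/(1-\gamma)^3}\})$ after tracking the $(1-\gamma)$ powers. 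I expect the middle step to be the main obstacle: $\widehat w^{(t)}$ depends on the corrupted sample, hence is not independent of the clean sample, which forces the argument through a \emph{uniform} (Rademacher) bound over the whole $W$-ball rather than fixed-$w$ concentration; and one must note explicitly that contamination can perturb the covariates $(s_i,a_i)$ too, not only the targets $\widehat Q_i$ --- harmless here only because $\widehat w^{(t)}$ is always compared against the clean empirical objective $\widetilde L^{(t)}_N$ and every quantity in sight is bounded. The remainder is the by-now-standard NPG-via-mirror-descent bookkeeping of~\cite{agarwal2019optimality}.
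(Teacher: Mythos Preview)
Your high-level structure matches the paper exactly: invoke the NPG regret lemma (Lemma~\ref{thm:npg_regret}) to reduce everything to the per-iteration critic excess risk $\epsilon_{stat}^{(t)}$, bound that quantity under $\epsilon^{(t)}$-contamination, aggregate over $t$ via Cauchy--Schwarz using $\sum_t\epsilon^{(t)}\le T\epsilon$, and choose $T,M$ independently of $\epsilon$. The one genuinely different choice is how you bound the corrupted critic. You argue at the level of \emph{function values}: $\sup_{\|w\|\le W}|L^{(t)}_N(w)-\widetilde L^{(t)}_N(w)|\le\epsilon_t B^2$, then uniform (Rademacher) convergence over the $W$-ball. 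The paper instead argues at the level of \emph{gradients}: it decomposes $\|\nabla_w f^\dagger-\nabla_w f\|\le\|\nabla_w f^\dagger-\nabla_w\hat f\|+\|\nabla_w\hat f-\nabla_w f\|$, bounds the first by $O((W+H)\epsilon^{(t)})$ termwise and the second by Matrix Bernstein plus vector Hoeffding, and then converts approximate stationarity into excess risk via a SEVER-style lemma (their Lemma~B.8 citation). Your route is more elementary and avoids the matrix-concentration machinery; the paper's route avoids needing a uniform bound over all $w$ in the ball and only needs concentration at the single (random) point $w^{(t)}$, which is why they can get away with Bernstein/Hoeffding at fixed $w$.

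There is, however, a real technical gap in your argument that the paper handles explicitly. You assert ``every rollout target obeys $|\widehat Q_i|\le\tfrac{1}{1-\gamma}$'', but this is false for the $Q^\pi$-estimator in Algorithm~\ref{alg:sampler_est}: it returns $\sum_{i=0}^t r(s_i,a_i)$ with $t$ geometric$(1-\gamma)$, which is unbounded almost surely even when all rewards (clean and adversarial) lie in $[0,1]$; only its \emph{expectation} is $\le\tfrac{1}{1-\gamma}$. Consequently your $B^2$ bound on the squared loss and the ``bounded targets'' hypothesis feeding the Rademacher step both fail as stated. The paper's fix is exactly what you would need: with probability $1-\delta$ every one of the $M$ rollouts has length at most the effective horizon $H=(\log\delta-\log M)/\log\gamma=\tilde O\big(\tfrac{1}{1-\gamma}\big)$, so on that event $|\widehat Q_i|\le H$ (for both clean and corrupted episodes, since the geometric clock is the learner's and not subject to corruption), and the complementary event is absorbed into the expectation at cost $O(\delta W^2)$. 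After this truncation your $B$ becomes $W+H$ rather than $W+\tfrac{1}{1-\gamma}$, which only changes log factors, and the rest of your argument goes through; but as written the ``bounded loss $\Rightarrow$ $\epsilon_tB^2$ perturbation $\Rightarrow$ Rademacher'' chain is not justified.
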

A few remarks are in order.
\begin{remark}[Agnostic to the contamination level $\epsilon$]
	It is worth emphasizing that to achieve the above bound, the hyperparameters of NPG are agnostic to the value of $\epsilon$, and so the algorithm can be applied in the more realistic setting where the agent does not have knowledge of the contamination level $\epsilon$, similar to what's achieved in \cite{lykouris2019corruption} with a complicated nested structure. The same property is also achieved by the FPG algorithm in the next section.
\end{remark}
\begin{remark}[Dimension-independent robustness guarantee]
	Theorem \ref{thm:npg_natural} guarantees that NPG can find an $O(\epsilon^{1/2})$-optimal policy after polynomial number of episodes, provided that $|\A|$ and $\kappa$ are small. Conceptually, the relative condition number $\kappa$ indicates how well-aligned the initial state distribution is to the occupancy distribution of the optimal policy. A good initial distribution can have a $\kappa$ as small as $1$, and so $\kappa$ is independent of $d$. Interested readers can refer to \cite{agarwal2019optimality} (Remark 6.3) for additional discussion on the relative condition number. Here, importantly, the optimality gap does not directly scale with $d$, and so the guarantee will not blow up on high-dimensional problems. This is an important attribute of robust learning algorithms heavily emphasized in the traditional robust statistics literature. 
\end{remark} 

The proof of Theorem \ref{thm:npg_natural} relies on the following NPG regret lemma, first developed by \cite{even2009online} for the MDP-Expert algorithm and later extend to NPG by \cite{agarwal2019optimality,agarwal2020pc}:
\begin{lemma}[NPG Regret Lemma]\label{thm:npg_regret}
	Suppose
	Assumption~\ref{ass:linearMDP} and \ref{assum:conditioning} hold and Algorithm \ref{alg:q_npg_sample}
	starts with $\theta^{(0)}=0$, $\eta
	=\sqrt{2\log |\Acal| /(W^2T)}$. Suppose in addition that the (random) sequence of iterates satisfies
	the assumption that
	\begin{equation}
		\EE\left[\EE_{s,a\sim d^{(t)}}\left[\left(Q^{\pi^{(t)}}(s,a)-\phi(s,a)^\top w^{(t)} \right)^2\right]\right]
		\leq \epsilon_{stat}^{(t)}.\nonumber
	\end{equation}
	Then, we have that
	\begin{align}
		\EE&\left[\sum_{t=1}^T \{V^{*}(\mu_0) - V^{(t)}(\mu_0) \}\right]\\
		&\qquad\leq
		\frac{W}{1-\gamma}\sqrt{2 \log |\Acal|T}
		+\sum_{t=1}^T \sqrt{ \frac{4|\Acal| \kappa \epsilon_{stat}^{(t)}}{(1-\gamma)^3}}.\nonumber
	\end{align}
\end{lemma}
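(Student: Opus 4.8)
The plan is to follow the standard NPG regret analysis of \cite{agarwal2019optimality} (itself going back to the MDP-Expert argument), while carefully propagating the per-iteration critic errors $\epsilon_{stat}^{(t)}$ through every inequality. The starting point is the performance difference lemma, which holds for every realization of the (random) iterates: $\sum_{t=1}^T\{V^{*}(\mu_0)-V^{(t)}(\mu_0)\} = \frac{1}{1-\gamma}\sum_{t=1}^T \EE_{s\sim d^{\pi^*}}\EE_{a\sim\pi^*(\cdot|s)}[A^{(t)}(s,a)]$. I would then isolate the ``critic'' contribution by introducing the approximate advantage $\bar A^{(t)}(s,a):=\phi(s,a)^\top w^{(t)}-\EE_{a'\sim\pi^{(t)}(\cdot|s)}[\phi(s,a')^\top w^{(t)}]$ and the regression residual $\delta_t(s,a):=Q^{(t)}(s,a)-\phi(s,a)^\top w^{(t)}$; since $A^{(t)}(s,a)=Q^{(t)}(s,a)-\EE_{a'\sim\pi^{(t)}}[Q^{(t)}(s,a')]$, this gives the exact decomposition $A^{(t)}(s,a)=\bar A^{(t)}(s,a)+\delta_t(s,a)-\EE_{a'\sim\pi^{(t)}(\cdot|s)}[\delta_t(s,a')]$.

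For the $\bar A^{(t)}$ term I would use the mirror-descent view recorded just above the statement: on each fixed state $s$ the policy update is exponential weights with loss vector $\ell_t^{(s)}(\cdot)=-\,w^{(t)\top}\phi(s,\cdot)\in\mathbb{R}^{|\Acal|}$, and $\EE_{a\sim\pi^*(\cdot|s)}[\bar A^{(t)}(s,a)]=\langle \ell_t^{(s)},\pi^{(t)}(\cdot|s)-\pi^*(\cdot|s)\rangle$ is precisely the instantaneous regret against the comparator $\pi^*(\cdot|s)$. Because $\|w^{(t)}\|_2\le W$ and $\|\phi\|_2\le 1$ always, $\|\ell_t^{(s)}\|_\infty\le W$ \emph{pathwise}, so the standard negative-entropy OMD bound yields $\sum_{t=1}^T \langle\ell_t^{(s)},\pi^{(t)}(\cdot|s)-\pi^*(\cdot|s)\rangle \le \frac{\log|\Acal|}{\eta}+\frac{\eta W^2 T}{2}=W\sqrt{2\log|\Acal|\,T}$ with the prescribed $\eta=\sqrt{2\log|\Acal|/(W^2T)}$; averaging over $s\sim d^{\pi^*}$, taking expectation over the iterates, and dividing by $1-\gamma$ produces the first term $\frac{W}{1-\gamma}\sqrt{2\log|\Acal|T}$.

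For the residual term I would exploit that, by Assumption~\ref{ass:linearMDP}, $Q^{(t)}$ is exactly linear, so $\delta_t(s,a)=\phi(s,a)^\top v_t$ with $v_t:=w^{\pi^{(t)}}-w^{(t)}$, whence $\EE_{s,a\sim\mu}[\delta_t(s,a)^2]=v_t^\top\Sigma_\mu v_t$ for any distribution $\mu$. A three-step change of measure then bounds $\EE_{s\sim d^{\pi^*}}\EE_{a\sim\pi^*(\cdot|s)}[\delta_t^2]$ (and similarly with $a\sim\pi^{(t)}(\cdot|s)$): first $\pi(a|s)\le 1=|\Acal|\,\mathrm{Unif}_{\Acal}(a)$ contributes a factor $|\Acal|$ and turns the inner weighting into $d^*$; then Assumption~\ref{assum:conditioning} gives $v_t^\top\Sigma_{d^*}v_t\le\kappa\, v_t^\top\Sigma_\nu v_t$; finally $d^{(t)}=d^{\pi^{(t)}}_\nu\succeq(1-\gamma)\nu$ pointwise (the $t=0$ term of the discounted occupancy) gives $v_t^\top\Sigma_\nu v_t\le\frac{1}{1-\gamma}\,\EE_{s,a\sim d^{(t)}}[\delta_t^2]$. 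Taking expectation and invoking the hypothesis gives $\EE\big[\EE_{s\sim d^{\pi^*}}\EE_{a\sim\pi^*}[\delta_t^2]\big]\le \frac{|\Acal|\kappa}{1-\gamma}\epsilon_{stat}^{(t)}$; then Cauchy--Schwarz together with Jensen (concavity of $\sqrt{\cdot}$, used to pull the expectation over iterates through the square root) bounds each of the two residual pieces by $\sqrt{|\Acal|\kappa\epsilon_{stat}^{(t)}/(1-\gamma)}$ in expectation, and summing the two over the extra $\frac{1}{1-\gamma}$ factor yields $\sqrt{4|\Acal|\kappa\epsilon_{stat}^{(t)}/(1-\gamma)^3}$ per step, which is the second term.

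The main obstacle is bookkeeping rather than a single deep step: $\epsilon_{stat}^{(t)}$ controls the critic error only \emph{in expectation} and only under the \emph{training distribution} $d^{(t)}=d^{\pi^{(t)}}_\nu$, so every estimate must be arranged either to hold pathwise (the OMD part, where the norm constraint $\|w^{(t)}\|_2\le W$ is what makes $\|\ell_t^{(s)}\|_\infty\le W$ true regardless of the corruption) or to survive a single outer expectation via Jensen (the residual part). The delicate point is the change-of-measure chain $d^{\pi^*}\!\circ\pi^*\ \to\ d^{\pi^*}\!\circ\mathrm{Unif}_{\Acal}\ \to\ \nu\ \to\ d^{(t)}$, which is exactly where the factors $|\Acal|$, $\kappa$, and $1/(1-\gamma)$ enter, and one must keep $\delta_t$ genuinely linear in $\phi$ so that the covariance comparisons $\Sigma_{d^*}\preceq\kappa\Sigma_\nu\preceq\frac{\kappa}{1-\gamma}\Sigma_{d^{(t)}}$ are legitimate.
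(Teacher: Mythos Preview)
Your proposal is correct and follows exactly the standard argument the paper invokes: the paper does not give its own proof of this lemma but cites it from \cite{even2009online} and \cite{agarwal2019optimality,agarwal2020pc}, and your performance-difference decomposition, per-state exponential-weights regret bound, and change-of-measure chain $d^{\pi^*}\!\circ\pi \to d^* \to \nu \to d^{(t)}$ (collecting the factors $|\Acal|$, $\kappa$, $1/(1-\gamma)$) is precisely that argument with the correct bookkeeping for the expectation over iterates.
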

Intuitively, Lemma \ref{thm:npg_regret} decompose the regret of NPG into two terms. The first term corresponds to the regret of standard mirror descent procedure, which scales with $\sqrt{T}$. The second term corresponds to the estimation error on the Q value, which acts as the reward signal for mirror descent. When not under attack,  estimation error $\epsilon_{stat}^{(t)}$ goes to zero as the number of samples $M$ gets larger, which in turn implies the global convergence of NPG. However, when under bounded attack, the generalization error $\epsilon_{stat}^{(t)}$ will not go to zero even with infinite data. Nevertheless, we can show that it is bounded by $O(\epsilon^{(t)})$ when the sample size $M$ is large enough, where $\epsilon^{(t)}$ denotes the fraction of episodes being corrupted in iteration $t$. Note that by definition, we have $\sum_{t} \varepsilon^{(t)} \leq \varepsilon T$.
\begin{lemma}[Robustness of linear regression under bounded contamination]\label{thm:robust_ols}
	Suppose the adversarial rewards are bounded in $[0,1]$, and in a particular iteration $t$, the adversary contaminates $\epsilon^{(t)}$ fraction of the episodes, then given M episodes, it is guaranteed that with probability at least $1-\delta$,
	\begin{align}
		\EE_{s,a\sim d^{(t)}}&\left[\left(Q^{\pi^{(t)}}(s,a)-\phi(s,a)^\top w^{(t)} \right)^2\right] \\
		&\leq 4 \left(W^2+WH\right)\left(\epsilon^{(t)} + \sqrt{\frac{8}{M}\log\frac{4d}{\delta}}\right).\nonumber
	\end{align}
	where $H = (\log \delta-\log M)/\log\gamma$ is the effective horizon.
\end{lemma}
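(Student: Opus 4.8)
The plan is to decompose the population squared error between the learned critic $\phi^\top w^{(t)}$ and the true $Q$-function into three pieces: the empirical error on the full (contaminated) sample, a generalization gap, and the contribution of the corrupted episodes; then to control each piece. First I would set up notation: let $I_{\mathrm{good}}$ be the clean episodes and $I_{\mathrm{bad}}$ the corrupted ones, with $|I_{\mathrm{bad}}| \le \epsilon^{(t)} M$. On clean episodes the rollout returns $\widehat Q^{\pi^{(t)}}(s_i,a_i)$, an unbiased estimate of $Q^{\pi^{(t)}}(s_i,a_i) = \phi(s_i,a_i)^\top w^{\pi^{(t)}}$; truncating the rollout at the effective horizon $H = (\log\delta - \log M)/\log\gamma$ means each such target lies in a bounded range (roughly $[0,H]$, up to the stochastic reward, since $\E{}{r}\in[0,1]$), and the truncation error is at most $\delta/M$-ish in probability by the geometric tail of $\gamma^t$.

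The core of the argument is the standard ``ERM is robust when the bad fraction is small'' trick. Because $w^{(t)}$ is the minimizer of the empirical squared loss over the contaminated sample, its empirical loss is at most that of the true parameter $w^{\pi^{(t)}}$. On the good episodes, $w^{\pi^{(t)}}$ has small empirical loss (it is the Bayes-optimal predictor up to the rollout noise, whose variance is $O(\sigma^2)$ per sample but in the stated bound is absorbed into the $WH$ scale via the boundedness of the truncated target and $\|w\|\le W$, $\|\phi\|\le 1$, so predictions lie in $[-W,W]$). On the bad episodes, the adversary can make the loss of $w^{\pi^{(t)}}$ as large as it wants pointwise, but there are only $\epsilon^{(t)} M$ of them and the loss of $w^{(t)}$ on those same episodes is nonnegative; the key inequality is that the empirical loss of $w^{(t)}$ on the \emph{good} episodes is at most the empirical loss of $w^{\pi^{(t)}}$ on the \emph{good} episodes plus the total mass the adversary can ``hide'' on the bad episodes, and since both $w^{(t)}$ and $w^{\pi^{(t)}}$ produce predictions bounded by $W$ while targets (when the adversary uses rewards in $[0,1]$) are bounded by $H$, each bad episode contributes at most $(W+H)^2 \le 2(W^2 + WH)\cdot$const to the discrepancy. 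Summing over the $\epsilon^{(t)} M$ bad episodes and dividing by $M$ gives the $4(W^2+WH)\epsilon^{(t)}$ term.

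Finally I would pass from the empirical squared error on the good episodes to the population quantity $\E{s,a\sim d^{(t)}}{(Q^{\pi^{(t)}}(s,a) - \phi(s,a)^\top w^{(t)})^2}$ via a uniform-convergence bound over the class $\{(\phi^\top w - \phi^\top w^{\pi^{(t)}})^2 : \|w\|\le W\}$; since this class is a bounded (range $O(W^2)$), it suffices to use a covering-number / Rademacher argument in $\R^d$, which produces the $\sqrt{(8/M)\log(4d/\delta)}$ term (the $\log d$ coming from the covering number of the Euclidean ball, the $4/\delta$ from a union bound over the covering centers and the horizon-truncation event). The good episodes being i.i.d.\ from $d^{(t)}$ is what makes this concentration legitimate. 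The main obstacle I anticipate is the bookkeeping in the second step: one has to be careful that the comparison is against $w^{\pi^{(t)}}$ on the \emph{good} sample only, that the cross terms when expanding the squared losses are handled correctly (the adversary controls bad targets but not the good-sample geometry), and that the rollout-noise variance $\sigma^2$ is either explicitly tracked or, as the lemma statement suggests, subsumed into the bounded-target scale $H$ after horizon truncation — reconciling the clean statement with the presence of unbounded-variance rewards is the delicate point.
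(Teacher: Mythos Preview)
Your proposal takes a genuinely different route from the paper. The paper argues at the \emph{gradient} level rather than the function-value level: it bounds $\|\nabla_w f^{\dagger}(w) - \nabla_w f(w)\|$ uniformly over $\|w\|\le W$ (where $f^{\dagger}$ is the contaminated empirical loss and $f$ the population loss), splitting this into an adversary term $\|\nabla_w f^{\dagger} - \nabla_w \hat f\|\le 4(W+H)\epsilon^{(t)}$ and a concentration term $\|\nabla_w \hat f - \nabla_w f\|$ handled by matrix Bernstein on $\frac{1}{M}\sum_i x_ix_i^\top$ and vector Hoeffding on $\frac{1}{M}\sum_i y_ix_i$. Since $w^{(t)}$ is a constrained minimizer of $f^{\dagger}$, first-order optimality makes it an approximate critical point of $f$, and the paper invokes Lemma~B.8 of \cite{diakonikolas2019sever} (approximate critical point $\Rightarrow$ approximate minimizer over a bounded convex set) to convert this into the excess-risk bound, picking up exactly one factor of the radius $W$. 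That is where the scaling $W(W+H)$, rather than $(W+H)^2$, comes from.

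Your ERM-plus-uniform-convergence plan is sound in outline but has two concrete problems. First, bounding the per-bad-episode discrepancy by $(W+H)^2$ is too crude, and your claimed inequality $(W+H)^2 \le 2(W^2+WH)\cdot\text{const}$ is simply false when $H\gg W$ (it would force an $H^2\epsilon^{(t)}$ term not present in the lemma). The fix is to bound the \emph{difference} of squared losses, $|(y^{\dagger}-\phi^{\dagger\top}w^{(t)})^2-(y^{\dagger}-\phi^{\dagger\top}w^{\pi})^2| \le |\phi^{\dagger\top}(w^{(t)}-w^{\pi})|\cdot|2y^{\dagger}-\phi^{\dagger\top}(w^{(t)}+w^{\pi})| \le 2W\cdot 2(H+W)$. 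Second, the log covering number of the radius-$W$ ball in $\R^d$ is $\Theta(d\log(W/\varepsilon))$, not $\Theta(\log d)$; a covering-plus-union-bound argument over that class yields a $\sqrt{d/M}$ factor, not the $\sqrt{(1/M)\log(4d/\delta)}$ in the statement. The $\log d$ in the paper's bound comes specifically from matrix Bernstein. If you want to match it while staying at the function-value level, you must decompose the excess risk into a quadratic-form piece (controlled by $\|\hat\Sigma-\Sigma\|_{\mathrm{op}}$) and a noise cross term (controlled by $\|\frac{1}{M}\sum_i \mathrm{noise}_i\,\phi_i\|$), and apply matrix/vector concentration to each---at which point you have essentially reconstructed the paper's gradient argument.
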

This along with the NPG regret lemma guarantees that the expected regret of NPG is bounded by $O(\sqrt{T}+M^{-1/4}+\sqrt{\epsilon}T)$ which in turn guarantees to identify an $O(\sqrt{\epsilon})$-optimal policy.

In the special case of tabular MDPs, $\phi(s,a)$ will all be one-hot vectors and $W$ will in general by on the order of $O(\sqrt{SA})$, which means that the bound given by Theorem \ref{thm:npg_natural} still scales with the size of the state space. In the following corollary, we show that this dependency can be removed through a tighter analysis. 
\begin{corollary}[Dimension-free Robustness of NPG in tabular MDPs]\label{thm:npg_tabular}
	Given a tabular MDP and assumption \ref{assum:conditioning}, given a desired optimality gap $\alpha$, there exists a set of hyperparameters agnostic to the contamination level $\epsilon$, such that 
	Algorithm \ref{alg:q_npg_sample} guarantees with a $poly(1/\alpha, 1/(1-\gamma), |\A|,W,\sigma,\kappa)$ sample complexity that under $\epsilon$-contamination with adversarial rewards bounded in $[0,1]$, we have
	\begin{eqnarray}
		\EE\left[V^{*}(\mu_0) - V^{\hat\pi}(\mu_0)\right]
		\leq
		\tilde O\left(\max\left[\alpha, \sqrt{\frac{|\Acal|\kappa\epsilon}{(1-\gamma)^5}} \mbox{ }\right]\right)\nonumber
	\end{eqnarray}
	where $\hat\pi$ is the uniform mixture of $\pi^{(1)}$ through $\pi^{(T)}$.
\end{corollary}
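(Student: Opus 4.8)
The plan is to reuse the architecture of the proof of Theorem~\ref{thm:npg_natural} essentially verbatim --- feed a per-iteration statistical-error bound into the NPG Regret Lemma (Lemma~\ref{thm:npg_regret}) --- but to replace the regression bound of Lemma~\ref{thm:robust_ols} by a sharper, state-space-free estimate of $\epsilon_{stat}^{(t)}$ that exploits tabular structure. The point is that with one-hot features the least-squares update in \eqref{eq:NPG_update} decouples across state-action pairs: modulo the (essentially inactive, or enforceable by clipping) constraint $\|w\|_2\le W$, the coordinate $w^{(t)}(s,a)$ is just the empirical average of the $\widehat Q$-estimates returned by episodes whose sampled pair is $(s,a)$, so
\[
\epsilon_{stat}^{(t)} \;=\; \mathbb{E}\!\left[\sum_{s,a} d^{(t)}(s,a)\bigl(Q^{(t)}(s,a)-w^{(t)}(s,a)\bigr)^2\right],
\]
and it suffices to control each one-dimensional estimation error separately.

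Fix a pair $(s,a)$; let $n_{s,a}$ be the number of episodes that sample it and $\rho_{s,a}$ the fraction of those that are corrupted. Since adversarial rewards lie in $[0,1]$, every corrupted rollout returns a $\widehat Q$-value in $[0,H]$ with $H$ the effective horizon, while clean rollouts are unbiased for $Q^{(t)}(s,a)$ with variance $O(\sigma^2/(1-\gamma))$. Splitting the empirical mean into its clean and corrupted parts writes $w^{(t)}(s,a)-Q^{(t)}(s,a)$ as a convex combination of a statistical fluctuation and a corruption shift of magnitude at most $\rho_{s,a}H$, so $(Q^{(t)}(s,a)-w^{(t)}(s,a))^2 \lesssim (\text{stat})^2 + \rho_{s,a}^2 H^2$. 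Averaging against $d^{(t)}$: the statistical part contributes $O\!\bigl(|\Acal| S\sigma^2/((1-\gamma)M)\bigr)$ using a concentration argument that $n_{s,a}\approx M d^{(t)}(s,a)$ and that pairs with too few samples carry negligible $d^{(t)}$-mass, and this is driven below $\alpha$ by taking $M$ polynomially large; for the corruption part the key inequality is $\rho_{s,a}^2\le\rho_{s,a}$, which combined with $n_{s,a}\approx M d^{(t)}(s,a)$ and $\sum_{s,a}(\text{corrupted count at }(s,a))\le\epsilon^{(t)}M$ gives $\sum_{s,a} d^{(t)}(s,a)\rho_{s,a}^2 \le \epsilon^{(t)}$, hence $\epsilon_{stat}^{(t)} \le \tilde O\bigl(\alpha^2 + \epsilon^{(t)}/(1-\gamma)^2\bigr)$. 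This is the tabular analogue of Lemma~\ref{thm:robust_ols} with the offending coefficient $O(W^2+WH)$ --- which is $\Theta(SA/(1-\gamma)^2)$ in the tabular case --- replaced by $\tilde O(1/(1-\gamma)^2)$, and that is exactly where the dependence on $S$ is eliminated.

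Plugging this into Lemma~\ref{thm:npg_regret}, using $\sqrt{a+b}\le\sqrt a+\sqrt b$ and $\frac1T\sum_t\sqrt{\epsilon^{(t)}}\le\sqrt{\frac1T\sum_t\epsilon^{(t)}}\le\sqrt\epsilon$, and dividing by $T$, the estimation term of the regret bound becomes $\tilde O\bigl(\alpha + \sqrt{|\Acal|\kappa\epsilon/(1-\gamma)^5}\bigr)$, while the mirror-descent term is $\frac{W}{1-\gamma}\sqrt{2\log|\Acal|/T}$, which --- although $W$ still appears there --- is pushed below $\alpha$ by taking $T$ polynomially large in $W,1/(1-\gamma),1/\alpha$ (permissible, since the sample complexity is allowed to be $\poly(\dots,W,\dots)$). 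Finally, $\hat\pi$ being the uniform mixture of $\pi^{(1)},\dots,\pi^{(T)}$ yields $V^{\hat\pi}(\mu_0)=\frac1T\sum_t V^{(t)}(\mu_0)$, so $\mathbb{E}[V^*(\mu_0)-V^{\hat\pi}(\mu_0)]$ equals the averaged regret and the stated bound follows.

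I expect the main obstacle to be making the tabular replacement of Lemma~\ref{thm:robust_ols} fully rigorous. Three things need care: (i) justifying the ``$w^{(t)}=$ per-coordinate (clipped) empirical mean'' reduction despite the global constraint $\|w\|_2\le W$ --- this is fine because $\|w^{\pi}\|_2\le W$ makes the constraint essentially slack and projection onto the ball is non-expansive, but one either argues this or inserts an explicit clip of $\widehat Q$ to $[0,H]$; (ii) handling state-action pairs that receive few or zero samples, where both the accounting $\rho_{s,a}^2\le\rho_{s,a}$ and the concentration $n_{s,a}\approx Md^{(t)}(s,a)$ degrade --- resolved by splitting on whether $\rho_{s,a}$ or $d^{(t)}(s,a)$ falls below a threshold and bounding such pairs crudely by $O(1/(1-\gamma)^2)$, since their total $d^{(t)}$-mass is $O(\epsilon^{(t)})$ or $O(\mathrm{polylog}/M)$; and (iii) carrying the stochastic-reward variance $\sigma^2$ through a concentration inequality for the unbounded clean $\widehat Q$-estimates, which only have bounded second moment rather than bounded range.
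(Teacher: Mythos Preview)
Your plan is correct and reaches the same per-iteration bound $\epsilon_{stat}^{(t)}\lesssim H^2\epsilon^{(t)}+(\text{small statistical term})$ as the paper, but the route is genuinely different.

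The paper does not analyze the batch OLS/empirical-mean solution at all. Instead it stipulates that the regression subproblem is solved by \emph{Projected Online Gradient Descent} on the stream of losses $(w^\top\phi_i-\widehat Q_i)^2$, and then runs an online-to-batch argument. The tabular structure enters through a one-line induction showing every OGD iterate $w_i$ satisfies $w_i^\top\phi\in[0,H]$ for every one-hot $\phi$: each update touches a single coordinate and is a convex combination of its previous value (in $[0,H]$ by hypothesis) and the observed $\widehat Q_i$ (in $[0,H]$ by bounded rewards). This immediately caps the contribution of each corrupted sample to the OGD regret by $2H^2$ rather than $O(W^2)$, and Azuma--Hoeffding handles concentration, yielding the tabular analogue of Lemma~\ref{thm:robust_ols} in one stroke.

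Your direct per-coordinate decomposition is more elementary in spirit, but it buys you exactly the three complications you flagged. In particular (i) is not resolved by the non-expansiveness remark: the constrained minimizer of $\sum_{s,a}n_{s,a}(w_{s,a}-\bar y_{s,a})^2$ over $\|w\|_2\le W$ is \emph{not} the projection of the unconstrained minimizer (the KKT shrinkage depends on $n_{s,a}$), so you genuinely need the clipping-to-$[0,H]$ fix or a slackness argument. The paper's OGD choice sidesteps (i) entirely and also (ii)--(iii), since the online regret bound needs no per-$(s,a)$ sample-count concentration and Azuma works once the iterates are shown bounded. Conversely, your analysis applies to the batch solver as literally written in Algorithm~\ref{alg:q_npg_sample}, whereas the paper quietly swaps in a specific iterative solver to make the induction go through.
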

In the more general case of linear MDP, $W$ will not necessarily scale with $d$ in an obvious way and thus we leave Theorem \ref{thm:npg_natural} untouched.

\section{FPG: Robust NPG Against Unbounded Corruption}\label{sec:fpg}
\begin{algorithm}[!t]
	\caption{$d_{\nu}^\pi$ sampler and $Q^{\pi}$ estimator}
	\label{alg:sampler_est}
	\begin{algorithmic}[1]
		\setcounter{algorithm}{-1}
		\Function{$d_{\nu}^\pi$-sampler}{}
		\State  \hspace*{-0.1cm}\textbf{Input}:  A reset distribution $\nu\in\Delta(\Scal\times\Acal)$.
		\State Sample $s_0,a_0 \sim \nu$.
		\State Execute $\pi$ from $s_0, a_0$; at any step $t$ with $(s_t,a_t)$, return $(s_t,a_t)$ with probability $1-\gamma$.
		\caption{$d^{\pi}$ sampler and $Q^{\pi}$ estimator}
		\EndFunction
		\Function{$Q^\pi$-estimator}{}
		\State  \hspace*{-0.1cm}\textbf{Input}:  current state-action $\sa$, a policy $\pi$.
		\State Execute $\pi$ from $(s_0,a_0) = (s, a)$; at step $t$ with $(s_t,a_t)$, terminate with probability $1-\gamma$.
		\State  \hspace*{-0.1cm}\textbf{Return}: $\widehat{Q}^{\pi}\sa = \sum_{i=0}^t r(s_i,a_i)$.
		\caption{$d_{\nu}^\pi$ sampler and $Q^{\pi}$ estimator}
		\EndFunction
		\setcounter{algorithm}{1}
	\end{algorithmic}
{\color{red}[In an adversarial episode, the adversary can hijack the $d_{\nu}^\pi$ sampler to return any $(s,a)$ pair and the $Q^\pi$-estimator to return any $\widehat{Q}^{\pi}\sa\in\R$.]}
\end{algorithm}
\begin{algorithm}[!t]
	\begin{algorithmic}[1]	
		\Require Learning rate $\eta$; number of episodes per iteration $M$
		\State Initialize $\theta^{(0)} = 0$.
		\For{$t=0,1,\ldots,T-1$}
		\State Call Algorithm~\ref{alg:sampler_est} $M$ times with $\pi^{(t)}$ to obtain a dataset that consist of $s_i,a_i\sim d^{(t)}_\nu$ and $\widehat{Q}^{(t)}(s_i,a_i)$, $i\in [M]$.
		\State Solve the linear regression problem 
		\[w^{(t)} = \argmin_{ \|w\|_2\leq W} \sum_{i=1}^M \left(\widehat{Q}^{(t)}(s_i,a_i)-w^\top \nabla_\theta \phi(s_i,a_i)\right)^2\]  
		\State Update $\theta^{(t+1)} = \theta^{(t)} + \eta w^{(t)}$.
		\EndFor
	\end{algorithmic}
	\caption{Natural Policy Gradient (NPG)}
	\label{alg:q_npg_sample}
\end{algorithm}
\begin{algorithm}[!t]
	\caption{Robust Linear Regression via \texttt{SEVER}}
	\label{alg:sever}
	\begin{algorithmic}
		\State {\bf{Input:}} Dataset $\{(x_i,y_i)\}_{i=1:M}$, a standard linear regression solver $\sL$, and parameter $\sigma' \in \R_+$.
		\State Initialize $S \gets \{1,\ldots,M\}$, $f_i(w) = \|y_i-w^\top x_i\|^2$.
		\Repeat
		\State ${w} \gets \sL(\{(x_i,y_i)\}_{i\in S})$. $\triangleright$ Run learner on $S$.
		\State Let $\widehat{\nabla} = \frac{1}{|S|} \sum_{i\in S} \nabla f_i(w)$.
		\State Let $G = [\nabla f_i({w}) - \widehat{\nabla}]_{i \in S}$ be the $|S| \times d$ matrix of centered gradients.
		\State Let $v$ be the top right singular vector of $G$.
		\State Compute the vector $\tau$ of \emph{outlier scores} defined via
		$\tau_i = \left((\nabla f_i({w}) - \widehat{\nabla}) \cdot v\right)^2$.
		\State $S' \gets S$
		\If{$\frac{1}{|S|}\sum_{i\in S} \tau_i \leq c_0 \cdot \sigma'^2$, for some constant $c_0>1$}
		\State $S = S'$ $\triangleright$ We only filter out points if the variance is larger than an appropriately chosen threshold.
		\Else
		\State Draw $T$ from Uniform$[0,\max_i \tau_i]$.
		\State  $S=\{i \in S: \tau_i < T \}$.
		\EndIf
		\Until{$S = S'$.}
		\State Return $w$.
	\end{algorithmic}
\end{algorithm}

\begin{figure*}[th!]
	\begin{subfigure}{0.33\textwidth}
		\centering
		\includegraphics[width=.9\columnwidth]{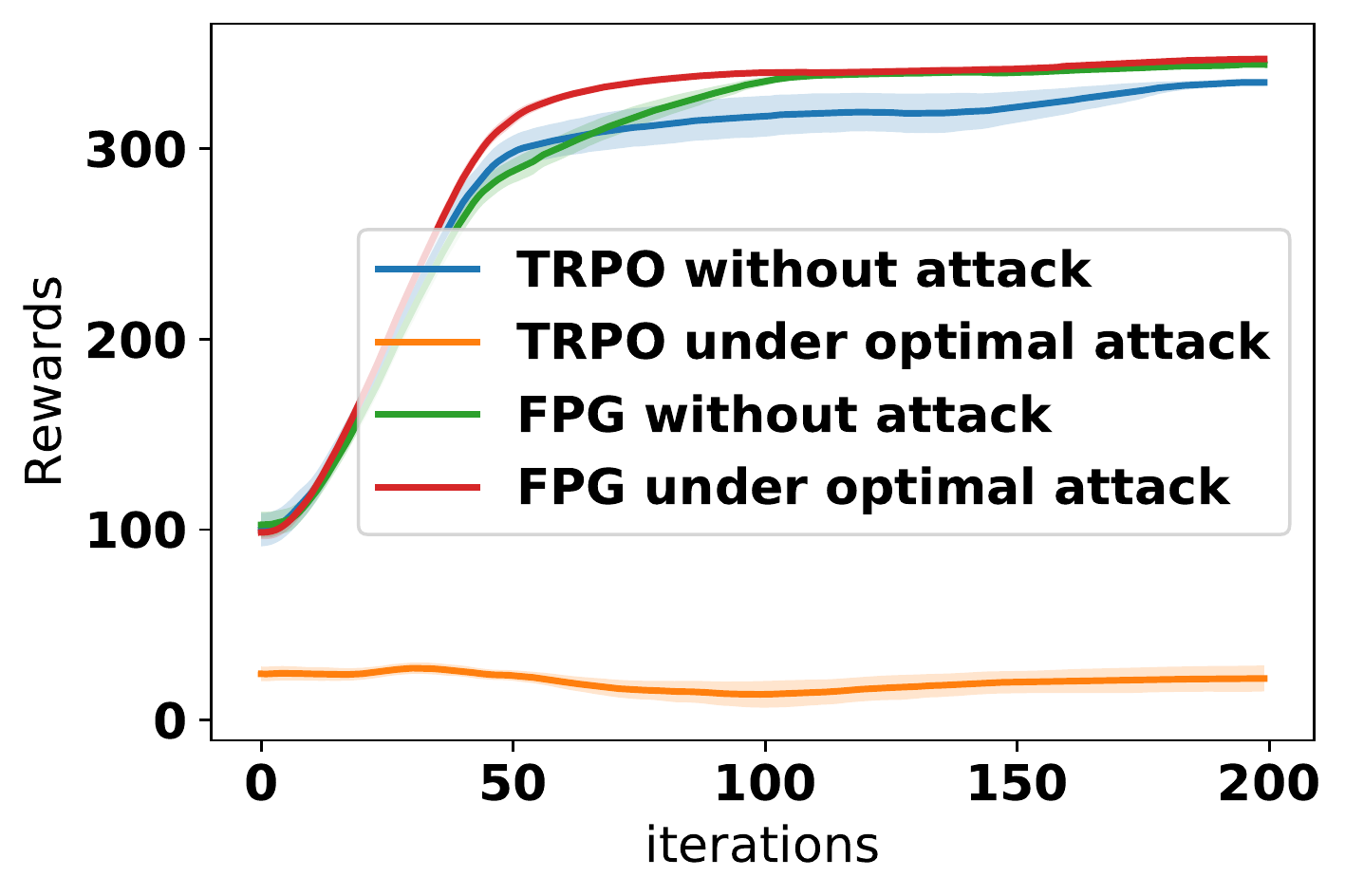}
		\caption{Swimmer-v3}
		\label{fig:Swimmer}
	\end{subfigure}
	\begin{subfigure}{0.33\textwidth}
		\centering
		\includegraphics[width=.9\columnwidth]{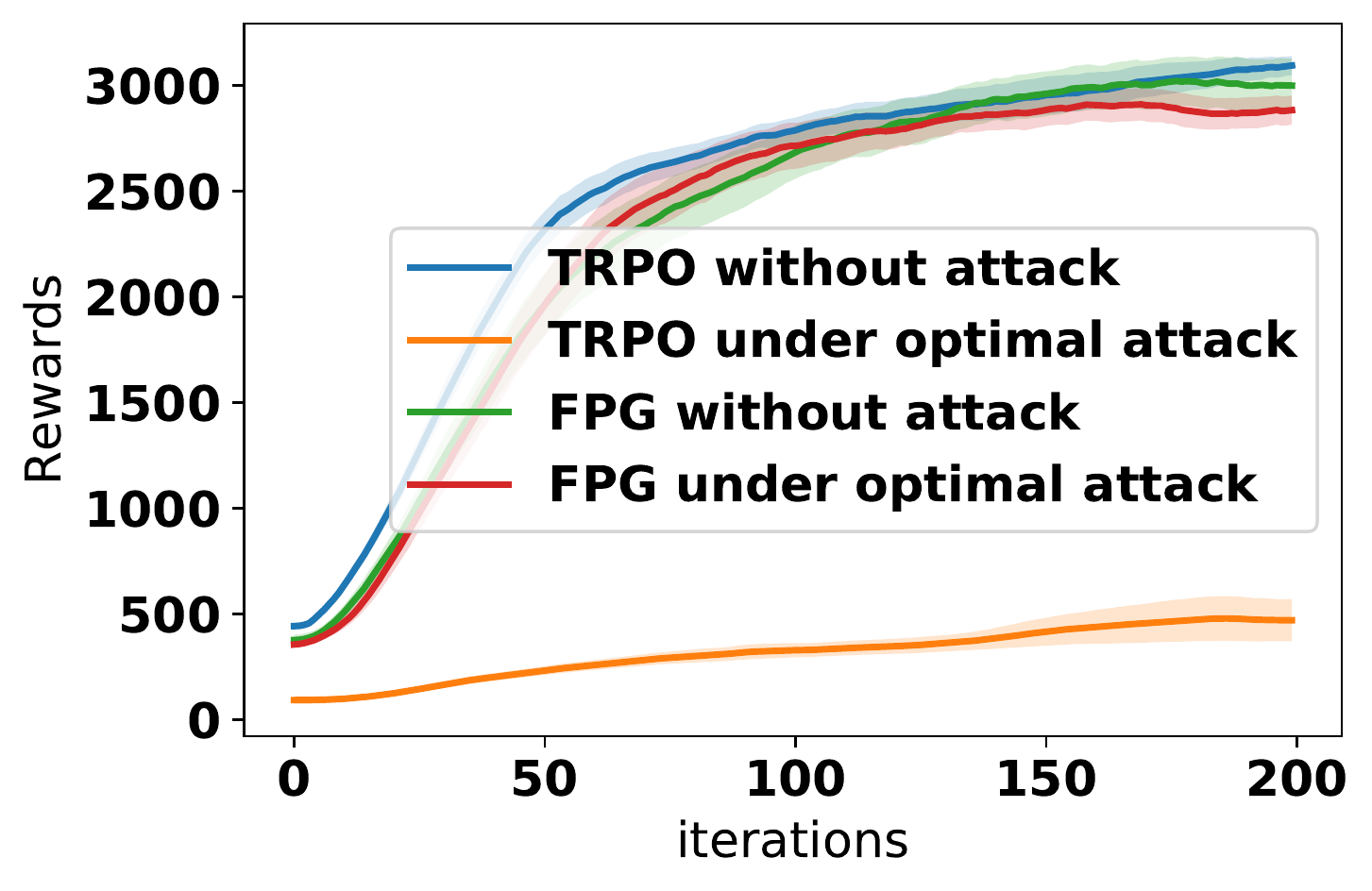}
		\caption{Hopper-v3}
		\label{fig:Hopper}
	\end{subfigure}
	\begin{subfigure}{0.33\textwidth}
		\centering
		\includegraphics[width=.9\columnwidth]{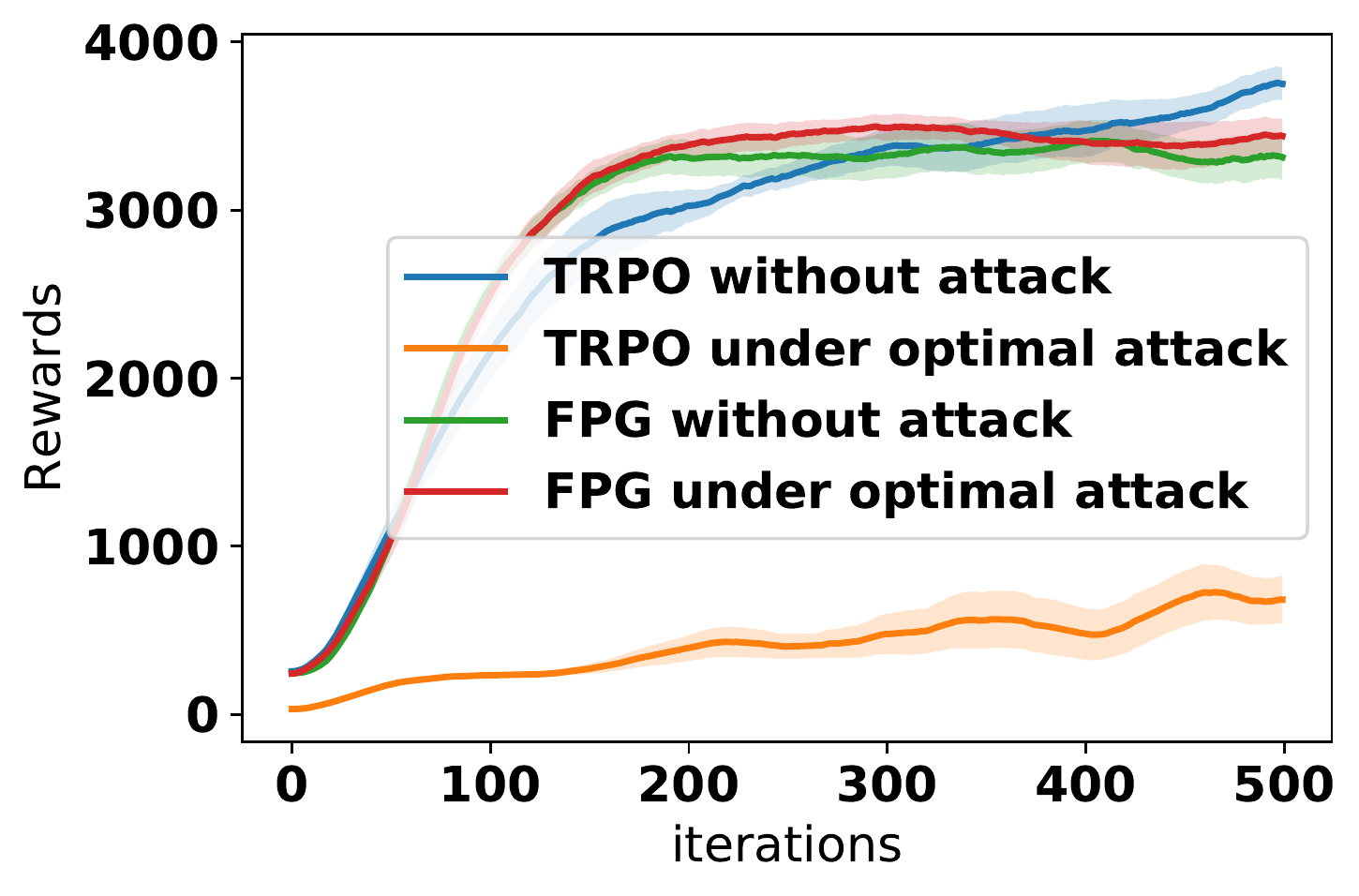}
		\caption{Walker2d-v3}
		\label{fig:Walker2d}
	\end{subfigure}
	\begin{subfigure}{0.33\textwidth}
		\centering
		\includegraphics[width=.9\columnwidth]{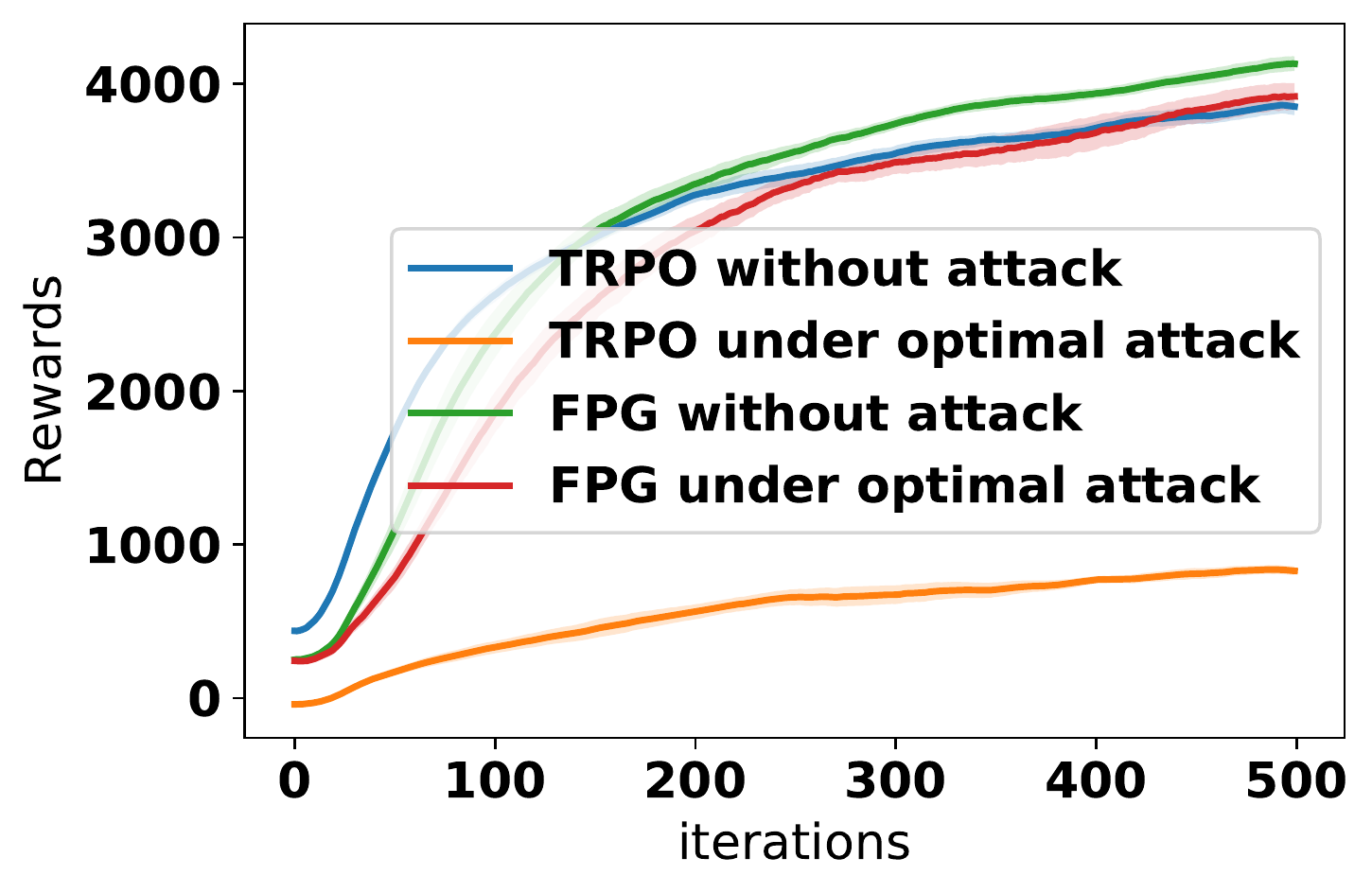}
		\caption{HalfCheetah-v3}
		\label{fig:HalfCheetah}
	\end{subfigure}
	\begin{subfigure}{0.33\textwidth}
		\centering
		\includegraphics[width=.9\columnwidth]{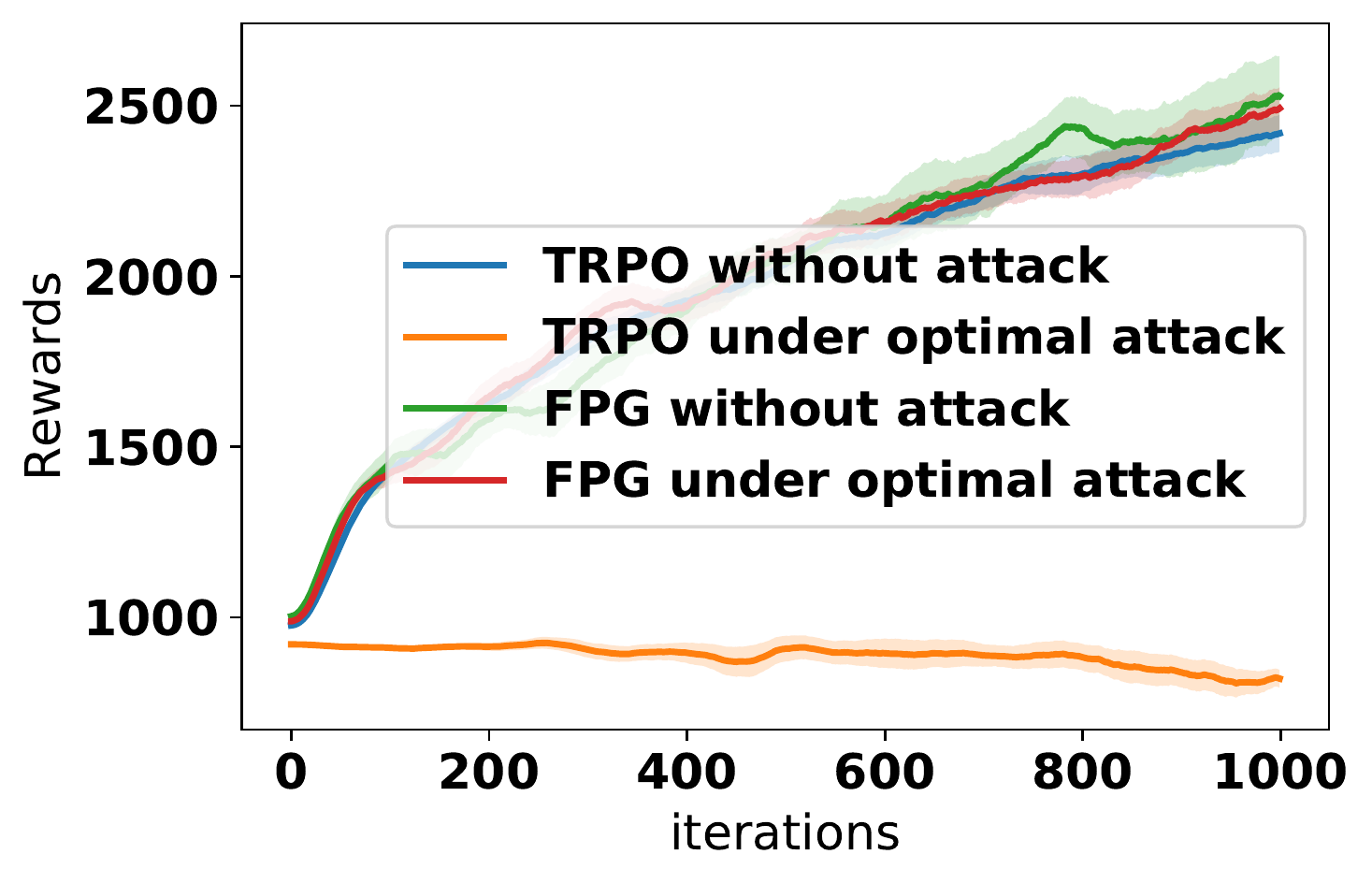}
		\caption{Ant-v3}
		\label{fig:Ant}
	\end{subfigure}
	\begin{subfigure}{0.33\textwidth}
		\centering
		\includegraphics[width=.9\columnwidth]{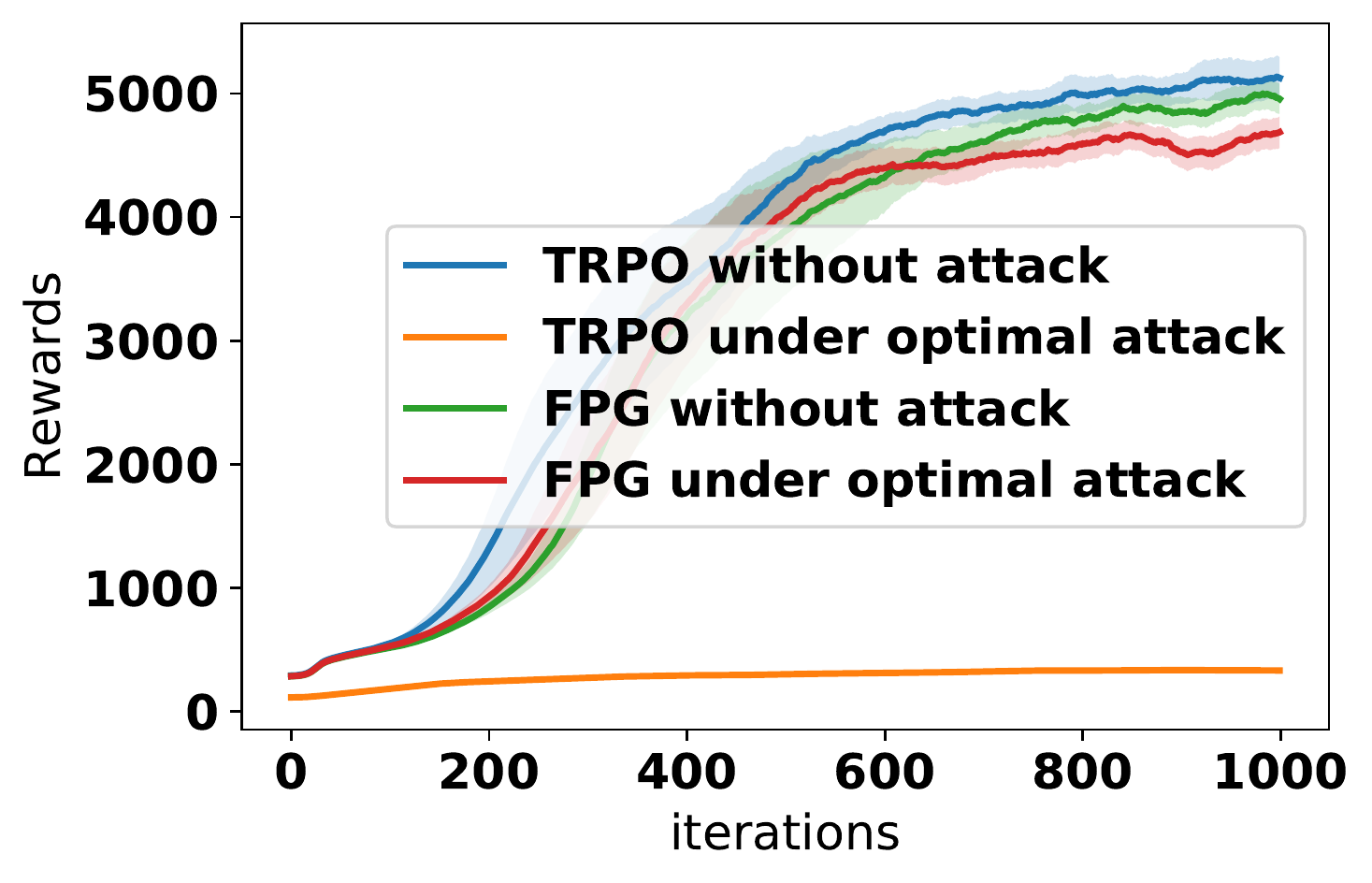}
		\caption{Humanoid-v3}
		\label{fig:Humanoid}
	\end{subfigure}
	\caption{Experiment Results on the 6 MuJoTo benchmarks.}
	\label{fig:exp_result}
\end{figure*}

Our second result is the Filtered Policy Gradient (FPG) algorithm, a robust variant of the NPG algorithm \cite{kakade2001natural, agarwal2019optimality} that can be robust against arbitrary and \emph{potentially unbounded} data corruption. Specifically, FPG replace the standard linear regression solver in NPG with a statistically robust alternative. In this work, we use the \texttt{SEVER} algorithm \cite{diakonikolas2019sever}. In practice, one can substitute it with any computationally efficient robust linear regression solver. We show that FPG can find an $O(\epsilon^{1/4})$-optimal policy under $\epsilon$-contamination with a polynomial number of samples.
\begin{theorem}\label{thm:fpg}
	Under assumptions \ref{ass:linearMDP} and \ref{assum:conditioning}, given a desired optimality gap $\alpha$, there exists a set of hyperparameters agnostic to the contamination level $\epsilon$, such that 
	Algorithm \ref{alg:q_npg_sample}, using Algorithm \ref{alg:sever} as the linear regression solver, guarantees with a  $poly(1/\alpha, 1/(1-\gamma), |\A|,W,\sigma,\kappa)$ sample complexity that under $\epsilon$-contamination, we have
	\begin{align}
		\EE&\left[V^{*}(\mu_0) - V^{\hat\pi}(\mu_0)\right]\\
		&\qquad\leq
		\tilde O\left(\max\left[\alpha, \sqrt{ \frac{|\Acal| \kappa \left(W^2+\sigma W\right)}{(1-\gamma)^4}}\epsilon^{1/4}\right]\right).\nonumber
	\end{align}
	where $\hat\pi$ is the uniform mixture of $\pi^{(1)}$ through $\pi^{(T)}$.
\end{theorem}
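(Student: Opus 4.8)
Our plan mirrors the proof of \thmref{thm:npg_natural}: invoke the NPG Regret Lemma (\lemref{thm:npg_regret}) and control the per-iteration regression error, the only change being that the least-squares bound of \lemref{thm:robust_ols} — which crucially exploits boundedness of the corrupted rewards — is replaced by a robustness guarantee for \sever{}~\cite{diakonikolas2019sever} that tolerates \emph{arbitrary} corruption. Since \lemref{thm:npg_regret} bounds the cumulative NPG regret by $\tfrac{W}{1-\gamma}\sqrt{2\log|\Acal|T}+\sum_{t=1}^T\sqrt{4|\Acal|\kappa\epsilon_{stat}^{(t)}/(1-\gamma)^3}$, it suffices to show that in iteration $t$, in which the adversary corrupts an $\epsilon^{(t)}$-fraction of the $M$ episodes, the \sever{}-based solver returns $w^{(t)}$ with
\[
\epsilon_{stat}^{(t)} \;=\; \EE\Big[\EE_{s,a\sim d^{(t)}}\big[\big(Q^{\pi^{(t)}}(s,a)-\phi(s,a)^\top w^{(t)}\big)^2\big]\Big] \;\le\; \tilde O\!\left(\frac{W^2+\sigma W}{1-\gamma}\sqrt{\epsilon^{(t)}}\right) + \alpha_M ,
\]
where $\alpha_M\to 0$ as $M\to\infty$ and $\sum_t\epsilon^{(t)}\le\epsilon T$ by the contamination budget.

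For the per-iteration step, fix $t$ and set $x_i=\phi(s_i,a_i)$, $y_i=\widehat Q^{(t)}(s_i,a_i)$. On the $(1-\epsilon^{(t)})$-fraction of clean episodes the $(x_i,y_i)$ are i.i.d.\ from the sampler's distribution and, by Assumption~\ref{ass:linearMDP}, the problem is realizable: $\EE[y_i\mid x_i]=Q^{\pi^{(t)}}(s_i,a_i)=x_i^\top w^{\pi^{(t)}}$ with $\|w^{\pi^{(t)}}\|\le W$, so $w^{\pi^{(t)}}$ is the population least-squares minimizer over $\{\|w\|\le W\}$. I would first bound the Monte-Carlo noise: $\widehat Q^{(t)}$ is a sum of per-step rewards over a Geometric$(1-\gamma)$-length rollout with per-step mean in $[0,1]$ and variance $\le\sigma^2$, so a direct calculation gives $\sigma_Q^2:=\sup_{s,a}\Var(\widehat Q^{(t)}(s,a))=O\big((1+\sigma^2)/(1-\gamma)^2\big)$. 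Then, using $\|x_i\|\le1$, the per-sample square-loss gradient $\nabla f_i(w)=-2(y_i-x_i^\top w)x_i$ satisfies $\|\Cov_{\mathrm{clean}}[\nabla f_i(w)]\|_{\op}\le 4\,\EE[(y_i-x_i^\top w)^2]\le\sigma_{grad}^2:=O(\sigma_Q^2+W^2)$ uniformly over $\|w\|\le W$ — precisely the boundedness precondition of \sever{}. Invoking \sever{}'s guarantee (with its threshold set to this bound and $M$ polynomially large so the empirical gradient covariance concentrates), $w^{(t)}$ is, with probability $1-\delta$, an $O(\sigma_{grad}\sqrt{\epsilon^{(t)}})$-approximate stationary point of the clean population risk $f$. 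Since $f$ is convex and the feasible set has radius $\le 2W$, convexity gives $f(w^{(t)})-f(w^{\pi^{(t)}})\le\langle\nabla f(w^{(t)}),w^{(t)}-w^{\pi^{(t)}}\rangle\le\|\nabla f(w^{(t)})\|\,\|w^{(t)}-w^{\pi^{(t)}}\|=O(W\sigma_{grad}\sqrt{\epsilon^{(t)}})=\tilde O\big(\tfrac{W^2+\sigma W}{1-\gamma}\sqrt{\epsilon^{(t)}}\big)$, and $f(w^{(t)})-f(w^{\pi^{(t)}})=\EE[(x^\top(w^{(t)}-w^{\pi^{(t)}}))^2]$ is exactly the prediction error $\epsilon_{stat}^{(t)}$. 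On the $\delta$-probability failure event the integrand is still $O(W^2+1/(1-\gamma)^2)$ (bounded true $Q$, $\|w^{(t)}\|\le W$), so taking $\delta$ polynomially small together with the finite-sample concentration term contributes the residual $\alpha_M$.

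For aggregation, substitute $\epsilon_{stat}^{(t)}\le c\sqrt{\epsilon^{(t)}}+\alpha_M$ with $c=\tilde O(\tfrac{W^2+\sigma W}{1-\gamma})$ into \lemref{thm:npg_regret} and use $\sqrt{a+b}\le\sqrt a+\sqrt b$ to split the sum. By Jensen/Hölder and $\sum_t\epsilon^{(t)}\le\epsilon T$, $\tfrac1T\sum_t(\epsilon^{(t)})^{1/4}\le\big(\tfrac1T\sum_t\epsilon^{(t)}\big)^{1/4}\le\epsilon^{1/4}$, so dividing the regret by $T$ and using $\EE[V^*(\mu_0)-V^{\hat\pi}(\mu_0)]=\tfrac1T\EE[\sum_t(V^*(\mu_0)-V^{(t)}(\mu_0))]$ for the uniform mixture $\hat\pi$ yields an average regret of $O\big(\tfrac{W}{1-\gamma}\sqrt{\log|\Acal|/T}\big)+\tilde O\big(\sqrt{\tfrac{|\Acal|\kappa(W^2+\sigma W)}{(1-\gamma)^4}}\,\epsilon^{1/4}\big)+O\big(\sqrt{\tfrac{|\Acal|\kappa\alpha_M}{(1-\gamma)^3}}\big)$. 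Choosing $T$ polynomially large to drive the first term below $\alpha$ and $M$ polynomially large to drive the third term below $\alpha$ gives the claimed bound with total sample complexity $TM=\poly(\cdot)$; as in \thmref{thm:npg_natural}, none of these choices use $\epsilon$.

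The main obstacle is the per-iteration robust regression. Two points need care: (i) bounding the variance of the Monte-Carlo $Q$-estimator when rewards are stochastic and possibly unbounded — this is the source of the $\sigma$-dependence and the $(1-\gamma)$ powers; and (ii) converting \sever{}'s conclusion, which is only an \emph{approximate stationary point} (the clean population risk need not be strongly convex in degenerate feature directions), into a prediction-error bound. The bounded-domain convexity inequality $f(\hat w)-f(w^*)\le\|\nabla f(\hat w)\|\,\|\hat w-w^*\|$ is what produces the $\sqrt{\epsilon^{(t)}}$ rate, which after the square root in \lemref{thm:npg_regret} and the power-mean summation becomes the final $\epsilon^{1/4}$; note that $d$ enters only the sample size (via \sever{}'s covariance concentration), not the optimality gap. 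A secondary subtlety — that the adversary adaptively chooses which episodes to corrupt after observing the rollouts — requires no extra argument, since this is exactly the $\epsilon$-contamination model \sever{} is designed for (the uncorrupted episodes remain genuine i.i.d.\ rollouts).
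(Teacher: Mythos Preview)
Your overall architecture matches the paper's: invoke \lemref{thm:npg_regret}, bound each $\epsilon_{stat}^{(t)}$ via a robust-regression guarantee for \sever{}, aggregate via the power-mean inequality, and choose $T,M$ agnostic to $\epsilon$. The variance computation for $\widehat Q$, the gradient-covariance bound, the convexity step from approximate stationarity to excess risk, and the $(\epsilon^{(t)})^{1/4}$ aggregation are all essentially what the paper does.

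There is, however, one genuine gap. You write that \sever{} returns an $O(\sigma_{grad}\sqrt{\epsilon^{(t)}})$-approximate stationary point ``with probability $1-\delta$'' and then absorb the $\delta$-failure event by taking $\delta$ polynomially small. The off-the-shelf \sever{} guarantee~\cite{diakonikolas2019sever} holds only with a fixed constant probability (roughly $9/10$), and the paper stresses explicitly that there is no direct way to boost this to a high-probability or expectation statement. To close this, the paper carries out a nontrivial modified analysis of \sever{} (their Appendix on stability with subgaussian rate): it imports the stability-set argument of~\cite{diakonikolas2020outlier} to show that with probability $1-\tau$ a large stable subset exists, reproves \sever{}'s filtering lemmas with a variable confidence parameter $p\ge\sqrt{\epsilon^{(t)}}$, and then \emph{integrates} the tail $P(\bar f(w)-\bar f(w^*)>x)$ to obtain an expected excess-risk bound. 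That is the content of \lemref{thm:sever_result}, which your sketch treats as a black box that \sever{} already provides. Without this step your per-iteration bound is only available with constant probability, and summing over $T$ iterations (with $T$ growing to drive the mirror-descent term below $\alpha$) does not give the claimed expectation bound.

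A secondary point you omit but the paper handles: \sever{}'s guarantee requires $\epsilon^{(t)}\le c$ for an absolute breakpoint $c$. The paper separates out the at most $\epsilon T/c$ iterations where $\epsilon^{(t)}>c$, bounds $\epsilon_{stat}^{(t)}$ trivially by $(2W)^2$ there, and shows this contributes only an $O(\epsilon)$ term to the average regret. You should note this case explicitly.
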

The proof of Theorem \ref{thm:fpg} relies on a similar result to Lemma \ref{thm:robust_ols}, which shows that if we use Algorithm \ref{alg:sever} as the linear regression subroutine, then $\epsilon_{stat}^{(t)}$ can be bounded by $O(\sqrt{\epsilon^{(t)}})$ when the sample size $M$ is large enough, even under unbounded $\epsilon$-contamination.
\begin{lemma}[Robustness of \texttt{SEVER} under unbounded contamination]\label{thm:sever_result}
	Suppose the adversarial rewards are unbounded, and in a particular iteration $t$, the adversarial contaminate $\epsilon^{(t)}$ fraction of the episodes, then given M episodes, it is guaranteed that if $\epsilon^{(t)}\leq c$, for some absolute constant c, and any constant $\tau\in[0,1]$, we have 
	\begin{align}
		&\EE \left[\EE_{s,a\sim d^{(t)}}\left[\left(Q^{\pi^{(t)}}(s,a)-\phi(s,a)^\top w^{(t)} \right)^2\right]\right]
		\\
		&\leq
		O\left(\left(W^2+\frac{\sigma W}{1-\gamma}\right)\left(\sqrt{\epsilon^{(t)}}+
		f(d,\tau)M^{-\frac{1}{2}} + \tau\right)\right).\nonumber
	\end{align}
	where $f(d,\tau) = \sqrt{d\log d}+\sqrt{\log(1/\tau)}$.
\end{lemma}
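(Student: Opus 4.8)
The plan is to recast each iteration $t$ of Algorithm~\ref{alg:q_npg_sample} as an instance of \emph{robust least-squares regression} and invoke the guarantee of \texttt{SEVER} \cite{diakonikolas2019sever} essentially as a black box. Fix the policy $\pi^{(t)}$ and set $x_i=\phi(s_i,a_i)$, $y_i=\widehat Q^{(t)}(s_i,a_i)$, $f_i(w)=(y_i-w^\top x_i)^2$. On a clean episode, $(s_i,a_i)$ is drawn from the sampling distribution $d^{(t)}$ and $y_i$ is produced by an independent run of the $Q^{\pi^{(t)}}$-estimator, so by Assumption~\ref{ass:linearMDP} this is a \emph{well-specified} linear model: $\EE[y_i\mid x_i]=x_i^\top w^\star$ with $w^\star=w^{\pi^{(t)}}$, $\|w^\star\|_2\le W$, and $\|x_i\|_2\le 1$. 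An adversarial episode corresponds to an \emph{arbitrary, possibly unbounded} corruption of the pair $(x_i,y_i)$, so the dataset handed to \texttt{SEVER} is exactly an $\epsilon^{(t)}$-corrupted sample in the strong sense handled by \cite{diakonikolas2019sever} (this is one iteration of Definition~\ref{def: eps_con}); the hypothesis $\epsilon^{(t)}\le c$ is the small-corruption condition \texttt{SEVER} requires, and the fact that $|y_i|$ may be unbounded is precisely what breaks the plain-regression analysis of Lemma~\ref{thm:robust_ols} and forces the filtering step.

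To apply \texttt{SEVER} we need (i) the radius of the search domain, which is $W$, and (ii) a spectral bound $\sigma_{\mathrm{grad}}^2 I\succeq\Cov_{\mathrm{clean}}[\nabla f(w)]$ on the covariance of clean gradients over that domain. Writing $y=x^\top w^\star+\xi$ with $\EE[\xi\mid x]=0$, we have $\nabla f(w)=-2(\xi+x^\top(w^\star-w))x$, and since $\|x\|_2\le 1$ and $\|w-w^\star\|_2\le 2W$,
\[
\Cov_{\mathrm{clean}}[\nabla f(w)]\ \preceq\ 4\big(\EE[\xi^2]+4W^2\big)\,I .
\]
The one MDP-specific estimate is $\EE[\xi^2]=\EE_{d^{(t)}}\!\big[\Var\big(\widehat Q^{(t)}(s,a)\mid s,a\big)\big]$: a one-step recursion on the second moment of the geometrically-truncated rollout estimator, combined with $\EE[r(s,a)]\in[0,1]$, $\Var(r(s,a))\le\sigma^2$, and the bounded-norm linear representation $Q^{\pi^{(t)}}=\phi^\top w^{\pi^{(t)}}$, $\|w^{\pi^{(t)}}\|_2\le W$, yields $\EE[\xi^2]=O\!\big(W^2+\sigma^2/(1-\gamma)\big)$ and hence $\sigma_{\mathrm{grad}}=O\!\big(W+\sigma/\sqrt{1-\gamma}\big)$. \texttt{SEVER} then returns $w^{(t)}$ which, with probability at least $1-\tau$ over the clean sampling and its internal randomness, is an approximate stationary point of the clean population loss $\bar f(w):=\EE_{\mathrm{clean}}[(y-w^\top x)^2]$,
\[
\big\|\nabla\bar f(w^{(t)})\big\|_2 \ =\ O\!\Big(\sigma_{\mathrm{grad}}\big(\sqrt{\epsilon^{(t)}}+f(d,\tau)M^{-1/2}\big)\Big),
\]
where the $M^{-1/2}$ term absorbs the concentration of the empirical gradient covariance and the empirical loss to their population versions (so that $M=\tilde\Omega(\poly(d)/\epsilon^{(t)})$ episodes suffice), $\sqrt{d\log d}$ in $f$ is the dimension dependence of that concentration, and $\sqrt{\log(1/\tau)}$ is the usual high-probability price.

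Finally we convert this stationary-point guarantee into the stated bound on $\epsilon_{\mathrm{stat}}^{(t)}$. Since the model is well-specified, $\nabla\bar f(w)=2\Sigma(w-w^\star)$ with $\Sigma=\EE_{\mathrm{clean}}[xx^\top]=\Sigma_{d^{(t)}}$, and
\[
\EE_{s,a\sim d^{(t)}}\!\big[(Q^{\pi^{(t)}}(s,a)-\phi(s,a)^\top w^{(t)})^2\big]\ =\ \|w^{(t)}-w^\star\|_\Sigma^2\ =\ \bar f(w^{(t)})-\bar f(w^\star).
\]
By convexity of $\bar f$ and $\|w^{(t)}-w^\star\|_2\le 2W$ this is at most $\langle\nabla\bar f(w^{(t)}),w^{(t)}-w^\star\rangle\le 2W\,\|\nabla\bar f(w^{(t)})\|_2$, and $W\sigma_{\mathrm{grad}}=O\!\big(W^2+\sigma W/\sqrt{1-\gamma}\big)=O\!\big(W^2+\sigma W/(1-\gamma)\big)$, which is the claimed bound on the $(1-\tau)$-probability event. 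On the complementary event we use the deterministic bound $\|w^{(t)}-w^\star\|_\Sigma^2\le\|w^{(t)}-w^\star\|_2^2\le 4W^2$; adding $\tau\cdot 4W^2$ to the expectation produces the ``$+\tau$'' inside the prefactor. The outer expectation over $\pi^{(t)}$ is harmless because every bound above holds uniformly given $\|w^{\pi^{(t)}}\|_2\le W$.

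\medskip\noindent
Because \texttt{SEVER} is used as a black box, the proof is a reduction and the delicate part is bookkeeping rather than a new idea. I expect the main obstacle to be step~(ii): controlling $\Var(\widehat Q^{(t)}(s,a))$ with the correct $(1-\gamma)$-dependence, which needs the second-moment recursion together with a careful use of the bounded-norm linear representation of $Q^{\pi^{(t)}}$ (the crude pointwise bound $|Q^{\pi^{(t)}}|\le W$ alone loses a factor $1/(1-\gamma)$). A secondary point is threading \texttt{SEVER}'s finite-sample and failure-probability terms through the convexity conversion so that they scale like the $\sqrt{\epsilon^{(t)}}$ term with the same $W^2+\sigma W/(1-\gamma)$ prefactor.
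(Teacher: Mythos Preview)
Your high-level reduction is exactly what the paper does: view iteration $t$ as robust least-squares with covariates $\phi(s_i,a_i)$ and responses $\widehat Q^{(t)}(s_i,a_i)$, bound the covariance of the clean gradients, invoke \texttt{SEVER}, and convert the approximate-critical-point guarantee to an excess-risk bound via convexity over the ball of radius $W$. The paper's proof (Appendix~\ref{sec:sec5}) literally says ``instantiate Theorem~\ref{thm:sever_main} to our specific linear regression problem instance'' and then plugs in $s=1$, $C=1$, $r=W$, and $\xi=\Var(\widehat Q)$ from Lemma~\ref{lemma:var}.

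There is, however, a genuine gap in your black-box usage of \texttt{SEVER}. The published \texttt{SEVER} guarantee \cite{diakonikolas2019sever} only holds with probability $9/10$; the paper is explicit that ``there is no direct way of translating it into either a high-probability argument or an expectation argument.'' Your step ``with probability at least $1-\tau$ \ldots\ then add $\tau\cdot 4W^2$ on the complement'' presumes a $1-\tau$ version that does not follow from the black box. The paper spends all of Appendix~\ref{sec:sever} re-deriving \texttt{SEVER}: it imports the stability-with-subgaussian-rate result of \cite{diakonikolas2020outlier} (Theorem~\ref{thm:stab_w_subG_rate}) to get the $1-\tau$ event for stability of the clean gradients, reproves the filter analysis (Lemmas~\ref{lem:bad-elts}--\ref{lem:final-set}) with an explicit success parameter $p$, and then \emph{integrates over the tail} (proof of Theorem~\ref{thm:sever_main}) to obtain the expected excess-risk bound. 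That tail integration, not simply ``add $\tau\cdot(\text{worst case})$,'' is what produces the $\sqrt{\epsilon^{(t)}}+f(d,\tau)M^{-1/2}+\tau$ shape; your concentration step ``the $M^{-1/2}$ term absorbs the concentration of the empirical gradient covariance'' is exactly the piece that requires \cite{diakonikolas2020outlier} rather than the original \texttt{SEVER}.

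A secondary point: your claimed noise bound $\EE[\xi^2]=O\!\big(W^2+\sigma^2/(1-\gamma)\big)$ is too optimistic. The paper's Lemma~\ref{lemma:var} proves $\Var(\widehat Q)\le\gamma/(1-\gamma)^2+\sigma^2/(1-\gamma)$ and shows this is \emph{tight}. Your ``one-step recursion plus $|Q^\pi|\le W$'' yields at best $\Var(\widehat Q)=O\!\big(W^2/(1-\gamma)+\sigma^2/(1-\gamma)\big)$, not $O(W^2)$ in the first term: writing $\widehat Q(s_0,a_0)=r_0+B\,\widehat Q'(s_1,a_1)$ with $B\sim\mathrm{Ber}(\gamma)$ and unrolling gives a factor $1/(1-\gamma)$ in front of $W^2$ that you cannot remove without assuming $W\gtrsim 1/(1-\gamma)$. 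The paper sidesteps this by plugging $\xi=1/(1-\gamma)^2+\sigma^2/(1-\gamma)$ directly into the $sr\sqrt{\xi}+s^2r^2$ form of Theorem~\ref{thm:sever_main}; matching the stated prefactor $W^2+\sigma W/(1-\gamma)$ then implicitly uses $W/(1-\gamma)\lesssim W^2+\sigma W/(1-\gamma)$, not a sharper variance bound.
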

In Lemma \ref{thm:sever_result}, $c$ is the break point of \texttt{SEVER} and is an absolute constant that does not depend on the data, and $(1-\tau)$ is the probability that the clean data satisfies a certain stability condition which suffices for robust learning. 

\section{Robust NPG with Exploration via Policy Cover}
The Policy Cover-Policy Gradient (\texttt{PC-PG}) algorithm, defined in Algorithm \ref{alg:pcpg}, is an exploratory policy gradient methods recently developed by \cite{agarwal2020pc}. Intuitively, PC-PG is a spiritually inheritor of the \texttt{RMax} algorithm \cite{brafman2002r}, and encourages exploration by adding reward bonuses in directions of the feature space that past polices (stored in the policy cover) haven't visited sufficiently.
Similar to the NPG algorithm, we show that \texttt{PC-PG} enjoys a (weaker) natural robustness against bounded data corruption. This gives us the following robustness guarantee:
\begin{algorithm}[!t]
	\begin{algorithmic}[1]
		\State \hspace*{-0.1cm}\textbf{Input}: iterations $N$, threshold $\beta$, regularizer $\lambda$
		\State \hspace*{-0.1cm}Initialize $\pi^0(a|s)$ to be uniform.
		\For{episode $n = 0, \dots N-1$}
		
		\State Define the policy cover's state-action distribution $\rho^n_{\text{cov}}$ as
		\begin{equation*}
			\rho_{\text{cov}}^n(s,a) = \sum_{i=0}^{n} d^i(s,a)/(n+1)
			\label{eq:cover_def}
		\end{equation*}
		\State  Sample $\{s_i,a_i\}_{i=1}^K \sim \rho_{\text{cov}}^n(s,a)$ and estimate the covariance of $\pi^n$ as
		\begin{equation*}
			\widehat{\Sigma}^n = (n+1)\left(\sum_{i=1}^K \phi(s_i,a_i)\phi(s_i,a_i)^{\top}/K\right)+\lambda I
		\end{equation*}\label{line:feature_cov}
		\State Set the exploration bonus $b^n$ to reward infrequently visited state-action under $\rho^n_{\text{cov}}$ \label{line:bonus}
		
		\begin{equation*}
			b^n(s,a) = \frac{\one\{\sa~:~ \phi\sa^\top (\widehat{\Sigma}_{\text{cov}}^n)^{-1}\phi\sa \geq \beta\}}{1-\gamma}.
		\end{equation*}
		\State Update $\pi^{n+1} = \textrm{Robust-NPG-Update}(\rho^n_{\text{cov}}, b^n)$ [Alg. \ref{alg:npg2} in the appendix, similar to Alg. \ref{alg:q_npg_sample}].
		\EndFor
		\State \Return $\hat\pi \defeq \text{Uniform}\{\pi^0,...,\pi^{N-1}\}$.
	\end{algorithmic}
	\caption{Robust Policy Cover-Policy Gradient (PC-PG)}
	\label{alg:pcpg}
\end{algorithm}

\begin{theorem}[Best hyperparameters, assuming known $\epsilon$]\label{thm:pcpg}
	There exists a set of hyperparameters, such that Algorithm \ref{alg:pcpg} guarantees with probability at least $1-\delta$
	\begin{eqnarray}
	\EE\left[V^{*}(\mu_0) - V^{\hat\pi}(\mu_0)\right]\leq
		\tilde O\left(
		d^{2}\epsilon^{1/7}
		\right)
	\end{eqnarray}
	with $\poly\left(d,W,\sigma,\kappa, |\A|,1/(1-\gamma),1/\alpha\right)$ number of episodes.
\end{theorem}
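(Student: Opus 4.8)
The plan is to reduce Theorem~\ref{thm:pcpg} to the analysis of the clean \texttt{PC-PG} algorithm of \cite{agarwal2020pc}, re-derived with a contamination-robust regression oracle in place of ordinary least squares. Concretely, the Robust-NPG-Update subroutine is instantiated exactly as Algorithm~\ref{alg:q_npg_sample} but calls Algorithm~\ref{alg:sever} (\texttt{SEVER}) as the linear-regression solver, on the bonus-augmented reward $r+b^n$; so within each outer episode $n$ and each inner NPG iteration $t$, Lemma~\ref{thm:sever_result} applies essentially verbatim (the bonus $b^n\in[0,1/(1-\gamma)]$ only rescales $Q$, hence the $W,\sigma$ factors), yielding $\epsilon^{(t)}_{stat}\le O\bigl((W^2+\tfrac{\sigma W}{1-\gamma})(\sqrt{\epsilon^{(t)}}+\text{vanishing}\ M^{-1/2}\text{ term}+\tau)\bigr)$ measured under the cover distribution $\rho^n_{\text{cov}}$. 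Since the total corruption budget obeys $\sum_{n,t}\epsilon^{(t)}\le\epsilon\cdot(\#\text{inner iterations})$, concavity of $x\mapsto x^{1/4}$ shows the averaged regression \emph{error} $\frac1T\sum_t\sqrt{\epsilon^{(t)}_{stat}}$ contributes only $\epsilon^{1/4}$ — this is the source of the $\epsilon^{1/4}$ (rather than $\epsilon^{1/2}$) scaling inherited from Theorem~\ref{thm:fpg}.

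Next I would plug this into the \texttt{PC-PG} value decomposition. Following \cite{agarwal2020pc}, $V^{*}(\mu_0)-\tfrac1N\sum_n V^{\pi^n}(\mu_0)$ splits into three pieces: (i) the average NPG sub-optimality of $\pi^{n+1}$ relative to the optimal policy of the bonus-augmented MDP, bounded by Lemma~\ref{thm:npg_regret} applied with the $\epsilon^{(t)}_{stat}$ above (giving the mirror-descent $\sqrt{T}$-type term, killable by taking $T$ polynomially large, plus $\sqrt{|\A|\kappa\,\epsilon^{(t)}_{stat}/(1-\gamma)^3}$); (ii) the ``known-state'' error — the contribution of state-actions with $b^n(s,a)=0$, i.e. $\phi\sa^\top(\widehat\Sigma^n)^{-1}\phi\sa<\beta$ — where a change of measure from $\rho^n_{\text{cov}}$ to $d^{*}$ is legitimate and costs a multiplicative $\beta$: schematically $\mathbb{E}_{d^*}[(\phi^\top\Delta w)^2\mathbf{1}\{\text{known}\}]\le\beta\,\Delta w^\top\widehat\Sigma^n\Delta w$, with $\Delta w^\top\widehat\Sigma^n\Delta w$ controlled by the per-iteration regression errors plus the ridge term $\lambda\|\Delta w\|^2\le 4\lambda W^2$; and (iii) the ``unknown-state'' bonus bias $\tfrac1N\sum_n\mathbb{E}_{d^*}[b^n]$, which — exactly as in the clean analysis and \emph{independently of $\epsilon$} — is controlled by an elliptic-potential / information-gain argument: the number of epochs in which this expectation is non-negligible is $\tilde O(d/\beta)$, so averaging over $N$ epochs the piece is $\tilde O\!\bigl(\tfrac{d}{\beta N(1-\gamma)}\bigr)$ plus a tunable slack.

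The genuinely new difficulty — absent in Theorem~\ref{thm:fpg} — is that the adversary can also corrupt the episodes used in line~\ref{line:feature_cov} of Algorithm~\ref{alg:pcpg} to form $\widehat\Sigma^n$, so the exploration bonus is itself built on contaminated data. I would handle this by (a) choosing $K$ large enough that, even if the adversary concentrates its entire budget on these episodes, the corrupted fraction of the covariance sample in any single epoch stays below an absolute constant, whence $\widehat\Sigma^n$ is within $O\bigl((n+1)\epsilon\bigr)$ in operator norm of its uncorrupted counterpart; and (b) taking the ridge parameter $\lambda=\Theta(N\epsilon)$ so that this additive perturbation is dominated, which keeps the known/unknown classification accurate up to a constant rescaling of $\beta$ but injects an irreducible $O(\beta\lambda W^2)=O(\beta N\epsilon W^2)$ term into piece (ii). Collecting terms, the $\epsilon$-dependent part of the bound takes the schematic form $\sqrt{\beta N}\,\epsilon^{1/4}\cdot\mathrm{poly}+\tfrac{d}{\beta N}\cdot\mathrm{poly}+\beta N\epsilon\cdot\mathrm{poly}$ (plus clean terms driven below $\alpha$ by taking $T,M,N$ polynomially large); optimizing the trade-off jointly over the bonus threshold $\beta$ and the ridge level $\lambda$ under the constraint $\lambda=\Theta(N\epsilon)$, and carrying the $\tilde O(d)$ informative-epoch count together with the dimension-dependence of the robust regression through the bound, yields the claimed $\tilde O(d^2\epsilon^{1/7})$. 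I expect step (iii)/(b) — quantifying how covariance corruption propagates into the bonus, and thence into both the known-state error and the elliptic-potential count, and then locating the joint parameter setting — to be the main obstacle; it is precisely this coupling that degrades the exponent from $1/4$ in Theorem~\ref{thm:fpg} down to $1/7$.
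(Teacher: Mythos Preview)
Your overall architecture (plug a contamination-robust regression oracle into the \texttt{PC-PG} decomposition, then balance the bonus/elliptic-potential terms against the corruption-induced regression error and the covariance-corruption bias) is the right shape, but it diverges from the paper's proof in two substantive ways, and your own trade-off does not actually land on the exponent $1/7$.

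\textbf{Regression oracle.} The paper instantiates the critic step with the \emph{bounded}-corruption regression bound (Lemma~\ref{thm:robust_ols}), so that $\epsilon_{stat}^{(n,t)}\lesssim \epsilon^{(n,t)}$ and, after Cauchy--Schwarz over $(n,t)$, the regression contribution in the PC-PG regret lemma is $\sqrt{\beta N\epsilon}$. You instead invoke \texttt{SEVER} (Lemma~\ref{thm:sever_result}), which gives $\epsilon_{stat}^{(n,t)}\lesssim\sqrt{\epsilon^{(n,t)}}$ and hence an $\epsilon^{1/4}$-type term. Both are legitimate, but they feed different powers of $\epsilon$ into the balancing step.

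\textbf{Covariance corruption.} The paper does \emph{not} inflate the ridge. It keeps $\lambda=1$ and instead uses Lemmas~\ref{thm:cov}--\ref{thm:cov1} to show that the bonus is well-defined except on at most $m\le 8N^2\epsilon/\lambda$ ``bad'' outer epochs; the PC-PG regret lemma (Lemma~\ref{thm:pcpg-regret}) then carries an explicit $2Hm$ penalty for those epochs. With the paper's choices $\beta\asymp\alpha^2$, $N\asymp d/\alpha^3$, this yields the dominant corruption term $Hm\asymp d^2\epsilon/\alpha^6$, and balancing $\alpha$ against $d^2\epsilon/\alpha^6$ is precisely what produces $\alpha=\epsilon^{1/7}$ and the final $\tilde O(d^2\epsilon^{1/7})$. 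Your ridge-inflation route ($\lambda=\Theta(N\epsilon)$) avoids the $Hm$ term altogether, so the mechanism that generates the $1/7$ exponent in the paper is absent from your argument.

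\textbf{The gap.} Because of the two changes above, your schematic balance $\sqrt{\beta N}\,\epsilon^{1/4}+d/(\beta N)+\beta N\epsilon$ does not optimize to $\epsilon^{1/7}$: equating the first two terms gives $\beta N\asymp d^{2/3}\epsilon^{-1/6}$ and a leading rate $d^{1/3}\epsilon^{1/6}$, while the third term (which, per Lemma~\ref{thm:pcpg-regret}, should actually enter as $\sqrt{\beta\lambda W^2}=\sqrt{\beta N\epsilon}$ rather than $\beta N\epsilon$) is then lower order. So either your route proves a \emph{different} (and in fact sharper) exponent than the theorem states, or---if you really want to reproduce $\epsilon^{1/7}$---you need to revert to the paper's bad-epoch accounting and the bounded-contamination regression bound, where the $HN^2\epsilon$ term is what drives the final trade-off.
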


\begin{remark}[The scaling with dimension $d$]Compared to the guarantee of vanilla NPG, \texttt{PC-PG} alleviate the requirement of a good initial distribution with small relative conditional number. However, this process introduce a dependency on $d$. In particular, the gap in Theorem \ref{thm:pcpg} is on the order of 
	$\tilde O\left(d^{2}\epsilon^{1/7}\right)$
	.
	This implies that for any fixed $\epsilon$, the bound becomes vacuous for high dimensional problems where $d\geq \Omega(\epsilon^{-2/3})$. Intuitively, the dependency on $d$ is introduced because \texttt{PC-PG} is trying to find a initial state-action distribution with good coverage, i.e. a distribution whose covariance matrix has a lower-bounded smallest eigenvalue. Under the assumption that $\|\phi(s,a)\|_2\leq 1$, such a distribution will have a covariance matrix whose eigenvalues are all on the order of $O(1/d)$. and so the value of $\kappa$ will be on the order of $O(d)$, which by Theorem \ref{thm:fpg} will similarly introduce a $d$ dependency. We expect that for a robust RL algorithm to avoid the $d$ dependency, it must gradually find a state-action distribution approaching $d^*$. How to design such an algorithm is left as an open problem. 
\end{remark}

\section{Experiments}
In the theoretical analysis, we rely on the assumption of linear Q function, finite action space and exploratory initial state distribution to prove the robustness guarantees for NPG and FPG. In this section, we present a practical implementation of \texttt{FPG}, based on the \textit{Trusted Region Policy Optimization} (\texttt{TRPO}) algorithm \cite{schulman2015trust}, in which the conjugate gradient step (equivalent to the linear regression step in Alg. \ref{alg:q_npg_sample}) is robustified with \texttt{SEVER}. The pseudo-code and implementation details are discussed in appendix \ref{sec:code}\footnote{A Pytorch Implementation of \texttt{FPG-TRPO} can be found at \url{https://github.com/zhangxz1123/FilteredPolicyGradient}}. In this section, we demonstrate its empirical performance on the MuJoCo benchmarks \cite{todorov2012mujoco}, a set of high-dimensional continuous control domains where both assumptions no longer holds, and show that \texttt{FPG} can still consistently performs near-optimally with and without attack.
\begin{figure*}[th!]
	\centering
	\begin{subfigure}{0.19\textwidth}
		\centering
		\includegraphics[width=0.8\columnwidth]{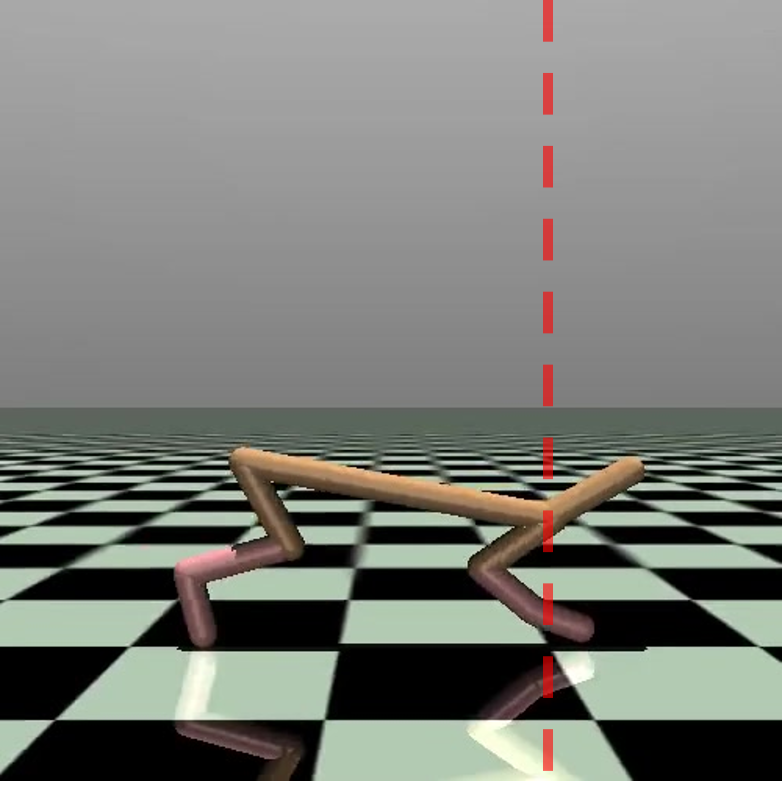}
	\end{subfigure}
	\begin{subfigure}{0.19\textwidth}
		\centering
		\includegraphics[width=0.8\columnwidth]{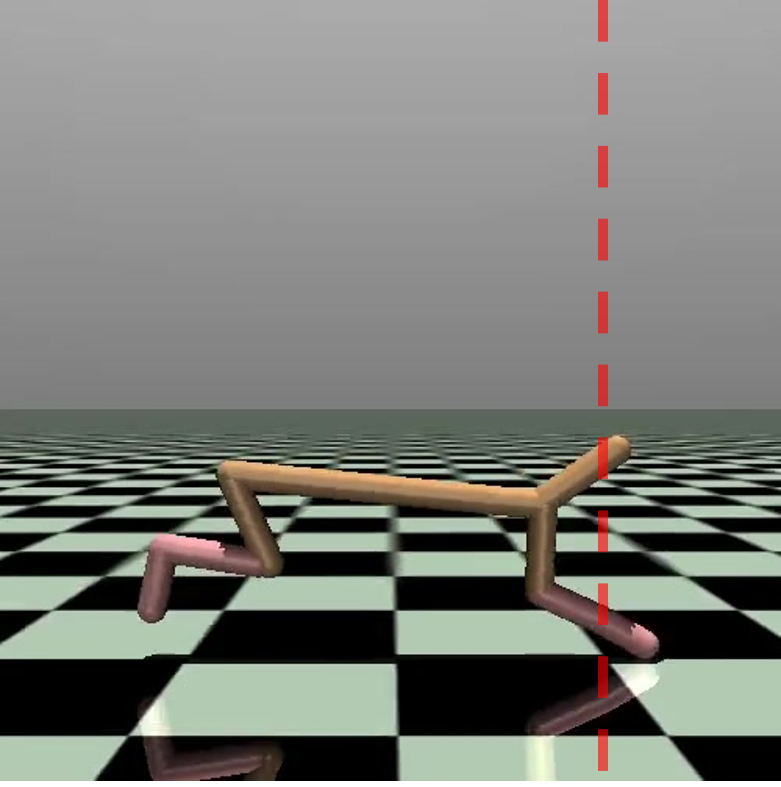}
	\end{subfigure}
	\begin{subfigure}{0.19\textwidth}
		\centering
		\includegraphics[width=0.8\columnwidth]{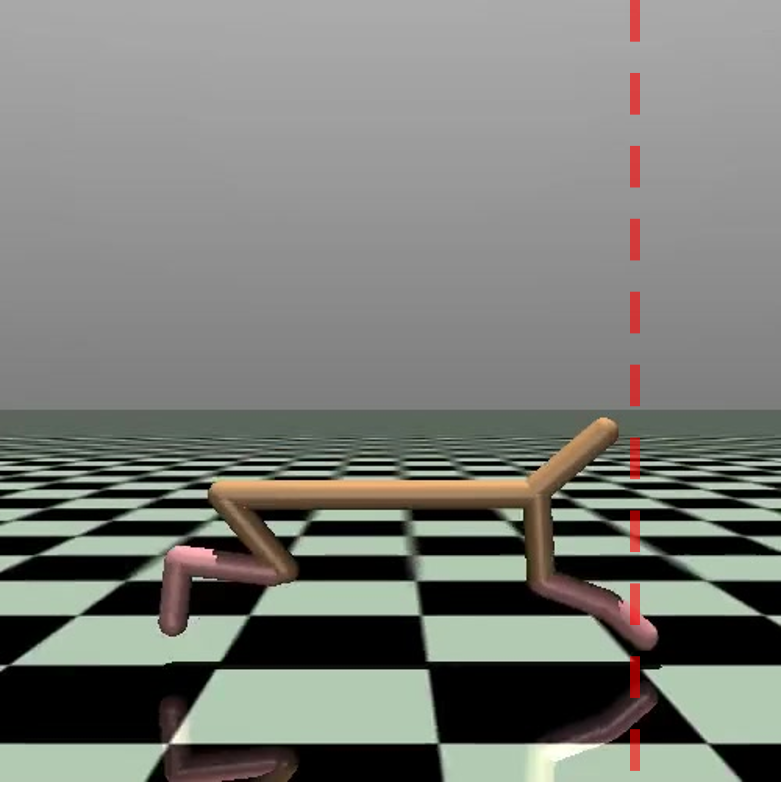}
	\end{subfigure}
	\begin{subfigure}{0.19\textwidth}
		\centering
		\includegraphics[width=0.8\columnwidth]{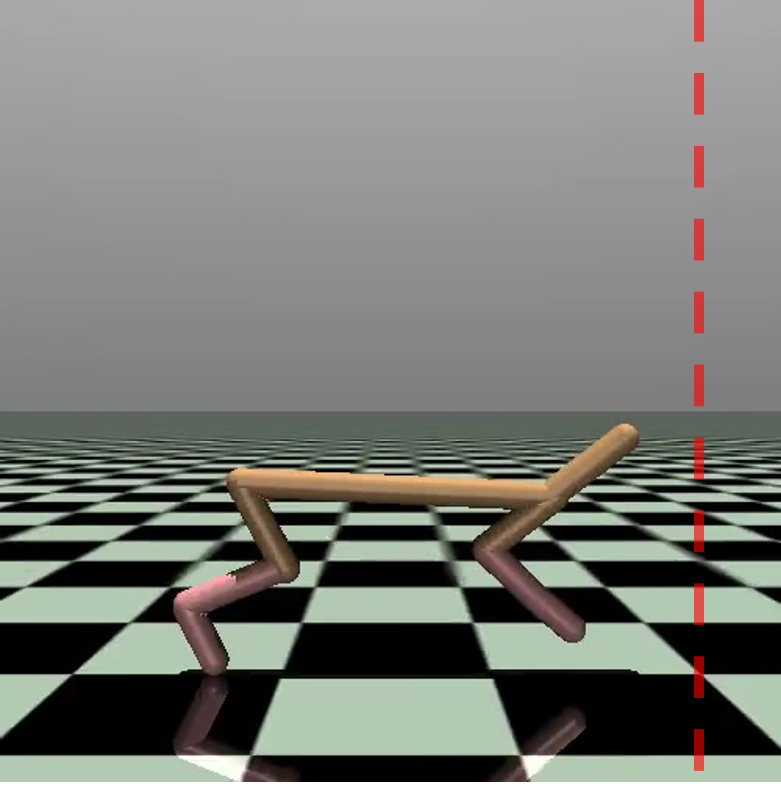}
	\end{subfigure}
	\begin{subfigure}{0.19\textwidth}
		\centering
		\includegraphics[width=0.8\columnwidth]{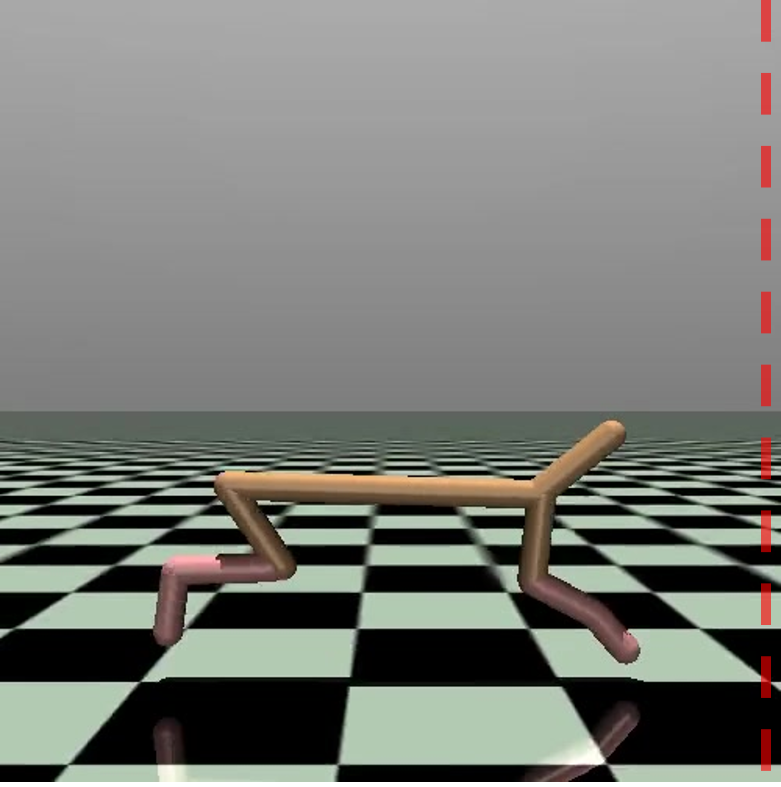}
	\end{subfigure}
	\\
	\begin{subfigure}{0.19\textwidth}
		\centering
		\includegraphics[width=0.8\columnwidth]{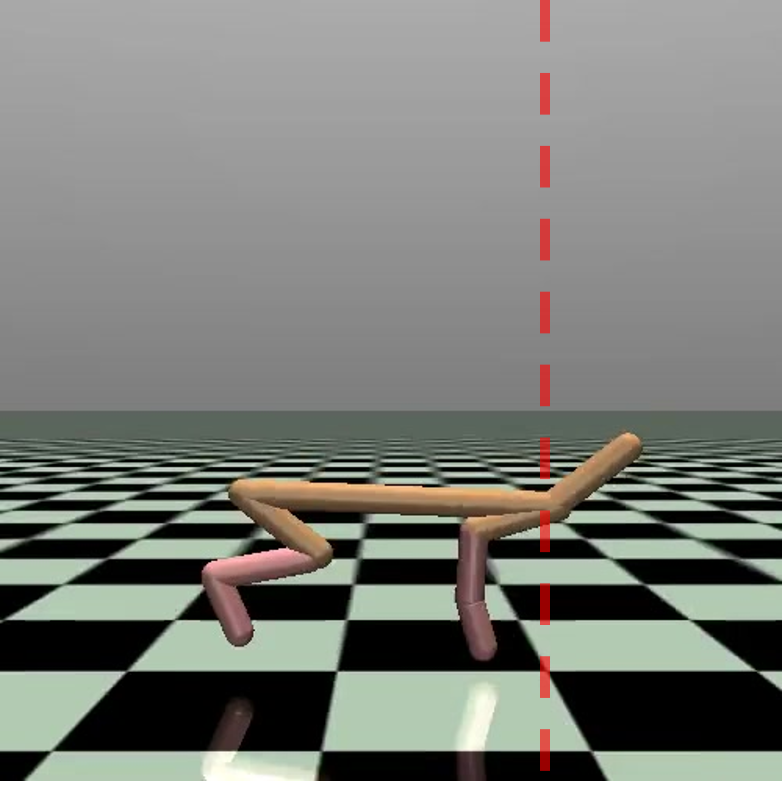}
	\end{subfigure}
	\begin{subfigure}{0.19\textwidth}
		\centering
		\includegraphics[width=0.8\columnwidth]{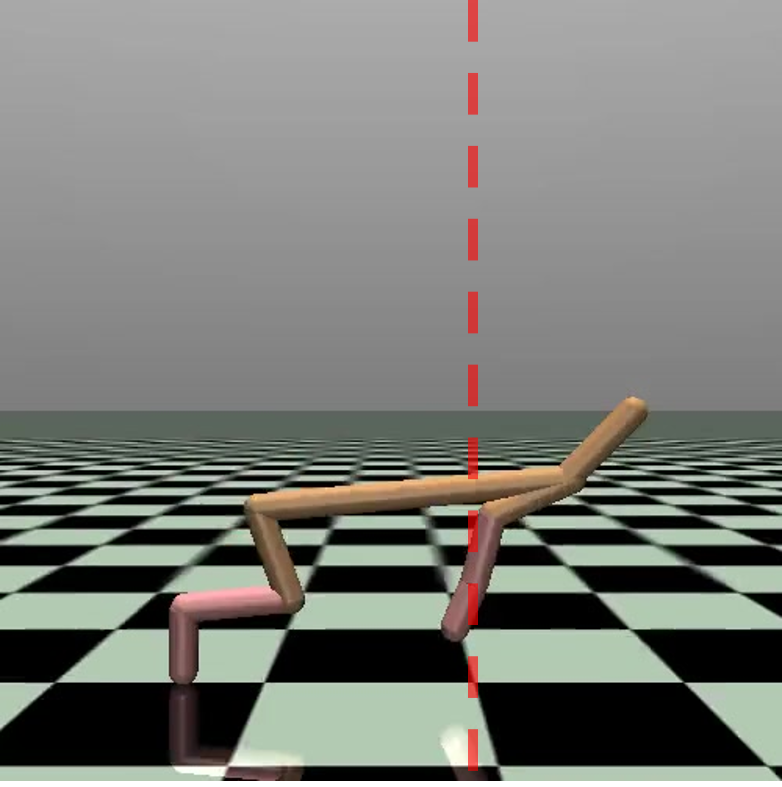}
	\end{subfigure}
	\begin{subfigure}{0.19\textwidth}
		\centering
		\includegraphics[width=0.8\columnwidth]{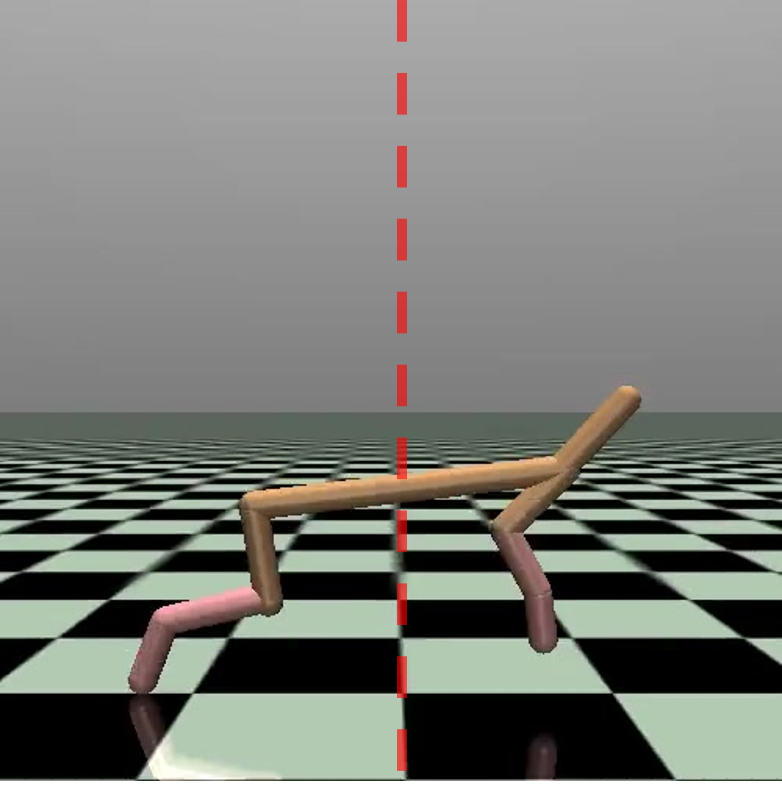}
	\end{subfigure}
	\begin{subfigure}{0.19\textwidth}
		\centering
		\includegraphics[width=0.8\columnwidth]{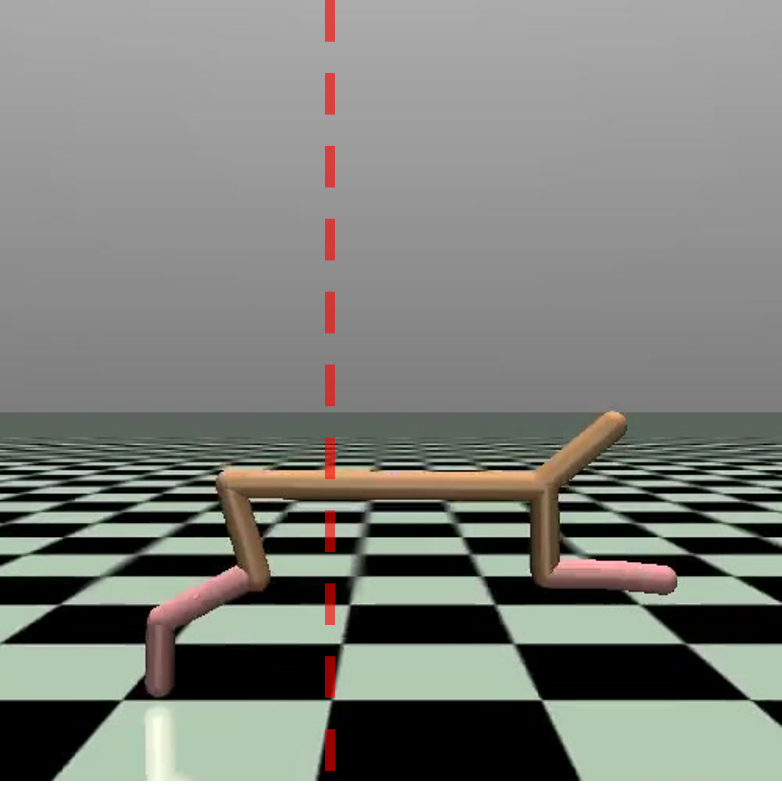}
	\end{subfigure}
	\begin{subfigure}{0.19\textwidth}
		\centering
		\includegraphics[width=0.8\columnwidth]{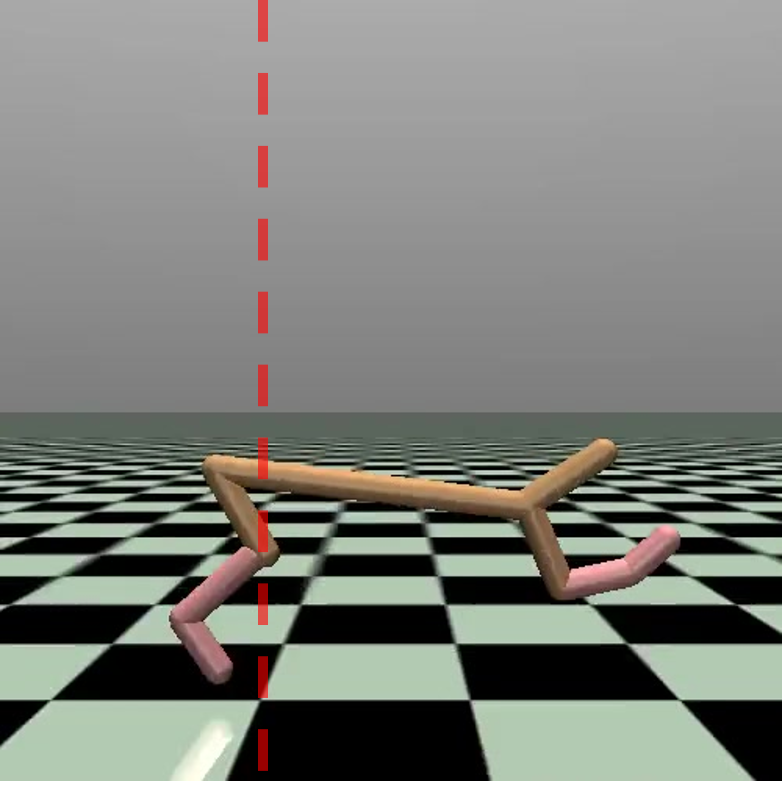}
	\end{subfigure}
	\caption{Consecutive Frames of Half-Cheetah trained with TRPO (top row) and FPG (bottom row) respectively under $\delta=100$ attack. The dashed red line serves as a stationary reference object. TRPO was fooled to learn a ''running backward'' policy, contrasted with the normal ''running forward'' policy learned by FPG.}
	\label{fig:cheetah}
\end{figure*}
\begin{figure*}[h!]
	\begin{subfigure}{0.305\textwidth}
		\centering
		\includegraphics[width=1\columnwidth]{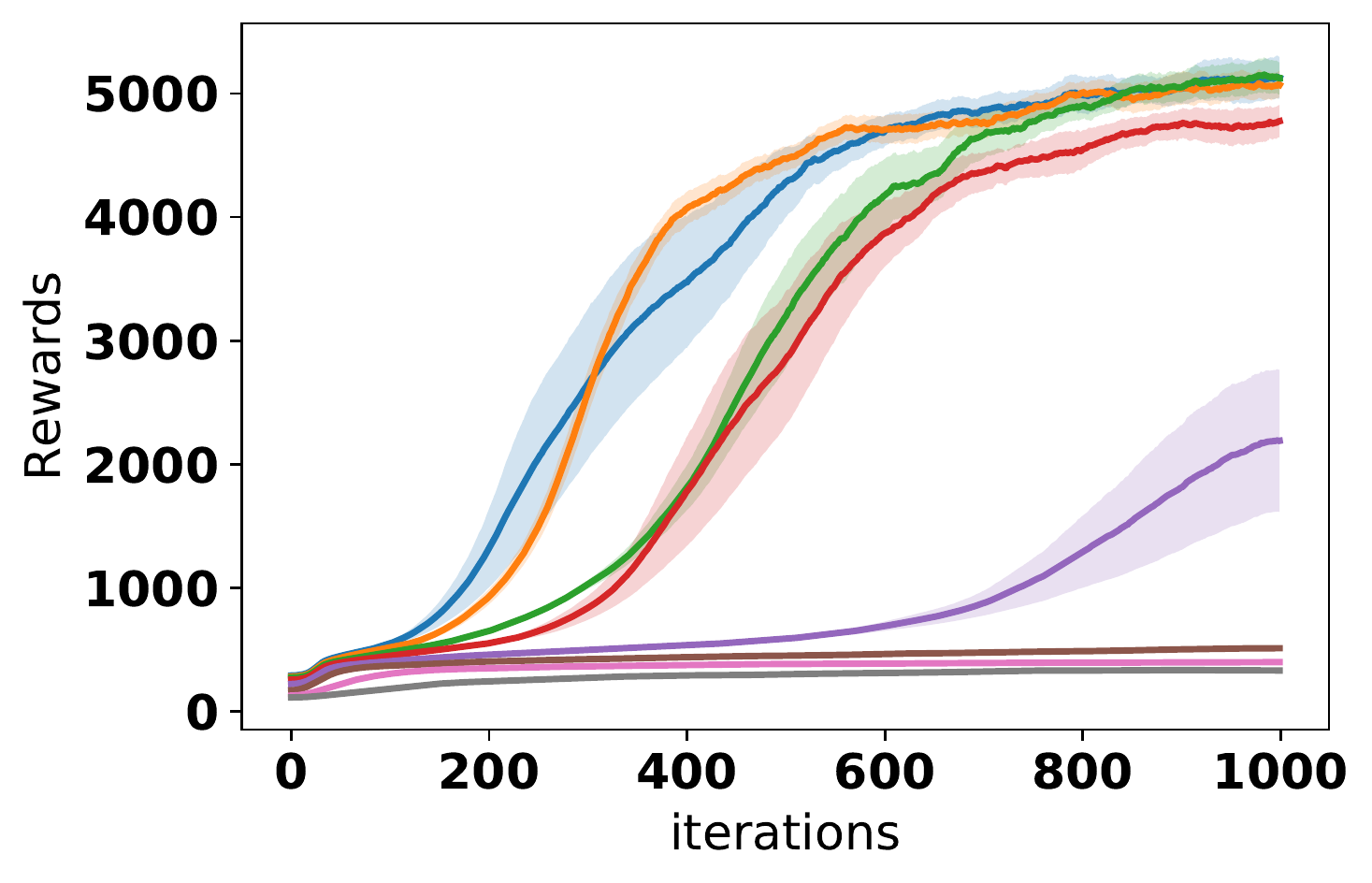}
		\caption{TRPO Rewards}
		\label{fig:TRPO_humanoid}
	\end{subfigure}
	\begin{subfigure}{0.305\textwidth}
		\centering
		\includegraphics[width=1\columnwidth]{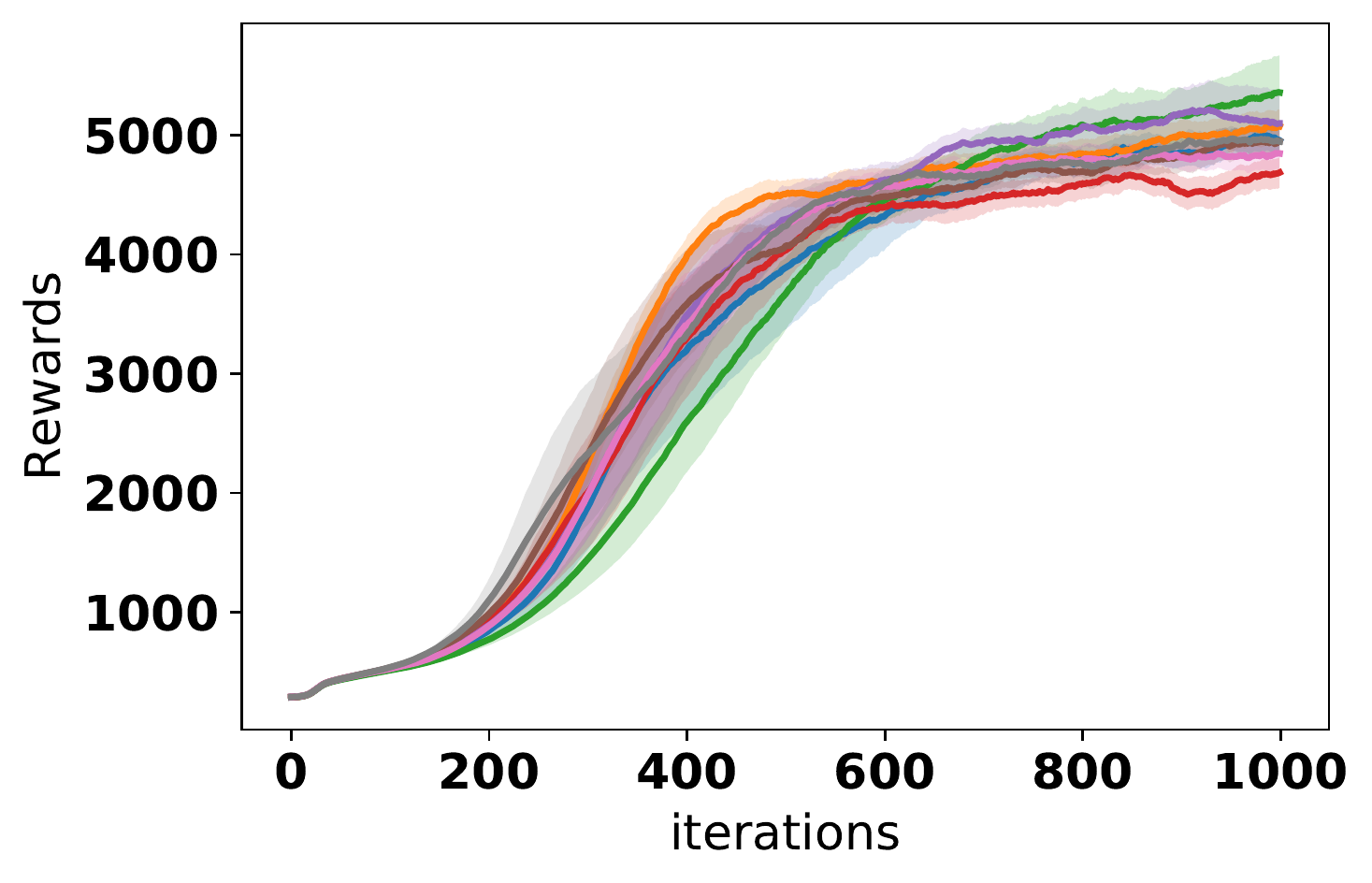}
		\caption{FPG Rewards}
		\label{fig:FPG_humanoid}
	\end{subfigure}
	\begin{subfigure}{0.39\textwidth}
		\centering
		\includegraphics[width=1\columnwidth]{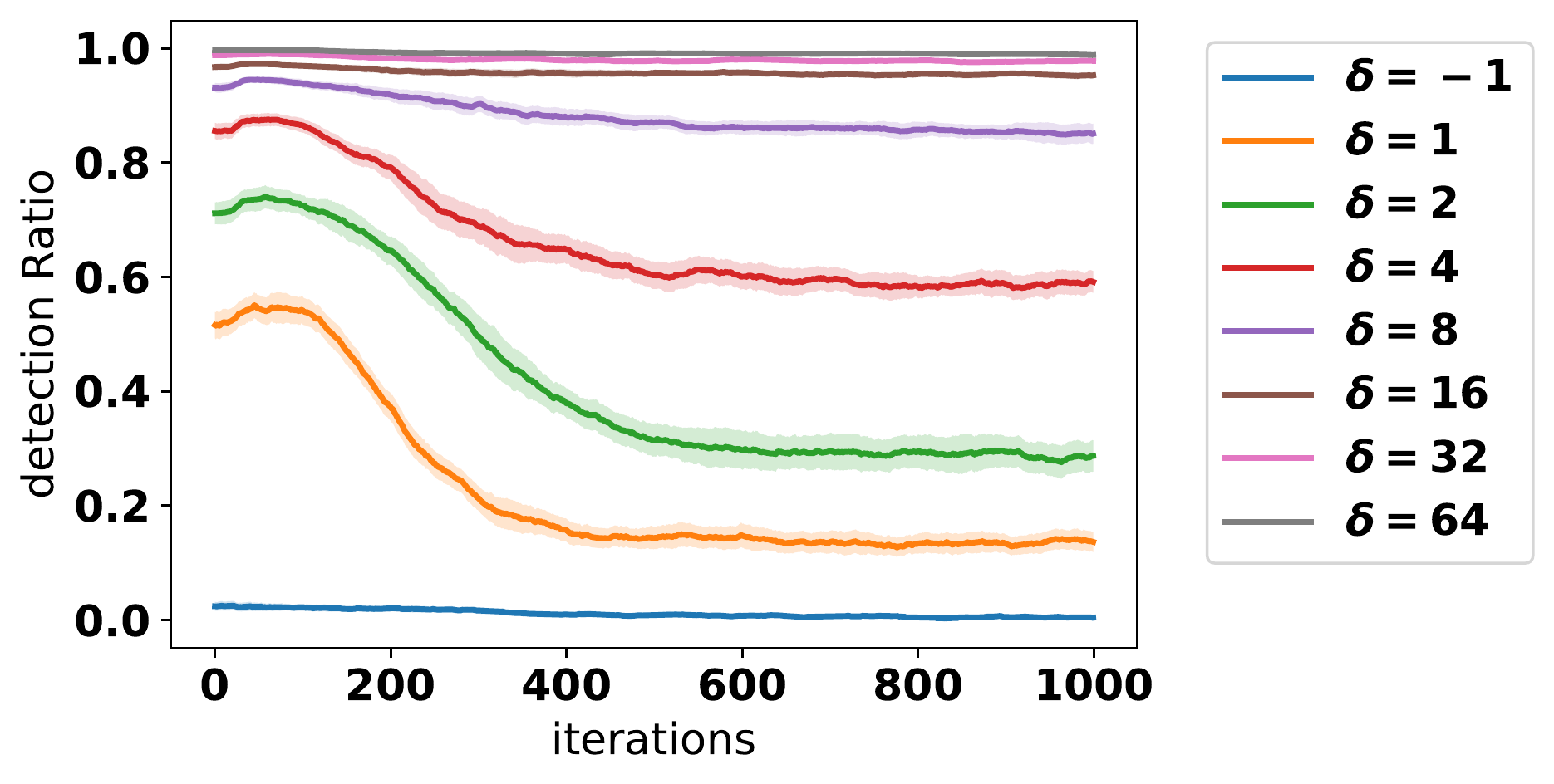}
		\caption{FPG Detection Ratio}
		\label{fig:FPG_detection_ratio}
	\end{subfigure}
	\caption{Detailed Results on Humanoid-v3.}
	\label{fig:humanoid_result}
\end{figure*}

\paragraph{Attack mechanism:} While designing and calculating the \textit{optimal} attack strategy against a deep RL algorithm is still a challenging problem and active area of research \cite{ma2019policy,zhang2020adaptive}, here we describe the poisoning strategy used in our empirical evaluation, which, despite being simple, can fool non-robust RL algorithms with ease. Conceptually, policy gradient methods can be viewed as a stochastic gradient ascent method, where each iteration can be simplified as:
\begin{eqnarray}
	\theta^{(t+1)} = \theta^{(t)} + g^{(t)}
\end{eqnarray}
where $g^{(t)}$ is a gradient step that ideally points in the direction of fastest policy improvement. Assuming that $g^{(t)}$ is a good estimate of the gradient direction, then a simple attack strategy is to try to perturb $ g^{(t)}$ to point in the $-g^{(t)}$ direction, in which case the policy, rather than improving, will deteriorate as learning proceed. A straightforward way to achieve this is to flip the rewards and multiply them by a big constant $\delta$ in the adversarial episodes. In the linear regression subproblem of Alg. \ref{alg:q_npg_sample}, this would result in a set of $(x,y)$ pairs whose $y$ becomes $-\delta y$. This in expectation will make the best linear regressor $w$ point to the opposite direction, which is precisely what we want. 

This attack strategy is therefore parameterized by a single parameter $\delta$, which guides the magnitude of the attack, and is \textbf{adaptively tuned} against each learning algorithm in the experiments: Throughout the experiment, we set the contamination level $\epsilon=0.01$, and tune $\delta$ among the values of $[1,2,4,8,16,32,64]$ to find the most effective magnitude against each learning algorithm. All experiments are repeated with 3 random seeds and the mean and standard deviations are plotted in the figures.

\paragraph{Results:} The experiment results are shown in Figure \ref{fig:exp_result}. Consistent patterns can be observed across all environments: vanilla \texttt{TRPO} performs well without attack but fails completely under the adaptive attack (which choose $\delta=64$ in all environments). \texttt{FPG}, on the other hand, matches the performance of vanilla \texttt{TRPO} with or without attack. Figure \ref{fig:cheetah} showcase two half-cheetah control policies learned by \texttt{TRPO} and \texttt{FPG} under attack with $\delta=100$. Interestingly, due to the large negative adversarial rewards, TRPO actually learns the ``running backward'' policy, showing that our attack strategy indeed achieves what it's designed for. In contrast, \texttt{FPG} is still able to learn the ''running forward'' policy despite the attack.

Figure \ref{fig:humanoid_result} shows the detailed performances of \texttt{TRPO} and \texttt{FPG} across different $\delta$'s on the hardest \textit{Humanoid} environment. 
One can observe that \texttt{TRPO} actually learns robustly under attacks of small magnitude ($\delta=1,2,4$) and achieves similar performances to itself in clean environments, verifying our theoretical result in Theorem \ref{thm:npg_natural}.
In contrast, \texttt{FPG} remains robust across all values of $\delta$'s. Figure \ref{fig:FPG_detection_ratio} shows the proportion of adversary data detected and removed by \texttt{FPG}'s filtering subroutine throughout the learning process. One can observe that as the attack norm $\delta$ increases, the filtering algorithm also does a better job detecting the adversarial data and thus protect the algorithm from getting inaccurate gradient estimates. Similar patterns can be observed in all the other environments, and we defer the additional figures to the appendix.

\section{Discussions}
To summarize, in this work we present a robust policy gradient algorithm \texttt{FPG}, and show theoretically and empirically that it can learn in the presence of strong data corruption. Despite our results, many open questions remain unclear and are interesting directions to pursue further:
\begin{enumerate}
	\item \texttt{FPG} does not handle exploration and relies on an exploratory initial distribution. Can we design algorithms that achieve the same \textit{dimension-free} robustness guarantee without such assumptions?
	\item Our $O(\epsilon^{1/4})$ upper-bound and $O(\epsilon)$ lower-bound are not tight. Information theoretically, what is the best robustness guarantee one can achieve under $\epsilon$-contamination?
	\item The \texttt{SEVER} algorithm requires computing the top eigenvalue of an $n\times d$ matrix, which is memory and time consuming when using large neural networks (large $d$). More computationally efficient robust learning method will be extremely valuable to make FPG truly scale.
	\item In the experiment, we focus on TRPO as the closest variant of NPG. Can other policy gradient algorithm, such as PPO and SAC, be robustified in similar fashions and achieve strong empirical performance?
\end{enumerate}
We believe that answering these questions will be important steps towards more robust reinforcement learning. 

\section{Acknowledgements}
We would like to thank Ankit Pensia and Ilias Diakonikolas for valuable discussions on SEVER and other robust statistics techniques. Xiaojin Zhu acknowledges NSF grants 1545481, 1704117, 1836978, 2041428, 2023239 and MADLab AF CoE FA9550-18-1-0166. Xuezhou Zhang is supported in part by NSF Award DMS-2023239.

\bibliography{robustPG}
\bibliographystyle{icml2021}

\newpage
\onecolumn
\appendix
\appendixpage

\section*{Table of Contents}
\begin{itemize}
	\item Section \ref{sec:lb} presents the proof for the lower-bound result.
	\item Section \ref{sec:prop} derives the mean and variance of the $Q$-sampler.
	\item Section \ref{sec:sec4} presents the proof for the natural robustness of NPG.
	\item Section \ref{sec:sever} presents the modified analysis of SEVER.
	\item Section \ref{sec:sec5} presents the proof for the robustness of FPG.
	\item Section \ref{sec:code} presents the pseudo-code of the TRPO variant of the FPG algorithm used in the experiments and additional experiment details.
\end{itemize}
\section{Proof for the lower-bound result}\label{sec:lb}
\begin{theorem}[Theorem \ref{thm:lb}]\label{thm:lb1}
	For any algorithm, there exists an MDP such that the algorithm fails to find an $\left(\frac{\epsilon}{2(1-\gamma)}\right)$-optimal policy under the $\epsilon$-contamination model with a probability of at least $1/4$.
\end{theorem}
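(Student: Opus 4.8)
The plan is to exhibit a pair of MDPs $M,M'$ together with an $\epsilon$-contamination adversary that makes the observable feedback nearly identical in the two worlds, while no single policy can be $\tfrac{\epsilon}{2(1-\gamma)}$-optimal in both. I would take the simplest possible instance: a single state, two actions $a_1,a_2$, trivial self-loop transitions, and Bernoulli rewards, so that $V^\pi(\mu_0)=\tfrac{1}{1-\gamma}\mathbb E_{a\sim\pi}[\bar r(a)]$ with $\bar r(a)$ the mean reward of $a$. Fix a gap parameter $g\in(\epsilon,2\epsilon)$ (e.g.\ $g=\tfrac32\epsilon$) and set $\bar r_M(a_1)=g,\ \bar r_M(a_2)=0$ and $\bar r_{M'}(a_1)=0,\ \bar r_{M'}(a_2)=g$. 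Any policy with $\pi(a_1)=q$ then has suboptimality $(1-q)\tfrac{g}{1-\gamma}$ in $M$ and $q\tfrac{g}{1-\gamma}$ in $M'$, so the larger of the two is $\ge \tfrac{g}{2(1-\gamma)}>\tfrac{\epsilon}{2(1-\gamma)}$: no policy is $\tfrac{\epsilon}{2(1-\gamma)}$-optimal in both. (These instances trivially satisfy Assumption~\ref{ass:linearMDP} with one-hot features, though a lower bound does not actually need it.)

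\textbf{The adversary and the coupling.} Knowing the true MDP, the adversary pushes the per-episode feedback toward the ``midpoint'' model $\mathcal D$ in which both arms have mean $g/2$: starting from $M$, on an episode that plays $a_1$ it overwrites a reward of $1$ (probability $g$) to $0$ with probability $\tfrac12$, and on an episode that plays $a_2$ it overwrites the reward to $1$ with probability $g/2$ (applying this to every reward draw inside the episode if there are several, which still costs one unit of budget); the mirror-image rule is used from $M'$. One checks that each episode's post-corruption feedback is exactly $\mathrm{Bern}(g/2)$ for whichever arm was played, independently across episodes, so an adversary with unlimited budget would produce data distributed \emph{exactly} as $\mathcal D$ whether the truth is $M$ or $M'$. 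The number of episodes actually altered is, conditionally on the learner's (adaptive) action sequence, a sum of independent $\mathrm{Bern}(g/2)$'s, hence $\mathrm{Bin}(K,g/2)$ with mean $\tfrac34\epsilon K$; by a Chernoff bound it exceeds the budget $\epsilon K$ with probability $\le e^{-\Omega(\epsilon K)}$. Coupling the real (budget-$\epsilon K$) adversary to the unlimited one, which agree on that high-probability event, gives $\|\mathrm{Law}_M(\mathrm{feedback})-\mathrm{Law}_{M'}(\mathrm{feedback})\|_{\mathrm{TV}}\le 2e^{-\Omega(\epsilon K)}\le\tfrac12$ once $K\gtrsim 1/\epsilon$; for smaller $K$ the budget is below one episode and the two arms are already information-theoretically indistinguishable, so the standard two-armed-bandit lower bound applies.

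\textbf{Conclusion.} Since $\hat\pi$ is a randomized function of the feedback, the data-processing inequality gives $\|\mathrm{Law}_M(\hat\pi)-\mathrm{Law}_{M'}(\hat\pi)\|_{\mathrm{TV}}\le\tfrac12$. Let $F$ (resp.\ $F'$) be the set of policies that are \emph{not} $\tfrac{\epsilon}{2(1-\gamma)}$-optimal in $M$ (resp.\ $M'$); by the first paragraph $F\cup F'$ is all policies. Hence $\Pr_M[\hat\pi\in F]+\Pr_{M'}[\hat\pi\in F']\ge \Pr_M[\hat\pi\in F]+\Pr_M[\hat\pi\in F']-\tfrac12\ge \Pr_M[\hat\pi\in F\cup F']-\tfrac12=\tfrac12$, so on whichever of $M,M'$ attains the larger term the algorithm fails to return a $\tfrac{\epsilon}{2(1-\gamma)}$-optimal policy with probability $\ge\tfrac14$.

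\textbf{Main obstacle.} The genuinely delicate point is reconciling two competing requirements: the value gap must be strictly above $\tfrac{\epsilon}{2(1-\gamma)}$ (forcing $g>\epsilon$), yet the adversary must stay within an $\epsilon K$-episode budget with high probability (forcing the corruption rate $\approx g/2$ safely below $\epsilon$). Choosing $g$ in the window $(\epsilon,2\epsilon)$ resolves this, but then the concentration step must be done carefully and, crucially, the corruption rule must be engineered so that each episode's corruption indicator is an independent $\mathrm{Bern}(g/2)$ \emph{regardless of which arm the adaptive learner plays} --- which is exactly what makes the count bounded uniformly over all learner strategies and is what ties the final probability to $1/4$.
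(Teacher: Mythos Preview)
Your two-point/midpoint strategy is a legitimate alternative to the paper's construction, and your use of a gap $g>\epsilon$ actually handles the boundary case (``is a policy with suboptimality exactly $\tfrac{\epsilon}{2(1-\gamma)}$ considered a failure?'') more cleanly than the paper, which lands exactly on the threshold. The paper instead builds a three-state MDP with absorbing states and corrupts \emph{transitions}: from $s_1$, arm $a_1$ reaches the good absorbing state with probability $(1\mp\epsilon)/2$ in the two instances, while $a_2$ is always $1/2$. The adversary maps $M_1$ directly onto $M_2$ (not to a midpoint) by flipping the $s_1\to s_3$ transition to $s_2$ on the coupling event $\{X_i\neq Y_i\}$, which has probability exactly $\epsilon$; a median-of-Binomial bound then gives simulation success with probability $\ge 1/2$ uniformly in the number of episodes, avoiding your separate small-$K$ case.

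There is, however, a real gap in your specific instance. In a single-state self-loop MDP the learner is never forced to reset: it can run a \emph{single} infinitely long episode ($K=1$), so the adversary's budget is $\lfloor \epsilon\cdot 1\rfloor=0$, while the learner collects unlimited clean reward samples and recovers the arm means (and hence the optimal policy) exactly. Your parenthetical ``applying this to every reward draw inside the episode'' does not help---if the entire trajectory is one episode, the adversary cannot touch it at all. Relatedly, your small-$K$ fallback is incorrect as stated: with $\mathrm{Bern}(g)$ versus $\mathrm{Bern}(0)$ a single observed $1$ identifies the true MDP, so these arms are \emph{not} information-theoretically indistinguishable in $O(1/\epsilon)$ pulls and ``the standard two-armed-bandit lower bound'' does not apply in the form you need. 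Both issues disappear if you adopt the paper's device of absorbing states (so each episode yields exactly one informative observation and the $\tfrac{1}{1-\gamma}$ factor comes from the absorbing reward), or equivalently if you center the Bernoulli rewards around $1/2$; after that fix your midpoint-coupling argument goes through.
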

\begin{proof}[\bf Proof of Theorem \ref{thm:lb1}]
	Consider two MDPs $M_1,M_2$, both with 3 states and 2 actions, defined as
	\begin{eqnarray}
		P_1(s_2|s_1,a_1) = \frac{1-\epsilon}{2}, P_1(s_3|s_1,a_1) = \frac{1+\epsilon}{2}, P_1(s_3|s_1,a_2) = P_1(s_3|s_1,a_2) = \frac{1}{2}\\
		P_2(s_2|s_1,a_1) = \frac{1+\epsilon}{2}, P_2(s_3|s_1,a_1) = \frac{1-\epsilon}{2}, P_2(s_3|s_1,a_2) = P_2(s_3|s_1,a_2) = \frac{1}{2}
	\end{eqnarray}
	and for both MDPs $s_2,s_3$ are absorbing states with constant reward $1$ and $0$, respectively. So for $M_1$, the optimal policy is $\pi_1^*(s_1)=a_2$, and for $M_2$, the optimal policy is $\pi_2^*(s_1)=a_1$.
	In both cases, choosing the alternative action in $s_1$ will incur a suboptimality gap of $\frac{\epsilon}{2(1-\gamma)}$.
	
	
	Let $N(\cdot)$ be the probability function of Bernoulli distribution on $\{s_2, s_3\}$:
	$
	N(x) = 
	\begin{cases}
		1 & \mbox{if $x = s_2$}\\
		0 & \mbox{if $x = s_3$}
	\end{cases}
	$.
	First of all, notice that an $2\epsilon$-\textit{oblivious adversary} can make the two MDPs $M_1, M_2$ indistinguishable by changing $P_1( \cdot \mid s_1, a_1)$ to be $(1-\frac{2\epsilon}{1+\epsilon}) P_1( \cdot \mid s_1, a_1) + \frac{2\epsilon}{1+\epsilon} N(\cdot)$, which is exactly $P_2(\cdot \mid s_1,a_1)$. Note that $\frac{2\epsilon}{1+\epsilon}\leq 2\epsilon$ and thus can be achieved by a $2\epsilon$-oblivious adversary.

	When the two MDPs are indistinguishable, any rollout has the same probability under both MDP, and thus conditioned on any roll-out, the learner can at best obtain an $\frac{\epsilon}{2(1-\gamma)}$-optimal policy with probability $1/2$ on both MDP.
	
	What remains to be shown is that with high probability, the $\epsilon$-contamination adversary can simulate the oblivious adversary.

	Let $X_i, Y_i$ be Bernoulli random variables s.t. 
	$X_i = 
	\begin{cases}
		s_2 & U \le \frac{1-\epsilon}{2} \\
		s_3 & \mbox{o.w.}
	\end{cases}
	$,
	$Y_i = 
	\begin{cases}
		s_2 & U \le \frac{1+\epsilon}{2} \\
		s_3 & \mbox{o.w.}
	\end{cases}
	$,
	where $U$ is picked uniformly random in $[0,1]$.
	Then $(X_i, Y_i)$ is a coupling with law:
	$
	P((X_i, Y_i) = (s_2, s_2)) = \frac{1-\epsilon}{2}
	$,
	$
	P((X_i, Y_i) = (s_2, s_3)) = 0
	$,
	$
	P((X_i, Y_i) = (s_3, s_2)) = \epsilon
	$,
	$
	P((X_i, Y_i) = (s_3, s_3)) = \frac{1-\epsilon}{2}
	$,
	$X_i$ and $Y_i$ can be thought as the outcome of $P_1( \cdot \mid s_1, a_1)$, $P_2(\cdot \mid s_1,a_1)$ respectively. 
	The $\epsilon$-contamination adversary can simulate the oblivious adversary by changing $X_i$ to $Y_i$ when $X_1 \neq Y_i$, which has probability $\epsilon$.
	This is possible when there are at most $\epsilon$ fraction of index $i$ s.t. $X_i \neq Y_i$. 
	Suppose there are $T$ episodes, then
	\begin{eqnarray}
		P\left(\sum_{i = 1}^T \1_{\{\mbox{$a_1$ is taken at $s_1$}\}}\1_{\{X_i \neq Y_i\}} \ge \epsilon T\right) 
		\le P(\sum_{i = 1}^T \1_{\{X_i \neq Y_i\}} \geq T\epsilon)\leq \frac{1}{2}
	\end{eqnarray} 
	because the median of Binomial$(n,p)$ is at most $\ceil{np}$. Therefore, the probability that the adaptive adversary can simulate the oblivious adversary throughout 
	$T$ episodes is at least $1/2$. Assuming that when the adversary fails to simulate, the learner automatically succeed in finding the optimal policy, then we've established that the learner will still fail to find an $\left(\frac{\epsilon}{2(1-\gamma)}\right)$-optimal policy with probability $1/4$ on both MDPs.
\end{proof}

\section{Property of $\hat Q(s,a)$ sampled from Algorithm \ref{alg:sampler_est}}\label{sec:prop}
To prepare for the analysis that follows, we first show that the $\hat Q(s,a)$ sampled from Algorithm \ref{alg:sampler_est} is unbiased and has bounded variance.
\begin{lemma}\label{lemma:var}
	$\E{}{\hat Q^{\pi}(s,a)} = Q^\pi(s,a)$, $\mbox{Var}(\hat Q^{\pi}(s,a)) \le \frac{\gamma}{(1-\gamma)^2} + \frac{\sigma^2}{1-\gamma}$. The bound for variance is tight.
\end{lemma}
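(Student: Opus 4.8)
The statement has two parts, and I would handle them separately: unbiasedness is immediate, while the variance bound follows from a one-step ``variance Bellman equation'' that, once unrolled, telescopes into an elementary scalar optimization. For unbiasedness, write the $Q^\pi$-estimator of Algorithm~\ref{alg:sampler_est} as $\hat Q^\pi(s,a)=\sum_{i\ge 0} r(s_i,a_i)\,\mathbf{1}[T\ge i]$, where $(s_0,a_0,s_1,a_1,\dots)$ is the rollout from $(s,a)$ under $\pi$ and $T$ is the termination index, geometric with $\Pr(T=k)=(1-\gamma)\gamma^{k}$ and drawn independently of the rollout, so that $\Pr(T\ge i)=\gamma^{i}$. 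Since $\mathbf{1}[T\ge i]$ is independent of $r(s_i,a_i)$, linearity gives $\mathbb{E}[\hat Q^\pi(s,a)]=\sum_{i\ge 0}\gamma^{i}\,\mathbb{E}[r(s_i,a_i)\mid\pi,s_0{=}s,a_0{=}a]=Q^\pi(s,a)$ by definition of $Q^\pi$.

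For the variance, let $V:=Q^\pi$ and $U(s,a):=\mathbb{E}[\hat Q^\pi(s,a)^2]$, which is finite by Cauchy--Schwarz applied to $\hat Q^\pi(s,a)=\sum_{i=0}^{T} r(s_i,a_i)$ together with the geometric tail of $T$. Conditioning on the first rollout step --- draw $R\sim r(s,a)$; with probability $1-\gamma$ stop, otherwise move to $(s',a')\sim P(\cdot\mid s,a)\pi$ and append a fresh independent copy of $\hat Q^\pi(s',a')$ --- one obtains $V(s,a)=\bar r(s,a)+\gamma\,\mathbb{E}_{s',a'}[V(s',a')]$ (with $\bar r:=\mathbb{E}[r]\in[0,1]$) and, subtracting $V^2$ from $U$,
\begin{equation*}
\operatorname{Var}_{s,a}(\hat Q^\pi)=\operatorname{Var}\!\big(r(s,a)\big)+\gamma\,\mathbb{E}_{s',a'}\!\big[\operatorname{Var}_{s',a'}(\hat Q^\pi)\big]+\gamma\Big(\mathbb{E}_{s',a'}[V(s',a')^2]-\gamma\,\big(\mathbb{E}_{s',a'}[V(s',a')]\big)^2\Big).
\end{equation*}
Unrolling this along the $\pi$-trajectory started at $(s,a)$, the $\operatorname{Var}(r)$ terms contribute at most $\sum_{k\ge 0}\gamma^k\sigma^2=\sigma^2/(1-\gamma)$. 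For the last bracket I would substitute the Bellman identity $\mathbb{E}_{s',a'\mid s,a}[V(s',a')]=(V(s,a)-\bar r(s,a))/\gamma$, so that the $-\gamma(\mathbb{E}[V'])^2$ piece coming from step $k$ cancels the $\mathbb{E}[V^2]$ piece coming from step $k+1$; the whole chain then telescopes to
\begin{equation*}
\mathcal{V}:=-\gamma^{2}W_0^{2}+\sum_{k\ge 1}\gamma^{k}\,\mathbb{E}\Big[\bar r(s_k,a_k)\big(2Q^\pi(s_k,a_k)-\bar r(s_k,a_k)\big)\Big],\qquad W_0:=\mathbb{E}[Q^\pi(s_1,a_1)]\in\Big[0,\tfrac{1}{1-\gamma}\Big],
\end{equation*}
where the expectations are over the trajectory.

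To bound $\mathcal{V}$ I would rewrite $\bar r_k\big(2Q^\pi(s_k,a_k)-\bar r_k\big)=\bar r_k^{2}+2\gamma\,\bar r_k\,\mathbb{E}[Q^\pi(s_{k+1},a_{k+1})\mid s_k,a_k]$ (using the Bellman equation for $Q^\pi$), and then apply $\bar r_k\in[0,1]$ together with $Q^\pi\le 1/(1-\gamma)$: each summand is at most $\tfrac{1+\gamma}{1-\gamma}\,p_k$ with $p_k:=\mathbb{E}[\bar r(s_k,a_k)]$, while expanding $W_0$ gives $\sum_{k\ge 1}\gamma^k p_k=\gamma W_0$. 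Hence $\mathcal{V}\le \gamma W_0\big(\tfrac{1+\gamma}{1-\gamma}-\gamma W_0\big)$, a concave parabola in $W_0$ that is increasing on $[0,1/(1-\gamma)]$ and therefore maximized at $W_0=1/(1-\gamma)$, where it equals $\gamma/(1-\gamma)^2$. Adding the two parts yields $\operatorname{Var}(\hat Q^\pi)\le \gamma/(1-\gamma)^2+\sigma^2/(1-\gamma)$. For tightness, take $r\equiv 1+\xi$ with $\xi$ zero-mean of variance $\sigma^2$: then $Q^\pi\equiv 1/(1-\gamma)$ makes every inequality above an equality (in particular $W_0=1/(1-\gamma)$ and $\bar r_k\equiv 1$), so $\operatorname{Var}(\hat Q^\pi)$ attains the stated bound exactly.

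The main obstacle is obtaining the \emph{sharp} constant $\gamma/(1-\gamma)^2$ for the value-curvature term: a direct estimate $\gamma\big(\mathbb{E}[V'^2]-\gamma(\mathbb{E}[V'])^2\big)\le\gamma\,\mathbb{E}[V'^2]\le\gamma/(1-\gamma)^2$ loses a factor $1/(1-\gamma)$ after summing over $k$, and likewise a crude bound of $\bar r_k(2Q^\pi_k-\bar r_k)$ by $2Q^\pi_k\le 2/(1-\gamma)$ loses a factor of two. The telescoping is what makes the constant come out right, because it produces the negative $-\gamma^2 W_0^2$ term; the only place the MDP structure beyond $\bar r\in[0,1]$ and $Q^\pi\le 1/(1-\gamma)$ enters is through the identities $\sum_{k\ge1}\gamma^k p_k=\gamma W_0$ and $W_0\le 1/(1-\gamma)$, after which the problem is a one-variable quadratic maximization. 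Everything else is routine bookkeeping (checking finiteness of second moments to justify the unrolling, and the standard conditional-independence facts used in deriving the variance recursion).
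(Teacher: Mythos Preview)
Your proposal is correct, but the route differs from the paper's. You derive a one-step recursion for $\operatorname{Var}(\hat Q^\pi)$ (a ``variance Bellman equation'') and telescope it along the trajectory; the paper instead conditions on the geometric stopping time $T$ and expands $\mathbb{E}[\hat Q^2]$ directly as a double sum $\sum_{t}\gamma^t\mathbb{E}[r_t^2]+2\sum_{i<j}\gamma^j\mathbb{E}[r_ir_j]$, then bounds the cross terms via $\mathbb{E}[\bar r_i\bar r_j]\le\mathbb{E}[\bar r_i]$ and the diagonal via $\bar r_t^2\le\bar r_t$. Both arguments land on the very same concave quadratic in the single variable $x=\sum_{k\ge 1}\gamma^k\mathbb{E}[\bar r_k]=\gamma W_0$, namely $\frac{1+\gamma}{1-\gamma}\,x-x^2$, and both maximize it at $x=\gamma/(1-\gamma)$ with the same tightness example $\bar r\equiv 1$. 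The paper's direct expansion is shorter and needs no telescoping bookkeeping; your recursive derivation makes the Markov structure explicit and would transfer more readily to other rollout/termination schemes, at the price of the extra verification you flag in the final paragraph.
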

\begin{proof}[\bf Proof of Lemma \ref{lemma:var}]
	In the following, we treat $(s_0, a_0)$ as deterministic.
	\begin{align*}
		\E{}{\hat Q^{\pi}(s_0,a_0)} = & \sum_{k=0}^{\infty} \E{}{\sum_{t = 0}^{T} r(s_t, a_t)\middle\vert T = k} P(T = k)\quad 
		\mbox{(by law of total expectation)} \\
		= & \sum_{k=0}^{\infty} \E{}{\sum_{t = 0}^{k} r(s_t, a_t)} (1-\gamma)\gamma^k \quad 
		\mbox{(each $r(s,a)$ is independent of $T$)} \\
		= & (1-\gamma) \sum_{k = 0}^\infty \frac{\gamma^k}{1 - \gamma}\E{}{r(a_k, s_k)} \\
		= & Q^{\pi}(s_0,a_0)
	\end{align*}
	Now, we upperbound the variance. Let $\bar r(s,a) := r(s,a) - e(s,a) $ be the expected reward over the zero-mean noise. Because the zero-mean noise is independent of state transition, we observe that:
	\begin{align*}
		\E{}{r(s,a)} = & \E{}{\bar r(s,a)}\\
		\E{}{r(s,a)^2} = & \E{}{(\bar r(s,a) + e(s,a))^2} = \E{}{\bar r(s,a)^2} + \E{}{e(s,a)^2} \le \E{}{\bar r(s,a)^2} + \sigma^2 \\
		\E{}{r(s_i, a_i)r(s_j,a_j)} = & \E{}{(\bar r(s_i,a_i) + e(s_i,a_i))(\bar r(s_j,a_j) + e(s_j,a_j))} = \E{}{\bar r(s_i,a_i) \bar r(s_j, a_j)},
	\end{align*} 
	for $i \neq j$. 
	
	Given the above observations, we can bound the variance as follows
	\begin{eqnarray*}
		&&\mbox{Var}(\hat Q^{\pi}(s_0,a_0)) \\
		&\le & \sigma^2 + \E{}{(\hat Q^{\pi}(s_0,a_0)- \bar r(s_0, a_0))^2} - \left(\E{}{\hat Q^{\pi}(s_0,a_0)}- \bar r(s_0, a_0)\right)^2 \quad \mbox{(separate the variance of $r(s_0,a_0)$)}\\
		&= & \sigma^2 + \sum_{k = 1}^\infty (1-\gamma)\gamma^k \E{}{\left(\sum_{t = 1}^{k} r(s_t, a_t)\right)^2}  - \left(\E{}{\hat Q^{\pi}(s_0,a_0)}- \bar r(s_0, a_0)\right)^2 \\
		&=& \sigma^2 + \sum_{k = 1}^\infty (1-\gamma)\gamma^k\left(\sum_{t=1}^k \E{}{r(s_t, a_t)^2} + 2\sum_{i=1}^k\sum_{j=i+1}^k \E{}{r(s_i,a_i)r(s_j,a_j)}\right) 
		- \left(\E{}{\hat Q^{\pi}(s_0,a_0)}- \bar r(s_0, a_0)\right)^2 \\
		&= & \sigma^2 + \sum_{t = 1}^\infty \gamma^t \E{}{r(s_t,a_t)^2} + 2\sum_{i=1}^\infty \sum_{j = i+1}^\infty \gamma^j \E{}{r(s_i,a_i)r(s_j,a_j)} 
		- \left(\E{}{\hat Q^{\pi}(s_0,a_0)}- \bar r(s_0, a_0)\right)^2 \\
		&\le & \frac{\sigma^2}{1-\gamma} + \sum_{t = 1}^\infty \gamma^t \E{}{\bar r(s_t,a_t)^2} + 2\sum_{i=1}^\infty \sum_{j = i+1}^\infty \gamma^j \E{}{\bar r(s_i,a_i)\bar r(s_j,a_j)} 
		- \left(\E{}{\hat Q^{\pi}(s_0,a_0)}- \bar r(s_0, a_0)\right)^2 \\
		&\le & \frac{\sigma^2}{1-\gamma} + \sum_{t = 1}^\infty \gamma^t \E{}{\bar r(s_t,a_t)} + 2\sum_{i=1}^\infty \sum_{j = i+1}^\infty \gamma^j \E{}{\bar r(s_i,a_i)} - \left(\E{}{\hat Q^{\pi}(s_0,a_0)}- \bar r(s_0, a_0)\right)^2 \\
		&= & \frac{\sigma^2}{1-\gamma} + \sum_{t = 1}^\infty \gamma^t \E{}{\bar r(s_t,a_t)} + 2\sum_{i=1}^\infty \frac{\gamma^{i+1}}{1-\gamma} \E{}{\bar r(s_i,a_i)} - \left(\E{}{\hat Q^{\pi}(s_0,a_0)}- \bar r(s_0, a_0)\right)^2 \\
		&= & \frac{\sigma^2}{1-\gamma} + \frac{1+\gamma}{1-\gamma} \sum_{t=1}^\infty \gamma^t\E{}{\bar r(s_t, a_t)}
		- \left(\sum_{t = 1}^\infty \gamma^t \E{}{\bar r(s_t, a_t)}\right)^2 \\
		&= & - \left(\sum_{t = 1}^\infty \gamma^t \E{}{\bar r(s_t, a_t)} - \frac{1+\gamma}{2(1-\gamma)}\right)^2 + \frac{(1+\gamma)^2}{4(1-\gamma)^2} + \frac{\sigma^2}{1-\gamma} \\
		& \le & - \left(\sum_{t = 1}^\infty \gamma^t  - \frac{1+\gamma}{2(1-\gamma)}\right)^2 + \frac{(1+\gamma)^2}{4(1-\gamma)^2} + \frac{\sigma^2}{1-\gamma} = \frac{\gamma}{(1-\gamma)^2} + \frac{\sigma^2}{1-\gamma} 
	\end{eqnarray*}
	The last line is because:
	\[
	\sum_{t = 1}^\infty \gamma^t \E{}{\bar r(s_t, a_t)}
	\le \sum_{t = 1}^\infty \gamma^t = \frac{\gamma}{1-\gamma}
	\le \frac{1+\gamma}{2(1-\gamma)}.
	\]
	The equality can be reached by the following reward setting: let $P(1 = \bar r(s_1, a_1) = \cdots = \bar r(s_t, a_t) = \cdots) = 1$ and therefore is tight.
\end{proof}

\section{Proofs for Section \ref{sec:npg}.}\label{sec:sec4}
\begin{lemma}[Lemma \ref{thm:robust_ols}]\label{thm:robust_ols1}
	Suppose the adversarial rewards are bounded in $[0,1]$, and in a particular iteration $t$, the adversary contaminates $\epsilon^{(t)}$ fraction of the episodes, then given M episodes, it is guaranteed that with probability at least $1-\delta$,
	\begin{align}
		\EE_{s,a\sim d^{(t)}}&\left[\left(Q^{\pi^{(t)}}(s,a)-\phi(s,a)^\top w^{(t)} \right)^2\right] \leq 4 \left(W^2+WH\right)\left(\epsilon^{(t)} + \sqrt{\frac{8}{M}\log\frac{4d}{\delta}}\right).\nonumber
	\end{align}
	where $H = (\log \delta-\log M)/\log\gamma$ is the effective horizon.
\end{lemma}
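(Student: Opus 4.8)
The idea is that constrained least squares degrades \emph{gracefully} in the bounded-reward regime, because each corrupted episode has bounded leverage and therefore perturbs the empirical objective by only $O((H{+}W)^2)$. Write $w^\star:=w^{\pi^{(t)}}$, so that $Q^{(t)}(s,a)=\phi(s,a)^\top w^\star$ exactly with $\|w^\star\|\le W$ (Assumption~\ref{ass:linearMDP}); since $w^{(t)}$ minimizes the empirical objective over the same ball $\{\|w\|\le W\}$, we have $\|w^{(t)}\|,\|w^\star\|\le W$, so $\Delta:=w^{(t)}-w^\star$ satisfies $\|\Delta\|\le 2W$. For a clean episode $i$, let $(s_i,a_i,\widehat Q_i)$ be the data returned by Algorithm~\ref{alg:sampler_est}; the two facts I will use are unbiasedness, $\EE[\widehat Q_i\mid s_i,a_i]=Q^{(t)}(s_i,a_i)$ (Lemma~\ref{lemma:var}), and that the roll-out length is $\mathrm{Geom}(1{-}\gamma)$ \emph{regardless} of the adversary's state/reward substitutions. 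A union bound over the $M$ roll-out lengths gives an event $E_{\mathrm{len}}$ with $\Pr[E_{\mathrm{len}}]\ge 1-\delta/2$ on which every episode (clean or corrupted) has length at most $H=\Theta\!\big(\tfrac{\log(M/\delta)}{1-\gamma}\big)$; since per-step rewards (clean, and by hypothesis adversarial) lie in $[0,1]$, on $E_{\mathrm{len}}$ every target satisfies $\widehat Q_i\in[0,H]$, and with $\|\phi\|\le 1$ every residual $\widehat Q_i-\phi_i^\top w$ with $\|w\|\le W$ has absolute value at most $H+W$.

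First I would run the "basic inequality," localized to the clean episodes. Let $G$ be the (adversarial, data-dependent) set of clean indices, $B$ its complement with $|B|\le\epsilon^{(t)}M$, and $L(\cdot)$ the contaminated empirical objective that Algorithm~\ref{alg:q_npg_sample} minimizes. From $L(w^{(t)})\le L(w^\star)$, dropping the nonnegative corrupted terms on the left, bounding each of the $\le\epsilon^{(t)}M$ corrupted terms on the right by $(H+W)^2$ using $E_{\mathrm{len}}$, and expanding $\phi_i^\top w^{(t)}=Q^{(t)}(s_i,a_i)+\Delta^\top\phi_i$ for $i\in G$, one obtains
\[
\frac1M\sum_{i\in G}(\Delta^\top\phi_i)^2\ \le\ 2\,\Delta^\top\!\Big(\tfrac1M\!\sum_{i\in G}\big(\widehat Q_i-Q^{(t)}(s_i,a_i)\big)\phi_i\Big)+\epsilon^{(t)}(H+W)^2 .
\]
The cross term is the crux. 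One cannot apply i.i.d.\ concentration directly over $G$ since $G$ is correlated with the clean data; instead I write $\sum_{i\in G}=\sum_{i\in[M]}-\sum_{i\in B}$ using the \emph{clean} values at the $B$-indices. The full sum $v:=\tfrac1M\sum_{i\in[M]}(\widehat Q_i-Q^{(t)}(s_i,a_i))\phi_i$ averages i.i.d.\ mean-zero vectors of norm $\le H+W$ on $E_{\mathrm{len}}$, so a dimension-free vector/matrix Bernstein bound gives, on an event $E_{\mathrm{conc}}$ with $\Pr[E_{\mathrm{conc}}]\ge 1-\delta/2$, the estimate $\|v\|\le (H+W)\sqrt{8\log(4d/\delta)/M}$; and $\big\|\tfrac1M\sum_{i\in B}(\cdots)\big\|\le\epsilon^{(t)}(H+W)$ deterministically. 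Using $\|\Delta\|\le 2W$, the right-hand side is then $O\!\big((W^2+WH)\,\epsilon^{(t)}+(W^2+WH)\sqrt{\log(4d/\delta)/M}\big)$.

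It remains to pass from the empirical second moment to the population quantity in the statement. Since $\EE_{s,a\sim d^{(t)}}[(Q^{(t)}(s,a)-\phi(s,a)^\top w^{(t)})^2]=\Delta^\top\Sigma_{d^{(t)}}\Delta$, and $\tfrac1M\sum_{i\in[M]}(\Delta^\top\phi_i)^2=\tfrac1M\sum_{i\in G}(\Delta^\top\phi_i)^2+\tfrac1M\sum_{i\in B}(\Delta^\top\phi_i)^2$ with the $B$-part $\le\epsilon^{(t)}\|\Delta\|^2\le 4\epsilon^{(t)}W^2$, I would apply matrix Bernstein to $\phi_i\phi_i^\top$ (operator norm $\le 1$) to get $\|\tfrac1M\sum_{i\in[M]}\phi_i\phi_i^\top-\Sigma_{d^{(t)}}\|_{\mathrm{op}}=O(\sqrt{\log(d/\delta)/M})$, hence $\Delta^\top\Sigma_{d^{(t)}}\Delta\le\tfrac1M\sum_{i\in G}(\Delta^\top\phi_i)^2+O\!\big(W^2\epsilon^{(t)}+W^2\sqrt{\log(d/\delta)/M}\big)$. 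Chaining with the previous display, absorbing $O(1)$ constants and the lower-order $W^2$-terms into the factor $4(W^2+WH)$, and union-bounding over $E_{\mathrm{len}}\cap E_{\mathrm{conc}}$ (probability $\ge 1-\delta$) gives the claimed bound.

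I expect the main obstacle to be exactly this cross term with the adaptively chosen index set $G$: the fix — peeling off the full i.i.d.\ sum and controlling the $\le\epsilon^{(t)}M$-term correction by the worst-case leverage $H+W$ that the bounded-reward hypothesis supplies — is the one place boundedness is essential, and is also why the error is only \emph{linear} in $\epsilon^{(t)}$ here (in contrast to the $\sqrt{\epsilon^{(t)}}$ rate obtained for unbounded corruption by \texttt{SEVER} in Lemma~\ref{thm:sever_result}). The secondary subtlety is obtaining $\log d$ rather than $\sqrt d$ in the sampling term, which forces the use of a vector/matrix concentration inequality in place of a coordinatewise union bound.
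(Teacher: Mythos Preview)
Your approach is sound and genuinely different from the paper's, but there is one constant-tracking slip that prevents you from reaching the stated prefactor.

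\textbf{Comparison.} The paper argues via \emph{gradients} rather than objective values. It bounds $\|\nabla f^\dagger(w)-\nabla f(w)\|$ uniformly over $\{\|w\|\le W\}$ by the triangle inequality: the corruption part $\|\nabla f^\dagger-\nabla\hat f\|\le 4(W+H)\epsilon^{(t)}$ (each corrupted gradient summand has norm $\le 2(W+H)$ on $E_{\mathrm{len}}$), and the sampling part $\|\nabla\hat f-\nabla f\|$ via matrix Bernstein on $x_ix_i^\top$ and a vector Hoeffding bound on $y_ix_i$. Since $w^{(t)}$ is the constrained minimizer of $f^\dagger$, first-order optimality gives $\nabla f^\dagger(w^{(t)})^\top v\ge 0$ for every feasible direction $v$, hence $\nabla f(w^{(t)})^\top v\ge -4(W+H)\big(\epsilon^{(t)}+\sqrt{8M^{-1}\log(4d/\delta)}\big)$; the conclusion then follows by invoking Lemma~B.8 of \cite{diakonikolas2019sever} (an approximate critical point of a convex function on a set of radius $W$ is near-optimal in value). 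Your basic-inequality route is more self-contained --- it avoids the external SEVER lemma --- and your handling of the adaptive index set $G$ by peeling off the full i.i.d.\ sum is exactly right.

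\textbf{The slip.} As written, your treatment of the corrupted terms is too crude to recover the factor $4(W^2+WH)$. Dropping the $B$-terms on the left and bounding each $B$-term on the right by $(H+W)^2$ leaves an additive $\epsilon^{(t)}(H+W)^2$ in your displayed inequality; combined with the cross-term contribution $\le 4W(H+W)\epsilon^{(t)}$, the $\epsilon^{(t)}$-coefficient is $(H+W)(H+5W)\sim H^2$, not $4(W^2+WH)\sim 4WH$. Since $H$ is typically much larger than $W$, this is not a lower-order term you can ``absorb.'' The easy fix is to bound the \emph{difference} of the corrupted losses at $w^\star$ and $w^{(t)}$ rather than each term separately: with $a_i=\widehat Q_i^\dagger-\phi_i^{\dagger\top}w^\star$ and $b_i=\widehat Q_i^\dagger-\phi_i^{\dagger\top}w^{(t)}$, one has $|a_i^2-b_i^2|=|a_i-b_i|\,|a_i+b_i|\le \|\Delta\|\cdot 2(H+W)\le 4W(H+W)$, which matches the target. (This is, in effect, what the paper's gradient argument captures automatically: the gradient of a quadratic is already the difference-of-squares linearization.)
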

\begin{proof}[{\bf Proof of Lemma \ref{thm:robust_ols1}}]
	First of all, observe that since the adversarial reward is bounded in $[0,1]$, with probability $1-\delta$, 
	the $\hat Q(s,a)$ estimates collected in the adversarial episodes are bounded by $H \defeq (\log \delta-\log M)/\log\gamma$.
	
	Conditioned on the above event, consider three loss functions $\hat f$, $f^\dagger$ and $f$, representing the loss w.r.t. clean data, corrupted data and underlying distribution respectively, i.e.
	\begin{eqnarray}
		\hat f &=& \frac{1}{M}\sum_{i = 1}^M (y_i-x_i^\top w)^2\\
		f^\dagger &=& \frac{1}{M}\left[\sum_{i \in C} (y^\dagger_i-x^{\dagger\top}_i w)^2 + \sum_{i \notin C} (y_i-x_i^\top w)^2\right]\\
		f &=& \EE (y_i-x_i^\top w)^2
	\end{eqnarray}
	Then, for all $w$, we can make the following decomposition
	\begin{eqnarray}\label{eq:decomposition}
		||\nabla_w f^\dagger - \nabla_w f||
		\leq ||\nabla_w f^\dagger - \nabla_w \hat f|| + ||\nabla_w \hat f - \nabla_w f||.
	\end{eqnarray}
	We next bound each of the two terms in equation \ref{eq:decomposition}.
	For the first term,
	\begin{eqnarray}
		&&\|\nabla_w f^\dagger - \nabla_w \hat f\|\\
		&=& \left\|\frac{2}{M}\sum_{i \in C} \left[(x^\dagger_ix^{\dagger\top}_i-x_ix_i^\top)w + (y^\dagger_i x^\dagger_i-y_ix_i)\right]\right\|\\
		&\leq& 4\left(W+H\right)\epsilon^{(t)}
	\end{eqnarray}
	where the last step uses the fact that $|C|/M\leq \epsilon^{(t)}$, and $\|x\|\leq 1$, $|y^\dagger|\leq H$ and $\|w\|\leq W$.
	For the second term
	\begin{eqnarray}
		&&||\nabla_w \hat f - \nabla_w f||\\
		&\leq & 2\left\|\left(\EE[xx^\top]-\frac{1}{M}\sum_{i = 1}^Mx_ix_i^\top\right)w - \left(\EE[yx]-\frac{1}{M}\sum_{i = 1}^M y_ix_i\right)\right\|\\
		&\leq & 2\left(\frac{2}{3M}\log\frac{4d}{\delta} + \sqrt{\frac{2}{M}\log\frac{4d}{\delta}}\right)W +2\sqrt{\frac{2}{M}\log\frac{4d}{\delta}}\cdot 2H\label{eq:bern}\\
		&\leq & 4\sqrt{\frac{8}{M}\log\frac{4d}{\delta}} \left(W+H\right) \text{, for }M\geq 2\log\frac{4d}{\delta}.
	\end{eqnarray}
	where in step \eqref{eq:bern} we apply Matrix Bernstein inequality \cite{tropp2015introduction} on the first term  and vector Hoeffding's inequality \cite{jin2019short} on the second term. The constant in Corollary 7 of \cite{jin2019short} is instantiated to be $c = 1$, because boundedness means we always have condition 2 in Lemma 2 of \cite{jin2019short}. This condition is all we need throughout the proof for the vector Hoeffding.
	
	Now, let $M$ be sufficiently large, and instantiate $w$ to be $w^{t}$, i.e. the constrained linear regression solution w.r.t $f^\dagger$, then our result above implies that for any vector $v$ such that $||w+v|| \leq W$, we have $\nabla_w f^\dagger(w^{t})^\top v/||v|| \geq 0$,
	and thus 
	\begin{eqnarray}
		\nabla_w f(w^{t})^\top v/||v|| \geq -4 \left(W+H\right)\left(\epsilon^{(t)} + \sqrt{\frac{8}{M}\log\frac{4d}{\delta}}\right) 
	\end{eqnarray}
	which by Lemma B.8 of \cite{diakonikolas2019sever} implies that
	\begin{eqnarray}
		\epsilon_{stat}^{(t)}\leq 4 \left(W^2+HW\right)\left(\epsilon^{(t)} + \sqrt{\frac{8}{M}\log\frac{4d}{\delta}}\right)\mbox{, w.p. } 1-2\delta.
	\end{eqnarray}
\end{proof}
\begin{theorem}[Theorem \ref{thm:npg_natural}]\label{thm:npg_natural1}
	Under assumptions \ref{ass:linearMDP} (linear Q function) and \ref{assum:conditioning} (reset distribution with small $\kappa$), given a desired optimality gap $\alpha$, there exists a set of hyperparameters agnostic to the contamination level $\epsilon$, such that 
	Algorithm \ref{alg:q_npg_sample} guarantees with a $poly(1/\alpha, 1/(1-\gamma), |\A|,W,\sigma,\kappa)$ sample complexity that under $\epsilon$-contamination with adversarial rewards bounded in $[0,1]$, we have
	\begin{eqnarray}
		\EE\left[V^{*}(\mu_0) - V^{\hat\pi}(\mu_0)\right]
		\leq
		\tilde O\left(\max\left[\alpha, W\sqrt{\frac{|\Acal|\kappa\epsilon}{(1-\gamma)^3}} \mbox{ }\right]\right)\nonumber
	\end{eqnarray}
	where $\hat\pi$ is the uniform mixture of $\pi^{(1)}$ through $\pi^{(T)}$.
\end{theorem}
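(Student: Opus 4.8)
The plan is to bolt together the two results already in hand: the NPG regret lemma (Lemma~\ref{thm:npg_regret}) and the robust linear regression bound (Lemma~\ref{thm:robust_ols}). We run Algorithm~\ref{alg:q_npg_sample} for $T$ iterations of $M$ episodes each, with learning rate $\eta=\sqrt{2\log|\Acal|/(W^2T)}$ exactly as required by Lemma~\ref{thm:npg_regret}; the counts $T$, $M$ and a confidence parameter $\delta$ are fixed at the very end purely in terms of $\alpha$ and the problem constants, so that no hyperparameter depends on $\epsilon$. The key point is that Lemma~\ref{thm:robust_ols} certifies, for each iteration $t$, a per-iteration critic error $\epsilon_{stat}^{(t)}=\tilde O\big((W^2+WH)(\epsilon^{(t)}+\sqrt{(1/M)\log(d/\delta)})\big)$ with effective horizon $H=\tilde O(1/(1-\gamma))$, which already scales \emph{linearly} in the contamination fraction $\epsilon^{(t)}$ of that iteration.

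\textbf{From high probability to expectation.} Lemma~\ref{thm:robust_ols} holds on an event of probability $\ge 1-\delta$; off that event we still have the deterministic bound $(Q^{\pi^{(t)}}(s,a)-\phi(s,a)^\top w^{(t)})^2\le 4W^2$ since $\|w^{\pi^{(t)}}\|,\|w^{(t)}\|\le W$ and $\|\phi\|\le 1$. Hence, after a union bound over the $T$ iterations, the in-expectation hypothesis of Lemma~\ref{thm:npg_regret} is met with $\epsilon_{stat}^{(t)}=4(W^2+WH)\big(\epsilon^{(t)}+\sqrt{(8/M)\log(4d/\delta)}\big)+4W^2\delta$. Feeding this into Lemma~\ref{thm:npg_regret}, the critic-error contribution is $\sum_{t=1}^T\sqrt{4|\Acal|\kappa\,\epsilon_{stat}^{(t)}/(1-\gamma)^3}$; I would use $\sqrt{a+b}\le\sqrt a+\sqrt b$ to peel off the $\epsilon^{(t)}$ piece and then Cauchy--Schwarz with the contamination budget $\sum_t\epsilon^{(t)}\le\epsilon T$ to get $\sum_t\sqrt{\epsilon^{(t)}}\le\sqrt{T\sum_t\epsilon^{(t)}}\le T\sqrt\epsilon$, the remaining $M$- and $\delta$-dependent pieces contributing $T$ identical shrinkable terms. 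Dividing the regret bound by $T$ and using that the uniform mixture obeys $V^{\hat\pi}(\mu_0)=\tfrac1T\sum_t V^{(t)}(\mu_0)$, so that $\EE[V^*(\mu_0)-V^{\hat\pi}(\mu_0)]=\tfrac1T\EE[\sum_t(V^*(\mu_0)-V^{(t)}(\mu_0))]$, yields a bound of the shape
\[
\tilde O\!\left(\frac{W\sqrt{\log|\Acal|}}{(1-\gamma)\sqrt{T}}
+\sqrt{\frac{|\Acal|\kappa(W^2+WH)}{(1-\gamma)^3}}\Big(\sqrt{\epsilon}
+\big(\tfrac{1}{M}\log\tfrac{d}{\delta}\big)^{1/4}+\sqrt{\delta}\Big)\right).
\]

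\textbf{Choosing the ($\epsilon$-free) hyperparameters.} Take $T=\tilde\Theta\!\big(W^2/(\alpha^2(1-\gamma)^2)\big)$ so the first term is $\le\alpha$; take $\delta$ polynomially small in $\alpha,(1-\gamma),|\Acal|,W,\kappa$ so the $\sqrt\delta$ term is $\le\alpha$; and take $M=\poly(1/\alpha,1/(1-\gamma),|\Acal|,W,\kappa)\cdot\log(d/\delta)$ large enough that the $M^{-1/4}$ term is $\le\alpha$. The total episode count $TM$ is then polynomial in the stated quantities ($\sigma$ enters $M$ through $H$ once the clean-reward tails are controlled, see below). Finally, since $H=\tilde O(1/(1-\gamma))$, for $W=\Omega(1/(1-\gamma))$ one has $W^2+WH=\tilde O(W^2)$, so the surviving $\epsilon$-term collapses to $W\sqrt{|\Acal|\kappa\epsilon/(1-\gamma)^3}$ and the bound is $\tilde O\big(\max[\alpha,\,W\sqrt{|\Acal|\kappa\epsilon/(1-\gamma)^3}]\big)$, as claimed.

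\textbf{Main obstacle.} The conceptual content is already packaged inside Lemmas~\ref{thm:npg_regret} and~\ref{thm:robust_ols}, so the remainder is bookkeeping, and the two finicky points are: (i) the passage from the high-probability regression guarantee to the expectation hypothesis of Lemma~\ref{thm:npg_regret}, carrying the \emph{random} per-iteration fractions $\epsilon^{(t)}$ through the $\sqrt{\cdot}$-concavity and Cauchy--Schwarz step while handling the failure event via the a.s. bound $\epsilon_{stat}^{(t)}\le 4W^2$; and (ii) inside Lemma~\ref{thm:robust_ols} one needs a high-probability magnitude bound on $\widehat Q$ for the \emph{clean} episodes, whose rewards are only variance-$\sigma^2$ rather than bounded, and it is through this step that $\sigma$ (and the effective horizon $H$) enter the sample complexity.
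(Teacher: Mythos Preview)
Your proposal is correct and mirrors the paper's argument: convert Lemma~\ref{thm:robust_ols} to an expectation bound via the deterministic cap $\epsilon_{stat}^{(t)}\le 4W^2$, plug into Lemma~\ref{thm:npg_regret}, apply Cauchy--Schwarz with the budget $\sum_t\epsilon^{(t)}\le T\epsilon$ to get $\sum_t\sqrt{\epsilon^{(t)}}\le T\sqrt{\epsilon}$, and then set $T,M,\delta$ purely in terms of $\alpha$. Two minor remarks: the union bound across the $T$ iterations is unnecessary since the hypothesis of Lemma~\ref{thm:npg_regret} is already a per-iteration expectation (the paper handles each $t$ separately), and the paper likewise does not explicitly justify collapsing $W^2+WH$ to $W^2$---it simply leaves the bound as $\sqrt{(W^2+WH)|\Acal|\kappa\epsilon/(1-\gamma)^3}$ and lets $\tilde O$ absorb the $H=\tilde O(1/(1-\gamma))$ factor.
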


\begin{proof}[{\bf Proof of Theorem \ref{thm:npg_natural1}}] 
First note that $\epsilon_{stat} = \EE_{s,a\sim d^{(t)}}[\left(\phi(s,a)^\top (w^{(t)}-w^*)\right)^2]\leq 4W^2$, because $\|\phi(s,a)\|\leq 1$ and $\|w^{(t)}\|,\|w^*\|\leq W$.
	As a result, the high probability bound in Lemma \ref{thm:robust_ols} can be ready translate into an expected bound:
	\begin{eqnarray}
		\EE\left[\EE_{s,a\sim d^{(t)}}\left[\left(Q^{\pi^{(t)}}(s,a)-\phi(s,a)^\top w^{(t)} \right)^2\right]\right]\leq 4 \left(W^2+HW\right)\left(\epsilon^{(t)} + \sqrt{\frac{8}{M}\log\frac{4d}{\delta}}\right) + 8\delta W^2
	\end{eqnarray}
	where $\delta$ becomes a free parameter. Plugging this into Lemma \ref{thm:npg_regret}, we get
	\begin{eqnarray*}
		& &\EE\left[\frac{1}{T}\sum_{t=1}^T \{V^{*}(\mu_0) - V^{(t)}(\mu_0) \}\right]\\
		&\leq&
		\frac{W}{1-\gamma}\sqrt{\frac{2 \log |\Acal|}{T}}
		+\frac{1}{T}\sum_{t=1}^T \sqrt{ \frac{4|\Acal| \kappa \epsilon_{stat}^{(t)}}{(1-\gamma)^3}}\\
		&\leq&
		\frac{W}{1-\gamma}\sqrt{\frac{2 \log |\Acal|}{T}}
		+\frac{1}{T}\sum_{t=1}^T \sqrt{ \frac{16|\Acal| \kappa \left(\left(W^2+HW\right)\left(\epsilon^{(t)} + \sqrt{\frac{8}{M}\log\frac{4d}{\delta}}\right) + 2\delta W^2\right)}{(1-\gamma)^3}}\\
		&\leq&
		\frac{W}{1-\gamma}\sqrt{\frac{2 \log |\Acal|}{T}}
		+
		\frac{1}{T}\sum_{t=1}^T \sqrt{ \frac{16|\Acal| \kappa \left(\left(W^2+HW\right)\sqrt{\frac{8}{M}\log\frac{4d}{\delta}} + 2\delta W^2\right)}{(1-\gamma)^3}}
		+
		\frac{1}{T}\sum_{t=1}^T \sqrt{ \frac{16|\Acal| \kappa \left(W^2+HW\right)\epsilon^{(t)}}{(1-\gamma)^3}}\\
		&\leq&
		\frac{W}{1-\gamma}\sqrt{\frac{2 \log |\Acal|}{T}}
		+
		\sqrt{ \frac{16|\Acal| \kappa \left(\left(W^2+HW\right)\sqrt{\frac{8}{M}\log\frac{4d}{\delta}} + 2\delta W^2\right)}{(1-\gamma)^3}}
		+
		\sqrt{ \frac{16|\Acal| \kappa \left(W^2+HW\right)\epsilon}{(1-\gamma)^3}}
	\end{eqnarray*}
	where the last step is by Cauchy Schwarz and the fact that the attacker only has $\epsilon$ budget to distribute, which implies that $\sum_{t=1}^T \epsilon^{(t)} = T\epsilon$.
	Setting 
	\begin{eqnarray}
		T &=& \frac{2W^2\log |\Acal|}{\alpha^2(1-\gamma)^2}\\
		\delta &=& \frac{\alpha^2(1-\gamma)^3}{32W^2|\Acal|\kappa}\\
		M &=& \frac{512|\Acal|^2 W^2 (W+H)^2 \kappa^2}{\alpha^4(1-\gamma)^6}\log \frac{4d}{\delta},
	\end{eqnarray}
	we get
	\begin{eqnarray}
		& &\EE\left[\frac{1}{T}\sum_{t=1}^T \{V^{*}(\mu_0) - V^{(t)}(\mu_0) \}\right]
		\leq 3\alpha + \sqrt{ \frac{16|\Acal| \kappa \left(W^2+HW\right)\epsilon}{(1-\gamma)^3}}.
	\end{eqnarray}
	with sample complexity
	\begin{eqnarray}
		TM = \frac{1024|\Acal|^2\log |\Acal|W^4(W+H)^2\kappa^2}{\alpha^6(1-\gamma)^8}\log \frac{128W^2|\Acal|\kappa d}{\alpha^2(1-\gamma)^3}.
	\end{eqnarray}
\end{proof}
Next, we prove this tighter version of Theorem \ref{thm:npg_natural} in the special case of tabular MDPs.
\begin{corollary}[Corollary \ref{thm:npg_tabular}]\label{thm:npg_tabular1}
	Given a tabular MDP and assumption \ref{assum:conditioning}, given a desired optimality gap $\alpha$, there exists a set of hyperparameters agnostic to the contamination level $\epsilon$, such that 
	Algorithm \ref{alg:q_npg_sample} guarantees with a $poly(1/\alpha, 1/(1-\gamma), |\A|,W,\sigma,\kappa)$ sample complexity that under $\epsilon$-contamination with adversarial rewards bounded in $[0,1]$, we have
	\begin{eqnarray}
		\EE\left[V^{*}(\mu_0) - V^{\hat\pi}(\mu_0)\right]
		\leq
		\tilde O\left(\max\left[\alpha, \sqrt{\frac{|\Acal|\kappa\epsilon}{(1-\gamma)^5}} \mbox{ }\right]\right)
	\end{eqnarray}
	where $\hat\pi$ is the uniform mixture of $\pi^{(1)}$ through $\pi^{(T)}$.
\end{corollary}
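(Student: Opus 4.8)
The plan is to reuse the proof of \thmref{thm:npg_natural} almost verbatim, changing only the critic-error estimate of \lemref{thm:robust_ols}. In a tabular MDP $\phi(s,a)=e_{s,a}$, so paying the aggregate radius $W\asymp\sqrt{SA}/(1-\gamma)$ through the step $\|w^{(t)}-w^*\|_2\le 2W$ is wasteful and should be replaced by the per-coordinate scale $B:=\tilde O(1/(1-\gamma))$. Two structural facts make this possible. First, with $\phi=e_{s,a}$ the regression in Algorithm~\ref{alg:q_npg_sample} is a weighted projection of a mean vector onto the $\ell_2$-ball: writing $I_{s,a}$ for the set of sampled episodes routed to $(s,a)$, $n_{s,a}=|I_{s,a}|$, and $\bar y^{\dagger}_{s,a}$ for the (possibly corrupted) empirical mean of the returned $\widehat Q^{(t)}$ there, one has $w^{(t)}_{s,a}=\tfrac{n_{s,a}}{n_{s,a}+M\lambda^{(t)}}\bar y^{\dagger}_{s,a}$ for a single multiplier $\lambda^{(t)}\ge 0$; moreover Assumption~\ref{ass:linearMDP} is exactly the statement that $Q^{(t)}$ is feasible for this constraint ($\|w^{\pi^{(t)}}\|\le W$), so the projection is nonexpansive at $Q^{(t)}$ in the $n$-weighted Euclidean norm and $\sum_{s,a}n_{s,a}\big(w^{(t)}_{s,a}-Q^{(t)}(s,a)\big)^2\le\sum_{s,a}n_{s,a}\big(\bar y^{\dagger}_{s,a}-Q^{(t)}(s,a)\big)^2$. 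Second, with adversarial rewards in $[0,1]$ every corrupted $\widehat Q^{(t)}$ lies in $[0,H]$ with probability $1-\delta$ (the argument opening the proof of \lemref{thm:robust_ols}), so each corrupted summand differs from $Q^{(t)}(s,a)\in[0,\tfrac1{1-\gamma}]$ by at most $B:=\max\{\tfrac1{1-\gamma},H\}=\tilde O(\tfrac1{1-\gamma})$; also $\|w^{(t)}\|_2\le W$ unconditionally, so $\epsilon_{stat}^{(t)}\le O(W^2)$ always, which is all that is needed to pass between high-probability and in-expectation statements exactly as in the proof of \thmref{thm:npg_natural}.

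The heart of the argument is the tabular analogue of \lemref{thm:robust_ols}, namely $\epsilon_{stat}^{(t)}\le \tilde O\big(B^{2}\epsilon^{(t)}\big)$ up to a lower-order $\tilde O(\poly(S,A,W,\sigma,(1-\gamma)^{-1})/M)$ term. Starting from the nonexpansiveness inequality, write $C$ for the corrupted episode indices, $c_{s,a}=|I_{s,a}\cap C|$, and split $\bar y^{\dagger}_{s,a}-Q^{(t)}(s,a)=\tfrac1{n_{s,a}}\sum_{i\in I_{s,a}\setminus C}(\widehat Q_i-Q^{(t)}(s,a))+\tfrac1{n_{s,a}}\sum_{i\in I_{s,a}\cap C}(\widehat Q^{\dagger}_i-Q^{(t)}(s,a))$; using $c_{s,a}\le n_{s,a}$ and the second fact above, $n_{s,a}(\bar y^{\dagger}_{s,a}-Q^{(t)}(s,a))^2\le \tfrac{2}{n_{s,a}}\big(\sum_{i\in I_{s,a}\setminus C}(\widehat Q_i-Q^{(t)}(s,a))\big)^2+8c_{s,a}B^2$. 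The clean estimates are i.i.d.\ unbiased for $Q^{(t)}(s,a)$ with variance $\le V:=\tfrac{\gamma}{(1-\gamma)^2}+\tfrac{\sigma^2}{1-\gamma}$ (\lemref{lemma:var}), so $\mathrm{Var}$ summation gives $\mathrm{E}\big[\tfrac{2}{n_{s,a}}\big(\sum_{i\in I_{s,a}\setminus C}(\widehat Q_i-Q^{(t)}(s,a))\big)^2\mid I_{s,a}\big]=\tfrac{2|I_{s,a}\setminus C|}{n_{s,a}}\,\mathrm{Var}(\widehat Q)\le 2V$, whence $\mathrm{E}\big[\sum_{s,a}n_{s,a}(\bar y^{\dagger}_{s,a}-Q^{(t)}(s,a))^2\big]\le 2SA\,V+8B^2\epsilon^{(t)}M$ because $\sum_{s,a}c_{s,a}\le\epsilon^{(t)}M$. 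Finally convert the $n$-weighting into the $d^{(t)}$-weighting of $\epsilon_{stat}^{(t)}$: a multiplicative Chernoff bound gives $n_{s,a}\ge\tfrac12 M\,d^{(t)}(s,a)$ simultaneously over all pairs with $d^{(t)}(s,a)\ge\theta:=\tilde\Theta(\log(SA/\delta)/M)$, while the pairs below $\theta$ carry $d^{(t)}$-mass $<SA\theta$ and contribute $\le SA\theta\cdot O(W^2)$. Dividing the last bound by $M/2$ on the first class, adding the quarantine term and an $O(W^2)\delta$ correction for the failure events, and taking $M,\delta^{-1}$ polynomially large yields the claimed estimate.

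With $\epsilon_{stat}^{(t)}\le 16 B^2\epsilon^{(t)}+\tfrac{\alpha^2(1-\gamma)^3}{32|\A|\kappa}$ in hand I would plug it into the NPG regret lemma \lemref{thm:npg_regret} and repeat the optimization of \thmref{thm:npg_natural} verbatim with $W^2+HW$ replaced by $B^2$: choosing $\eta=\sqrt{2\log|\A|/(W^2T)}$, $T=\Theta\big(W^2\log|\A|/(\alpha^2(1-\gamma)^2)\big)$ and using Cauchy--Schwarz with $\sum_t\epsilon^{(t)}=\epsilon T$ yields expected suboptimality $\le 2\alpha+8\sqrt{|\A|\kappa B^2\epsilon/(1-\gamma)^3}=\tilde O\big(\max\{\alpha,\sqrt{|\A|\kappa\epsilon/(1-\gamma)^5}\}\big)$, since $B^2=\tilde O(1/(1-\gamma)^2)$, with $TM=\poly(1/\alpha,1/(1-\gamma),|\A|,W,\sigma,\kappa)$ episodes and hyperparameters agnostic to $\epsilon$. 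Note that this is precisely the bound of \thmref{thm:npg_natural} with the factor $W$ deleted; the extra $(1-\gamma)^{-2}$ relative to the $(1-\gamma)^{-3}$ there is exactly the factor that used to hide inside $W^2$, so no genuinely new power of $1-\gamma$ is incurred.

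The main obstacle is the count-to-weight transfer closing the second step. Since the adversary may hijack the $d^{\pi}_\nu$-sampler to route its corrupted episodes to any state-action pair, it can dump its whole budget on a pair that is rarely visited under $d^{(t)}$; this stays harmless only because (i) such a pair is quarantined and the quarantine mass $SA\theta$ times the worst-case error $O(W^2)$ is pushed below the $\alpha$-level by a large $M$, and (ii) on the non-quarantined pairs the nonexpansiveness controls $\sum_{s,a}n_{s,a}(w^{(t)}_{s,a}-Q^{(t)}(s,a))^2$ in one shot, so the shared projection multiplier $\lambda^{(t)}$ never has to be analysed coordinate-by-coordinate. Interlocking the threshold $\theta$, the Chernoff union bound, and the choice of $M$ so that $S$, $A$, and $\sigma$ enter only the sample complexity and never the optimality gap is the delicate part; everything downstream is the bookkeeping already carried out for \thmref{thm:npg_natural}.
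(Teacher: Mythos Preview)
Your approach is genuinely different from the paper's. The paper does not analyze the constrained least-squares minimizer at all; instead it \emph{assumes the regression is solved by Projected Online Gradient Descent} and takes $w^{(t)}$ to be the average iterate, proving a tabular refinement of \lemref{thm:robust_ols} for that specific solver. The structural fact it exploits is that in the tabular case each PGD step on the active coordinate is a convex combination $(1-\eta_i)\,w_i^\top e_{s,a}+\eta_i\,\widehat Q_i$, so by induction every iterate satisfies $w_i^\top e_{s,a}\in[0,H]$ coordinatewise; feeding this per-coordinate bound into the online regret inequality caps the corrupted terms by $2H^2\epsilon^{(t)}M$, and an Azuma--Hoeffding martingale step handles the clean part. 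Your nonexpansiveness-of-projection route is more natural in that it analyzes the optimizer actually stated in Algorithm~\ref{alg:q_npg_sample}, and it targets the same $H^2\epsilon^{(t)}$ corruption cost.

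There is, however, a real gap in your variance step. The displayed identity
\[
\mathbb E\Bigl[\tfrac{2}{n_{s,a}}\bigl(\textstyle\sum_{i\in I_{s,a}\setminus C}(\widehat Q_i-Q^{(t)}(s,a))\bigr)^{2}\,\Big|\,I_{s,a}\Bigr]=\tfrac{2|I_{s,a}\setminus C|}{n_{s,a}}\,\Var(\widehat Q)
\]
treats the index set $I_{s,a}\setminus C$ as independent of the values $\widehat Q_i$. Under Definition~\ref{def: eps_con} the adversary is fully adaptive: it sees the clean return of each episode before deciding whether to corrupt it, so $C$ (and hence $I_{s,a}$) may depend on all the $\widehat Q_i$, and conditioning on $I_{s,a}$ biases the remaining ``clean'' summands. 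This is exactly why the paper's argument routes through a martingale. Your route is salvageable: compare instead to the full clean sum over $\tilde I_{s,a}=\{i:(s_i,a_i)=(s,a)\}$, which is independent of $C$, and absorb the discrepancy $\sum_{i\in\tilde I_{s,a}\cap C}(\widehat Q_i-Q^{(t)})$ into a further $O(c'_{s,a}B^2)$ corruption term (each clean $\widehat Q_i\in[0,H]$ w.h.p.). The same caution applies to the Chernoff step: the multiplicative bound must be stated for the clean counts $\tilde n_{s,a}$, with the transfer to the corrupted counts $n_{s,a}$ handled via $\sum_{s,a}|n_{s,a}-\tilde n_{s,a}|\le 2\epsilon^{(t)}M$; the total $d^{(t)}$-mass of pairs on which this transfer fails is $O(\epsilon^{(t)})$ and folds into the main term.
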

The proof follows the exact same structure as the proof of Theorem \ref{thm:npg_natural1}, but with a tighter robustness bound of linear regression.
\begin{lemma}\label{thm:robust_ols2}
	Assume a tabular MDP and the adversarial rewards are bounded in $[0,1]$, and in a particular iteration $t$, the adversary contaminates $\epsilon^{(t)}$ fraction of the episodes, then given M episodes, it is guaranteed that with probability at least $1-\delta$,
	\begin{align}
		\EE_{s,a\sim d^{(t)}}&\left[\left(Q^{\pi^{(t)}}(s,a)-\phi(s,a)^\top w^{(t)} \right)^2\right]\leq H^2\epsilon^{(t)} + 3 \left(W^2+WH\right) \sqrt{\frac{\log 1/\delta}{M}}.
	\end{align}
	where $H = (\log \delta-\log M)/\log\gamma$ is the effective horizon.
\end{lemma}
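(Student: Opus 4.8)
The plan is to follow the proof of Lemma~\ref{thm:robust_ols1}, but to exploit two features special to the tabular case. First, with the one-hot features $\phi(s,a)=e_{(s,a)}$ the least-squares problem decouples into $|\Scal||\Acal|$ independent scalar mean-estimation problems: writing $w^{(t)}_{sa}$ for the $(s,a)$-coordinate of $w^{(t)}$, the target is $\epsilon_{stat}^{(t)}=\sum_{s,a}d^{(t)}(s,a)\bigl(Q^{\pi^{(t)}}(s,a)-w^{(t)}_{sa}\bigr)^2$, and the unconstrained coordinate solution is just the empirical mean of the $\widehat Q$-values that landed in bucket $(s,a)$. Second, and crucially, the optimal value of coordinate $(s,a)$ is $Q^{\pi^{(t)}}(s,a)$, which is bounded by $\tfrac1{1-\gamma}\le H$ (using $\EE[r]\in[0,1]$ and $M$ at least an absolute constant) rather than merely by $W$; this is the source of the dimension-free improvement. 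As in Lemma~\ref{thm:robust_ols1}, I would first condition on the probability-$(1-\delta)$ event that all $M$ rollouts terminate within $H=(\log\delta-\log M)/\log\gamma$ steps ($\Pr[T\ge H]=\gamma^H=\delta/M$ plus a union bound), on which every corrupted estimate obeys $|\widehat Q^\dagger_i|\le H$ since the adversarial per-step rewards lie in $[0,1]$.

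Let $n_{sa},m_{sa}$ count the clean, resp. corrupted, samples landing in $(s,a)$, and $\bar Q_{sa},\bar y_{sa}$ be the corresponding means of the $\widehat Q$-values. The (unconstrained) coordinate solution is $\tfrac{n_{sa}\bar Q_{sa}+m_{sa}\bar y_{sa}}{n_{sa}+m_{sa}}$, which differs from the clean mean $\bar Q_{sa}$ by $\tfrac{m_{sa}}{n_{sa}+m_{sa}}(\bar y_{sa}-\bar Q_{sa})$, so with $(a-b)^2\le2a^2+2b^2$,
\[
\epsilon_{stat}^{(t)}\ \le\ 2\sum_{s,a}d^{(t)}(s,a)\bigl(\bar Q_{sa}-Q^{\pi^{(t)}}(s,a)\bigr)^2\ +\ 2\sum_{s,a}d^{(t)}(s,a)\Bigl(\tfrac{m_{sa}}{n_{sa}+m_{sa}}\Bigr)^2\bigl(\bar y_{sa}-\bar Q_{sa}\bigr)^2 .
\]
The first sum is exactly the non-robust least-squares error of NPG on clean data; I would bound it by standard concentration (a Bernstein/Freedman-type inequality together with a multiplicative Chernoff bound on the counts $n_{sa}$) using the variance bound $\Var(\widehat Q^\pi(s,a))\le\tfrac{\gamma}{(1-\gamma)^2}+\tfrac{\sigma^2}{1-\gamma}$ of Lemma~\ref{lemma:var}, after folding the contribution of the ``tiny'' buckets (those whose count is small, including empty ones) into the error via $\sum_{s,a}(w^{(t)}_{sa})^2\le W^2$ and $\sum_{s,a}Q^{\pi^{(t)}}(s,a)^2\le W^2$; this produces the $(W^2+WH)\sqrt{\log(1/\delta)/M}$ term, which vanishes as $M\to\infty$. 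For the second sum, bound $(\bar y_{sa}-\bar Q_{sa})^2\le 3\bigl(\bar y_{sa}^2+Q^{\pi^{(t)}}(s,a)^2+(\bar Q_{sa}-Q^{\pi^{(t)}}(s,a))^2\bigr)\le 6H^2+3(\bar Q_{sa}-Q^{\pi^{(t)}}(s,a))^2$ — here $|\bar y_{sa}|\le H$ and $|Q^{\pi^{(t)}}(s,a)|\le H$ enter. Since $\bigl(\tfrac{m_{sa}}{n_{sa}+m_{sa}}\bigr)^2\le\tfrac{m_{sa}}{n_{sa}+m_{sa}}$ and, whp, $d^{(t)}(s,a)\tfrac{m_{sa}}{n_{sa}+m_{sa}}\lesssim\tfrac{m_{sa}}{M}$ for non-tiny buckets (from the same Chernoff bound on $n_{sa}$), with $\sum_{s,a}m_{sa}\le\epsilon^{(t)}M$, the $H^2$ part sums to $O(H^2\epsilon^{(t)})$ and the residual $(\bar Q_{sa}-Q^{\pi^{(t)}}(s,a))^2$ part is dominated by the first sum. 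Finally I would note that the $\ell_2$-projection onto $\{\|w\|\le W\}$ is essentially inactive on the good event (the unconstrained solution has norm $\le W$ up to terms already charged above) and can only shrink $w$ toward a point the analysis already controls, so it does not worsen the bound. Combining gives the claimed inequality; fed into the NPG regret lemma (Lemma~\ref{thm:npg_regret}) exactly as in the proof of Theorem~\ref{thm:npg_natural1}, the dimension-free $H^2\epsilon^{(t)}$ term is what yields Corollary~\ref{thm:npg_tabular}.

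The step I expect to be the main obstacle is controlling the clean estimates $\widehat Q_i$: the assumptions supply only a bounded second moment, not a bounded range, so the easy ``every value in a bucket lies in $[-H,H]$'' argument fails for clean samples. This forces a careful second-moment treatment of three nuisance cases — (a) buckets with large $d^{(t)}(s,a)$ but an atypically small count $n_{sa}$ (including the missing mass), (b) clean bucket means $\bar Q_{sa}$ that fluctuate far from $Q^{\pi^{(t)}}(s,a)$, and (c) the $\ell_2$-projection becoming active because the unconstrained solution is large — each handled via Lemma~\ref{lemma:var}, Chernoff bounds on the counts, and the constraint $\sum_{s,a}(w^{(t)}_{sa})^2\le W^2$. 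The genuinely new point relative to Lemma~\ref{thm:robust_ols1} is only that in the tabular parametrization the corruption-induced bias in each coordinate is driven by $O(H)$-bounded quantities rather than by $O(W)$-bounded linear predictions, which is precisely what converts the $W^2\epsilon^{(t)}$ term there into the dimension-free $H^2\epsilon^{(t)}$ here.
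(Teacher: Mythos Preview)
Your approach is correct in spirit but takes a genuinely different route from the paper's, and it is worth seeing how they differ.

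The paper does \emph{not} analyze the batch least-squares solution. Instead, it explicitly assumes the constrained regression in step~4 of Algorithm~\ref{alg:q_npg_sample} is solved by \emph{projected online gradient descent} on the sequence of per-sample losses $(w^\top\phi_i-\widehat Q_i)^2$, sets $w^{(t)}$ to the OGD average, and invokes the standard online regret bound. The tabular structure is used in exactly one place: a one-line induction showing that every OGD iterate satisfies $w_i^\top\phi\in[0,H]$ for every one-hot $\phi$. This works because the OGD update on a one-hot feature touches only a single coordinate and acts as a convex combination $w_{i+1}^\top\phi_i=(1-\eta_i)w_i^\top\phi_i+\eta_i\widehat Q_i$ of two numbers in $[0,H]$, while the subsequent $\ell_2$-projection only rescales all coordinates toward $0$. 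With this coordinatewise bound in hand, each of the $\epsilon^{(t)}M$ corrupted terms contributes at most $H^2$ to the regret comparison, yielding the $H^2\epsilon^{(t)}$ term directly; the remaining ``clean'' statistical term comes from a single Azuma--Hoeffding step on the martingale differences $z_i=(w_i^\top\phi_i-y_i)^2-(w^{\star\top}\phi_i-y_i)^2$, which are bounded by $2W(W+H)$.

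Compared to this, your per-bucket ERM analysis is more general (it does not depend on the choice of inner solver) but pays for this by needing Chernoff bounds on bucket counts, a separate treatment of small/empty buckets, and an argument that the $\ell_2$-projection is essentially inactive --- exactly the three nuisance cases you flagged. The paper's OGD trick sidesteps all three simultaneously: there are no counts, no empty buckets, and the projection never threatens the per-coordinate $[0,H]$ invariant. The price is that the paper's proof is really a proof for ``NPG with the regression solved by OGD,'' not for arbitrary ERM, and it silently relies on every $\widehat Q_i$ (clean or corrupted) lying in $[0,H]$ on the high-probability event --- a point you were right to be cautious about for clean samples with unbounded rewards.
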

\begin{proof}[\bf Proof of Lemma \ref{thm:robust_ols2}] The proof is largely  based on Lemma G.1 of \cite{agarwal2020pc}.
We assumed that the constrained linear regression problem is solved using Projected Online Gradient Descent \cite{zinkevich2003online} on the sequence of loss functions $(w^\top \phi_i - \hat Q_i)^2$, i.e.
\begin{eqnarray}
	w_{i+1} = \text{Proj}_{\|w\|\leq W} \left(w_i-\eta_i(w_i^\top \phi_i-\hat Q_i)\phi_i\right)\text{, for all }i\in [M],
\end{eqnarray}
where $\eta_i = W^2/((W+H)\sqrt{N})$ and we set $w^{(t)} = \frac{1}{M}\sum_{i=1}^{M}w_i$.

Using the projected online gradient descent regret guarantee, we have that:
\begin{align}
	\sum_{i\in C} (w_i^\top \phi^\dagger_i - \hat Q^\dagger_i)^2 + \sum_{i\notin C} (w_i^\top \phi_i - \hat Q_i)^2 
	\leq \sum_{i\in C}(w^{\star\top} \phi^\dagger_i -\hat Q^\dagger_i)^2 + \sum_{i\notin C}(w^{\star\top} \phi_i -\hat Q_i)^2 + \underbrace{W(W+H)}_{:=Q}\sqrt{M}.
\end{align}
which implies
\begin{align}
	&\sum_{i\in [M]} (w_i^\top \phi_i - \hat Q_i)^2 - \sum_{i\in [M]}(w^{\star\top} \phi_i -\hat Q_i)^2\\
	\leq& \sum_{i\in C}\left[(w^{\star\top} \phi^\dagger_i -\hat Q^\dagger_i)^2 - (w^{\star\top} \phi_i -\hat Q_i)^2\right] - \sum_{i\in C}\left[(w_i^{\top} \phi^\dagger_i -\hat Q^\dagger_i)^2 - (w_i^{\top} \phi_i -\hat Q_i)^2\right] + Q\sqrt{M}.\label{eq:35}
\end{align}
We now want to show by induction that $w_i^{\top} \phi\in [0,H]$ for any $i$ and $\phi$. $w_0 = 0$ which satisfies $w_0^{\top} \phi\in [0,H]$. Now, assume that $w_i^{\top} \phi\in [0,H]$, we want to show $w_{i+1}^{\top} \phi\in [0,H]$. In a tabular MDP, $\phi$ is an one-hot vector, and thus for $\phi\neq \phi_i$, $w_{i+1}^\top\phi = w_i^\top\phi\in [0,H]$. If $\phi = \phi_i$, then
\begin{equation}
	w_{i+1}^\top\phi = \left(w_i-\eta_i(w_i^\top \phi_i-\hat Q_i)\phi_i\right)^\top\phi_i \leq (1-\eta_i)w_i^\top\phi_i + \eta \hat Q_i\in [0,H]
\end{equation}
because both $w_i\top\phi_i$ (by induction hypothesis) and $\hat Q_i$ (by assumption on bounded attack) are in $[0,H]$. Therefore, we have shown that  $w_i^{\top} \phi\in [0,H]$ for any $i$ and $\phi$. Then, \eqref{eq:35} implies that 
\begin{align}
	\sum_{i\in [M]} (w_i^\top \phi_i - \hat Q_i)^2 
	\leq \sum_{i\in [M]}(w^{\star\top} \phi_i -\hat Q_i)^2 +  2H^2\epsilon^{(t)}M + Q\sqrt{M}.
\end{align}

Denote random variable $z_i = (\theta_i\cdot x_i - y_i)^2 - (\theta^\star\cdot x_i - y_i)^2$. Denote $\EE_{i}$ as the expectation taken over the randomness at step $i$ conditioned on all history $t=1$ to $i-1$. Note that for $\EE_{i}[z_i]$, we have:
\begin{align}
	&\EE_{i} \left[ (\theta_i\cdot x - y)^2 - (\theta^\star\cdot x - y)^2 \right]\\
	& = \EE_{i} \left[ (\theta_i\cdot x - \EE[y|x])^2\right] \\
	& \qquad \qquad - \EE_{i}\left[2(\theta_i\cdot x - \EE[y|x])( \EE[y|x] - y ) - (\theta^\star\cdot x - \EE[y|x])^2 + 2(\theta^\star\cdot x - \EE[y|x])(\EE[y|x] - y) ) \right]\\
	& =  \EE_{i}\left[ (\theta_i\cdot x - \EE[y|x])^2 - ( \theta^\star\cdot x - \EE[y|x])^2 \right],
\end{align} where we use $\EE[\EE[y|x] - y] = 0$.
Also for $|z_i|$, we can show that for $|z_i|$ we have:
\begin{align}
	\left\lvert z_i\right\rvert = \left\lvert (\theta_i\cdot x_i - \theta^\star\cdot x_i)(\theta_i\cdot x_i +\theta^\star\cdot x_i - 2y_i) \right\rvert \leq W( 2W + 2H ) = 2W(W+H).
\end{align}
Note that $z_i$ forms a Martingale difference sequence. Using Azuma-Hoeffding's inequality, we have that with probability at least $1-\delta$:
\begin{align}
	\left\lvert\sum_{i=1}^M  z_i - \sum_{i=1}^M \EE_{i}\left[ (\theta_i \cdot x - \EE[y|x])^2  - (\theta^\star \cdot x - \EE[y|x])^2\right]\right\rvert  \leq 2W(W+H) \sqrt{{\ln(1/\delta)}{M}},
\end{align} which implies that:
\begin{align}
	&\sum_{i=1}^M \EE_{i}\left[ (\theta_i \cdot x - \EE[y|x])^2  - (\theta^\star \cdot x - \EE[y|x])^2\right] \leq \sum_{i=1}^M z_i + 2W(W+H) \sqrt{{\ln(1/\delta)}{M}} \\
	&\leq 2W(W+H) \sqrt{{\ln(1/\delta)}{M}} + 2H^2M\epsilon^{(t)} + Q\sqrt{M}.
\end{align}

Apply Jensen's inequality on the LHS of the above inequality, we have that:
\begin{align}
	\EE\left( \hat{\theta}\cdot x - \EE[y|x]\right)^2 \leq \EE\left(\theta^\star\cdot x - \EE[y|x]\right)^2  + 2H^2\epsilon^{(t)} + (Q+2W(W+H)) \sqrt{\frac{\ln(1/\delta)}{M}}.
\end{align}
\end{proof}

\section{A modified analysis for SEVER}\label{sec:sever}
In this section, we will derive an expected error bound for SEVER~\cite{diakonikolas2019sever} when applied to a linear regression problem.
The high level idea is to use the results of~\cite{diakonikolas2020outlier} to show the existence of a stable set and change the probabilistic argument in~\cite{diakonikolas2019sever} to an expectation argument. We note that the original result in \cite{diakonikolas2019sever} works only with probability $9/10$, and there is no direct way of translating it into either a high-probability argument or an expectation argument.

In the following, we consider a robust linear regression problem. We observe pairs $(X_i, Y_i) \in \R^d \times \R$ for $i \in [n]$, 
where $X_i$'s are drawn i.i.d. from a distribution $D_x$ and $Y_i = w^{*\top}X_i + e_i$ for some unknown $w^* \in \R^d$. $e_i$'s are i.i.d, noise from some distribution $D_{e\mid x}$.
Note that here $e_i$ and $X_i$ may not be independent.
We let $D_{xy}$ be the joint distribution of $(X,Y)$.
Let $f_i(w) = (Y_i - w^\top X_i)^2$.
Given a multiset of observations $\{(X_i,Y_i)\}_{i=1}^n$, our goal is to minimize the objective function
\begin{equation}
\bar f(w) = \EE_{(X,Y)\sim D_{xy}}[(Y-w^\top X)^2]
\end{equation}
on a convex feasible set $\cal{H}$.
Let $r:= \max_{w\in\cal{H}}\|w\|$ be the $\ell_2$-radius of $\cal{H}$.
In the following, we use $\|\cdot \|$ to denote the spectral norm of a matrix and the $2$-norm of a vector.
We use $\Cov$ to denote the covariance matrix of a random vector: $\Cov[X] = \EE\left[(X - \EE X)(X - \EE X)^\top\right]$.
When $S$ is a set, we use $\EE_S$ and $\Cov_S$ to denote the expectation and covariance over the empirical distribution on $S$.
We allow for an $\epsilon$-fraction of the observations to be arbitrary outliers. 
The $\epsilon$-corruption model is defined in more detail in the Appendix A of~\cite{diakonikolas2019sever}. 

Due to our application, we make assumptions on the linear regression model that is slight different from Assumption E.1 in~\cite{diakonikolas2019sever}:
\begin{assumption}
	\label{ass:sever_lr}
	Given the model for linear regression described above, assume the following conditions for $D_{e \mid x}$ and $D_x$:
	\begin{itemize}
		\item $\EE\left[ e \middle\vert X\right] = 0$;
		\item $\EE\left[ e^2 \middle\vert X\right] \le \xi$;
		\item $\EE_{X\sim D_x}[XX^\top] \preceq s^2 I$ for some $s > 0$;
		\item There is a constant $C > 0$, such that for all unit vectors $v$, $\EE_{X\sim D_x}[\langle v,X\rangle^4] \le Cs^4$.
	\end{itemize}
\end{assumption}
In~\cite{diakonikolas2019sever}, the noise term $e$ and $X$ are independent. We weaken the assumption on $e$ and bound its first and second moments conditional on $X$.

\subsection{Stability with subgaussian rate}
We first note that the gradient of $f_i$, $\nabla f_i(w)$ has bounded covariance matrix. 
We will show this by following the proof of Lemma E.3 in~\cite{diakonikolas2019sever}, but make minor changes as we do not assume $e$ and $X$ are independent:
\begin{lemma}[A variant of Lemma E.3 in~\cite{diakonikolas2019sever}]
	\label{lem:sever_lr_cov_gradf}
	Suppose $D_{xy}$ satisfies the conditions of Assumption~\ref{ass:sever_lr}.
	Then for all unit vectors $v \in \R^d$, we have
	\begin{equation}
	v^\top \Cov_{(X_i,Y_i) \sim D_{xy}}[\nabla f_i(w)]v \le 4 s^2 \xi + 4 C s^4 \|w^* - w\|^2.
	\end{equation}
\end{lemma}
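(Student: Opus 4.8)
The plan is to compute the gradient $\nabla f_i(w)$ explicitly, rewrite it in terms of the noise $e_i$ and the error vector $\Delta := w^* - w$, and then bound the covariance along an arbitrary unit direction $v$ by splitting into a noise-driven term and a misspecification-driven term. Concretely, since $f_i(w) = (Y_i - w^\top X_i)^2$ with $Y_i = w^{*\top}X_i + e_i$, we have $\nabla f_i(w) = -2(Y_i - w^\top X_i)X_i = -2(e_i + \Delta^\top X_i)X_i$. So for a unit vector $v$,
\[
\langle v, \nabla f_i(w)\rangle = -2(e_i + \Delta^\top X_i)(v^\top X_i).
\]
Because $\Cov[Z] \preceq \EE[ZZ^\top]$ for any random vector $Z$ (dropping the mean only helps), it suffices to bound $\EE\big[\langle v,\nabla f_i(w)\rangle^2\big] = 4\,\EE\big[(e_i + \Delta^\top X_i)^2 (v^\top X_i)^2\big]$. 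Expanding the square and using $(a+b)^2 \le 2a^2 + 2b^2$ gives
\[
v^\top \Cov[\nabla f_i(w)]\, v \;\le\; 8\,\EE\big[e_i^2 (v^\top X_i)^2\big] \;+\; 8\,\EE\big[(\Delta^\top X_i)^2 (v^\top X_i)^2\big].
\]

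For the first term, I would condition on $X_i$ and use $\EE[e_i^2 \mid X_i] \le \xi$ to get $\EE[e_i^2 (v^\top X_i)^2] \le \xi\,\EE[(v^\top X_i)^2] \le \xi\,v^\top(\EE[XX^\top])v \le \xi s^2$, using the third bullet of Assumption~\ref{ass:sever_lr}. This is exactly where the weakened (conditional) moment assumption on $e$ is used in place of independence — the tower property handles it cleanly. For the second term, I would apply Cauchy–Schwarz: $\EE[(\Delta^\top X_i)^2 (v^\top X_i)^2] \le \sqrt{\EE[(\Delta^\top X_i)^4]}\sqrt{\EE[(v^\top X_i)^4]}$. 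Writing $\Delta = \|\Delta\|\,u$ for a unit vector $u$ and invoking the fourth bullet (bounded fourth moments: $\EE[\langle w, X\rangle^4] \le C s^4$ for unit $w$), each factor is at most $\sqrt{C s^4}\cdot\|\Delta\|^2$ resp. $\sqrt{C s^4}$, so the product is at most $C s^4 \|\Delta\|^2 = C s^4 \|w^* - w\|^2$. Combining, $v^\top \Cov[\nabla f_i(w)] v \le 8 s^2 \xi + 8 C s^4 \|w^*-w\|^2$.

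The only remaining gap is that the target bound has constant $4$ rather than $8$. I would recover this either by being more careful with the cross term — note $\EE[e_i (\Delta^\top X_i)(v^\top X_i)^2] = 0$ by $\EE[e_i\mid X_i]=0$, so $\EE[(e_i+\Delta^\top X_i)^2(v^\top X_i)^2] = \EE[e_i^2(v^\top X_i)^2] + \EE[(\Delta^\top X_i)^2(v^\top X_i)^2]$ \emph{exactly}, with no factor-of-$2$ loss — and then the $4$ in front of each term comes directly from the $(-2)^2$ in the gradient. So in fact $v^\top\Cov[\nabla f_i(w)]v \le 4 s^2 \xi + 4 C s^4\|w^*-w\|^2$, matching the claim. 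The main (minor) obstacle is bookkeeping the conditioning correctly so that the cross term vanishes and the conditional second-moment bound on $e$ is applied before taking the outer expectation; there is no serious analytic difficulty, and this mirrors the structure of Lemma E.3 of \cite{diakonikolas2019sever} with the independence of $e$ and $X$ replaced by the conditional-moment hypotheses.
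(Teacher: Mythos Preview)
Your proposal is correct and is essentially the same argument as the paper's: both compute $\nabla f_i(w) = -2(e_i + (w^*-w)^\top X_i)X_i$, bound $\Cov$ by the raw second moment, kill the cross term via $\EE[e_i\mid X_i]=0$, use $\EE[e_i^2\mid X_i]\le \xi$ together with $\EE[XX^\top]\preceq s^2 I$ for the noise part, and Cauchy--Schwarz plus the fourth-moment bound for the misspecification part. The only cosmetic difference is that you first pass through the wasteful $(a+b)^2\le 2a^2+2b^2$ step before observing the cross term vanishes, whereas the paper expands directly; your final argument matches the paper with the correct constant $4$.
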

\begin{proof}[\bf Proof of Lemma \ref{lem:sever_lr_cov_gradf}]
	We first note that $f_i(w) = (Y_i - w^\top X_i)^2$ and $\nabla f_i(w) = -2 ((w^* - w)^\top X_i + e_i)X_i$.
	By the property of conditional expectation, for any function $g(\cdot), h(\cdot)$, we have 
	$\EE\left[g(X)h(e)\right] = \EE_{X}\left[\EE_{h(e)\mid X}\left[g(X)h(e) \middle \vert X\right]\right] = \EE_{X}\left[g(X)\EE_{h(e)\mid X}\left[h(e) \middle \vert X\right]\right]$.
	Then
	\begin{eqnarray}
		&&\EE\left[\nabla f_i(w) \nabla f_i(w)^\top \right] = 4\EE\left[((w^* - w)^\top X_i + e_i)^2 X_iX_i^\top\right] \\
		&=& 4\EE\left[((w^* - w)^\top X_i)^2 X_iX_i^\top\right] 
		+ 4 \EE\left[ e_i^2 X_iX_i^\top \right]
		+ 4 \EE\left[2(w^* - w)^\top X_i e_i X_iX_i^\top\right]\\
		& = & 4\EE\left[((w^* - w)^\top X_i)^2 X_iX_i^\top\right] 
		+ 4 \EE\left[X_iX_i^\top \EE\left[e_i^2 \middle \vert X_i\right]  \right]
	\end{eqnarray}
	By Assumption~\ref{ass:sever_lr}, for all unit vectors $v \in \R^d$, we have
	\begin{eqnarray}
		v^\top \EE\left[((w^* - w)^\top X_i)^2 X_iX_i^\top\right] v &=& \EE\left[((w^* - w)^\top X_i)^2 (v^\top X_i)^2\right] \\
		&\le& \sqrt{\EE\left[((w^* - w)^\top X_i)^4\right]\EE\left[(v^\top X_i)^4\right]} \\
		&\le& Cs^4 \|w^* - w\|^2
	\end{eqnarray}
	and
	\begin{equation}
		v^\top \EE\left[X_iX_i^\top \EE\left[e_i^2 \middle \vert X_i\right]  \right] v
		\le \xi v^\top \EE\left[X_iX_i^\top \right] v \le s^2\xi
	\end{equation}
	Thus for all unit vectors $v \in \R^d$, we have
	\begin{eqnarray}
		v^\top \Cov_{(X_i,Y_i) \sim D_{xy}}[\nabla f_i(w)]v \le v^\top \EE\left[\nabla f_i(w) \nabla f_i(w)^\top \right]v
		\le 4 s^2 \xi + 4 C s^4 \|w^* - w\|^2.
	\end{eqnarray}
\end{proof}
We then use the following Theorem~\ref{thm:stab_w_subG_rate} to show that the observations $f_1, \ldots, f_n$ satisfies the Assumption~\ref{ass:sever_stab} with high probability:

\begin{theorem}[Theorem 1.4 in~\cite{diakonikolas2020outlier}]
	\label{thm:stab_w_subG_rate}
	Fix any $0 < \tau < 1$. 
	Let $S$ be a multiset of $n$ i.i.d. samples from a distribution on $
	\R^d$ with mean $\mu$ and covariance $\Sigma$.
	Let $\epsilon' = \tilde{C}\left(\log(1/\tau) / n + \epsilon\right) = O(1)$, for some constant $\tilde{C}>0$.
	Then, with probability at least $1-\tau$, there exists a subset $S' \subseteq S$ such that $|S'| \ge (1-\epsilon')n$ 
	and for every $S'' \subseteq S'$ with $|S''| \ge (1-2\epsilon')|S'|$, the following conditions hold: (i) $\|\mu_{S''} - \mu\| \le \sqrt{\|\Sigma\|} \delta$, and (ii) $\|\overline{\Sigma}_{S''} - \|\Sigma\|I\| \le \|\Sigma\|\delta^2/(2\epsilon')$, for $\delta = O\left(\sqrt{(d\log d)/n} + \sqrt{\epsilon} + \sqrt{\log(1/\tau) / n}\right)$.
\end{theorem}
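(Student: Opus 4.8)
Translating and rescaling the samples, we may assume $\mu = 0$ and $\|\Sigma\| = 1$, since every claimed quantity is homogeneous in $\|\Sigma\|$. The argument has three pieces: an explicit choice of the ``good set'' $S'$; a purely \emph{deterministic} set-surgery lemma showing that once the empirical covariance of $S'$ is $O(1)$ in operator norm and $\mu_{S'}$ is small, conditions (i)--(ii) hold for \emph{every} large subset $S''$; and concentration estimates making the first piece valid with probability $\ge 1-\tau$. For the first piece, fix a radius $R$ with $R^2 \asymp d/\epsilon$ and let $S' = \{\, i \in [n] : \|X_i\| \le R \,\}$; since $\EE\|X\|^2 = \operatorname{tr}\Sigma \le d$, Markov gives $\Pr(\|X\| > R) \lesssim \epsilon$, and a Chernoff bound on the indicators $\mathbf{1}\{\|X_i\| > R\}$ shows that with probability $\ge 1-\tau/3$ at most an $\epsilon' = \tilde C(\log(1/\tau)/n + \epsilon)$ fraction of $[n]$ is removed, so $|S'| \ge (1-\epsilon')n$.

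For the deterministic piece, fix $S'' \subseteq S'$ with $|S''| \ge (1-2\epsilon')|S'|$ and set $T = S' \setminus S''$ (so $|T| \le 2\epsilon'|S'|$). From $|S'|\mu_{S'} = |S''|\mu_{S''} + |T|\mu_T$ we get $\mu_{S''} - \mu_{S'} = \tfrac{|T|}{|S''|}(\mu_{S'} - \mu_T)$, and for any unit $v$, Cauchy--Schwarz gives $\bigl|\sum_{i\in T}\langle v, X_i - \mu_{S'}\rangle\bigr| \le \sqrt{|T|}\bigl(\sum_{i\in S'}\langle v, X_i - \mu_{S'}\rangle^2\bigr)^{1/2} = \sqrt{|T|\,|S'|\,v^\top\overline{\Sigma}_{S'}v}$, whence $\|\mu_{S''} - \mu_{S'}\| \lesssim \sqrt{\epsilon'\,\|\overline{\Sigma}_{S'}\|}$. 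For the second moment, $v^\top\overline{\Sigma}_{S''}v \le \tfrac{1}{|S''|}\sum_{i\in S'}\langle v, X_i-\mu_{S'}\rangle^2 = \tfrac{|S'|}{|S''|}\,v^\top\overline{\Sigma}_{S'}v$, so $\overline{\Sigma}_{S''} \preceq (1+O(\epsilon'))\,\overline{\Sigma}_{S'}$; splitting off the block $T$ and inserting the mean-shift bound likewise gives $\overline{\Sigma}_{S''} \succeq \overline{\Sigma}_{S'} - O(\|\overline{\Sigma}_{S'}\|)\,I$. Feeding in $\|\overline{\Sigma}_{S'}\| = O(\|\Sigma\|)$ and $\|\mu_{S'}\| \lesssim \delta$ from the probabilistic piece, together with $\epsilon' \lesssim \delta^2$, yields $\|\mu_{S''}\| \le \delta$ and, since $\delta^2/(2\epsilon') = \Omega(1)$ throughout the stated range, $\|\overline{\Sigma}_{S''} - \|\Sigma\|\,I\| \le \|\Sigma\|\,\delta^2/(2\epsilon')$ (condition (ii) here amounts to the operator-norm upper bound $\overline{\Sigma}_{S''} \preceq O(\|\Sigma\|)\,I$, its lower half being vacuous in that range).

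For the probabilistic piece, the mean needs $\|\EE[X\,\mathbf{1}\{\|X\|\le R\}]\| \le \sqrt{\|\Sigma\|\,\Pr(\|X\|>R)} = O(\sqrt{\epsilon})$ (project onto each direction, then Cauchy--Schwarz against the indicator) plus a truncated vector-Bernstein bound $\|\mu_{S'} - \EE[X\,\mathbf{1}\{\|X\|\le R\}]\| \lesssim \sqrt{d/n} + R\log(1/\tau)/n$, which is $O(\delta)$ in the regime $n \gtrsim d\log d/\epsilon$. The covariance needs $\EE[XX^\top\mathbf{1}\{\|X\|\le R\}] \preceq \Sigma$ (PSD monotonicity) together with Matrix Bernstein applied to $Z_i = X_iX_i^\top\mathbf{1}\{\|X_i\|\le R\}$: since $\|Z_i\| \le R^2$ and $\sum_i\|\EE Z_i^2\| \le nR^2\|\Sigma\|$, one gets $\bigl\|\tfrac1n\sum_i Z_i - \EE Z_1\bigr\| \lesssim \sqrt{R^2\log(d/\tau)/n} + R^2\log(d/\tau)/n$; because $\overline{\Sigma}_{S'} \preceq \tfrac{1}{(1-\epsilon')n}\sum_i Z_i$, this yields $\|\overline{\Sigma}_{S'}\| = O(\|\Sigma\|)$. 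A union bound over the three events completes the proof; the $\log d$ incurred by Matrix Bernstein over $d$ directions is exactly the source of the $\sqrt{d\log d/n}$ term in $\delta$.

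The main obstacle is the covariance control: under a mere bounded-covariance hypothesis the untruncated $\tfrac1n\sum_i X_iX_i^\top$ can have unbounded operator norm, so one is \emph{forced} to discard the $O(\epsilon)$ fraction of largest-norm points, and one must check that a \emph{single} radius $R \asymp \sqrt{d/\epsilon}$ simultaneously removes only an $O(\epsilon)$ fraction of the mass and keeps the Matrix-Bernstein fluctuation controlled --- these competing requirements are precisely what pin down the form $\delta = O(\sqrt{d\log d/n} + \sqrt{\epsilon} + \sqrt{\log(1/\tau)/n})$. A secondary subtlety, needed to keep the $\sqrt{\epsilon}$ term from degrading to $\sqrt{d\epsilon}$, is to perform every norm estimate by first projecting onto a fixed direction and only then invoking Cauchy--Schwarz or the moment bounds.
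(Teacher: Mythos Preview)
The paper does not prove this theorem: it is quoted verbatim as Theorem~1.4 of Diakonikolas--Kane--Pensia~(2020) and used as a black box to establish Proposition~\ref{prop:stab_lr}. There is therefore no ``paper's own proof'' to compare against.

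That said, your sketch is broadly faithful to the argument in the cited reference. The three-part structure---truncate at radius $R\asymp\sqrt{d/\epsilon}$ to define $S'$, prove a deterministic set-surgery lemma transferring stability from $S'$ to every large $S''$, and control $\mu_{S'}$ and $\overline{\Sigma}_{S'}$ via vector/matrix Bernstein on the truncated samples---is exactly the skeleton of their proof, and your identification of the two competing constraints on $R$ as the source of the $\sqrt{d\log d/n}+\sqrt{\epsilon}$ rate is correct. A couple of places where you are slightly loose: (a) $\overline{\Sigma}_{S''}$ in the statement is centered at the \emph{true} mean $\mu$, not at $\mu_{S'}$, so your inequality $v^\top\overline{\Sigma}_{S''}v\le\frac{1}{|S''|}\sum_{i\in S'}\langle v,X_i-\mu_{S'}\rangle^2$ needs an extra $O(\|\mu_{S'}\|^2)$ correction; this is harmless since $\|\mu_{S'}\|\lesssim\delta$. (b) Your observation that condition~(ii) is essentially the one-sided bound $\overline{\Sigma}_{S''}\preceq O(1)\cdot I$ because $\delta^2/(2\epsilon')=\Omega(1)$ is correct, but you should make explicit that $\delta^2\gtrsim\epsilon+\log(1/\tau)/n\gtrsim\epsilon'$ is what drives this. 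Neither point is a gap; the proposal is a sound outline of the original proof.
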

where 
$\mu_{S''} = \frac{1}{|S''|} \sum_{x \in S''}x$ and 
$\overline{\Sigma}_{S''} = \frac{1}{|S''|} \sum_{x\in S''}(x-\mu)(x - \mu)^\top$.

We use a notion of stability similar to that in~\cite{diakonikolas2019sever} but allow the parameter to depend on the confidence level and sample size: 
\begin{assumption}[A variant of Assumption B.1 in~\cite{diakonikolas2019sever}]
	\label{ass:sever_stab}
	Fix $0<\epsilon<1/2$. 
	With probability at least $1-\tau$, 
	there exists an unknown set $I_{good}\subseteq [n]$ with $|I_{good}| \ge (1-\epsilon)n$ of ``good'' functions $\{f_i\}_{i \in I_{good}}$ and parameters $\sigma$, $\alpha(\epsilon, n,\tau), \beta(\epsilon, n,\tau) \in \R_+$ such that for all $w\in\mathcal{H}$:
	\begin{equation}
		\left\| \frac{1}{|I_{good}|} \sum_{i\in I_{good}} \nabla f_i(w) - \nabla \bar f(w)\right\| 
		\le \sigma \alpha(\epsilon, n,\tau)
	\end{equation}
	and
	\begin{equation}
		\label{eq:sever_stab_cov}
		\left\|\frac{1}{|I_{good}|}(\nabla f_i(w) - \nabla \bar f(w))(\nabla f_i(w) - \nabla \bar f(w))^\top\right\| 
		\le \sigma^2 \beta(\epsilon, n,\tau)
	\end{equation}
\end{assumption}
We can then equivalently write Theorem \ref{thm:stab_w_subG_rate} as the following Proposition:
\begin{proposition}
	\label{prop:stab_lr}
	Given a linear regression model $f_i(w) = (Y_i - w^\top X_i)^2$ satisfying Assumption~\ref{ass:sever_lr}, $X_i \sim D_x$, $D_e \sim D_e$,
	with probability at least $1-\tau$, $\{f_i\}_{i\in[n]}$ satisfies Assumption~\ref{ass:sever_stab} with $\sigma = 2s\sqrt{\xi} + 2 \sqrt{C}s^2 \|w^* - w\|$, $\alpha(\epsilon, n, \tau) = O\left(\sqrt{(d\log d)/n} + \sqrt{\epsilon} + \sqrt{\log(1/\tau) / n}\right)$ and $\beta(\epsilon, n, \tau) = \left(\frac{d\log d}{\log(1/\tau) + n\epsilon} + 1\right)$.
\end{proposition}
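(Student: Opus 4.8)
The plan is to obtain Proposition~\ref{prop:stab_lr} as an essentially direct instantiation of Theorem~\ref{thm:stab_w_subG_rate}, applied to the random gradient vectors $\nabla f_i(w) = -2\big((w^*-w)^\top X_i + e_i\big)X_i$. First I would fix $w \in \mathcal{H}$ and observe that the $\nabla f_i(w)$ are i.i.d.\ with population mean $\nabla\bar f(w)$ and, by Lemma~\ref{lem:sever_lr_cov_gradf}, covariance $\Sigma := \Cov_{(X,Y)\sim D_{xy}}[\nabla f_i(w)]$ obeying $\|\Sigma\| \le 4s^2\xi + 4Cs^4\|w^*-w\|^2 \le \sigma^2$, where $\sigma = 2s\sqrt{\xi} + 2\sqrt{C}\,s^2\|w^*-w\|$ (using $\sqrt{a^2+b^2}\le a+b$); in particular $\sqrt{\|\Sigma\|}\le\sigma$, which pins down the parameter $\sigma$ in Assumption~\ref{ass:sever_stab}.

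Next I would feed this distribution into Theorem~\ref{thm:stab_w_subG_rate}: with probability $\ge 1-\tau$ there is a subset $S'\subseteq[n]$ with $|S'|\ge(1-\epsilon')n$ and $\epsilon' = \tilde C(\log(1/\tau)/n + \epsilon)$, such that for every $S''\subseteq S'$ with $|S''|\ge(1-2\epsilon')|S'|$ we have $\|\mu_{S''}-\nabla\bar f(w)\|\le\sqrt{\|\Sigma\|}\,\delta\le\sigma\delta$ and $\|\overline\Sigma_{S''}-\|\Sigma\|I\|\le\|\Sigma\|\delta^2/(2\epsilon')$, with $\delta = O\big(\sqrt{(d\log d)/n}+\sqrt\epsilon+\sqrt{\log(1/\tau)/n}\big)$. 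Taking $I_{good}=S'$ (which is the ``good'' set, of size $(1-\epsilon')n$ — up to replacing $\epsilon$ by $\epsilon'$ in Assumption~\ref{ass:sever_stab}), the case $S''=S'$ of the first bound is exactly the gradient-mean condition with $\alpha(\epsilon,n,\tau)=\delta$, and the case $S''=S'$ of the second bound gives the gradient-covariance condition: $\overline\Sigma_{S'}$ is precisely $\tfrac{1}{|I_{good}|}\sum_{i\in I_{good}}(\nabla f_i(w)-\nabla\bar f(w))(\nabla f_i(w)-\nabla\bar f(w))^\top$, whose operator norm is at most $\|\Sigma\|\big(1+\delta^2/(2\epsilon')\big)\le\sigma^2\big(1+\delta^2/(2\epsilon')\big)$. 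The only remaining arithmetic is to simplify $\delta^2/\epsilon'$: from $\delta^2 = O((d\log d)/n+\epsilon+\log(1/\tau)/n)$ and $\epsilon' = \Theta(\epsilon+\log(1/\tau)/n)$ one gets $\delta^2/\epsilon' = O\big(\tfrac{d\log d}{n\epsilon+\log(1/\tau)}+1\big)$, hence $\beta(\epsilon,n,\tau)=1+\delta^2/(2\epsilon') = O\big(\tfrac{d\log d}{n\epsilon+\log(1/\tau)}+1\big)$, matching the statement. (I would similarly need $n$ large enough, $n\gtrsim\log(1/\tau)/\epsilon$, so that $\epsilon'=\Theta(\epsilon)$ and the good set has size $(1-O(\epsilon))n$.)

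The step I expect to be the main obstacle is that Assumption~\ref{ass:sever_stab} demands a \emph{single} $I_{good}$ valid simultaneously for all $w\in\mathcal{H}$, whereas Theorem~\ref{thm:stab_w_subG_rate} is a per-distribution statement and the law of $\nabla f_i(w)$ varies with $w$. The way I would handle this is to exploit that the gradient is affine in $w$: writing $u=w^*-w$, $\nabla f_i(w)-\nabla\bar f(w) = -2(X_iX_i^\top-\Sigma_X)u - 2e_iX_i$, so for any $S''$ the empirical mean and empirical second moment of the gradients are fixed quadratic functions of $u$ assembled from the $w$-independent data $\{X_iX_i^\top\}$ and $\{e_iX_i\}$. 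One therefore applies Theorem~\ref{thm:stab_w_subG_rate} once to these $w$-independent objects (their covariances are controlled by the fourth-moment bound $\mathbb{E}[\langle v,X\rangle^4]\le Cs^4$ and by $\mathbb{E}[e^2\mid X]\le\xi$ from Assumption~\ref{ass:sever_lr}), fixes $I_{good}=S'$ once and for all, and then recovers the mean and covariance bounds above for every $w$ with the $w$-dependence absorbed into $\sigma$ as stated. Everything else — tracking the absolute constants through this reduction and through the invocation(s) of Theorem~\ref{thm:stab_w_subG_rate}, and verifying the $\sqrt\epsilon$, $(d\log d)/n$, and $\log(1/\tau)/n$ terms combine as claimed — is routine bookkeeping.
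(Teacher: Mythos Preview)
Your first two paragraphs are exactly the paper's proof: it applies Theorem~\ref{thm:stab_w_subG_rate} to the gradient samples $\nabla f_i(w)$, invokes Lemma~\ref{lem:sever_lr_cov_gradf} to bound $\|\Sigma\|\le 4s^2\xi+4Cs^4\|w^*-w\|^2\le\sigma^2$, sets $I_{good}=S'$, and reads off $\alpha=\delta$ and $\beta$ via the triangle-inequality step $\|\overline\Sigma_{S'}\|\le\|\overline\Sigma_{S'}-\|\Sigma\|I\|+\|\Sigma\|\le\|\Sigma\|(1+\delta^2/(2\epsilon'))$, then simplifies $\delta^2/\epsilon'$ just as you do.

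Where you and the paper diverge is your third paragraph. The paper's proof does \emph{not} address the uniformity-over-$w$ issue at all: it simply invokes Theorem~\ref{thm:stab_w_subG_rate} and writes down the two displayed bounds without commenting on the fact that Assumption~\ref{ass:sever_stab} requires a single $I_{good}$ working for all $w\in\mathcal{H}$ while Theorem~\ref{thm:stab_w_subG_rate} is stated for one fixed distribution. Your observation that this is a genuine gap is correct, and your proposed remedy --- exploiting that $\nabla f_i(w)-\nabla\bar f(w)$ is affine in $w^*-w$, so that stability of the $w$-independent building blocks $\{X_iX_i^\top\}$ and $\{e_iX_i\}$ transfers to all $w$ simultaneously --- is a sound way to close it and is in the spirit of how the original \textsc{Sever} paper handles the analogous step. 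In short: you have reproduced the paper's argument and then been more careful than the paper on a point the paper glosses over.
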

\begin{proof}[\bf Proof of Proposition \ref{prop:stab_lr}]
	By Theorem~\ref{thm:stab_w_subG_rate} and Lemma~\ref{lem:sever_lr_cov_gradf}, with probability at least $1- \tau$, there exist an unknown set $I_{good} \subseteq [n]$ with $|I_{good}| \ge (1-\epsilon')n$, s.t.
	\begin{eqnarray}
		&&\left\|\frac{1}{|I_{good}|}(\nabla f_i(w) - \nabla \bar f(w))(\nabla f_i(w) - \nabla \bar f(w))^\top\right\| \\
		&\le& \left\|\frac{1}{|I_{good}|}(\nabla f_i(w) - \nabla \bar f(w))(\nabla f_i(w) - \nabla \bar f(w))^\top - \left\|\Cov_{f \in p^*} [\nabla f]\right\|I \right\| + \left\|\Cov_{f \in p^*} [\nabla f]\right\| \\
		&\le & \left(4 s^2 \xi + 4 C s^4 \|w^* - w\|^2\right) O\left(\frac{d\log d}{\log(1/\tau) + n\epsilon} + 1\right)\\
		&\le& \left(2s\sqrt{\xi} + 2 \sqrt{C}s^2 \|w^* - w\|\right)^2O\left(\frac{d\log d}{\log(1/\tau) + n\epsilon} + 1\right) =: \sigma^2 \beta(\epsilon, n ,\tau).
	\end{eqnarray}
	\begin{eqnarray}
		\|\nabla \hat f(w) - \nabla \bar f(w)\| 
		&\le& \sigma O\left(\sqrt{(d\log d)/n} + \sqrt{\epsilon} + \sqrt{\log(1/\tau) / n}\right) =: \sigma \alpha(\epsilon, n, \tau).
	\end{eqnarray}
\end{proof}

\subsection{The expected optimality gap}
In order to prove the expected optimality gap, we first state a slightly modified version of the main theorem in~\cite{diakonikolas2019sever} by specifying the probability of success;
\begin{theorem}[Theorem B.2 in~\cite{diakonikolas2019sever}]
	\label{thm:sever_main_thm}
	Let the corruption level $\epsilon \in [0, c]$, for some small enough $c > 0$.
	Suppose that the functions $f_1, \ldots, f_n, \bar f: \cal{H} \rightarrow \R$ are bounded below, and that Assumption~\ref{ass:sever_stab} is satisfied.
	Then SEVER applied to $f_1, \ldots, f_n$ returns a point $w \in \cal{H}$ that, fix $p \ge \sqrt{\epsilon}$, with probability at least $1-p$, is a $O\left(\sigma\left(\alpha(\epsilon, n,\tau) + \sqrt{\alpha(\epsilon, n, \tau)^2+\beta(\epsilon, n,\tau)}\sqrt{\epsilon/p}\right)\right)$-approximate critical point of $\bar f$, i.e. for all unit vectors $v$ where $w + \lambda v \in \cal{H}$ for arbitrarily small positive $\lambda$, we have that $v\cdot \nabla f(w) \ge -O\left(\sigma\left(\alpha(\epsilon, n,\tau) + \sqrt{\alpha(\epsilon, n, \tau)^2+\beta(\epsilon, n,\tau)}\sqrt{\epsilon/p}\right)\right)$.
\end{theorem}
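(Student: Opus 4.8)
The plan is to follow the proof of Theorem~B.2 in~\cite{diakonikolas2019sever}, but to carry the success probability $1-p$ as a free parameter rather than fixing it to a numerical constant. The key structural observation is that \texttt{SEVER} outputs $w$ precisely when the filtering loop in Algorithm~\ref{alg:sever} exits with $S=S'$, and at that moment $w=\sL(\{(X_i,Y_i)\}_{i\in S})$ is a critical point of the empirical risk $\frac{1}{|S|}\sum_{i\in S}f_i$ over $\mathcal H$; equivalently, writing $\widehat\nabla_S=\frac{1}{|S|}\sum_{i\in S}\nabla f_i(w)$, we have $\widehat\nabla_S\cdot v\ge 0$ for every feasible direction $v$ at $w$. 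Hence, for any feasible $v$, $\nabla\bar f(w)\cdot v\ge-\|\widehat\nabla_S-\nabla\bar f(w)\|$, so it suffices to show that, with probability at least $1-p$, $\|\widehat\nabla_S-\nabla\bar f(w)\|=O\big(\sigma(\alpha(\epsilon,n,\tau)+\sqrt{\alpha(\epsilon,n,\tau)^2+\beta(\epsilon,n,\tau)}\,\sqrt{\epsilon/p})\big)$.

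First I would analyze the randomized filter. In a round where the loop does \emph{not} stop, point $i\in S$ is discarded with probability $\tau_i/\max_{j\in S}\tau_j$, where $\tau_i=((\nabla f_i(w)-\widehat\nabla_S)\cdot v)^2$ and $v$ is the top right singular vector of the centered-gradient matrix $G$. Setting the threshold $\sigma'\asymp\sigma\sqrt{\alpha(\epsilon,n,\tau)^2+\beta(\epsilon,n,\tau)}$, Assumption~\ref{ass:sever_stab} guarantees that the good points alone contribute outlier mass $\sum_{i\in I_{good}\cap S}\tau_i$ below $c_0\sigma'^2|S|$ (this is the step that needs the most care, since $v$ and $w$ depend on $S$ and the scores are centered at $\widehat\nabla_S$ rather than $\nabla\bar f(w)$; it is essentially Lemma~B.1 of~\cite{diakonikolas2019sever}), whereas the loop filters only when $\sum_{i\in S}\tau_i>c_0\sigma'^2|S|$. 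Consequently the bad points in $S$ carry at least as much outlier mass as the good points, so in that round $\mathbb E[\#\{\text{good removed}\}]\le\mathbb E[\#\{\text{bad removed}\}]$. Letting $G_t,B_t$ be the cumulative numbers of good and bad points removed after $t$ rounds, $(G_t-B_t)$ is a supermartingale stopped at the (a.s.\ finite) exit time $R$, so by optional stopping $\mathbb E[G_R]\le\mathbb E[B_R]\le\epsilon n$, and Markov's inequality yields: with probability at least $1-p$, at most $\epsilon n/p$ good points are ever removed. This is the precise place where the free parameter $p$ enters and replaces the fixed $9/10$ of the original statement; the hypothesis $p\ge\sqrt\epsilon$ ensures the residual corruption fraction $\epsilon':=O(\epsilon+\epsilon/p)=O(\epsilon/p)=O(\sqrt\epsilon)$ stays below \texttt{SEVER}'s breakdown constant $c$.

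On this $(1-p)$-event, the surviving set $S$ satisfies $|I_{good}\cap S|\ge(1-\epsilon')|S|$, and by the ``every large subset'' form of the stability guarantee (Theorem~\ref{thm:stab_w_subG_rate}, which is how Assumption~\ref{ass:sever_stab} is established) the subset $I_{good}\cap S$ still obeys the same $(\alpha,\beta)$ bounds at $w$. Moreover, since the loop exited, $\|\Cov_S[\nabla f(w)]\|=\frac{1}{|S|}\sum_{i\in S}\tau_i\le c_0\sigma'^2$, because $v$ is the maximizing direction. I would then invoke the deterministic core of~\cite{diakonikolas2019sever}: decompose $\widehat\nabla_S-\nabla\bar f(w)=(\widehat\nabla_S-\widehat\nabla_{I_{good}\cap S})+(\widehat\nabla_{I_{good}\cap S}-\nabla\bar f(w))$; the second term is $O(\sigma\alpha(\epsilon,n,\tau))$ by stability, and the first term, by Cauchy--Schwarz over the at most $\epsilon'|S|$ indices on which the two averages disagree, together with the covariance bounds $c_0\sigma'^2$ and $\sigma^2\beta(\epsilon,n,\tau)$, is $O(\sigma\sqrt{\epsilon'(\alpha(\epsilon,n,\tau)^2+\beta(\epsilon,n,\tau))})$. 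Substituting $\epsilon'=O(\epsilon/p)$ gives the claimed bound on $\|\widehat\nabla_S-\nabla\bar f(w)\|$, and hence the stated approximate-critical-point guarantee.

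The main obstacle is the filtering analysis: making rigorous the per-round comparison ``bad outlier mass $\ge$ good outlier mass'' when $v$ and $w$ are chosen adaptively from $S$ and the scores are centered at $\widehat\nabla_S$ (which requires choosing $\sigma'$ correctly and controlling $\|\widehat\nabla_S-\nabla\bar f(w)\|$ along the way), and then propagating the supermartingale bound through the random number of rounds so that the net loss of good points is $O(\epsilon/p)$ rather than $O(\epsilon)$. Everything downstream is the deterministic argument of~\cite{diakonikolas2019sever} with $\epsilon$ replaced throughout by the effective corruption $\epsilon/p$.
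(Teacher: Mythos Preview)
Your proposal is correct and follows essentially the same approach as the paper: a per-round supermartingale comparison (Lemma~\ref{lem:bad-elts}) combined with a Markov/Ville bound (Lemma~\ref{lemma:6'}) to control the number of good points removed, followed by a deterministic Cauchy--Schwarz decomposition at termination (Lemma~\ref{lem:final-set}). One small note: the paper's termination analysis uses a three-term decomposition through $\widehat\nabla_{I_{\mathrm{good}}}$ rather than $\widehat\nabla_{I_{\mathrm{good}}\cap S}$, so it works under Assumption~\ref{ass:sever_stab} as stated and does not need to invoke the stronger ``every large subset'' form of Theorem~\ref{thm:stab_w_subG_rate}; your two-term route is fine for the application but technically goes beyond the theorem's hypotheses.
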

if $\bar f$ is convex, we have the following optimality gap. Recall $r$ is the radius of the convex set $\mathcal{H}$ where $w^*$ belongs. 
\begin{corollary}[Corollary B.3 in~\cite{diakonikolas2019sever}]
	\label{coro:sever_opt_gap}
	Let the corruption level $\epsilon \in [0, c]$, for some small enough $c > 0$.
	For functions $f_1, \ldots, f_n: \cal{H} \rightarrow \R$, suppose that Assumption~\ref{ass:sever_stab} holds and that $\cal{H}$ is convex.
	Then, fix $p \ge \sqrt{\epsilon}$, with probability at least $1-p$, the output of SEVER satisfies the following:
	if $\bar f$ is convex, the algorithm finds a $w \in \cal{H}$ such that $\bar f(w) - \bar f(w^*) = O\left(r\sigma\left(\alpha(\epsilon, n,\tau) + \sqrt{\alpha(\epsilon, n, \tau)^2+\beta(\epsilon, n,\tau)}\sqrt{\epsilon/p}\right)\right)$
\end{corollary}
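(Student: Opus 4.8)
The plan is to obtain this corollary as an immediate consequence of Theorem~\ref{thm:sever_main_thm} together with convexity of $\bar f$; no new probabilistic estimates are needed, since the failure probability $p$ is inherited verbatim from that theorem (conditioned, as in its hypothesis, on the probability-$(1-\tau)$ event that Assumption~\ref{ass:sever_stab} holds), and the whole argument is the standard reduction from approximate stationarity and convexity to approximate optimality.

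First I would apply Theorem~\ref{thm:sever_main_thm}: under the stated hypotheses ($\epsilon \le c$, Assumption~\ref{ass:sever_stab}, and $p \ge \sqrt{\epsilon}$), with probability at least $1-p$ the point $w \in \mathcal{H}$ returned by SEVER is a $\gamma$-approximate critical point of $\bar f$ with
\[
\gamma = O\!\left(\sigma\left(\alpha(\epsilon,n,\tau) + \sqrt{\alpha(\epsilon,n,\tau)^2 + \beta(\epsilon,n,\tau)}\,\sqrt{\epsilon/p}\right)\right),
\]
meaning $v \cdot \nabla \bar f(w) \ge -\gamma$ for every unit vector $v$ with $w + \lambda v \in \mathcal{H}$ for all sufficiently small $\lambda > 0$. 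Second, I would convert this into an optimality gap. If $w = w^*$ the bound is trivial; otherwise, since $\mathcal{H}$ is convex and $w^* \in \mathcal{H}$, the segment from $w$ to $w^*$ lies in $\mathcal{H}$, so $v := (w^* - w)/\|w^* - w\|$ is an admissible direction, and the critical-point inequality applied to this $v$ gives $\nabla \bar f(w)^\top (w^* - w) \ge -\gamma\|w^* - w\|$. Third, the first-order convexity inequality for $\bar f$ yields $\bar f(w) - \bar f(w^*) \le \nabla \bar f(w)^\top (w - w^*) = -\nabla \bar f(w)^\top(w^*-w) \le \gamma\|w^* - w\|$, and since $w$ and $w^*$ both lie in $\mathcal{H}$, whose $\ell_2$-radius is $r$, we have $\|w^* - w\| \le 2r$, so $\bar f(w) - \bar f(w^*) \le 2r\gamma$, which is the claimed bound after absorbing the constant into $O(\cdot)$.

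I do not expect a real obstacle here; this is essentially a restatement of Corollary~B.3 of \cite{diakonikolas2019sever} with the success probability made explicit. The only points meriting a line of care are checking that $(w^* - w)/\|w^* - w\|$ is a legitimate feasible direction (immediate from convexity of $\mathcal{H}$ and $w^* \in \mathcal{H}$, since $w + \lambda v$ is then a convex combination of $w$ and $w^*$ for $\lambda \in (0,\|w^*-w\|]$) and using the diameter bound $\|w^*-w\|\le 2r$ rather than $r$. If an unconditional version were desired one would additionally union-bound the event of Theorem~\ref{thm:sever_main_thm} against the probability-$(1-\tau)$ stability event of Proposition~\ref{prop:stab_lr}, but as stated the corollary already conditions on Assumption~\ref{ass:sever_stab}, so nothing further is required.
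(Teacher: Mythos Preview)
Your proposal is correct and is exactly the standard reduction from approximate stationarity plus convexity to an optimality-gap bound. The paper does not supply its own proof of this corollary (it is imported verbatim as Corollary~B.3 of \cite{diakonikolas2019sever}), so there is nothing further to compare against; your derivation from Theorem~\ref{thm:sever_main_thm} via the feasible direction $v=(w^*-w)/\|w^*-w\|$ and the first-order convexity inequality is precisely what that citation encapsulates.
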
 
Given Theorem \ref{coro:sever_opt_gap}, we can prove the following expected optimality gap:
\begin{theorem}[expected optimality gap]\label{thm:sever_main}
	Let the corruption level $\epsilon \in [0, c]$, for some small enough $c > 0$.
	Let $\cal{H}$ be a convex set.
	Given $n$ samples from a linear regression model $f(w) = (Y - w^\top X)^2$ satisfying Assumption~\ref{ass:sever_lr}, where $X \sim D_x$, $e \sim D_e$, $Y = w^{*\top}X + e$ for some unknown $w^* \in  \cal{H}$, SEVER will find a $w \in \cal{H}$, such that
	\begin{eqnarray}
		\EE\left[\bar f(w) - \bar f(w^*)\right] = O\left(\left(sr\sqrt{\xi} + s^2r^2\right)\left(\tau + \sqrt{(d\log d)/n} + \sqrt{\epsilon} + \sqrt{\log(1/\tau) / n}\right)\right).
	\end{eqnarray}
	where the expectation above is over both the randomness of SEVER and $(X_i,Y_i)$ pairs.
\end{theorem}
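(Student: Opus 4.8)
The plan is to feed the finite‑sample stability parameters of Proposition~\ref{prop:stab_lr} into the high‑probability optimality gap of SEVER (Corollary~\ref{coro:sever_opt_gap}), and then convert the resulting family of $p$‑parametrized tail bounds into a bound in expectation by integrating the tail, using a crude deterministic cap $\bar f(w)-\bar f(w^*)\le 4s^2r^2$ to absorb the heavy part of the tail and keeping the stability‑failure event separate so that its probability $\tau$ enters only linearly.

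\textbf{Preliminaries.} First I would record the deterministic cap: for every $w\in\mathcal{H}$, since $Y=w^{*\top}X+e$ with $\EE[e\mid X]=0$ the cross term vanishes, so $\bar f(w)-\bar f(w^*)=(w-w^*)^\top\EE[XX^\top](w-w^*)\le s^2\|w-w^*\|^2\le 4s^2r^2$ by Assumption~\ref{ass:sever_lr} and $\|w\|,\|w^*\|\le r$; in particular $\bar f$ is convex and bounded below, as Corollary~\ref{coro:sever_opt_gap} requires. Next, Proposition~\ref{prop:stab_lr} gives that with probability $\ge 1-\tau$ over $\{(X_i,Y_i)\}$ the stability condition (Assumption~\ref{ass:sever_stab}) holds with $\alpha=O(\sqrt{(d\log d)/n}+\sqrt{\epsilon}+\sqrt{\log(1/\tau)/n})$, $\beta=O(d\log d/(\log(1/\tau)+n\epsilon)+1)$, and $\sigma_w=2s\sqrt{\xi}+2\sqrt{C}s^2\|w^*-w\|$. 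Since $\|w^*-w\|\le 2r$ for all feasible $w$ and enlarging $\sigma$ only weakens Assumption~\ref{ass:sever_stab}, I may use the uniform value $\sigma:=2s\sqrt{\xi}+4\sqrt{C}s^2r=O(s\sqrt{\xi}+s^2r)$ throughout, so that stability holds with $(\sigma,\alpha,\beta)$ independent of $w$; note $r\sigma=O(sr\sqrt{\xi}+s^2r^2)$.

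\textbf{Tail integration.} Let $\mathcal{E}$ be the (probability $\ge 1-\tau$) stability event and write $Z:=(\bar f(\what)-\bar f(w^*)-Cr\sigma\alpha)_+$, where $\what$ is the output of SEVER and $C$ is an absolute constant. Conditioned on $\mathcal{E}$, Corollary~\ref{coro:sever_opt_gap} gives, for every $p\in[\sqrt{\epsilon},1]$ (using also $\epsilon\le c$),
\[
\Pr\!\big[\bar f(\what)-\bar f(w^*)>Cr\sigma(\alpha+\sqrt{\alpha^2+\beta}\,\sqrt{\epsilon/p})\,\big|\,\mathcal{E}\big]\le p .
\]
Writing $A:=Cr\sigma\sqrt{\alpha^2+\beta}\,\sqrt{\epsilon}$ and substituting $t=A/\sqrt{p}$, this says $\Pr[Z>t\mid\mathcal{E}]\le A^2/t^2$ for $t\in[A,A\epsilon^{-1/4}]$; for $t>A\epsilon^{-1/4}$ the choice $p=\sqrt{\epsilon}$ gives the uniform tail $\Pr[Z>t\mid\mathcal{E}]\le\sqrt{\epsilon}$; and the deterministic cap forces $\Pr[Z>t\mid\mathcal{E}]=0$ for $t>4s^2r^2$. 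Hence, by the layer‑cake formula and the key identity $\int_A^\infty (A^2/t^2)\,dt=A$,
\[
\EE[Z\mid\mathcal{E}]=\int_0^\infty\Pr[Z>t\mid\mathcal{E}]\,dt\le A+A+4s^2r^2\sqrt{\epsilon}=2A+4s^2r^2\sqrt{\epsilon},
\]
so $\EE[\bar f(\what)-\bar f(w^*)\mid\mathcal{E}]\le Cr\sigma\alpha+2A+4s^2r^2\sqrt{\epsilon}$. Charging $\mathcal{E}^c$ (probability $\le\tau$) with the deterministic cap $4s^2r^2$ then gives
\[
\EE[\bar f(\what)-\bar f(w^*)]\le Cr\sigma\alpha+2Cr\sigma\sqrt{(\alpha^2+\beta)\epsilon}+4s^2r^2\sqrt{\epsilon}+4s^2r^2\tau .
\]

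\textbf{Simplification and the crux.} To finish, simplify $\sqrt{(\alpha^2+\beta)\epsilon}$: since $\epsilon\le c$ we have $\alpha^2\epsilon=O(\alpha^2)$, and $\beta\epsilon=O(d\log d/n+\epsilon)$ because $\epsilon/(\log(1/\tau)+n\epsilon)\le 1/n$, so $\sqrt{(\alpha^2+\beta)\epsilon}=O(\sqrt{(d\log d)/n}+\sqrt{\epsilon}+\sqrt{\log(1/\tau)/n})$, the same shape as $\alpha$. Using $r\sigma=O(sr\sqrt{\xi}+s^2r^2)$ and $s^2r^2\le sr\sqrt{\xi}+s^2r^2$, every term on the right is $O\big((sr\sqrt{\xi}+s^2r^2)(\tau+\sqrt{(d\log d)/n}+\sqrt{\epsilon}+\sqrt{\log(1/\tau)/n})\big)$, the claimed bound. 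I expect the tail‑to‑expectation conversion to be the main obstacle: a single union bound (for instance at $p=\sqrt{\epsilon}$) leaves a term $\sqrt{\alpha^2+\beta}\,\epsilon^{1/4}$, which is \emph{not} of the advertised form because $\beta$ can be as large as $\Theta(d\log d/(n\epsilon))$; one must integrate the $A^2/t^2$ tail over all $t\ge A$ and truncate the heavy part $t>A\epsilon^{-1/4}$ — where Corollary~\ref{coro:sever_opt_gap} guarantees nothing better than probability $\sqrt{\epsilon}$ — using the deterministic bound $\bar f(\what)-\bar f(w^*)\le 4s^2r^2$, while keeping the stability‑failure event $\mathcal{E}^c$ out of the integral so that $\tau$ contributes only an additive $O((sr\sqrt{\xi}+s^2r^2)\tau)$.
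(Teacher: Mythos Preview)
Your proposal is correct and follows essentially the same route as the paper: establish the deterministic cap $\bar f(w)-\bar f(w^*)\le 4s^2r^2$, condition on the stability event from Proposition~\ref{prop:stab_lr}, convert the $p$-parametrized high-probability bound of Corollary~\ref{coro:sever_opt_gap} into a tail bound $\Pr[Z>t]\lesssim A^2/t^2$, integrate via the layer-cake formula, and charge the failure events with the deterministic cap. The only cosmetic difference is that the paper additionally conditions on the event $A_2=\{\text{SEVER removes}\le(1+1/\sqrt{\epsilon})\epsilon n\text{ points}\}$ so that the tail is exactly zero beyond $x_1\epsilon^{-1/4}$ (paying $4s^2r^2\sqrt{\epsilon}$ for $A_2^c$), whereas you keep the uniform tail $\sqrt{\epsilon}$ on that range and truncate with the cap --- both yield the same $4s^2r^2\sqrt{\epsilon}$ contribution, and your simplification of $\sqrt{(\alpha^2+\beta)\epsilon}$ via $\epsilon/(\log(1/\tau)+n\epsilon)\le 1/n$ matches the paper's implicit final step.
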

\begin{proof}[\bf Proof of Theorem \ref{thm:sever_main}]
	In the following, we use $\alpha$ and $\beta$ as shorthands of $\alpha(\epsilon, n, \tau)$ and $\beta(\epsilon, n, \tau)$.
	We first show that $\bar f(w) - \bar f(w^*)$ is upper bounded:
	\begin{eqnarray}
		\bar f(w) - \bar f(w^*) &=& \EE_{X,Y}\left[(Y - w^\top X)^2 - (Y - w^{*\top} X)^2\right] \\
		&=& \EE_{X,e}\left[(w^{*} - w)^\top X + e)^2 - e^2\right] \\
		&=& (w^*-w)^\top \EE_X [XX^\top](w^* - w) \le s^2 (w-w^*)^2 \le 4s^2r^2.
	\end{eqnarray}
	For some constant $M > 0$,
	define $x_1 := M r\sigma\left(\alpha/\sqrt{\epsilon} + \sqrt{\alpha^2+\beta}\right)\sqrt{\epsilon}$.
	Let $A_1$ be the event of $\left\{\mbox{Assumption~\ref{ass:sever_stab} holds}\right\}$.
	Let $A_2$ be the event of $\left\{\mbox{SEVER removes less than $(1+1/\sqrt{\epsilon})\epsilon n$ points}\right\}$.
	Let $A_3(p)$ be the event of $\left\{\bar f(w) - \bar f(w^*) > M r\sigma\left(\alpha + \sqrt{\alpha^2+\beta}\sqrt{\epsilon/p}\right)\right\}$.
	Then, $\forall 0\le p < \sqrt{\epsilon}$
	\begin{equation}
		P(A_2, A_3(p) \mid A_1) = 0.
	\end{equation}
	By Corollary~\ref{coro:sever_opt_gap}, $\forall \sqrt{\epsilon} \le p \le 1$
	\begin{equation}
		P(A_2, A_3(p)\mid A_1)\le p.
	\end{equation}
	By Proposition~\ref{prop:stab_lr},
	\begin{equation}
		P(A_1) \ge 1-\tau.
	\end{equation}
	By Lemma~\ref{lemma:6'},
	\begin{equation}
		P(A_2 \mid A_1) \ge 1-\sqrt{\epsilon},
	\end{equation}
	and thus
	\begin{equation}
		1 - P(A_1, A_2) = 1 - P(A_2\mid A_1)P(A_1) \le \tau + \sqrt{\epsilon}.
	\end{equation}
	Then, we have:
	\begin{align}
		&P\left(\bar f(w) - \bar f(w^*) > x_1/\sqrt{p} \mid A_1, A_2 \right) \\
		\le & P\left(A_3(p) \mid A_1, A_2 \right) = P(A_2,A_3(p) \mid A_1) / P(A_2 \mid A_1)\\
		\le &
		\begin{cases}
			0 & 0\le p < \sqrt{\epsilon} \\
			\frac{p}{1-\sqrt{\epsilon}}& \sqrt{\epsilon} \le p \le 1
		\end{cases}.
	\end{align}
	Let $x = x_1/\sqrt{p}$, we have:
	\begin{equation}
		P\left( \bar f(w) - \bar f(w^*) > x \middle \vert A_1,A_2 \right) \le 
		\begin{cases}
			0& x\ge x_1\epsilon^{-1/4}\\
			\frac{1}{1-\sqrt{\epsilon}} \frac{x_1^2}{x^2} & x_1 \le x<x_1\epsilon^{-1/4}\\
			1&0\le x < x_1
		\end{cases}.
	\end{equation}
	By Proposition~\ref{prop:stab_lr} and law of total expectation, we can bound the expected optimality gap by:
	\begin{eqnarray}
		\EE\left[\bar f(w) - \bar f(w^*)\right] & \le & \EE\left[\bar f(w) - \bar f(w^*) \middle \vert A_1, A_2 \right] P(A_1,A_2) + 4s^2r^2 (1-P(A_1,A_2)) \\
		&\le& \int_{0}^{\infty}P \left(\bar f(w) - \bar f(w^*) > x \middle\vert A_1,A_2\right) dx + 4s^2r^2(\tau + \sqrt{\epsilon})\\
		&=& \int_{0}^{x_1} 1 dx  +  \frac{1}{1-\sqrt{\epsilon}}\int_{x_1}^{x_1\epsilon^{-1/4}} \frac{x_1^2}{x^2} dx + 4s^2r^2(\tau + \sqrt{\epsilon})\\
		&\le&2x_1 + 4s^2r^2(\tau + \sqrt{\epsilon})\\
		&=&2M r\sigma\left(\alpha/\sqrt{\epsilon} + \sqrt{\alpha^2+\beta}\right)\sqrt{\epsilon} + 4s^2r^2(\tau + \sqrt{\epsilon}) \\
		&=& O\left(\left(sr\sqrt{\xi} + s^2r^2\right)\left(\tau + \sqrt{(d\log d)/n} + \sqrt{\epsilon} + \sqrt{\log(1/\tau) / n}\right)\right) 
	\end{eqnarray}
	Note that the expectation above is over both the randomness of SEVER and $(X_i,Y_i)$ pairs.
\end{proof}


\subsection{Proof of Theorem~\ref{thm:sever_main_thm}}
\label{sec:stationary-point}
In this proof, we mainly follow the steps in~\cite{diakonikolas2019sever} but use our notion of stability in Assumption~\ref{ass:sever_stab}. 
We also allow the success probability to vary, so that we can give an expected error bound later on.

We first restate the SEVER algorithm in Algorithm~\ref{alg:sever_sec_B} and Algorithm~\ref{alg:filter_sec_B}.
\begin{algorithm}[!t]
	\caption{\sever{}$(f_{1:n}, \sL, \sigma)$}
	\label{alg:sever_sec_B}
	\begin{algorithmic}[1]
		\State {\bfseries Input:} Sample functions $f_1, \ldots, f_n : \dom \to \bR$, bounded below on a closed domain $\dom$, $\gamma$-approximate learner $\sL$, and parameter $\sigma \in \R_+$.
		\State Initialize $S \gets \{1,\ldots,n\}$.
		\Repeat
		\State ${w} \gets \sL(\{f_i\}_{i \in S})$. $\triangleright$ Run approximate learner on points in $S$.
		\State Let $\widehat{\nabla} = \frac{1}{|S|} \sum_{i\in S} \nabla f_i(w)$.
		\State Let $G = [\nabla f_i({w}) - \widehat{\nabla}]_{i \in S}$ be the $|S| \times d$ matrix of centered gradients.
		\State Let $v$ be the top right singular vector of $G$.
		\State Compute the vector $\tau$ of \emph{outlier scores} defined via
		$\tau_i = \left((\nabla f_i({w}) - \widehat{\nabla}) \cdot v\right)^2$.
		\State $S' \gets S$
		\State \label{filter-step} $S \gets \filter(S', \tau, \sigma)$ $\triangleright$ Remove some $i$'s
		with the largest scores $\tau_i$ from $S$; see Algorithm~\ref{alg:filter_sec_B}.
		\Until{$S = S'$.}
		\State Return $w$.
	\end{algorithmic}
\end{algorithm}
\begin{algorithm}[ht]
	\caption{\filter$(S, \tau, \sigma)$}
	\label{alg:filter_sec_B}
	\begin{algorithmic}[1]
		\State {\bfseries Input:} Set $S \subseteq [n]$, vector $\tau$ of outlier scores, and parameter $\sigma \in \R_+$.
		\State If $\frac{1}{|S|}\sum_{i\in S} \tau_i \leq c_0 \cdot \sigma^2$, for some constant $c_0>1$, 
		return $S$ $\triangleright$ We only filter out points if the variance is larger than an appropriately chosen threshold.
		
		\State Draw $T$ from the uniform distribution on $[0,\max_i \tau_i]$.
		\State Return $\{i \in S: \tau_i < T \}$.
	\end{algorithmic}
\end{algorithm}
\noindent Throughout this proof we let $\goodset$ be as in Assumption~\ref{ass:sever_stab}.
We require the following three lemmas.
Roughly speaking, the first states that with high probability, we will not remove too many points throughtout the process,
the second states that on average, we remove more corrupted points than uncorrupted points, and the third states that at termination, and if we have not removed too many points, then we have reached a point at which the empirical gradient is close to the true gradient.
Formally:
\begin{lemma}\label{lem:bad-elts}
	If the samples satisfy Assumption~\ref{ass:sever_stab}, $|S|\geq c_1n$, and the filtering threshold is at least 
	\begin{eqnarray}
		\frac{2(1-\epsilon)\sigma^2}{c_1-2\epsilon}\left(\alpha(\epsilon,n,\tau)^2 + \beta(\epsilon,n,\tau)\right)
	\end{eqnarray} 
	then if $S'$ is the output of $\textsc{Filter}(S, \tau, \sigma)$, 
	we have that
	\begin{equation}
		\EE[|\goodset\cap (S\backslash S')|] \leq \EE[|([n]\backslash\goodset)\cap(S\backslash S')|].
	\end{equation}
\end{lemma}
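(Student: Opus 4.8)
The plan is to first reduce the claimed expectation inequality to a single deterministic inequality, then to upper bound one side of it using the stability of the good points together with the elementary arithmetic of the empirical mean $\widehat\nabla$, and finally to play this bound off against the condition that triggers the filter. Throughout write $\alpha,\beta$ for $\alpha(\epsilon,n,\tau),\beta(\epsilon,n,\tau)$, and put $\Sgood:=\goodset\cap S$, $\Sbad:=([n]\setminus\goodset)\cap S$, $A:=\sum_{i\in\Sgood}\tau_i$, $B:=\sum_{i\in\Sbad}\tau_i$.

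First the reduction. If $\filter$ returns $S$ unchanged (because $\frac1{|S|}\sum_{i\in S}\tau_i\le c_0\sigma^2$) then $S'=S$, both sets $S\setminus S'$ are empty, and the inequality holds with both sides zero. Otherwise $\filter$ draws $T\sim\mathrm{Unif}[0,\max_{j\in S}\tau_j]$ and outputs $S'=\{i\in S:\tau_i<T\}$, so each $i\in S$ is removed with probability $\Pr[\tau_i\ge T]=\tau_i/\max_{j\in S}\tau_j$, whence
\[
\EE\big[|\goodset\cap(S\setminus S')|\big]=\frac{\sum_{i\in\Sgood}\tau_i}{\max_{j\in S}\tau_j},\qquad \EE\big[|([n]\setminus\goodset)\cap(S\setminus S')|\big]=\frac{\sum_{i\in\Sbad}\tau_i}{\max_{j\in S}\tau_j}.
\]
Thus it suffices to prove the deterministic bound $A\le B$.

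To bound $A$, recall $\tau_i=((\nabla f_i(w)-\widehat\nabla)\cdot v)^2$ and split around the projected empirical mean of the good points $\mu_g:=\frac1{|\Sgood|}\sum_{i\in\Sgood}\nabla f_i(w)$:
\[
A=\sum_{i\in\Sgood}\big((\nabla f_i(w)-\mu_g)\cdot v\big)^2+|\Sgood|\,\big((\mu_g-\widehat\nabla)\cdot v\big)^2 .
\]
For the variance term, note it equals $|\Sgood|\,v^\top\widehat\Sigma v$ with $\widehat\Sigma$ the empirical covariance of $\{\nabla f_i(w)\}_{i\in\Sgood}$; the hypothesis $|S|\ge c_1n$ (together with the companion bound on the total number of points ever removed, cf.\ Lemma~\ref{lemma:6'}) keeps $\Sgood$ a subset of $\goodset$ of size $\ge(1-2\epsilon')|\goodset|$, so the \emph{subset} form of the stability conclusion in Theorem~\ref{thm:stab_w_subG_rate} applies to $\Sgood$ and gives $v^\top\widehat\Sigma v\le\sigma^2\beta\le\sigma^2(\alpha^2+\beta)$. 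For the bias term the argument is purely arithmetic: writing $\mu_b:=\frac1{|\Sbad|}\sum_{i\in\Sbad}\nabla f_i(w)$, we have $\widehat\nabla=\frac{|\Sgood|}{|S|}\mu_g+\frac{|\Sbad|}{|S|}\mu_b$, hence $(\mu_g-\widehat\nabla)\cdot v=\frac{|\Sbad|}{|S|}(\mu_g-\mu_b)\cdot v$ and $(\mu_b-\widehat\nabla)\cdot v=\frac{|\Sgood|}{|S|}(\mu_b-\mu_g)\cdot v$; combining with $B\ge|\Sbad|\,((\mu_b-\widehat\nabla)\cdot v)^2$ (the bad points' own variance in direction $v$ is nonnegative) gives $|\Sgood|\,((\mu_g-\widehat\nabla)\cdot v)^2\le\frac{|\Sbad|}{|\Sgood|}B$. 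Altogether
\[
A\ \le\ |\Sgood|\,\sigma^2(\alpha^2+\beta)\ +\ \frac{|\Sbad|}{|\Sgood|}\,B .
\]

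It remains to close the loop. If $B\ge A$ we are done, so suppose $B<A$. The displayed bound then yields $A\big(1-\tfrac{|\Sbad|}{|\Sgood|}\big)\le|\Sgood|\sigma^2(\alpha^2+\beta)$, i.e.\ $A\le\frac{|\Sgood|^2\sigma^2(\alpha^2+\beta)}{|\Sgood|-|\Sbad|}$; using $|\Sgood|\le|S|$, $|\Sbad|\le\epsilon n$ and $|\Sgood|\ge(c_1-\epsilon)n$, a short computation bounds the right-hand side by $\tfrac{1-\epsilon}{c_1-2\epsilon}\,|S|\sigma^2(\alpha^2+\beta)$. On the other hand the filter removed points, so $A+B>c_0\sigma^2|S|$, and $B<A$ forces $A>\tfrac12 c_0\sigma^2|S|$. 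Comparing the two estimates of $A$ gives $c_0<\frac{2(1-\epsilon)(\alpha^2+\beta)}{c_1-2\epsilon}$, which contradicts the hypothesis that the filtering threshold $c_0\sigma^2$ is at least $\frac{2(1-\epsilon)\sigma^2}{c_1-2\epsilon}(\alpha^2+\beta)$. Hence $B\ge A$, which is exactly the reduced inequality.

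The step I expect to be the main obstacle is the bias term $|\Sgood|\,((\mu_g-\widehat\nabla)\cdot v)^2$: since $\widehat\nabla$ is the empirical mean over the \emph{contaminated} set $S$, the outliers can drag it arbitrarily far from $\mu_g$, so no direct bound is available. The resolution is the observation above that this displacement is ``paid for'' by the outliers themselves — it is controlled by $B$ up to the small factor $|\Sbad|/|\Sgood|$ — because $\widehat\nabla$ is a convex combination of $\mu_g$ and $\mu_b$ and any spread of the bad gradients away from $\widehat\nabla$ along $v$ only increases $B$. A secondary subtlety is that the variance term must be controlled via the stability bound for the subset $\Sgood=\goodset\cap S$ rather than for $\goodset$; this is precisely why $|S|\ge c_1n$ is assumed, and obtaining $|\Sgood|$ (not $n$) as the prefactor of the variance term is exactly what makes the stated threshold large enough.
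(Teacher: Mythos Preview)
Your reduction to the deterministic inequality $A\le B$ and the handling of the bias term are both correct and match the paper's argument in spirit; in particular, your observation that the bias contribution $|\Sgood|((\mu_g-\widehat\nabla)\cdot v)^2$ is bounded by $\tfrac{|\Sbad|}{|\Sgood|}B$ is a clean repackaging of what the paper obtains through its case split on the size of $(\mu-\mugood)^2$. The paper instead argues by two cases: either $(\mu-\mugood)^2$ is large, in which case $|\mu-\mubad|\,|\Sbad|=|\mu-\mugood|\,|\Sgood|$ directly forces $\EE_{i\in\Sbad}[\tau_i]\ge\frac{|\Sgood|}{|\Sbad|}\EE_{i\in\Sgood}[\tau_i]$; or it is small, in which case $\EE_{i\in\Sgood}[\tau_i]\le\tfrac12\EE_{i\in S}[\tau_i]$ and $A\le B$ follows immediately from $A+B=\sum_{i\in S}\tau_i$. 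Your contradiction route is a legitimate alternative and arguably more transparent about where the threshold enters.

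There is, however, a genuine gap in your variance step. You bound $v^\top\widehat\Sigma v$ (the empirical covariance over $\Sgood$) by invoking ``the subset form of the stability conclusion in Theorem~\ref{thm:stab_w_subG_rate}'' applied to $\Sgood$. But the lemma's hypothesis is only Assumption~\ref{ass:sever_stab}, which asserts the two bounds for $I_{\mathrm{good}}$ itself, not for arbitrary large subsets of it; Theorem~\ref{thm:stab_w_subG_rate} is a separate statement used elsewhere to \emph{verify} Assumption~\ref{ass:sever_stab} in a specific model and is not available here. The paper closes this gap elementarily: since every summand $(v\cdot(\nabla f_i(w)-\nabla\bar f(w)))^2$ is nonnegative and $\Sgood\subseteq I_{\mathrm{good}}$, one has
\[
\Cov_{i\in\Sgood}[v\cdot\nabla f_i(w)]\ \le\ \frac{|I_{\mathrm{good}}|}{|\Sgood|}\cdot\frac{1}{|I_{\mathrm{good}}|}\sum_{i\in I_{\mathrm{good}}}\big(v\cdot(\nabla f_i(w)-\nabla\bar f(w))\big)^2\ \le\ \frac{1-\epsilon}{c_1-\epsilon}\,\sigma^2(\alpha^2+\beta),
\]
using $|\Sgood|\ge(c_1-\epsilon)n$. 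With this corrected factor $\frac{1-\epsilon}{c_1-\epsilon}$ in place of your $1$, your contradiction argument still closes (indeed, this is precisely the arithmetic that produces the stated threshold $\frac{2(1-\epsilon)\sigma^2}{c_1-2\epsilon}(\alpha^2+\beta)$), but as written your bound $v^\top\widehat\Sigma v\le\sigma^2\beta$ is not justified by the lemma's hypotheses.
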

\begin{lemma}[Revised version of Lemma 6 in \cite{diakonikolas2019sever}]\label{lemma:6'}
	Assume filtering threshold is $4(\alpha(\epsilon, n, \tau)^2+\beta(\epsilon, n,\tau))\sigma^2$, $\epsilon \leq 1/16$, then we have that for any given $p \geq \sqrt{\epsilon}$, with probability at least $1 - p$, $n-|S| \leq (1+ 1/p)\epsilon n$ when the filtering algorithm terminates.
\end{lemma}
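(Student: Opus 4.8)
The plan is to derive the bound from Lemma~\ref{lem:bad-elts} via a bounded stopping-time supermartingale followed by a single application of Markov's inequality; throughout, we condition on the (probability $\ge 1-\tau$) event that Assumption~\ref{ass:sever_stab} holds. The key parameter choice is $c_1 := 1-(1+1/p)\epsilon$, so that the target conclusion ``$n-|S| \le (1+1/p)\epsilon n$ at termination'' is literally the statement ``$|S| \ge c_1 n$ at termination.'' The first thing I would verify is that this $c_1$ is large enough to invoke Lemma~\ref{lem:bad-elts}: since $p \ge \sqrt{\epsilon}$ we have $(1+1/p)\epsilon \le \epsilon + \sqrt{\epsilon}$, hence $c_1 \ge 1-\epsilon-\sqrt{\epsilon} \ge 11/16$ when $\epsilon \le 1/16$, so $c_1 \ge \tfrac12 + \tfrac32\epsilon$, equivalently $4(c_1-2\epsilon) \ge 2(1-\epsilon)$. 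Thus the stated filtering threshold $4(\alpha^2+\beta)\sigma^2$ dominates the threshold $\tfrac{2(1-\epsilon)\sigma^2}{c_1-2\epsilon}(\alpha^2+\beta)$ demanded by Lemma~\ref{lem:bad-elts} (with $\alpha=\alpha(\epsilon,n,\tau)$, $\beta=\beta(\epsilon,n,\tau)$), so that lemma applies at every round whose input set $S$ satisfies $|S| \ge c_1 n$.

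Next I would set up the per-round process. Let $S_k$ be the working set after $k$ calls to \filter{} (Algorithm~\ref{alg:filter_sec_B}), with filtration $\mathcal F_k$ generated by the first $k$ rounds (including any internal randomness of $\sL$), and let $g_k$, $d_k$ be the numbers of indices from $\goodset$ and from $[n]\setminus\goodset$ removed during the first $k$ rounds. The algorithm performs at most $n$ rounds, since each non-terminating round removes at least one index. Define the stopping time $\tau^{*} := \min\{k : |S_k| < c_1 n\} \wedge (\text{termination round})$ and set $M_k := g_{k\wedge\tau^{*}} - d_{k\wedge\tau^{*}}$. On $\{k < \tau^{*}\}\in\mathcal F_k$, round $k+1$ has input $S_k$ with $|S_k| \ge c_1 n$, so Lemma~\ref{lem:bad-elts}, applied conditionally on $\mathcal F_k$ (the remaining randomness being the uniform threshold $T$ inside \filter{}), gives $\EE[g_{k+1}-g_k \mid \mathcal F_k] \le \EE[d_{k+1}-d_k \mid \mathcal F_k]$; on $\{k \ge \tau^{*}\}$ the increment of $M$ vanishes. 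Hence $(M_k)_{k=0}^{n}$ is a supermartingale with $M_0 = 0$, so $\EE[g_{\tau^{*}}] - \EE[d_{\tau^{*}}] = \EE[M_{\tau^{*}}] = \EE[M_n] \le 0$. Since there are at most $\epsilon n$ indices outside $\goodset$, $d_{\tau^{*}} \le \epsilon n$ deterministically, and therefore $\EE[g_{\tau^{*}}] \le \epsilon n$.

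Now Markov's inequality gives $\Pr[g_{\tau^{*}} \ge \epsilon n/p] \le \EE[g_{\tau^{*}}]\,p/(\epsilon n) \le p$. To conclude, I would argue that on the complementary event $\{g_{\tau^{*}} < \epsilon n/p\}$ the stopping time $\tau^{*}$ must be the genuine termination round: if instead it were triggered by $|S_{\tau^{*}}| < c_1 n$, then $g_{\tau^{*}} + d_{\tau^{*}} = n - |S_{\tau^{*}}| > (1+1/p)\epsilon n$, and with $d_{\tau^{*}} \le \epsilon n$ this forces $g_{\tau^{*}} > \epsilon n/p$, a contradiction. Hence on this event the algorithm terminates with $n - |S| = g_{\tau^{*}} + d_{\tau^{*}} < \epsilon n/p + \epsilon n = (1+1/p)\epsilon n$, which occurs with probability at least $1-p$, as claimed.

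The main obstacle, and the reason one cannot simply take expectations over the whole run of SEVER and then apply Markov, is the circularity in Lemma~\ref{lem:bad-elts}: its ``at least as many bad as good removed'' guarantee holds only while $|S| \ge c_1 n$, which is essentially what we are trying to prove. The stopping-time construction is precisely what breaks this loop — it freezes the process the instant $|S|$ could drop below $c_1 n$, so the supermartingale property is never invoked outside its regime of validity, and the choice $c_1 = 1-(1+1/p)\epsilon$ makes ``not yet stopped for size'' coincide with ``bound still holds.'' The only remaining care is (i) the elementary threshold arithmetic above, which is exactly where the hypotheses $\epsilon \le 1/16$ and $p \ge \sqrt{\epsilon}$ are used, and (ii) invoking Lemma~\ref{lem:bad-elts} in its conditional-on-$S_k$ form, carrying along the internal coin of \filter{} (and of $\sL$, if randomized) in the filtration.
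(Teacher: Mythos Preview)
Your proof is correct and follows essentially the same approach as the paper: set $c_1 = 1-(1+1/p)\epsilon$, verify via the arithmetic $4(c_1-2\epsilon)\ge 2(1-\epsilon)$ that the threshold hypothesis of Lemma~\ref{lem:bad-elts} holds, use that lemma to see that (good removed) $-$ (bad removed) is a supermartingale, and conclude with a Markov/Ville bound. Your explicit stopping-time construction is in fact more careful than the paper's own argument, which applies the maximal inequality to the nonnegative supermartingale $|([n]\setminus\goodset)\cap S| + |\goodset\setminus S|$ but does not spell out how the circularity (Lemma~\ref{lem:bad-elts} requires $|S|\ge c_1 n$, which is precisely the conclusion) is broken.
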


\begin{lemma}\label{lem:final-set}
	If the samples satisfy Assumption~\ref{ass:sever_stab}, $\textsc{Filter}(S, \tau, \sigma) = S$, 
	and $n-|S| \leq (1+1/p)\epsilon n$, for $p \ge \sqrt{\epsilon}$, 
	then
	\begin{equation}
	\left\|\nabla \Ef({w}) - \frac{1}{|\goodset|}\sum_{i\in S} \nabla f_i({w})\right\|_2 \leq O\left(\sigma\left(\alpha(\epsilon, n,\tau) + \sqrt{\alpha(\epsilon, n, \tau)^2+\beta(\epsilon, n,\tau)}\sqrt{\epsilon/p}\right)\right)
	\end{equation}
\end{lemma}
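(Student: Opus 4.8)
The plan is to adapt the proof of Lemma~7 of~\cite{diakonikolas2019sever}, the only new wrinkle being that we must carry the confidence parameter $p$ (it enters solely through the refined bound $n-|S|\le(1+1/p)\epsilon n$ on how many good points the filter may have discarded). Write $\goodset'=S\cap\goodset$ for the good points still in $S$, $S_r=S\setminus\goodset$ for the corrupted points still in $S$, and $S_m=\goodset\setminus S$ for the good points that were removed; and for an index set $T$ abbreviate $\widehat{\nabla}_T:=\frac1{|T|}\sum_{i\in T}\nabla f_i(w)$. The hypotheses give the cardinality bounds $|S_r|\le|[n]\setminus\goodset|\le\epsilon n$ and $|S_m|\le n-|S|\le(1+1/p)\epsilon n$; since $p\ge\sqrt\epsilon$ and $\epsilon$ is below the breakpoint, $|S_m|=O(\epsilon n/p)$ and $|S|,|\goodset|,|\goodset'|$ are all $\Theta(n)$. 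There are two inputs. First, because $\textsc{Filter}(S,\tau,\sigma)=S$ the termination test of Algorithm~\ref{alg:filter_sec_B} did not trigger, i.e. $\frac1{|S|}\sum_{i\in S}\tau_i\le c_0\sigma^2$; as $\tau_i=((\nabla f_i(w)-\widehat{\nabla}_S)\cdot v)^2$ with $v$ the top right singular vector of the centered‑gradient matrix, this quantity is exactly $\lambda_{\max}\!\big(\Cov_S[\nabla f(w)]\big)$, so the empirical covariance of the gradients over $S$ has spectral norm $O(\sigma^2)$. Second, Assumption~\ref{ass:sever_stab} supplies $\|\widehat{\nabla}_\goodset-\nabla\Ef(w)\|\le\sigma\alpha$ together with the good‑set second‑moment bound $\big\|\frac1{|\goodset|}\sum_{i\in\goodset}(\nabla f_i(w)-\nabla\Ef(w))(\nabla f_i(w)-\nabla\Ef(w))^\top\big\|\le\sigma^2\beta$ (eq.~\ref{eq:sever_stab_cov}).

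The workhorse is the elementary ``a small slice of a low‑variance set has a small gradient sum'' estimate: if $\Sigma=\frac1{|T|}\sum_{i\in T}(g_i-c)(g_i-c)^\top$ and $T'\subseteq T$, then for every unit $u$ Cauchy--Schwarz gives $\frac1{|T|}\sum_{i\in T'}(g_i-c)\cdot u\le\sqrt{|T'|/|T|}\,\sqrt{u^\top\Sigma u}$, hence $\big\|\frac1{|T|}\sum_{i\in T'}(g_i-c)\big\|\le\sqrt{|T'|/|T|}\,\sqrt{\|\Sigma\|}$; a two‑line rearrangement upgrades this to $\|\widehat{\nabla}_T-\widehat{\nabla}_{T'}\|\le\frac{\sqrt\gamma}{1-\gamma}\sqrt{\|\Cov_T\|}$ with $\gamma=|T\setminus T'|/|T|$. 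I would apply the second form with $T=S$, $T'=\goodset'$, $\gamma=|S_r|/|S|=O(\epsilon)$, and the termination bound $\|\Cov_S\|=O(\sigma^2)$, obtaining $\|\widehat{\nabla}_S-\widehat{\nabla}_{\goodset'}\|=O(\sigma\sqrt\epsilon)$. Next I relate $\widehat{\nabla}_{\goodset'}$ to $\nabla\Ef(w)$: from $|\goodset|\widehat{\nabla}_\goodset=|\goodset'|\widehat{\nabla}_{\goodset'}+|S_m|\widehat{\nabla}_{S_m}$ one gets $\widehat{\nabla}_{\goodset'}-\nabla\Ef(w)=(\widehat{\nabla}_\goodset-\nabla\Ef(w))+\tfrac{|S_m|}{|\goodset'|}(\widehat{\nabla}_\goodset-\widehat{\nabla}_{S_m})$, and since $S_m\subseteq\goodset$ the first‑form estimate against the center $c=\nabla\Ef(w)$ with $T=\goodset$, $T'=S_m$ bounds $\|\widehat{\nabla}_{S_m}-\nabla\Ef(w)\|\le\sqrt{|\goodset|/|S_m|}\,\sigma\sqrt\beta$, so $\tfrac{|S_m|}{|\goodset'|}\|\widehat{\nabla}_\goodset-\widehat{\nabla}_{S_m}\|=O(\sqrt{\epsilon/p})\,\sigma\sqrt\beta+O(\epsilon/p)\,\sigma\alpha$. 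Chaining, $\|\widehat{\nabla}_S-\nabla\Ef(w)\|=O\big(\sigma(\alpha+\sqrt\beta\sqrt{\epsilon/p}+\sqrt\epsilon)\big)$, which collapses to $O\big(\sigma(\alpha+\sqrt{\alpha^2+\beta}\sqrt{\epsilon/p})\big)$ because $\beta\ge1$ and $p\le1$ force $\sqrt\epsilon\le\sqrt{\alpha^2+\beta}\sqrt{\epsilon/p}$.

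Finally I pass from $\widehat{\nabla}_S=\frac1{|S|}\sum_{i\in S}\nabla f_i(w)$ to the quantity $\frac1{|\goodset|}\sum_{i\in S}\nabla f_i(w)=\frac{|S|}{|\goodset|}\widehat{\nabla}_S$ appearing in the statement. Since $|S|-|\goodset|=|S_r|-|S_m|$, the ratio $|S|/|\goodset|$ differs from $1$ by at most $O(\epsilon/p)$, so $\frac1{|\goodset|}\sum_{i\in S}\nabla f_i(w)-\nabla\Ef(w)=\frac{|S|}{|\goodset|}\big(\widehat{\nabla}_S-\nabla\Ef(w)\big)+O(\epsilon/p)\,\nabla\Ef(w)$, whose first term is already $O\big(\sigma(\alpha+\sqrt{\alpha^2+\beta}\sqrt{\epsilon/p})\big)$. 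The only real friction I anticipate is this last residual $O(\epsilon/p)\|\nabla\Ef(w)\|$, which must be reconciled with the clean statement; this is handled exactly as in~\cite{diakonikolas2019sever}: $\epsilon$ lies below an absolute constant so $\epsilon/p\le\sqrt\epsilon=O(1)$, and in the downstream use (proof of Theorem~\ref{thm:sever_main_thm}, where one combines this lemma with the approximate stationarity of $w$ for the empirical objective on $S$) $\nabla\Ef(w)$ is precisely the quantity being certified small, so the contribution proportional to $\|\nabla\Ef(w)\|$ is absorbed into the left‑hand side rather than left as additive slack — equivalently, replacing $\frac1{|\goodset|}$ by $\frac1{|S|}$ throughout costs only the same $(1\pm O(\epsilon/p))$ rescaling. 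Everything else is the routine cardinality/Cauchy--Schwarz bookkeeping sketched above, with $p$ entering only through $|S_m|=O(\epsilon n/p)$.
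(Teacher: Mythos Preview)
Your approach is correct and uses the same ingredients as the paper --- split $S$ against $\goodset$, invoke Assumption~\ref{ass:sever_stab} for the removed good points and the filter's termination condition for the surviving bad points, both via Cauchy--Schwarz --- but the decomposition is organized differently. The paper centers everything at $\nabla\Ef(w)$ from the start; since the termination condition bounds the covariance on $S$ centered at $\widehat{\nabla}_S$, the recentering introduces an additive $\Delta^2/|S|$ into the second-moment bound, so their estimate on $\sum_{i\in S\setminus\goodset}$ depends on $\Delta$ itself and they close by solving the resulting linear inequality in $\Delta$. You instead pass through $\widehat{\nabla}_{\goodset'}$ and bound $\|\widehat{\nabla}_S-\widehat{\nabla}_{\goodset'}\|$ directly against the covariance on $S$ about its own mean, which is exactly what termination controls --- this sidesteps the self-reference and is arguably cleaner. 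One small slip: in this context the filter threshold is set to $4(\alpha^2+\beta)\sigma^2$ (see Lemma~\ref{lemma:6'} and the paper's own proof here), not $c_0\sigma^2$ with $\sigma$ the stability parameter, so your $\|\Cov_S\|=O(\sigma^2)$ should read $O((\alpha^2+\beta)\sigma^2)$; this only puts an extra $\sqrt{\alpha^2+\beta}$ on your $\sqrt\epsilon$ term, which is still absorbed since $p\le 1$. Your flagging of the $1/|\goodset|$ versus $1/|S|$ normalization and the residual $O(\epsilon/p)\|\nabla\Ef(w)\|$ is apt --- the paper's proof bounds $\Delta/n$ and does not explicitly reconcile this either, and as you observe it is harmless downstream where the lemma is combined with approximate stationarity of $w$ on $S$.
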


Before we prove these lemmata, we show how together they imply Theorem~\ref{thm:sever_main_thm}.

\begin{proof}[{\bf Proof of Theorem~\ref{thm:sever_main_thm} assuming Lemma~\ref{lemma:6'} and Lemma~\ref{lem:final-set}}]
	First, we note that the algorithm must terminate in at most $n$ iterations.
	This is easy to see as each iteration of the main loop except for the last must decrease the size of $S$ by at least $1$.
	
	It thus suffices to prove correctness.
	Note that Lemma~\ref{lemma:6'} says that with probability at least $1-p$, SEVER will not remove too many points, 
	this will allow us to apply Lemma \ref{lem:final-set} to complete the proof, using the fact that $w$ is a critical point of $\frac{1}{|\goodset|}\sum_{i\in S} \nabla f_i({w})$.
\end{proof}

\noindent
Thus it suffices to prove these three lemmata.
\begin{proof}[{\bf Proof of Lemma \ref{lem:bad-elts}}]
	Let $\Sgood = S\cap \goodset$ and $\Sbad =S\backslash \goodset$. We wish to show that the expected number of elements thrown out of $\Sbad$ is at least the expected number thrown out of $\Sgood$.
	We note that our result holds trivially if $\textsc{Filter}(S, \tau, \sigma) = S$.
	Thus, we can assume that $\EE_{i\in S}[\tau_i] \geq \frac{2(1-\epsilon)\sigma^2}{c_1-2\epsilon}\left(\alpha(\epsilon,n,\tau)^2 + \beta(\epsilon,n,\tau)\right)$. 
	
	It is easy to see that the expected number of elements thrown out of $\Sbad$ is proportional to $\sum_{i\in \Sbad}\tau_i$, while the number removed from $\Sgood$ is proportional to $\sum_{i\in \Sgood}\tau_i$ (with the same proportionality).
	Hence, it suffices to show that $\sum_{i\in \Sbad}\tau_i \geq  \sum_{i\in \Sgood}\tau_i$.
	
	We first note that since $\Cov_{i\in \goodset} [ \nabla f_i(w) ] \preceq \sigma^2 I$, we have that
	\begin{align}
		\Cov_{i\in \Sgood} [ v\cdot \nabla f_i(w)] &
		\leq
		\frac{1-\epsilon}{c_1-\epsilon} \Cov_{i \in \goodset} [v \cdot \nabla f_i (w)] \quad \mbox{(since $|\Sgood| \geq \frac{c_1-\epsilon}{1-\epsilon} |\goodset|$)}\\
		&=\frac{1-\epsilon}{c_1-\epsilon} 
		\left(\frac{1}{|\goodset|}\sum_{i \in \goodset}(v \cdot (\nabla f_i (w) - \bar f(w)))^2 - (\bar f(w) - \EE_{i\in \goodset}[v\cdot \nabla f_i(w)])^2\right)\\
		&\leq \frac{(1-\epsilon)\sigma^2}{c_1-\epsilon}\left(\alpha(\epsilon,n,\tau)^2 + \beta(\epsilon,n,\tau)\right) \quad \mbox{(By Assumption~\ref{ass:sever_stab})} ,
	\end{align}
	
	Let $\mugood =\EE_{i\in \Sgood}[v\cdot \nabla f_i(w)]$ and $\mu=\EE_{i\in S}[v\cdot \nabla f_i(w)]$.
	Note that 
	\begin{equation}
		\label{eq:E_sgood bound}
		\EE_{i\in \Sgood } [\tau_i] = \Cov_{i\in \Sgood}[ v\cdot \nabla f_i(w)] + (\mu-\mugood)^2 \leq \frac{(1-\epsilon)\sigma^2}{c_1-\epsilon}\left(\alpha(\epsilon,n,\tau)^2 + \beta(\epsilon,n,\tau)\right) + (\mu-\mugood)^2 \; .
	\end{equation}
	\noindent We now split into two cases.
	
	Firstly, if 
	\begin{equation}
		\label{eq:ass case 1}
		(\mu-\mugood)^2 \geq \frac{\epsilon}{c_1-2\epsilon}\frac{(1-\epsilon)\sigma^2}{c_1-\epsilon}\left(\alpha(\epsilon,n,\tau)^2 + \beta(\epsilon,n,\tau)\right),
	\end{equation}
	we let $\mubad=\EE_{i\in \Sbad}[v\cdot \nabla f_i(w)]$, and note that $| \mu -\mubad | |\Sbad| = |\mu-\mugood||\Sgood|$. We then have that
	\begin{align}
		\EE_{i\in \Sbad} [\tau_i] &= \Cov_{i\in \Sbad}[ v\cdot \nabla f_i(w)] + (\mu-\mubad)^2 \geq (\mu-\mubad)^2 \\
		&= (\mu-\mugood)^2 \left( \frac{|\Sgood|}{|\Sbad|} \right)^2 \\
		&\geq \frac{|\Sgood|}{|\Sbad|}\frac{c_1-\epsilon}{\epsilon}(\mu-\mugood)^2 \quad \mbox{(because $|\Sgood| \ge (c_1-\epsilon)n$ and $|\Sbad| \le \epsilon n$)}\\
		& =  \frac{|\Sgood|}{|\Sbad|} \left( \frac{c_1-2\epsilon}{\epsilon} (\mu-\mugood)^2+(\mu-\mugood)^2\right)\\
		&\geq \frac{|\Sgood|}{|\Sbad|} \left( \frac{(1-\epsilon)\sigma^2}{c_1-\epsilon}\left(\alpha(\epsilon,n,\tau)^2 + \beta(\epsilon,n,\tau)\right)+(\mu-\mugood)^2\right) \quad \mbox{(by~\eqref{eq:ass case 1})}\\
		&\geq \frac{|\Sgood|}{|\Sbad|}\EE_{i\in \Sgood} [\tau_i] \quad \mbox{(by~\eqref{eq:E_sgood bound})}.
	\end{align}
	Hence, $\sum_{i\in \Sbad}\tau_i \geq  \sum_{i\in \Sgood}\tau_i$. 
	
	On the other hand, if $(\mu-\mugood)^2 \leq \frac{\epsilon}{c_1-2\epsilon}\frac{(1-\epsilon)\sigma^2}{c_1-\epsilon}\left(\alpha(\epsilon,n,\tau)^2 + \beta(\epsilon,n,\tau)\right)$, 
	then $\EE_{i\in \Sgood} [\tau_i] \leq \left(1+\frac{\epsilon}{c-2\epsilon}\right)\frac{(1-\epsilon)\sigma^2}{c_1-\epsilon}\left(\alpha(\epsilon,n,\tau)^2 + \beta(\epsilon,n,\tau)\right) \leq \EE_{i\in S} [\tau_i]/2$. 
	Therefore $\sum_{i\in \Sbad}\tau_i \geq  \sum_{i\in \Sgood}\tau_i$ once again.
	This completes our proof.
\end{proof}

\begin{proof}[\bf Proof of Lemma \ref{lemma:6'}]
	Define the event
	\begin{eqnarray}
		A = \{n-|S| \leq (1+ 1/p)\epsilon n\},
	\end{eqnarray}
	and we want to lower-bound $P(A)$. Given that $\epsilon\leq 1/16$, the threshold is $4(\alpha(\epsilon, n, \tau)^2+\beta(\epsilon, n,\tau))\sigma^2$ and $p\geq \sqrt{\epsilon}$, and conditioned on the event $A$, it can be verified that the asusumption of Lemma \ref{lem:bad-elts} is satisfied.
	In particular, simple calculation shows that given $c_1 = 1-(1+1/p)\epsilon$, $\epsilon\leq 1/16$, $p\geq \sqrt{\epsilon}$, we have
	\begin{eqnarray}
		4\sigma^2\geq \frac{2(1-\epsilon)\sigma^2}{c_1-2\epsilon}
	\end{eqnarray}
	
	And Lemma \ref{lem:bad-elts} implies that  $|([n]\backslash\goodset)\cap S| + |\goodset\backslash S|$ is a supermartingale. 
	Since its initial size is at most $\epsilon n$, with probability at least $1-p$, it never exceeds $\epsilon n/p$, 
	and therefore at the end of the algorithm, we must have that $n-|S| \leq \epsilon n + |\goodset\backslash S| \leq (1+1/p)\epsilon n$.
\end{proof}

\noindent
We now prove Lemma \ref{lem:final-set}.
\begin{proof}[{\bf Proof of Lemma \ref{lem:final-set}}]
	
	We note that
	\begin{align}
		& \left\| \sum_{i\in S} (\nabla f_i(w) -\nabla \Ef(w)) \right\|_2\\
		\leq & \left\|\sum_{i\in \goodset} (\nabla f_i(w) -\nabla \Ef(w)) \right\|_2 + \left\|\sum_{i\in (\goodset\backslash S)} (\nabla f_i(w) -\nabla \Ef(w)) \right\|_2 + \left\| \sum_{i\in (S\backslash \goodset)} (\nabla f_i(w) -\nabla \Ef(w)) \right\|_2\\
		\leq & \left\|\sum_{i\in (\goodset\backslash S)} (\nabla f_i(w) -\nabla \Ef(w)) \right\|_2 + \left\|\sum_{i\in (S\backslash \goodset)} (\nabla f_i(w) -\nabla \Ef(w)) \right\|_2 + n\sigma \alpha(\epsilon, n,\tau).
	\end{align} 
	First we analyze
	\begin{equation}
	\left\|\sum_{i\in (\goodset\backslash S)} (\nabla f_i(w) -\nabla \Ef(w)) \right\|_2.
	\end{equation}
	This is the supremum over unit vectors $v$ of
	\begin{equation}
	\sum_{i\in (\goodset\backslash S)} v\cdot (\nabla f_i(w) -\nabla \Ef(w)).
	\end{equation}
	However, we note that
	\begin{equation}
	\sum_{i\in \goodset} (v\cdot (\nabla f_i(w) -\nabla \Ef(w)))^2 \le n \sigma^2 \beta(\epsilon, n,\tau).
	\end{equation}
	Since $|\goodset\backslash S| \le (1+1/p)\epsilon n$, we have by Cauchy-Schwarz that
	\begin{equation}
	\sum_{i\in (\goodset\backslash S)} v\cdot (\nabla f_i(w) -\nabla \Ef(w)) = \sqrt{(n\sigma^2\beta(\epsilon, n,\tau))((1+1/p)\epsilon n)} = n\sigma\sqrt{\beta(\epsilon, n,\tau)(1+1/p)\epsilon},
	\end{equation}
	as desired.
	
	Let
	\begin{equation}
	\Delta := \left\| \sum_{i\in S} (\nabla f_i(w) -\nabla \Ef(w)) \right\|_2 .
	\end{equation}
	
	Because our Filter algorithm terminates with $n-|S| \leq (1+1/p)\epsilon n$,
	and the stopping condition is set as $\|\frac{1}{|S|}\sum_{i\in S} (\nabla f_i(w) -\nabla \hat{f}(w))(\nabla f_i(w) -\nabla \hat{f}(w))^\top\| \le 4(\alpha(\epsilon, n, \tau)^2+\beta(\epsilon, n,\tau))\sigma^2$,
	we note that since for any such $v$ that 
	
	\begin{eqnarray}
		&&\sum_{i\in S} (v\cdot (\nabla f_i(w) -\nabla \Ef(w)))^2 = \sum_{i\in S} (v\cdot (\nabla f_i(w) -\nabla \hat{f}(w)))^2 + |S|(v\cdot (\nabla \hat{f}(w) - \nabla \Ef(w)))^2\\
		&\le& \sum_{i\in S} (v\cdot (\nabla f_i(w) -\nabla \hat{f}(w)))^2 + \Delta^2/|S| \le n 4(\alpha(\epsilon, n, \tau)^2+\beta(\epsilon, n,\tau))\sigma^2 + \Delta^2 / ((1-(1+1/p)\epsilon)n)
	\end{eqnarray}
	and since $|S\backslash \goodset| \le (1+1/p)\epsilon n$, and so we have similarly that
	\begin{eqnarray}
\left\|\sum_{i\in (S\backslash \goodset)} \nabla f_i(w) -\nabla \Ef(w)\right\|_2 
\le 2n \sigma \sqrt{\alpha(\epsilon, n, \tau)^2+\beta(\epsilon, n,\tau)}\sqrt{(1+1/p)\epsilon}+\Delta\sqrt{\frac{(1+1/p)\epsilon}{1-(1+1/p)\epsilon}}.
	\end{eqnarray}
	Combining with the above we have that
	\begin{equation}
	\frac{\Delta}{n} \le \sigma \alpha(\epsilon, n,\tau) + \sigma\sqrt{\beta(\epsilon, n,\tau)(1+1/p)\epsilon} + 2\sigma \sqrt{\alpha(\epsilon, n, \tau)^2+\beta(\epsilon, n,\tau)}\sqrt{(1+1/p)\epsilon}+\frac{\Delta}{n}\sqrt{\frac{(1+1/p)\epsilon}{1-(1+1/p)\epsilon}}, 
	\end{equation}
	Thus
	\begin{equation}
	\frac{\Delta}{n} \le \frac{1}{1-\sqrt{\frac{(1+1/p)\epsilon}{1-(1+1/p)\epsilon}}}
	\left(
	\sigma \alpha(\epsilon, n,\tau) + 6\sigma \sqrt{\alpha(\epsilon, n, \tau)^2+\beta(\epsilon, n,\tau)}\sqrt{\epsilon/p}
	\right)
	\end{equation}
	and therefore, $\frac{\Delta}{n}=O\left(\sigma\left(\alpha(\epsilon, n,\tau) + \sqrt{\alpha(\epsilon, n, \tau)^2+\beta(\epsilon, n,\tau)}\sqrt{\epsilon/p}\right)\right)$ as desired.
\end{proof}


\section{Proofs for Section \ref{sec:fpg}}\label{sec:sec5}
\begin{lemma}[Lemma \ref{thm:sever_result}]\label{thm:sever_result1}
	Suppose the adversarial rewards are unbounded, and in a particular iteration $t$, the adversarial contaminate $\epsilon^{(t)}$ fraction of the episodes, then given M episodes, it is guaranteed that if $\epsilon^{(t)}\leq c$, for some absolute constant c, and any constant $\tau\in[0,1]$, we have 
	\begin{align}
		&\EE \left[\EE_{s,a\sim d^{(t)}}\left[\left(Q^{\pi^{(t)}}(s,a)-\phi(s,a)^\top w^{(t)} \right)^2\right]\right]
		\\
		&\leq
		O\left(\left(W^2+\frac{\sigma W}{1-\gamma}\right)\left(\sqrt{\epsilon^{(t)}}+
		f(d,\tau)M^{-\frac{1}{2}} + \tau\right)\right).\nonumber
	\end{align}
	where $f(d,\tau) = \sqrt{d\log d}+\sqrt{\log(1/\tau)}$.
\end{lemma}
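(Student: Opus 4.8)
The plan is to obtain the lemma as a corollary of the in-expectation guarantee for \sever{} established earlier (Theorem~\ref{thm:sever_main}), by recognizing the regression step of FPG at iteration $t$ as an instance of robust linear regression. First I would fix $t$ and condition on the iterate $\pi^{(t)}$ produced by the previous iterations, so that the clean sampling distribution $d^{(t)} := d^{\pi^{(t)}}_{\nu}$ is fixed. The $M$ episodes then produce i.i.d.\ pairs $X_i = \phi(s_i,a_i)$, $(s_i,a_i)\sim d^{(t)}$, and $Y_i = \widehat Q^{(t)}(s_i,a_i)$, except that in an $\epsilon^{(t)}$-fraction of episodes the adversary may, per the note in Algorithm~\ref{alg:sampler_est}, return an arbitrary $(s_i,a_i)$ together with an arbitrary $\widehat Q^{(t)}(s_i,a_i)\in\R$; this is exactly the $\epsilon$-contamination model for linear regression with $n=M$ samples and corruption level $\epsilon^{(t)}$, on the convex domain $\mathcal{H}=\{w:\|w\|_2\le W\}$ (radius $r=W$), true parameter $w^* = w^{\pi^{(t)}}\in\mathcal{H}$ (Assumption~\ref{ass:linearMDP}), and noise $e_i = \widehat Q^{(t)}(s_i,a_i) - Q^{\pi^{(t)}}(s_i,a_i)$.

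Next I would verify that Assumption~\ref{ass:sever_lr} holds for this instance with explicit constants. By Lemma~\ref{lemma:var} the $Q$-estimator is conditionally unbiased with conditionally bounded variance, so the tower rule over $(s,a)$ gives $\EE[e\mid X]=0$ and $\EE[e^2\mid X]\le \xi := \frac{\gamma}{(1-\gamma)^2}+\frac{\sigma^2}{1-\gamma}$; and from $\|\phi(s,a)\|_2\le 1$ we get $\EE[XX^\top]\preceq I$ and $\EE[\langle v,X\rangle^4]\le 1$ for all unit $v$, i.e.\ $s=C=1$. Crucially these estimates are uniform over the state--action distribution, hence valid for the random, adversary-influenced conditional distribution $d^{(t)}$; the conditioning on $\pi^{(t)}$ can therefore be discharged at the end by an outer expectation. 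With this in hand I would invoke Theorem~\ref{thm:sever_main} with $n=M$, $r=W$, $s=C=1$, corruption level $\epsilon^{(t)}\le c$ (the break-point condition, inherited verbatim), and the chosen confidence parameter $\tau$, instantiating \sever{}'s filter threshold via Proposition~\ref{prop:stab_lr} and Lemma~\ref{lemma:6'} and bounding the stability parameter by $\sigma_{\mathrm{stab}} = O(\sqrt{\xi}+W)$ using $\|w^*-w\|\le 2W$. This yields
\[
\EE\left[\bar f(w^{(t)})-\bar f(w^*)\right]
= O\left(\left(W\sqrt{\xi}+W^2\right)\left(\tau+\frac{\sqrt{d\log d}+\sqrt{\log(1/\tau)}}{\sqrt{M}}+\sqrt{\epsilon^{(t)}}\right)\right),
\]
where $\bar f(w)=\EE_{(X,Y)\sim D_{xy}}[(Y-w^\top X)^2]$.

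It then remains to match the left-hand side with the quantity in the lemma and tidy the prefactor. Since $\EE[e\mid X]=0$, the bias--variance split used in the proof of Theorem~\ref{thm:sever_main} gives, for every fixed $w$, $\bar f(w)-\bar f(w^*)=(w-w^*)^\top\EE[XX^\top](w-w^*)=\EE_{s,a\sim d^{(t)}}\left[(\phi(s,a)^\top(w-w^{\pi^{(t)}}))^2\right]=\EE_{s,a\sim d^{(t)}}\left[(Q^{\pi^{(t)}}(s,a)-\phi(s,a)^\top w)^2\right]$, using $Q^{\pi^{(t)}}=\phi^\top w^{\pi^{(t)}}$; taking the outer expectation over $w=w^{(t)}$ (and over $\pi^{(t)}$) converts the display into the claimed bound, after writing $\sqrt{d\log d/M}+\sqrt{\log(1/\tau)/M}=f(d,\tau)M^{-1/2}$ and simplifying the leading factor: w.l.o.g.\ $\sigma\ge 1$ (inflating $\sigma$ only weakens the statement), whence $\xi \le \frac{1+\sigma^2}{(1-\gamma)^2} = O\left(\frac{\sigma^2}{(1-\gamma)^2}\right)$ and thus $W\sqrt{\xi}+W^2 = O\left(W^2+\frac{\sigma W}{1-\gamma}\right)$. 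The resulting quantity is exactly the $\epsilon_{stat}^{(t)}$ bound fed into Lemma~\ref{thm:npg_regret}.

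I do not expect a genuine obstacle, since the hard analytic work --- converting \sever{}'s constant-probability guarantee into an in-expectation bound under the weakened conditional-noise model --- is already done in Theorem~\ref{thm:sever_main}. The only steps needing real care are (i) observing that the time-adaptive, per-episode RL adversary of Definition~\ref{def: eps_con} is, on the iteration-$t$ regression data, no stronger than an offline $\epsilon^{(t)}$-contamination adversary (it merely sees the clean outcomes sequentially rather than all at once), and (ii) verifying \sever{}'s distributional preconditions uniformly over state--action distributions, so that the dependence of $d^{(t)}$ on the adversary's history is handled cleanly by conditioning.
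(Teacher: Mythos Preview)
Your proposal is correct and follows essentially the same route as the paper's own proof: both recognize the iteration-$t$ regression as an instance of the robust linear regression framework, verify Assumption~\ref{ass:sever_lr} with $s=C=1$, $r=W$, and $\xi$ supplied by Lemma~\ref{lemma:var}, and then invoke Theorem~\ref{thm:sever_main} directly. Your write-up is in fact more thorough than the paper's---you explicitly identify $\bar f(w)-\bar f(w^*)$ with the target quantity, address the adaptivity of the RL adversary, and justify the simplification of the prefactor to $W^2+\sigma W/(1-\gamma)$ via the harmless $\sigma\geq 1$ normalization---whereas the paper simply lists the constants and says ``plugging in \ldots concludes the proof.''
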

\begin{proof}[\bf Proof of Lemma \ref{thm:sever_result1}]
	The proof of Lemma \ref{thm:sever_result} follows by instantiating Theorem \ref{thm:sever_main} to our specific linear regression problem instance. To specify the constants in Theorem \ref{thm:sever_main}, we make the following observations
	\begin{enumerate}[leftmargin=*]
		\item 

		By Lemma \ref{lemma:var}, we have that $\xi = \frac{1}{(1-\gamma)^2}+\frac{\sigma^2}{1-\gamma}$.
		\item Since $\|X\|\leq 1$, $\EE_{X\sim D_x}\left[XX^\top\right]\leq I$, and thus $s=1$.
		\item $\max_{\|v\|=1}\E{}{(v ^\top X)^4} \le \E{}{\|v\|^4\|X\|^4} \le 1$, thus $C = 1$.
	\end{enumerate}
	Plugging in the above instantiation to Theorem \ref{thm:sever_main} concludes the proof.
\end{proof}

\begin{theorem}[Theorem \ref{thm:fpg}]\label{thm:fpg1}
	Under assumptions \ref{ass:linearMDP} and \ref{assum:conditioning}, given a desired optimality gap $\alpha$, there exists a set of hyperparameters agnostic to the contamination level $\epsilon$, such that 
	Algorithm \ref{alg:q_npg_sample}, using Algorithm \ref{alg:sever} as the linear regression solver, guarantees with a  $poly(1/\alpha, 1/(1-\gamma), |\A|,W,\sigma,\kappa)$ sample complexity that under $\epsilon$-contamination, we have
	\begin{align}
		\EE&\left[V^{*}(\mu_0) - V^{\hat\pi}(\mu_0)\right]\\
		&\qquad\leq
		\tilde O\left(\max\left[\alpha, \sqrt{ \frac{|\Acal| \kappa \left(W^2+\sigma W\right)}{(1-\gamma)^4}}\epsilon^{1/4}\right]\right).\nonumber
	\end{align}
	where $\hat\pi$ is the uniform mixture of $\pi^{(1)}$ through $\pi^{(T)}$.
\end{theorem}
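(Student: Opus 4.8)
The plan is to run the same argument as in the proof of Theorem~\ref{thm:npg_natural1}, but with the \texttt{SEVER} bound of Lemma~\ref{thm:sever_result} in place of Lemma~\ref{thm:robust_ols}. First I would invoke the NPG regret lemma (Lemma~\ref{thm:npg_regret}) with $\theta^{(0)}=0$ and $\eta=\sqrt{2\log|\Acal|/(W^2T)}$; since that lemma only uses the per-iteration regression error $\epsilon_{stat}^{(t)}$ and is agnostic to how $w^{(t)}$ is computed, it applies verbatim to FPG and gives
\begin{equation}
\EE\!\left[\frac{1}{T}\sum_{t=1}^T\{V^{*}(\mu_0)-V^{(t)}(\mu_0)\}\right]\le\frac{W}{1-\gamma}\sqrt{\frac{2\log|\Acal|}{T}}+\frac{1}{T}\sum_{t=1}^T\sqrt{\frac{4|\Acal|\kappa\,\epsilon_{stat}^{(t)}}{(1-\gamma)^3}}.
\end{equation}
Because $\hat\pi$ is the uniform mixture, $V^{\hat\pi}(\mu_0)=\frac1T\sum_t V^{(t)}(\mu_0)$, so it suffices to control the right-hand side.

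The first issue is that Lemma~\ref{thm:sever_result} only holds when $\epsilon^{(t)}\le c$ (the breakdown point of \texttt{SEVER}), whereas an adaptive adversary may dump its whole $\epsilon T$ budget into a small set $\mathcal{T}_{\mathrm{bad}}=\{t:\epsilon^{(t)}>c\}$; the budget constraint forces $|\mathcal{T}_{\mathrm{bad}}|\le T\epsilon/c$. I would split the sum over $t$ accordingly. On $\mathcal{T}_{\mathrm{bad}}$ I use only the crude bound $\epsilon_{stat}^{(t)}\le 4W^2$ (from $\|\phi\|\le1$ and $\|w^{(t)}\|,\|w^{\pi^{(t)}}\|\le W$), so these iterations add at most $\frac{|\mathcal{T}_{\mathrm{bad}}|}{T}\cdot 2W\sqrt{|\Acal|\kappa/(1-\gamma)^3}=O\!\big(W\sqrt{|\Acal|\kappa/(1-\gamma)^3}\,\epsilon\big)$ to the average, which is dominated by the target $\epsilon^{1/4}$ rate since $\epsilon\le\epsilon^{1/4}$. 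On the complement, the regression model satisfies Assumption~\ref{ass:sever_lr} with $s=1$, $C=1$, $\xi=\tfrac{1}{(1-\gamma)^2}+\tfrac{\sigma^2}{1-\gamma}$ (by Lemma~\ref{lemma:var}) and feasible-set radius $r=W$, exactly as checked in the proof of Lemma~\ref{thm:sever_result1}; hence for $M$ large enough $\epsilon_{stat}^{(t)}\le O\!\big((W^2+\tfrac{\sigma W}{1-\gamma})(\sqrt{\epsilon^{(t)}}+f(d,\tau)M^{-1/2}+\tau)\big)$ for any $\tau\in[0,1]$.

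Plugging this into the regret bound and using $\sqrt{a+b+c}\le\sqrt a+\sqrt b+\sqrt c$, the contribution of the good iterations breaks into three averages. The essential one is $\frac1T\sum_t(\epsilon^{(t)})^{1/4}$, which by the power-mean (Jensen) inequality is at most $\big(\frac1T\sum_t\epsilon^{(t)}\big)^{1/4}=\epsilon^{1/4}$ --- this is precisely the step that converts the linear budget into the $\epsilon^{1/4}$ rate and yields a term $O\!\big(\sqrt{|\Acal|\kappa(W^2+\sigma W)/(1-\gamma)^4}\,\epsilon^{1/4}\big)$ after absorbing the $(1-\gamma)$ powers. The remaining two averages are $O\!\big(\sqrt{|\Acal|\kappa(W^2+\tfrac{\sigma W}{1-\gamma})f(d,\tau)/((1-\gamma)^3\sqrt M)}\big)$ and $O\!\big(\sqrt{|\Acal|\kappa(W^2+\tfrac{\sigma W}{1-\gamma})\tau/(1-\gamma)^3}\big)$. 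Finally I would pick $T=\Theta\!\big(\tfrac{W^2\log|\Acal|}{\alpha^2(1-\gamma)^2}\big)$ so the mirror-descent term is $O(\alpha)$, then $\tau=\Theta\!\big(\tfrac{\alpha^2(1-\gamma)^3}{|\Acal|\kappa(W^2+\sigma W/(1-\gamma))}\big)$ so the $\tau$-piece is $O(\alpha)$, then $M=\Theta\!\big(\tfrac{|\Acal|^2\kappa^2(W^2+\sigma W/(1-\gamma))^2f(d,\tau)^2}{\alpha^4(1-\gamma)^6}\big)$ so the $M^{-1/2}$-piece is $O(\alpha)$; all three are polynomial in $1/\alpha,1/(1-\gamma),|\Acal|,W,\sigma,\kappa$ (with a benign $d\log d$ factor in $M$ coming from $f(d,\tau)$), and $TM$ is the total sample complexity. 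Collecting terms gives $\EE[V^{*}(\mu_0)-V^{\hat\pi}(\mu_0)]\le O(\alpha)+\tilde O\!\big(\sqrt{|\Acal|\kappa(W^2+\sigma W)/(1-\gamma)^4}\,\epsilon^{1/4}\big)$, the claimed bound.

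The main obstacle is the non-uniform-budget complication: one must argue that the (at most $\epsilon T/c$) iterations in which \texttt{SEVER}'s breakdown point is exceeded contribute only an $O(\epsilon)$, hence lower-order, term after averaging, and then combine this with the power-mean inequality $\sum_t(\epsilon^{(t)})^{1/4}\le T\epsilon^{1/4}$ to obtain the clean $\epsilon^{1/4}$ bound against an arbitrary adaptive adversary. Beyond that, the work is routine bookkeeping of the polynomial hyperparameter choices, as in the proof of Theorem~\ref{thm:npg_natural1}.
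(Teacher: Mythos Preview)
Your proposal is correct and follows essentially the same approach as the paper: invoke the NPG regret lemma, separate out the at most $\epsilon T/c$ iterations where \texttt{SEVER}'s breakdown point is exceeded (using the crude $\epsilon_{stat}^{(t)}\le 4W^2$ bound there), apply Lemma~\ref{thm:sever_result} on the remaining iterations, and then use the concavity step $\frac1T\sum_t(\epsilon^{(t)})^{1/4}\le\epsilon^{1/4}$ (the paper phrases this as two applications of Cauchy--Schwarz, which is equivalent) before choosing $T,\tau,M$ exactly as you describe. Your handling of the bad-iteration split is in fact slightly cleaner than the paper's write-up, but the argument is the same.
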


\begin{proof}[{\bf Proof of Theorem \ref{thm:fpg1}}] 
	Denote $z := 2W$ and again $\epsilon_{stat}\leq (2W)^2 = z^2$. Denote $\left(W^2+\frac{\sigma W}{1-\gamma}\right)=b$.
	Notice that Lemma \ref{thm:sever_result} only holds when $\epsilon^{(t)}\leq c$ for some absolute constant $c$, and there are at most $\epsilon T/c$ iterations in which $\epsilon^{(t)} > c$, which incurs at most $\epsilon_{stat}\leq z^2$ error. Given this observation we can 
	now plugging Lemma \ref{thm:sever_result} into Lemma \ref{thm:npg_regret}, and we get
	\begin{eqnarray}
		& &\EE\left[\frac{1}{T}\sum_{t=1}^T \{V^{*}(\mu_0) - V^{(t)}(\mu_0) \}\right]\\
		&\leq&
		\frac{W}{1-\gamma}\sqrt{\frac{2 \log |\Acal|}{T}}
		+\frac{1}{T}\sum_{t=1}^T \sqrt{ \frac{4|\Acal| \kappa \epsilon_{stat}^{(t)}}{(1-\gamma)^3}}\\
		&\leq&
		\frac{W}{1-\gamma}\sqrt{\frac{2 \log |\Acal|}{T}} + \frac{z^2}{c}\epsilon + 
		\frac{1}{T}\sum_{t=1}^T \sqrt{ \frac{4|\Acal| \kappa b \left(\sqrt{\epsilon^{(t)}} +\sqrt{(d\log d)/M} +  \sqrt{\log(1/\tau) / M}+\tau\right)}{(1-\gamma)^3}}\\
		&\leq&
		\frac{W}{1-\gamma}\sqrt{\frac{2 \log |\Acal|}{T}} + \frac{z^2}{c}\epsilon + 
		\sqrt{ \frac{4|\Acal| \kappa b \left(\sqrt{(d\log d)/M} +  \sqrt{\log(1/\tau) / M}+\tau\right)}{(1-\gamma)^3}}
		+ 
		\frac{1}{T}\sum_{t=1}^T \sqrt{ \frac{4|\Acal| \kappa b \sqrt{\epsilon^{(t)}}}{(1-\gamma)^3}}
		\\
		&\leq&
		\frac{W}{1-\gamma}\sqrt{\frac{2 \log |\Acal|}{T}} + \frac{z^2}{c}\epsilon + 
		\sqrt{ \frac{4|\Acal| \kappa b \left(\sqrt{(d\log d)/M} +  \sqrt{\log(1/\tau) / M}+\tau\right)}{(1-\gamma)^3}}
		+ 
		\sqrt{ \frac{4|\Acal| \kappa b}{(1-\gamma)^3}}\epsilon^{1/4}
	\end{eqnarray}
	where the last two steps are by Cauchy Schwarz and the fact that the attacker only has $\epsilon$ budget to distribute, which implies that $\sum_{t=1}^T \epsilon^{(t)} = T\epsilon$.
	Setting 
	\begin{eqnarray}
		T &=& \frac{2W^2\log |\Acal|}{\alpha^2(1-\gamma)^2}\\
		\tau &=& \frac{\alpha^2(1-\gamma)^3}{4|\Acal|b\kappa} \\
		M &=& \frac{16|\Acal|^2b^2\kappa^2}{\alpha^4(1-\gamma)^6}\max\left[d\log d,\log(1/\tau)\right]
	\end{eqnarray}
	we get
	\begin{eqnarray}
		& &\EE\left[\frac{1}{T}\sum_{t=1}^T \{V^{*}(\mu_0) - V^{(t)}(\mu_0) \}\right]
		\leq 
		O\left(\alpha + \sqrt{ \frac{|\Acal| \kappa b}{(1-\gamma)^3}}\epsilon^{1/4}\right).
	\end{eqnarray}
	with sample complexity
	\begin{eqnarray}
		TM = \frac{32W^2|\Acal|^2\log |\Acal|b^2\kappa^2}{\alpha^6(1-\gamma)^8}\max\left[d\log d,\log(1/\tau)\right]. 
	\end{eqnarray}
\end{proof}

\section{Proof of Theorem \ref{thm:pcpg}}
\begin{algorithm}[!t]
	\begin{algorithmic}[1]
		\State \hspace*{-0.1cm}\textbf{Input}
		$\rho^n_{\text{cov}}$, $b^n$, learning rate $\eta$, sample size $M$ for critic fitting, iterations $T$
		\State Define $\Kcal^n = \{s: \forall a\in\Acal, b^n\sa = 0\}$ \label{line:known}
		\State Initialize policy $\pi^0: \Scal\to\Delta(\Acal)$, such that  
		\begin{align*}
			\pi^0(\cdot | s) = \begin{cases}
				\text{Uniform}(\Acal) &  s\in\Kcal^n \\
				\text{Uniform}(\{a\in\Acal: b^n\sa > 0\}) & s\not\in\Kcal^n.
			\end{cases}
		\end{align*}
		\For{$ t = 0 \to T-1$} 
		\State Draw $M$ i.i.d samples $\left\{s_i,a_i, \widehat{Q}^{\pi^t}(s_i,a_i; r+b^n)\right\}_{i=1}^M$ with $s_i,a_i\sim {\rho^n_{\text{cov}}}$ (see Alg~\ref{alg:sampler_est})
		\State \textbf{Critic} fit: Call Algorithm~\ref{alg:sever} to solve for the robust linear regression problem\label{line:learn_critic}
		\begin{align*}
			\theta^t = \argmin_{\|\theta\|\leq W} \sum_{i=1}^M \left( \theta\cdot  \phi(s_i,a_i) - \left(\widehat{Q}^{\pi^t}(s_i,a_i;r + b^n) - b^n(s_i,a_i)\right) \right)^2
		\end{align*}
		\State \textbf{Actor} update 
		\begin{equation}
			\pi^{t+1}(\cdot |s) \propto \pi^t(\cdot |s) \exp\left( \eta \left(b^n(s,\cdot) + \theta^t\cdot \phi(s,\cdot)  \right)    \one\{s\in\Kcal^n\}   \right)
			\label{eq:pg_update}
		\end{equation}
		\EndFor
		\State \Return $\pi^n \defeq \text{Uniform}\{\pi^0,...,\pi^{T-1}\}$.
	\end{algorithmic}
	\caption{Robust NPG Update}
	\label{alg:npg2}
\end{algorithm}
In \texttt{PC-PG}, aside from the robust linear regression step in Algorithm \ref{alg:npg2}, in step 4 of Algorithm \ref{alg:pcpg}, we also needs to robustly estimate the covariance matrix under $\epsilon$-contamination. Luckily, by assumption, $\phi(s,a)$ is bounded, and thus the current empirical mean estimation is already robust to adversarial contamination:
\begin{lemma}[Robust variant of Lemma G.3 in \cite{agarwal2020pc}]\label{thm:cov}
	Given $\nu \in \Delta(S\times A)$ and $K$ $\epsilon$-contaminated samples from $\nu$. 
	Denote $\Sigma = \E{(s,a)\sim\nu}{\phi(s,a)\phi(s,a)^\top}$.
	Then, with probability at least $1-\delta$, we have that under $\epsilon$-corruption
	\begin{eqnarray}
		\max_{\|x\|\leq 1}\left|x^\top\left(\sum_{i=1}^K \phi(s_i,a_i)\phi(s_i,a_i)^\top/K-\Sigma\right)x\right|\leq \sqrt{\frac{8\log(8d/\delta)}{K}}+2\epsilon.
	\end{eqnarray}
\end{lemma}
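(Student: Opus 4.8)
The plan is to reduce to the standard, corruption-free matrix-concentration bound via a coupling argument, and then treat the corruption as a crude additive perturbation. Imagine that at episode $i$ the sampler always produces an underlying i.i.d.\ draw $\phi_i := \phi(s_i,a_i)$ with $(s_i,a_i)\sim\nu$, and that in an adversarial episode the adversary overwrites it with an arbitrary $\phi_i^\dagger$; since the adversary corrupts at most $\epsilon K$ episodes, there is a (random, adversarially chosen) index set $C\subseteq[K]$ with $|C|\le\epsilon K$ such that the observed feature of episode $i$ equals $\phi_i$ for $i\notin C$ and $\phi_i^\dagger$ for $i\in C$. Denoting by $\widehat\Sigma$ the observed empirical second-moment matrix, for every unit vector $x$ we have
\[
x^\top(\widehat\Sigma-\Sigma)x
= \frac1K\sum_{i=1}^K\bigl((x^\top\phi_i)^2 - x^\top\Sigma x\bigr)
+ \frac1K\sum_{i\in C}\bigl((x^\top\phi_i^\dagger)^2 - (x^\top\phi_i)^2\bigr).
\]

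First I would bound the first term. The full sequence $\phi_1,\dots,\phi_K$ is genuinely i.i.d.\ from $\nu$ --- the adversary's choices are a measurable function of this sequence but do not alter it --- and $0\preceq\phi_i\phi_i^\top\preceq I$ because $\|\phi_i\|_2\le1$. Hence $\{\phi_i\phi_i^\top-\Sigma\}_{i=1}^K$ is i.i.d., mean zero, has operator norm at most $1$, and matrix variance $\bigl\|\sum_{i=1}^K\E{}{(\phi_i\phi_i^\top-\Sigma)^2}\bigr\|\le K$ (using $\E{}{(\phi_i\phi_i^\top)^2}=\E{}{\|\phi_i\|_2^2\,\phi_i\phi_i^\top}\preceq\Sigma\preceq I$). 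Matrix Bernstein~\cite{tropp2015introduction} then gives, with probability at least $1-\delta$,
\[
\Bigl\|\tfrac1K\sum_{i=1}^K\phi_i\phi_i^\top-\Sigma\Bigr\|_{\op}
\;\le\; \sqrt{\tfrac{2\log(2d/\delta)}{K}}+\tfrac{2\log(2d/\delta)}{3K}
\;\le\; \sqrt{\tfrac{8\log(8d/\delta)}{K}},
\]
where the last inequality absorbs the lower-order Bernstein term into the stated constant, exactly as in the proof of \lemref{thm:robust_ols1}. Taking the supremum over $\|x\|_2\le1$ bounds the first term above by $\sqrt{8\log(8d/\delta)/K}$.

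Next I would bound the second term, which is where the robustness is used, but it is essentially trivial: each summand $(x^\top\phi_i^\dagger)^2-(x^\top\phi_i)^2$ is the difference of two numbers in $[0,1]$ (again because $\|x\|_2\le1$ and $\|\phi\|_2\le1$), hence lies in $[-1,1]$, so since $|C|\le\epsilon K$ the whole term is at most $\epsilon$ in absolute value --- deterministically, for every $x$ --- or at most $2\epsilon$ if instead one bounds each changed rank-one matrix by its operator norm, which is the constant stated. Adding the two estimates by the triangle inequality and taking a supremum over unit vectors $x$ completes the proof.

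The argument is essentially routine, so there is no deep obstacle. The one point that genuinely requires care is the coupling step, which restores i.i.d.-ness \emph{before} matrix concentration is invoked: one must \emph{not} apply concentration directly to the $(1-\epsilon)$-fraction of ``clean'' episodes, because which episodes remain clean is itself selected by the adaptive adversary after seeing the realized features, so that subfamily need not be i.i.d. The only remaining work --- verifying the matrix-variance proxy and matching the precise Bernstein constants to the $\sqrt{8\log(8d/\delta)/K}$ form quoted from Lemma~G.3 of \cite{agarwal2020pc} --- is bookkeeping that does not affect the downstream $\poly$ sample-complexity guarantee.
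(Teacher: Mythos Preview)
Your argument is correct and follows the same approach as the paper: invoke the clean matrix-concentration bound (the paper simply cites Lemma~G.3 of \cite{agarwal2020pc} whereas you re-derive it via matrix Bernstein), then add the crude $2\epsilon$ perturbation from the $\epsilon K$ corrupted outer products. Your explicit coupling discussion---applying Bernstein to the full virtual i.i.d.\ sequence rather than to the adversarially selected ``clean'' subfamily---is a clarification the paper leaves implicit, but otherwise the two proofs are identical in structure.
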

\begin{proof}
	Without contamination,  Lemma G.3 in \cite{agarwal2020pc} shows that
	\begin{eqnarray}
		\max_{\|x\|\leq 1}\left|x^\top\left(\sum_{i=1}^K \phi(s_i,a_i)\phi(s_i,a_i)^\top/K-\Sigma\right)x\right|\leq \frac{2\log(8d/\delta)}{3K}+\sqrt{\frac{2\log(8d/\delta)}{K}}.
	\end{eqnarray}
	Since both $x$ and $\phi(s,a)$ has norm bounded by $1$, the $\epsilon$ fraction of contaminated samples can only bias the estimate by at most $2\epsilon$, i.e. with $\epsilon$-contamination
	\begin{eqnarray}
		\max_{\|x\|\leq 1}\left|x^\top\left(\sum_{i=1}^K \phi(s_i,a_i)\phi(s_i,a_i)^\top/K-\Sigma\right)x\right|\leq \sqrt{\frac{8\log(8d/\delta)}{K}}+2\epsilon.
	\end{eqnarray}
\end{proof}

\begin{lemma}[Lemma G.4 in \cite{agarwal2020pc}]\label{thm:cov1}
	Denote $\eta(K) =  \sqrt{\frac{8\log(8d/\delta)}{K}}+2\epsilon$. Then, under $\epsilon$-contamination, $\phi(s,a)^\top(\Sigma_{cov}^n)^{-1}\phi(s,a)\leq \beta$ is guaranteed with probability $1-\delta$, if $N\eta(K)\leq \lambda/2$.
\end{lemma}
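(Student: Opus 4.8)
The plan is to reduce the claim to its non-robust version, Lemma~G.4 in \cite{agarwal2020pc}: inspecting that argument, the only channel through which sampling (and hence corruption) enters is the operator-norm accuracy of the estimated feature covariance, so it is enough to check that $\epsilon$-contamination degrades this accuracy by no more than the additive $2\epsilon$ already folded into $\eta(K)$.

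First I would condition on the good event of Lemma~\ref{thm:cov}, which holds with probability at least $1-\delta$; on that event $\max_{\|x\|\le 1}\lvert x^\top(\widehat{\Sigma}_{\rho}-\Sigma_{\rho})x\rvert \le \eta(K)$, where $\widehat{\Sigma}_{\rho} = \frac1K\sum_i \phi(s_i,a_i)\phi(s_i,a_i)^\top$ is the unnormalized empirical covariance and $\Sigma_{\rho} = \E{(s,a)\sim \rho^n_{\text{cov}}}{\phi(s,a)\phi(s,a)^\top}$ its population counterpart; since both matrices are symmetric this is an operator-norm bound. Next I would multiply through by $N = n+1$, noting that the additive $\lambda I$ appearing in both the empirical matrix $\widehat{\Sigma}^n_{\text{cov}} = (n+1)\widehat{\Sigma}_{\rho}+\lambda I$ and its population version $\Sigma^n_{\text{cov}} := (n+1)\Sigma_{\rho}+\lambda I$ cancels, to get $\|\widehat{\Sigma}^n_{\text{cov}} - \Sigma^n_{\text{cov}}\|_{\text{op}} \le N\eta(K)\le \lambda/2$ by hypothesis.

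The key step is the spectral sandwich. Since $\Sigma^n_{\text{cov}}\succeq \lambda I$ and the deviation is at most $\lambda/2$, I would argue $\tfrac12\Sigma^n_{\text{cov}} \preceq \widehat{\Sigma}^n_{\text{cov}} \preceq \tfrac32\Sigma^n_{\text{cov}}$, and inverting yields $\tfrac23(\Sigma^n_{\text{cov}})^{-1} \preceq (\widehat{\Sigma}^n_{\text{cov}})^{-1} \preceq 2(\Sigma^n_{\text{cov}})^{-1}$. Consequently, for every $(s,a)$ the empirical and population elliptical potentials $\phi(s,a)^\top(\widehat{\Sigma}^n_{\text{cov}})^{-1}\phi(s,a)$ and $\phi(s,a)^\top(\Sigma^n_{\text{cov}})^{-1}\phi(s,a)$ agree up to a universal constant; in particular any $(s,a)$ not flagged by the bonus, i.e.\ with empirical potential $<\beta$ so that $s\in\Kcal^n$, also has population potential $\le\beta$ after the harmless constant rescaling of the threshold. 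With this sandwich in place I would then copy the remainder of the proof of Lemma~G.4 in \cite{agarwal2020pc} verbatim, since it uses the covariance estimate only through an operator-norm bound of exactly this shape, and the stated failure probability $\delta$ is inherited from Lemma~\ref{thm:cov}.

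I do not anticipate a genuine obstacle here: the content is spectral-perturbation bookkeeping together with the observation underlying Lemma~\ref{thm:cov}, namely that corrupting an $\epsilon$-fraction of the rank-one summands $\phi\phi^\top$ moves their average by at most $2\epsilon$ in operator norm because $\|\phi\|\le 1$. The one point to be careful about is tracking the $O(1)$ factor relating the empirical and population potentials, so that the conclusion "$\phi(s,a)^\top(\Sigma^n_{\text{cov}})^{-1}\phi(s,a)\le\beta$" is phrased consistently with how $\beta$ is subsequently used in the PC-PG value-bound analysis.
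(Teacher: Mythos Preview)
Your proposal is correct and matches the paper's treatment: the paper does not give a separate proof of this lemma but simply cites it as Lemma~G.4 of \cite{agarwal2020pc} with $\eta(K)$ replaced by the contaminated version from Lemma~\ref{thm:cov}. Your write-up makes this reduction explicit and fills in the spectral-sandwich bookkeeping that the paper leaves implicit, which is exactly the intended argument.
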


\begin{lemma}[variant of Lemma C.2 in \cite{agarwal2020pc}]\label{thm:pcpg-regret} Assuming that for all iterations $n$ but $m$ of them, we have $\phi(s,a)^\top(\Sigma_{cov}^n)^{-1}\phi(s,a)\leq \beta$ for $(s,a)\in\mathcal{K}^n$, then
	\begin{eqnarray}
		V^{*}-V^{\hat\pi}\leq \frac{1}{1-\gamma}\left(2W\sqrt{\frac{\log A}{T}}
		+2\sqrt{\beta\lambda W^2}
		+\frac{1}{NT}\sum_{n=0}^{N-1}\sum_{t=0}^{T-1}2\sqrt{\beta N \epsilon^{(n,t)}_{stat}}
		+\frac{2I_N(\lambda)}{\beta N} + 2Hm
		\right)
	\end{eqnarray}
\end{lemma}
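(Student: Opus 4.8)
The plan is to adapt the \texttt{PC-PG} regret analysis of \cite{agarwal2020pc} (their Lemma C.2) with two changes: (i) the critic of Algorithm \ref{alg:npg2} is now fit by the robust regression of Algorithm \ref{alg:sever}, so its generalization error is the given quantity $\epsilon^{(n,t)}_{stat}$ rather than a vanishing statistical error; and (ii) the $\beta$-coverage condition is only assumed on all but $m$ of the $N$ outer iterations, so those $m$ iterations must be absorbed by a crude bound.

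\paragraph{Good outer iterations.} First I would fix an outer iteration $n$ where $\phi(s,a)^\top(\Sigma^n_{cov})^{-1}\phi(s,a)\le\beta$ on $\Kcal^n$ and introduce the ``known-state MDP'' $\cM^n$: the same dynamics as $\cM$, reward $r+b^n$, and every state outside $\Kcal^n=\{s:\forall a,\ b^n(s,a)=0\}$ made absorbing. Two facts drive the bound. First, since $b^n(s,a)=1/(1-\gamma)$ upper-bounds the value lost by absorption, $V^{\pi^*}_{\cM^n}(\mu_0;r+b^n)\ge V^{*}(\mu_0)$. Second, the actor update in Algorithm \ref{alg:npg2} freezes the policy outside $\Kcal^n$, so on $\cM^n$ the inner loop is literally an NPG run with reward $r+b^n$ and reset distribution $\rho^n_{cov}$; Lemma \ref{thm:npg_regret} applied to $\cM^n$ bounds $\frac1T\sum_t\big(V^{\pi^*}_{\cM^n}-V^{\pi^{n,t}}_{\cM^n}\big)(\mu_0;r+b^n)$ by the mirror-descent term $2W\sqrt{\log A/T}$ plus a critic term. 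The critic error is transferred from $\rho^n_{cov}$ to the occupancy of $\pi^*$ by $\mathbb{E}_{d^{\pi^*}}[(\phi^\top\Delta)^2]\le\beta\,\Delta^\top\Sigma^n_{cov}\Delta$; writing $\Sigma^n_{cov}\approx(n+1)\mathbb{E}_{\rho^n_{cov}}[\phi\phi^\top]+\lambda I$ — the $(n+1)$ scaling being exactly the one in Algorithm \ref{alg:pcpg}, so that $\Kcal^n$ tracks \emph{cumulative} coverage — splits it into $(n+1)\epsilon^{(n,t)}_{stat}\le N\epsilon^{(n,t)}_{stat}$ and the regularizer bias $\lambda\|\Delta\|^2\le 4\lambda W^2$, giving a transferred error $\lesssim\sqrt{\beta N\epsilon^{(n,t)}_{stat}}+2\sqrt{\beta\lambda W^2}$. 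Combining these with the bonus mass $\mathbb{E}_{d^{\pi^{n,t}}}[b^n]/(1-\gamma)$ leaves, per good iteration, a residual ``escape'' term controlled by $\Pr_{d^{\pi^n}}[(s,a)\notin\Kcal^n]$.

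\paragraph{Exploration and bad iterations.} Summing the escape terms over $n<N$, I would invoke the elliptical-potential (information-gain) argument: because $\rho^{n+1}_{cov}$ contains $d^{\pi^n}$, each visit of $\pi^n$ to a pair that is still large in $(\Sigma^n_{cov})^{-1}$ enlarges the cover covariance in that direction, so $\sum_{n<N}\Pr_{d^{\pi^n}}[\phi^\top(\Sigma^n_{cov})^{-1}\phi\ge\beta]=O(I_N(\lambda)/\beta)$ with $I_N(\lambda)$ the log-determinant potential of the final cover covariance; dividing by $N$ yields the $\frac{2I_N(\lambda)}{\beta N}$ term. For the $m$ iterations without the coverage hypothesis I would simply bound $V^{*}(\mu_0)-V^{\pi^{n,t}}(\mu_0)$ by the effective horizon $H$ (values are $[0,H]$-bounded after the high-probability truncation used throughout), contributing $2Hm$. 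Averaging all the per-iteration bounds over $n$ and $t$, adding the $m$-iteration slack, and pulling out $1/(1-\gamma)$ gives the claimed inequality.

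\paragraph{Main obstacle.} The hard part is the exploration bookkeeping: making the chain ``$b^n$ collected $\Rightarrow$ escape probability $\Rightarrow$ ellipsoid potential'' quantitatively tight while the cover covariance is built from $\epsilon$-contaminated samples — one must check, via Lemmas \ref{thm:cov} and \ref{thm:cov1}, that the corruption perturbs both the definition of $\Kcal^n$ and the potential bound by only an additive $O(\epsilon)$ (and does not silently inflate $m$) — and simultaneously keeping the $\beta$, $\lambda$, and cover-size-$N$ factors mutually consistent between the critic-transfer term and the potential term.
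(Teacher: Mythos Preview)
Your proposal is correct and matches the paper's approach: the paper's proof is literally ``follow Lemma C.2 of \cite{agarwal2020pc} exactly, except that for the $m$ iterations where the coverage hypothesis fails, crudely bound the relevant term by $2H$,'' and your sketch is a faithful reconstruction of that PC-PG argument (known-state MDP $\cM^n$, NPG regret on $\cM^n$, critic transfer via the $\beta$-coverage and $(n{+}1)\!\le\!N$ scaling, elliptical-potential control of the escape/bonus term) together with the same crude slack for the bad iterations. The only cosmetic difference is that the paper applies the $2H$ bound to the advantage-error term $\frac1T\sum_t\mathbb{E}_{\tilde d_{\cM^n}}[(A^t_{b_n}-\hat A^t_{b_n})\mathbf{1}\{s\in\Kcal^n\}]$ inside the PC-PG decomposition rather than to the whole value gap $V^*-V^{\pi^{n,t}}$, but either choice yields the stated $2Hm$ contribution (the bound is deliberately loose here).
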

\begin{proof}
	The proof follows exactly the proof of Lemma C.2 in \cite{agarwal2020pc}, except that we note that for iterations in which the assumption is not satisfied, the worst-case loss is bounded:
	\begin{eqnarray}
		\frac{1}{T}\sum_{t=0}^{T-1}\left(\EE_{(s,a)\sim\tilde d_{\mathcal M^n}}\left(A^t_{b_n}(s,a)-\hat A^t_{b_n}(s,a)\right)\mathbf{1}\{s\in\mathcal K^n\}\right)\leq 2H
	\end{eqnarray}
\end{proof}
\begin{proof}[\bf Proof of Theorem \ref{thm:pcpg}]
First of all, we need to upper-bound $m$. The condition in Lemma \ref{thm:cov1} is satisfied as long as $2\epsilon^{(n)}\leq \frac{\lambda}{4N}$ and $K \geq \frac{128N^2\log(8\tilde d/\delta)}{\lambda^2}$. Also note that $\sum_{n=0}^{N-1}\epsilon^{(n)}\leq N\epsilon$, and thus $m$ is at most $\frac{8N^2\epsilon}{\lambda}$.

Also, by Lemma \ref{thm:robust_ols1},
	\begin{align}
\varepsilon^{(n,t)}_{stat} \leq 4 \left(W^2+WH\right)\left(\epsilon^{(n,t)} + \sqrt{\frac{8}{M}\log\frac{4d}{\delta}}\right).
\end{align}
Plugging both into Lemma \ref{thm:pcpg-regret}, we get
	\begin{eqnarray}\label{eq:136}
	V^{*}-V^{\hat\pi}\leq \frac{1}{1-\gamma}\left(2W\sqrt{\frac{\log A}{T}}
	+2\sqrt{\beta\lambda W^2}
	+2\sqrt{4 \left(W^2+WH\right)\beta N \left(\epsilon + \sqrt{\frac{8}{M}\log\frac{4d}{\delta}}\right)}
	+\frac{2I_N(\lambda)}{\beta N} + \frac{16HN^2\epsilon}{\lambda}
	\right)
\end{eqnarray}
Let
\begin{eqnarray}
	T &=& \frac{4W^2\log A}{(1-\gamma)^2\alpha^2}, \qquad
	\lambda = 1, \qquad \beta = \frac{\alpha^2(1-\gamma)^2}{4W^2}, \qquad
	N  = \frac{4W^2d\log(N+1)}{\alpha^3(1-\gamma)^3}\\
	M &=& \frac{2d^2\log^2(N+1)(W^2+WH)^2\log(\frac{4d}{\delta})}{\alpha^6(1-\gamma)^6}, \qquad 
	K = 128N^2\log(8\tilde d/\delta)
\end{eqnarray}
Then, \eqref{eq:136} gives
\begin{eqnarray}
	V^{*}-V^{\hat\pi}\leq 4\alpha + \sqrt{\frac{16(W^2WH)d\log(N+1)}{\alpha(1-\gamma)^3}\epsilon}+\frac{256HW^4d^2\log^2(N+1)}{\alpha^6(1-\gamma)^6}\epsilon
\end{eqnarray}
Let $\alpha = \epsilon^{1/7}$, then
\begin{eqnarray}
	V^{*}-V^{\hat\pi}&\leq& 4\epsilon^{1/7} + \sqrt{\frac{16(W^2WH)d\log(N+1)}{(1-\gamma)^3}}\epsilon^{3/7}+\frac{256HW^4d^2\log^2(N+1)}{(1-\gamma)^6}\epsilon^{1/7}\\
	&\leq& \tilde O(d^2\epsilon^{1/7})
\end{eqnarray}
This concludes the proof.
\end{proof}
\begin{figure*}[ht!]
	\begin{subfigure}{0.305\textwidth}
		\centering
		\includegraphics[width=1\columnwidth]{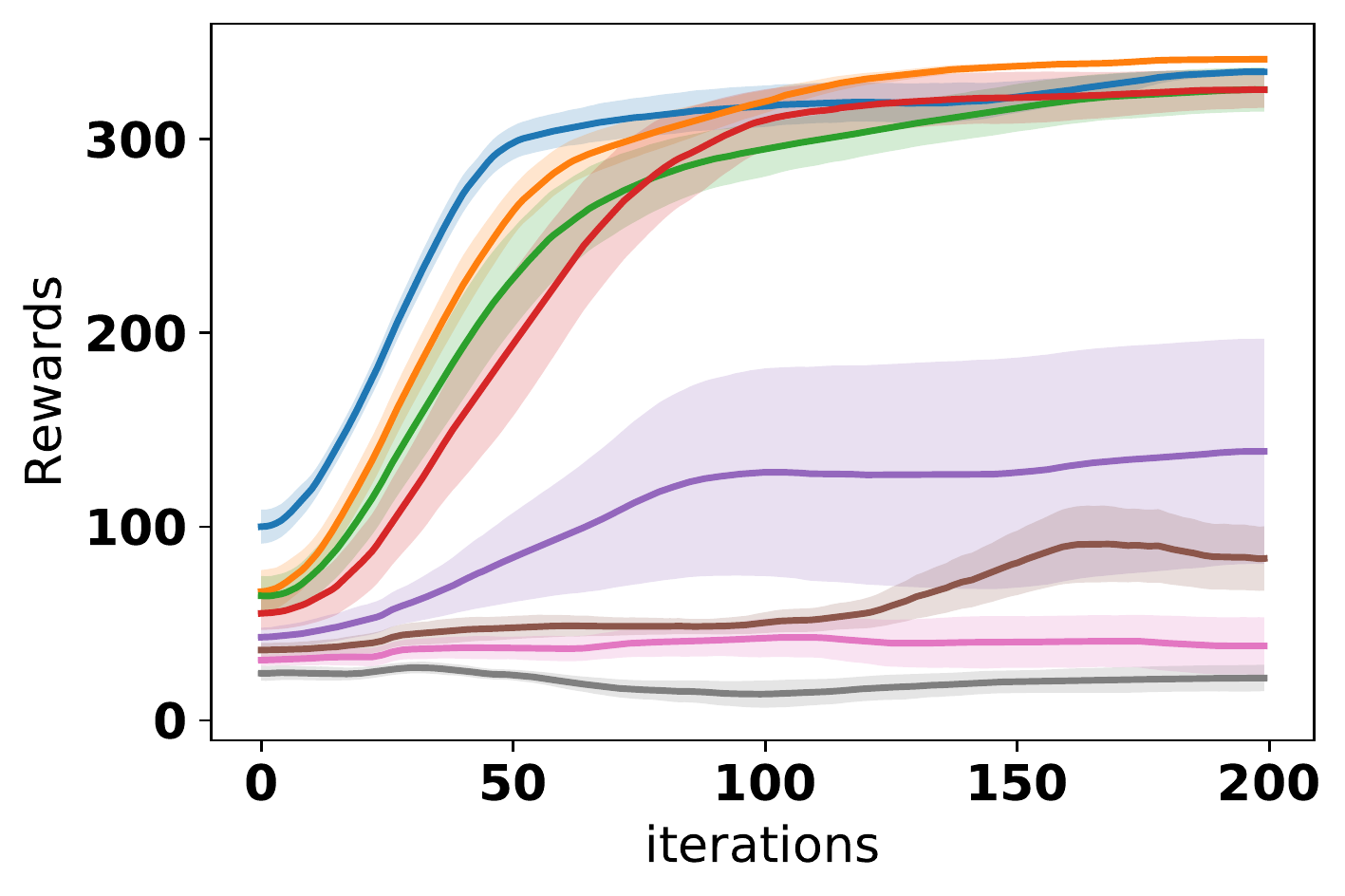}
		\caption{TRPO on Swimmer}
		\label{fig:TRPO_Swimmer}
	\end{subfigure}
	\begin{subfigure}{0.305\textwidth}
		\centering
		\includegraphics[width=1\columnwidth]{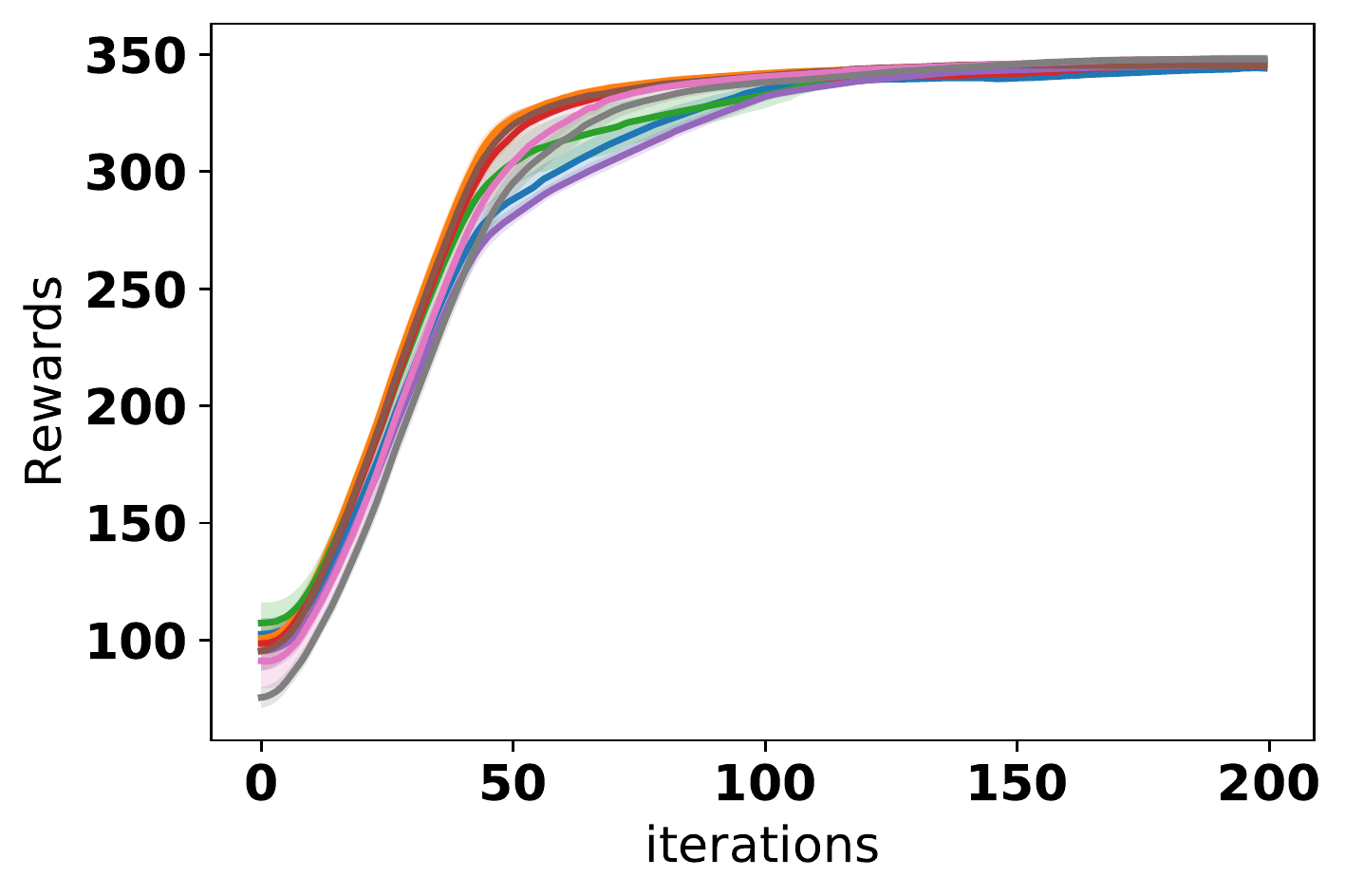}
		\caption{FPG on Swimmer}
		\label{fig:FPG_Swimmer}
	\end{subfigure}
	\begin{subfigure}{0.39\textwidth}
		\centering
		\includegraphics[width=1\columnwidth]{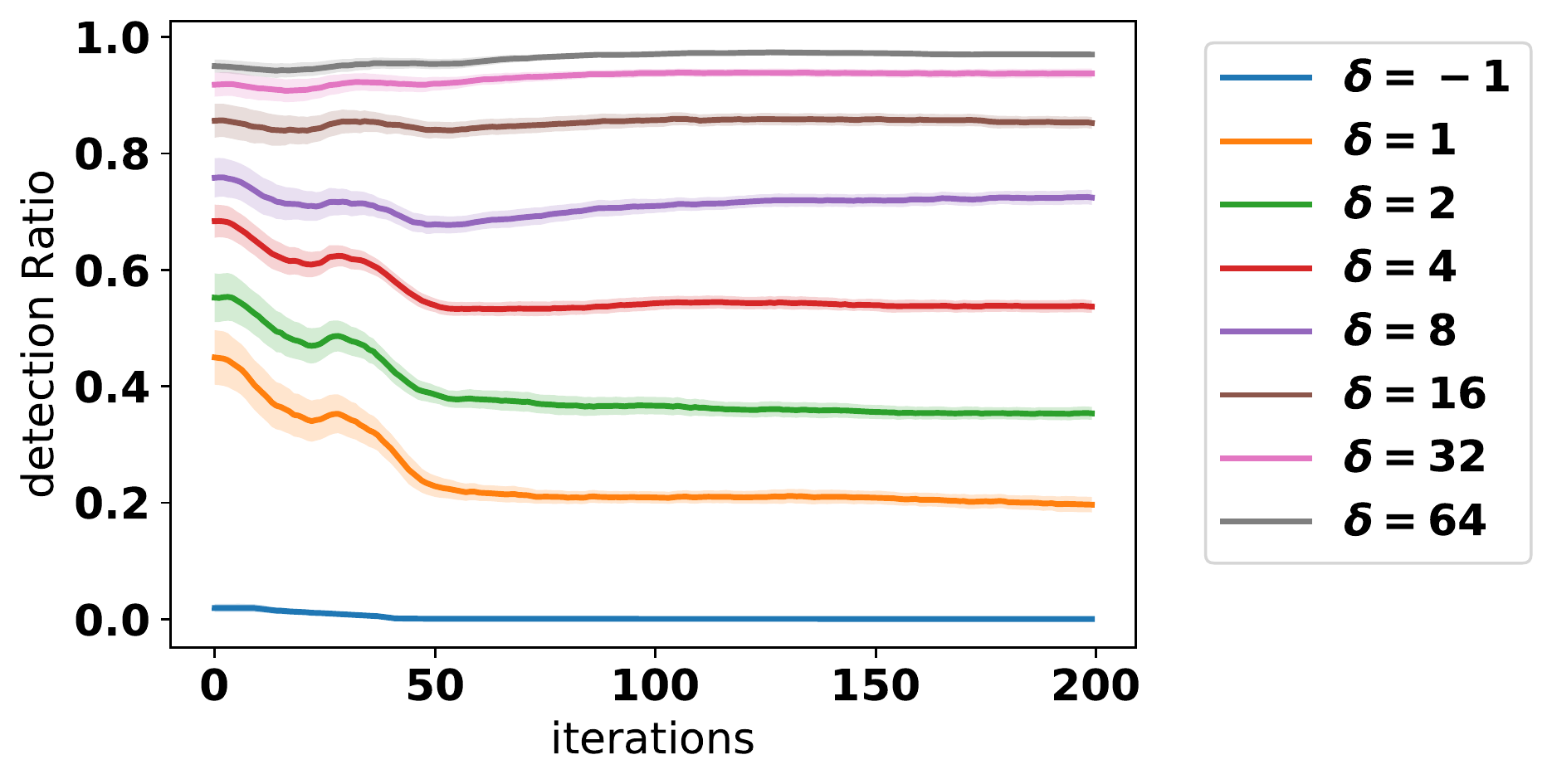}
		\caption{Detection Ratio on Swimmer}
		\label{fig:detection_ratio_Swimmer}
	\end{subfigure}
	\begin{subfigure}{0.305\textwidth}
		\centering
		\includegraphics[width=1\columnwidth]{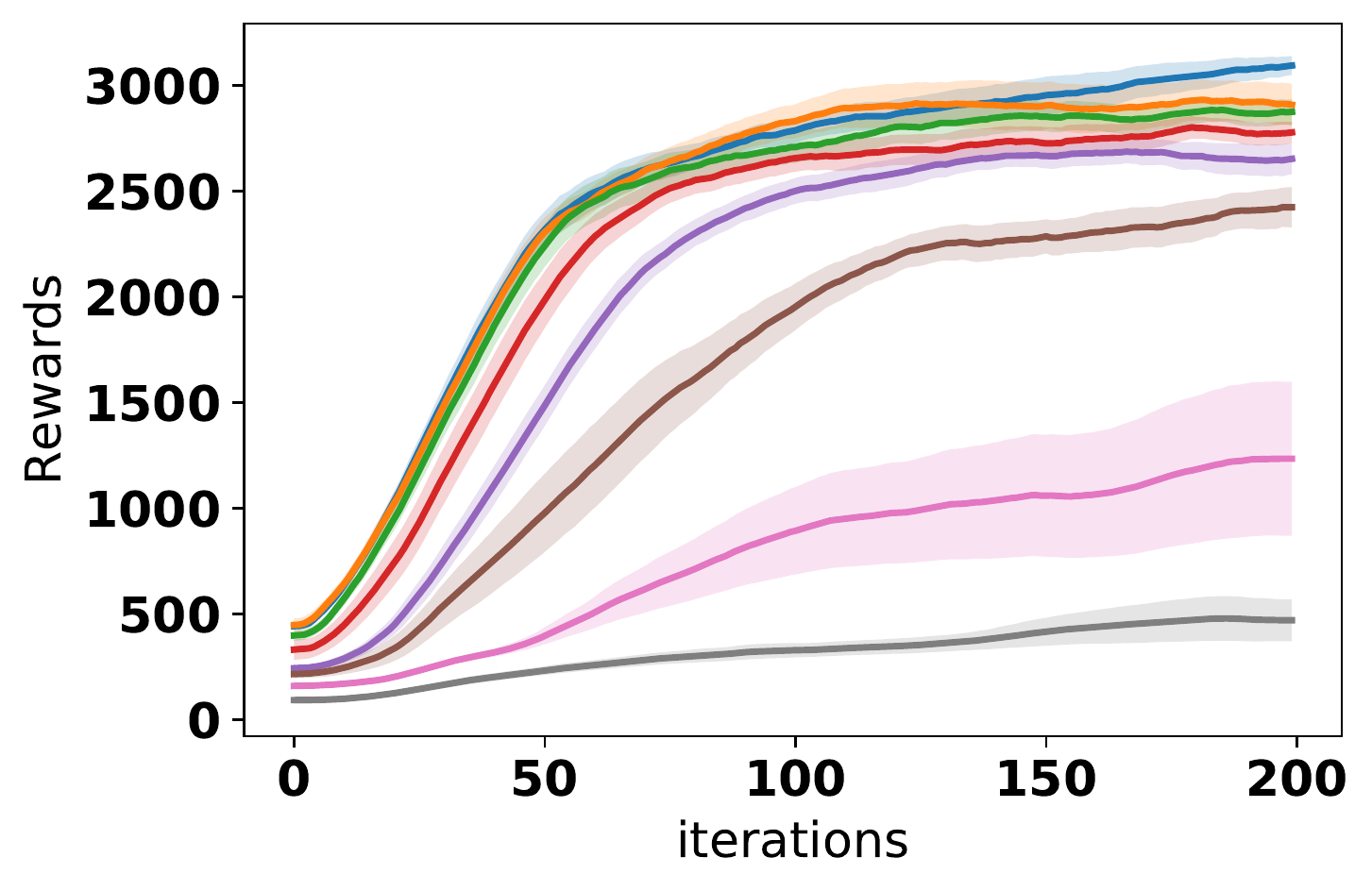}
		\caption{TRPO on Hopper}
		\label{fig:TRPO_Hopper}
	\end{subfigure}
	\begin{subfigure}{0.305\textwidth}
		\centering
		\includegraphics[width=1\columnwidth]{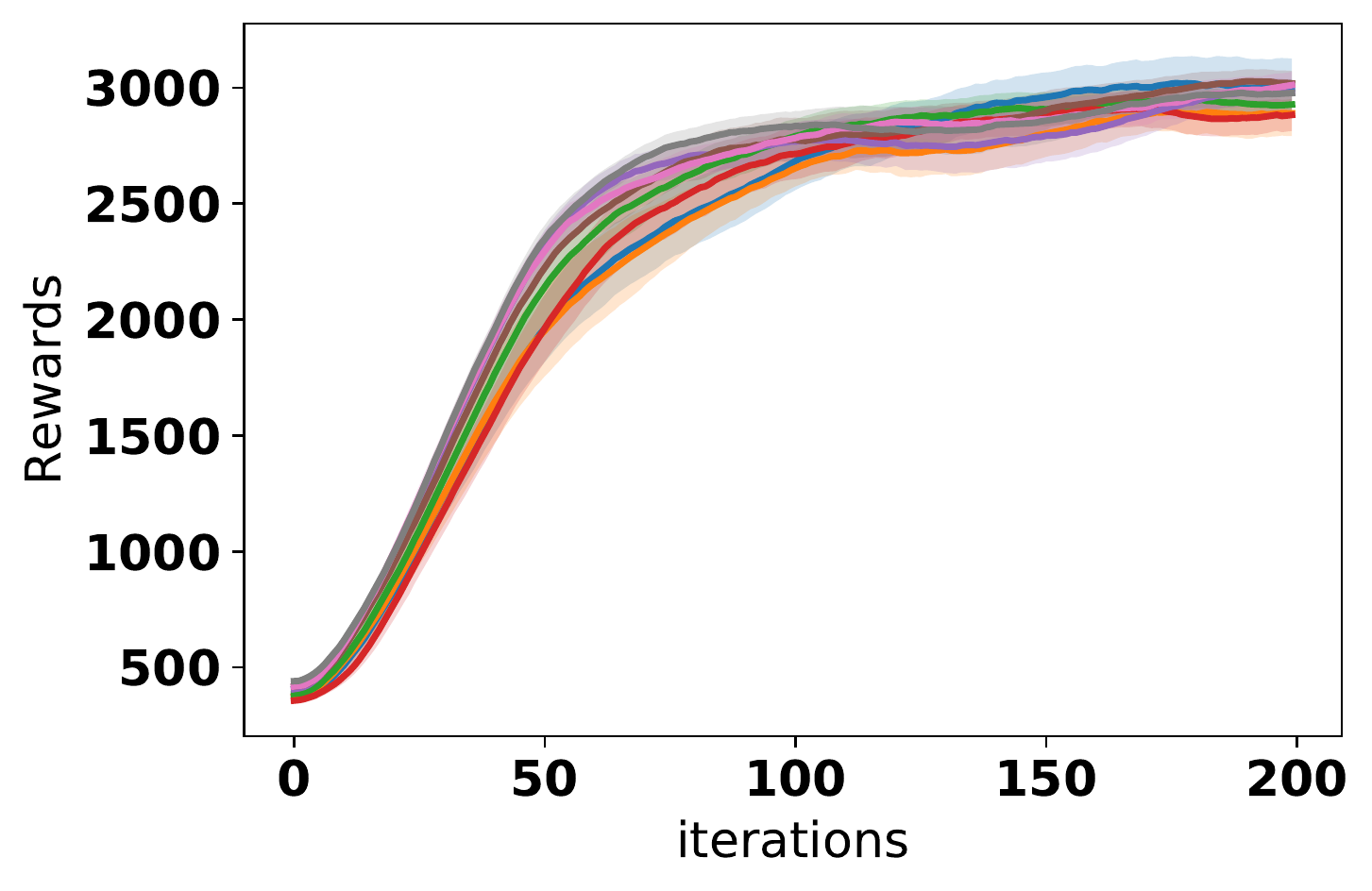}
		\caption{FPG on Hopper}
		\label{fig:FPG_Hopper}
	\end{subfigure}
	\begin{subfigure}{0.39\textwidth}
		\centering
		\includegraphics[width=1\columnwidth]{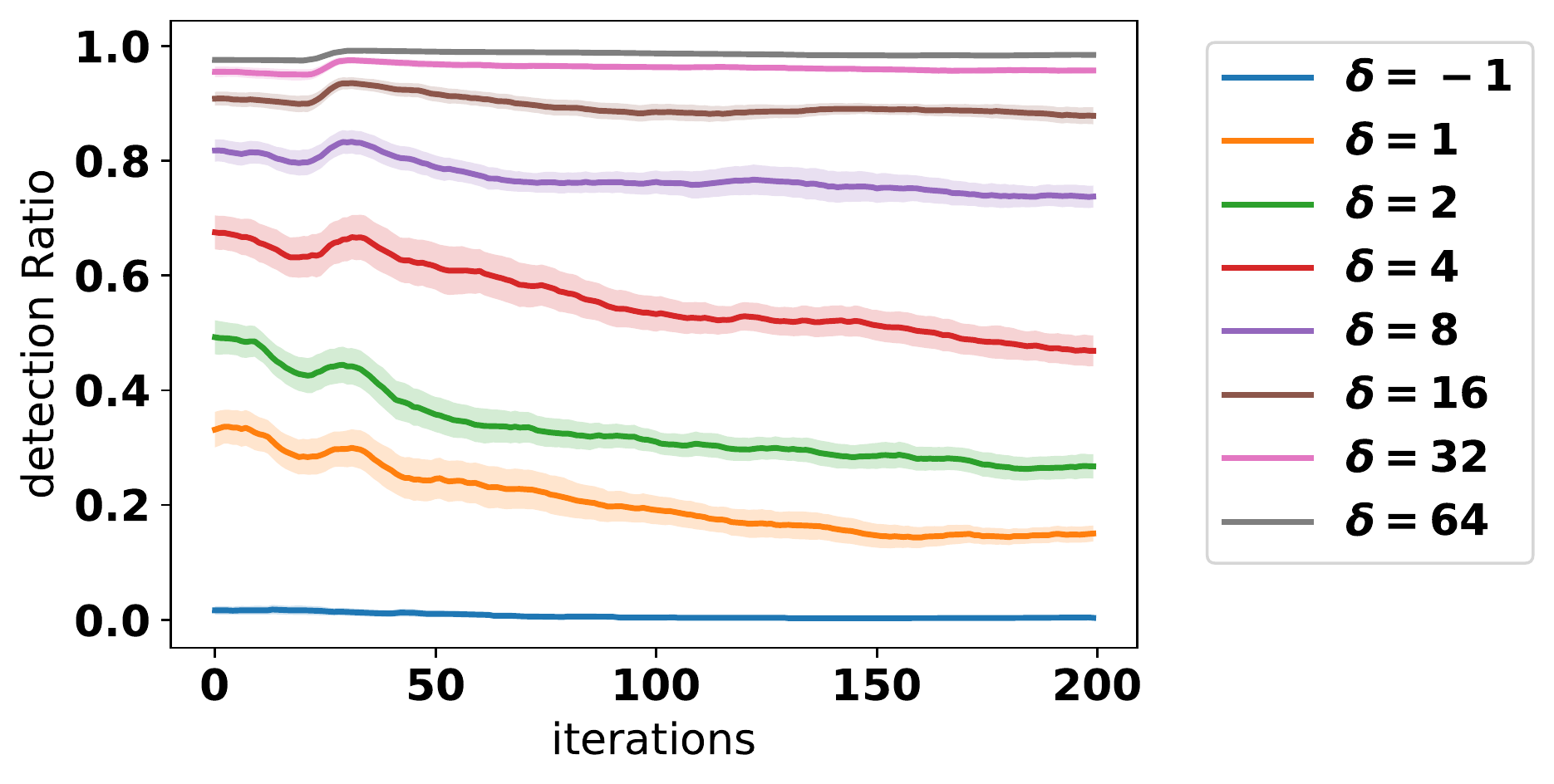}
		\caption{Detection Ratio on Hopper}
		\label{fig:detection_ratio_Hopper}
	\end{subfigure}
	\begin{subfigure}{0.305\textwidth}
		\centering
		\includegraphics[width=1\columnwidth]{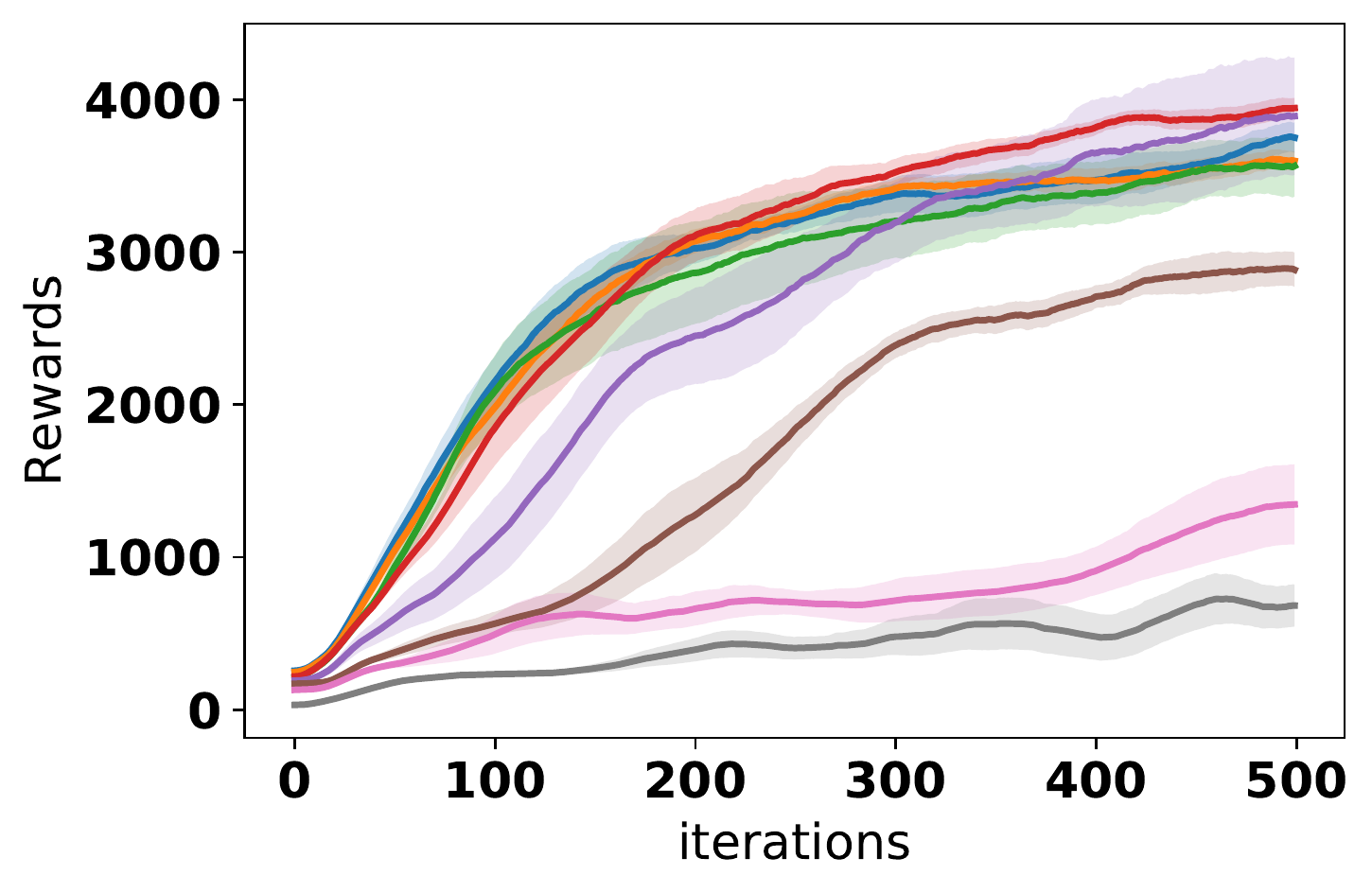}
		\caption{TRPO on Walker2d}
		\label{fig:TRPO_Walker2d}
	\end{subfigure}
	\begin{subfigure}{0.305\textwidth}
		\centering
		\includegraphics[width=1\columnwidth]{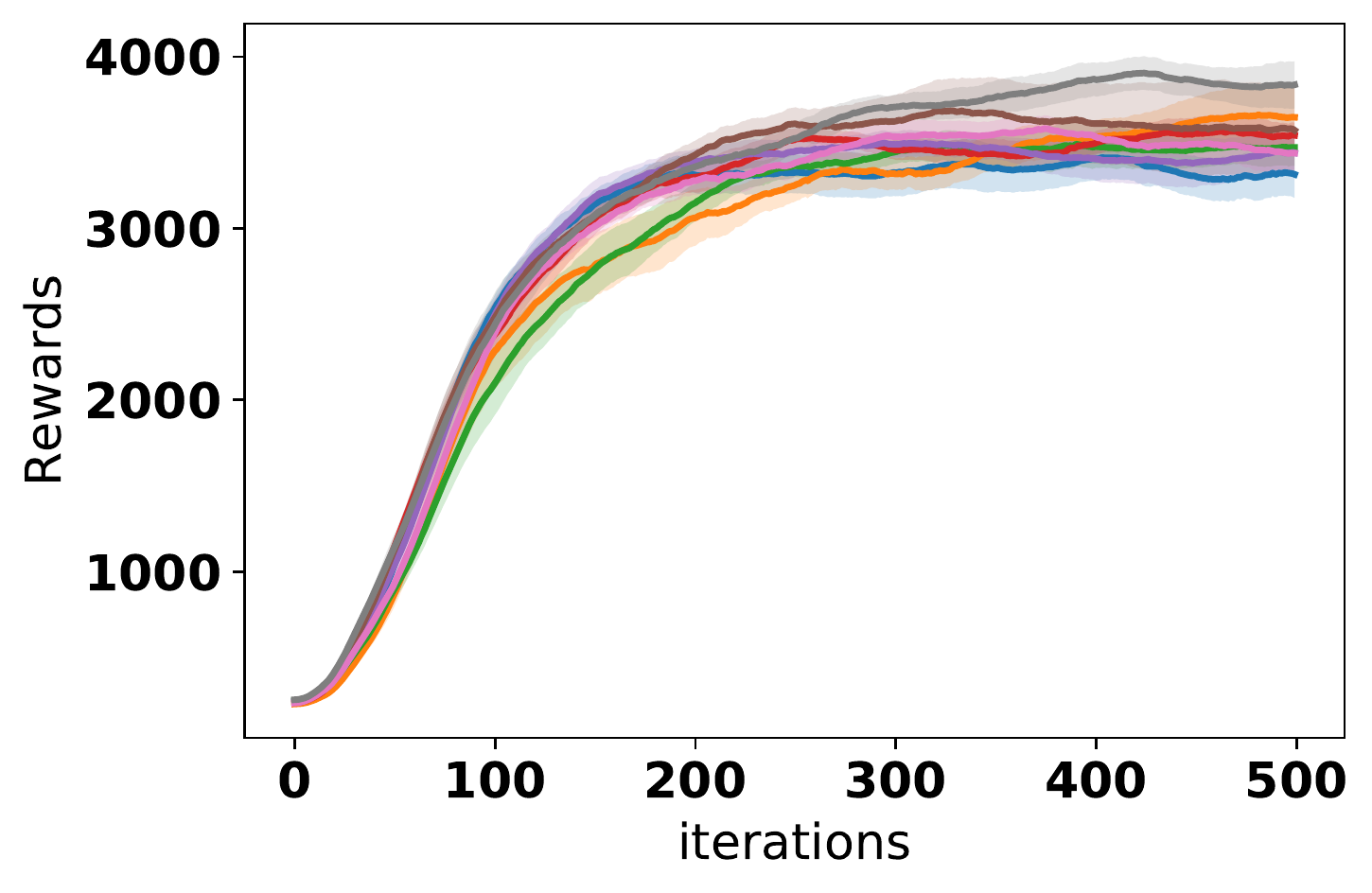}
		\caption{FPG on Walker2d}
		\label{fig:FPG_Walker2d}
	\end{subfigure}
	\begin{subfigure}{0.39\textwidth}
		\centering
		\includegraphics[width=1\columnwidth]{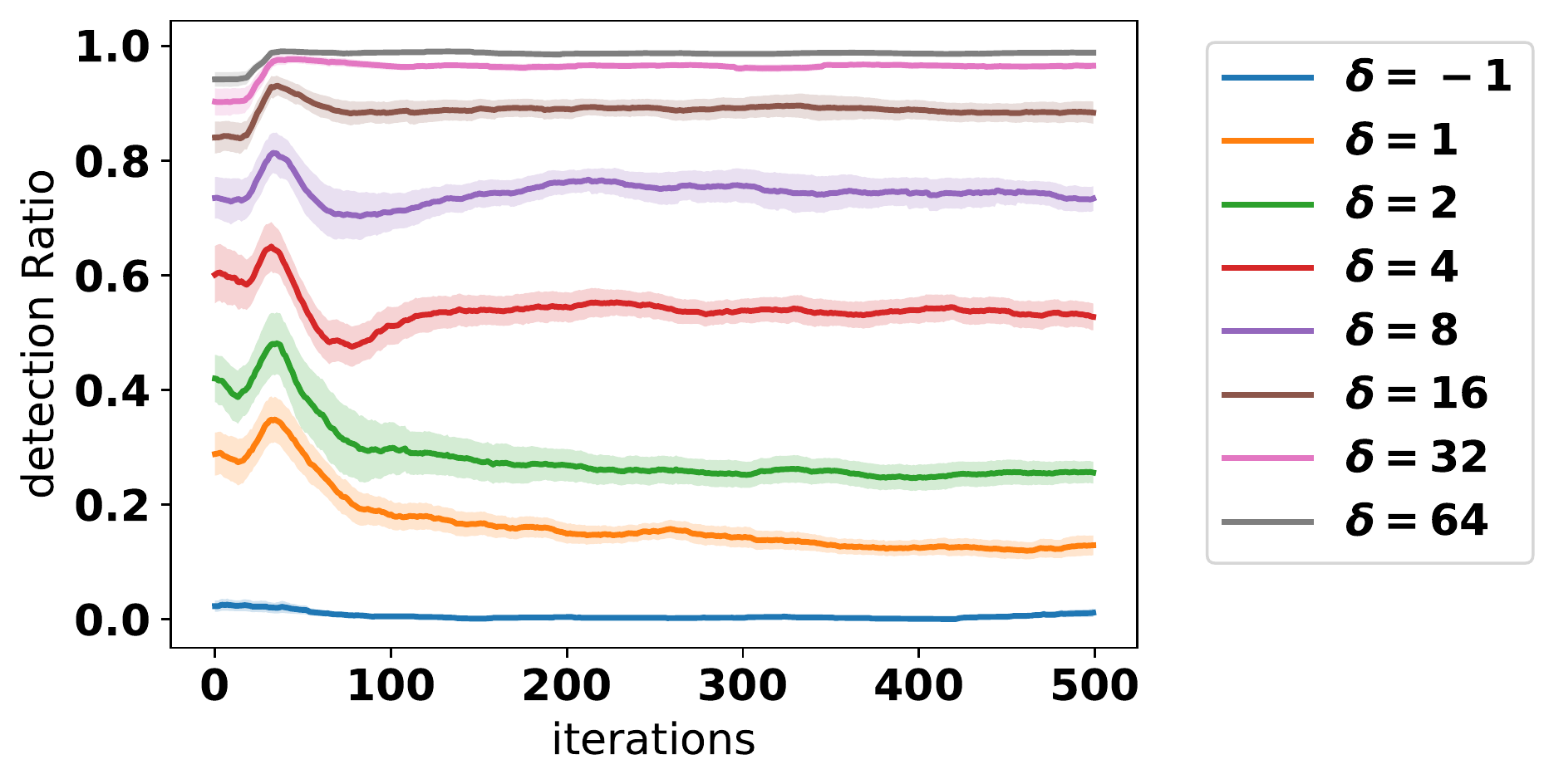}
		\caption{Detection Ratio on Walker2d}
		\label{fig:detection_ratio_Walker2d}
	\end{subfigure}
	\begin{subfigure}{0.305\textwidth}
		\centering
		\includegraphics[width=1\columnwidth]{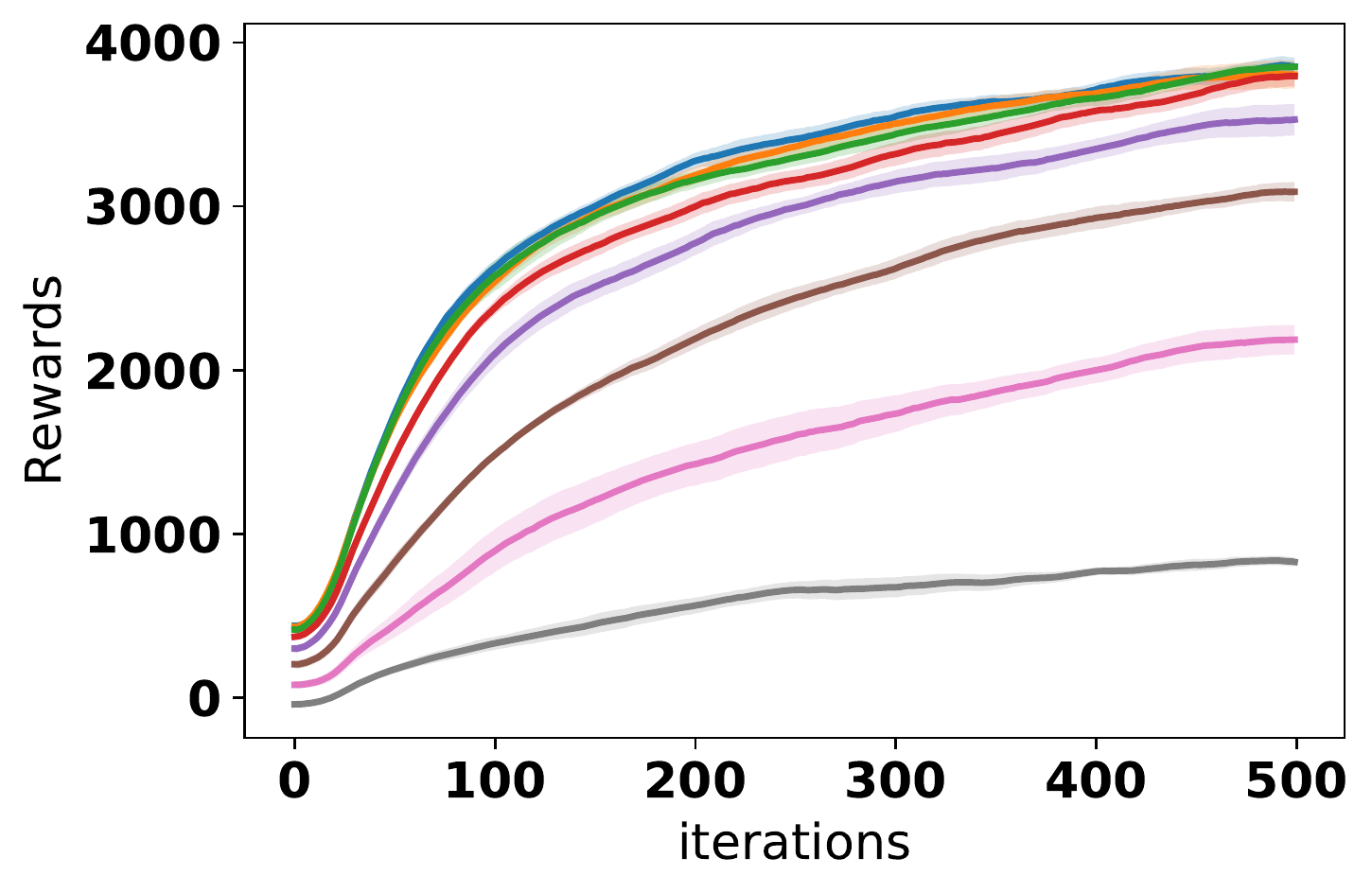}
		\caption{TRPO on HalfCheetah}
		\label{fig:TRPO_HalfCheetah}
	\end{subfigure}
	\begin{subfigure}{0.305\textwidth}
		\centering
		\includegraphics[width=1\columnwidth]{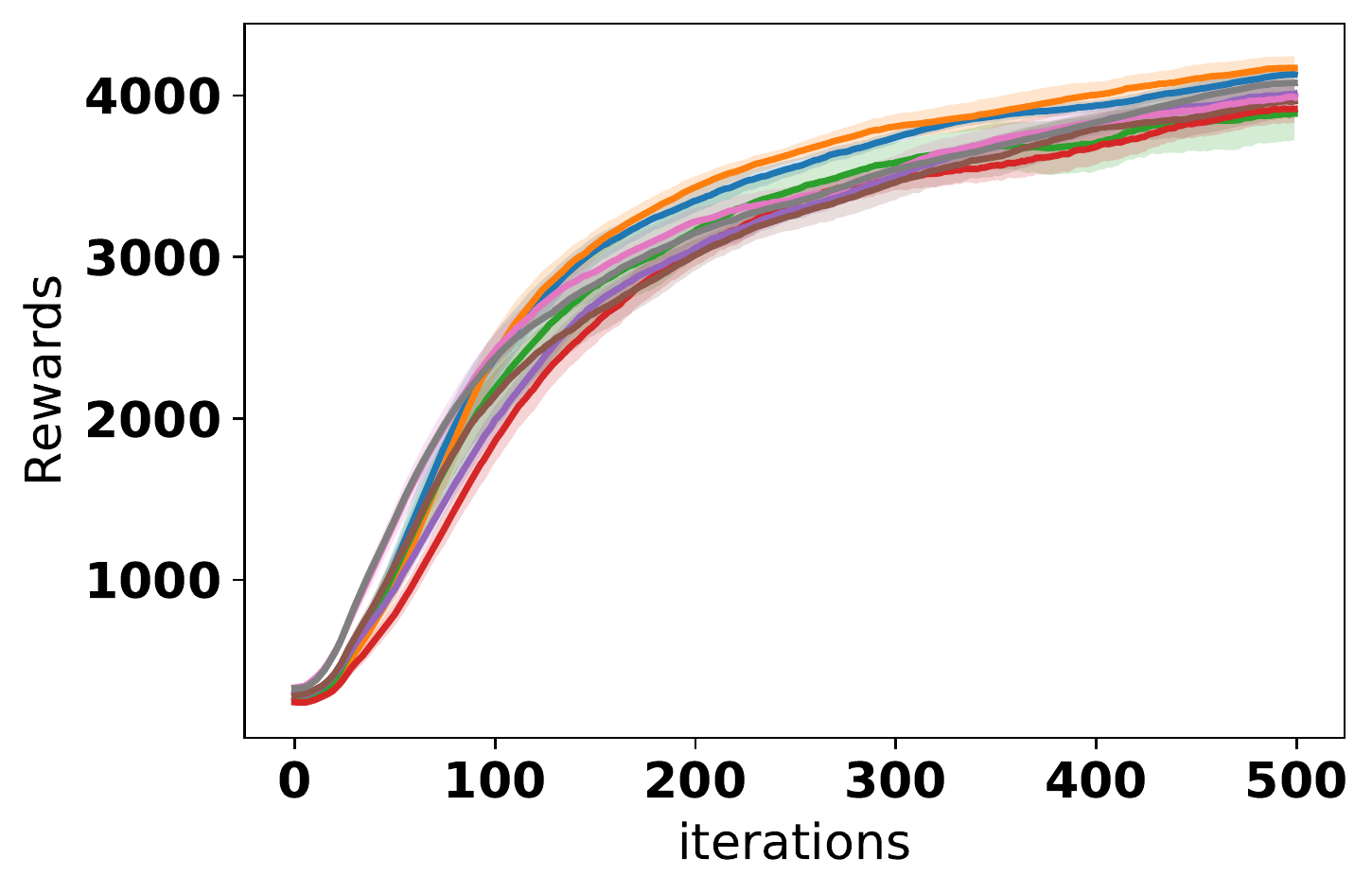}
		\caption{FPG on HalfCheetah}
		\label{fig:FPG_HalfCheetah}
	\end{subfigure}
	\begin{subfigure}{0.39\textwidth}
		\centering
		\includegraphics[width=1\columnwidth]{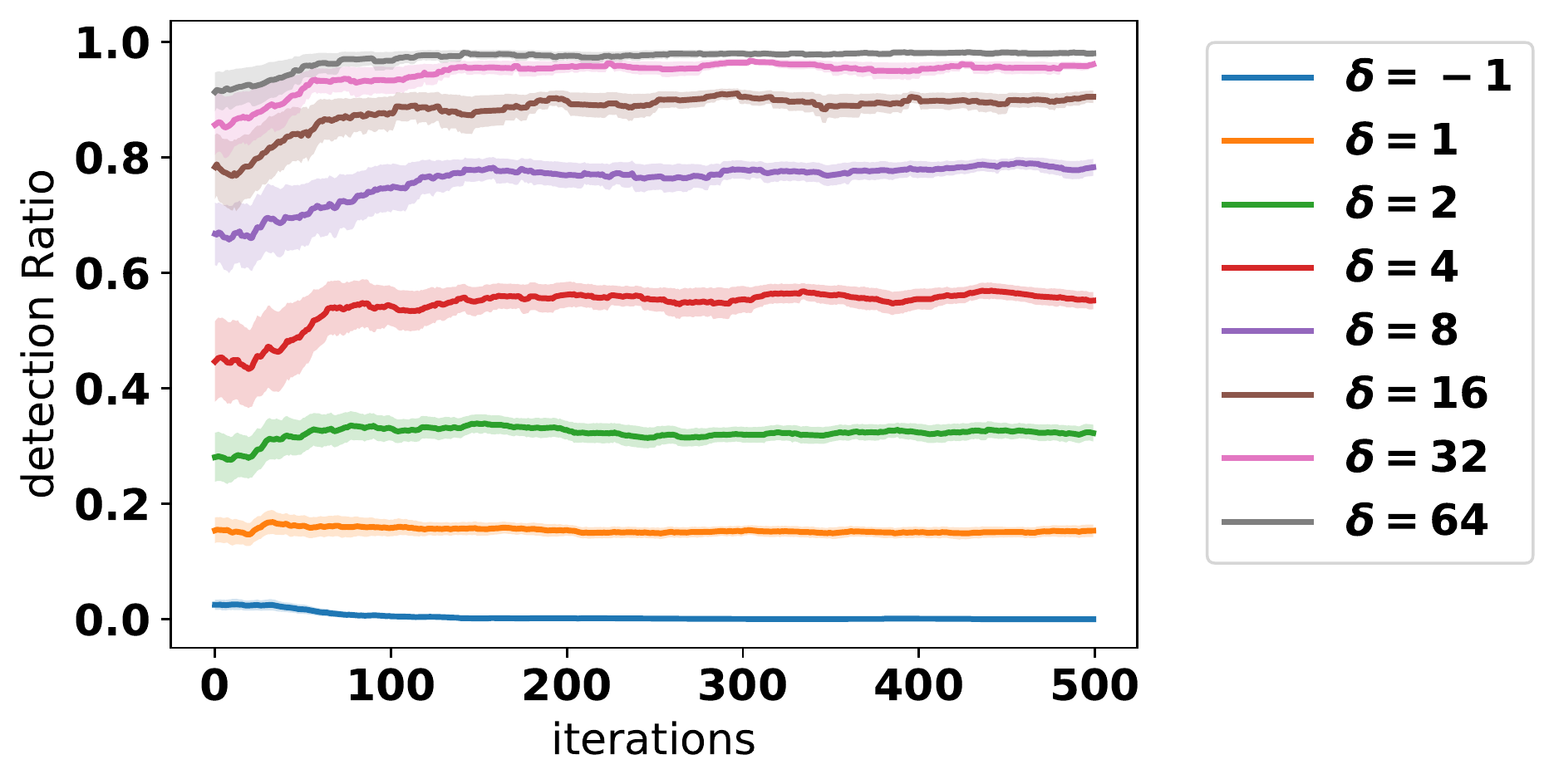}
		\caption{Detection Ratio on HalfCheetah}
		\label{fig:detection_ratio_HalfCheetah}
	\end{subfigure}
	\begin{subfigure}{0.305\textwidth}
		\centering
		\includegraphics[width=1\columnwidth]{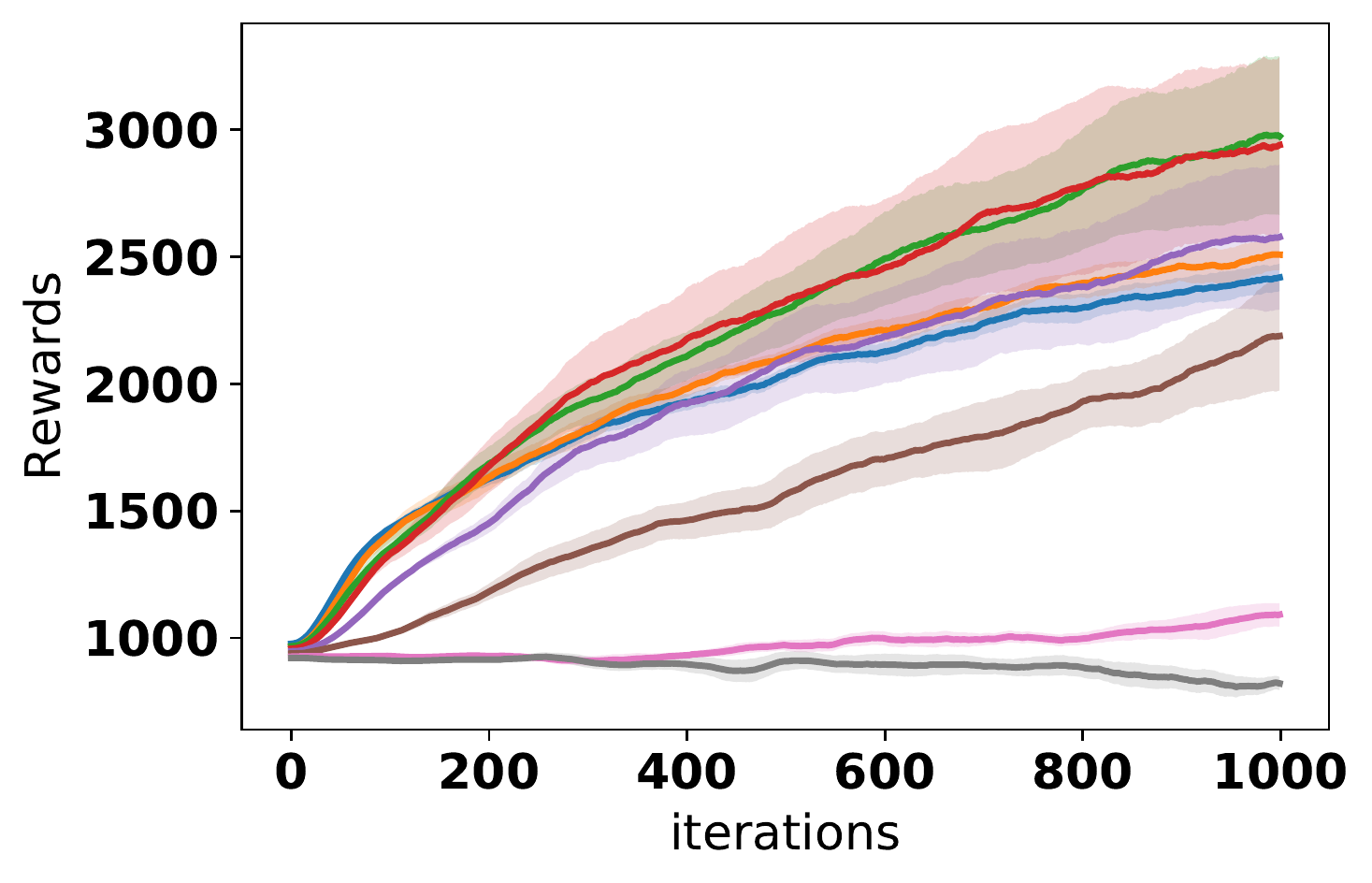}
		\caption{TRPO on Ant}
		\label{fig:TRPO_Ant}
	\end{subfigure}
	\begin{subfigure}{0.305\textwidth}
		\centering
		\includegraphics[width=1\columnwidth]{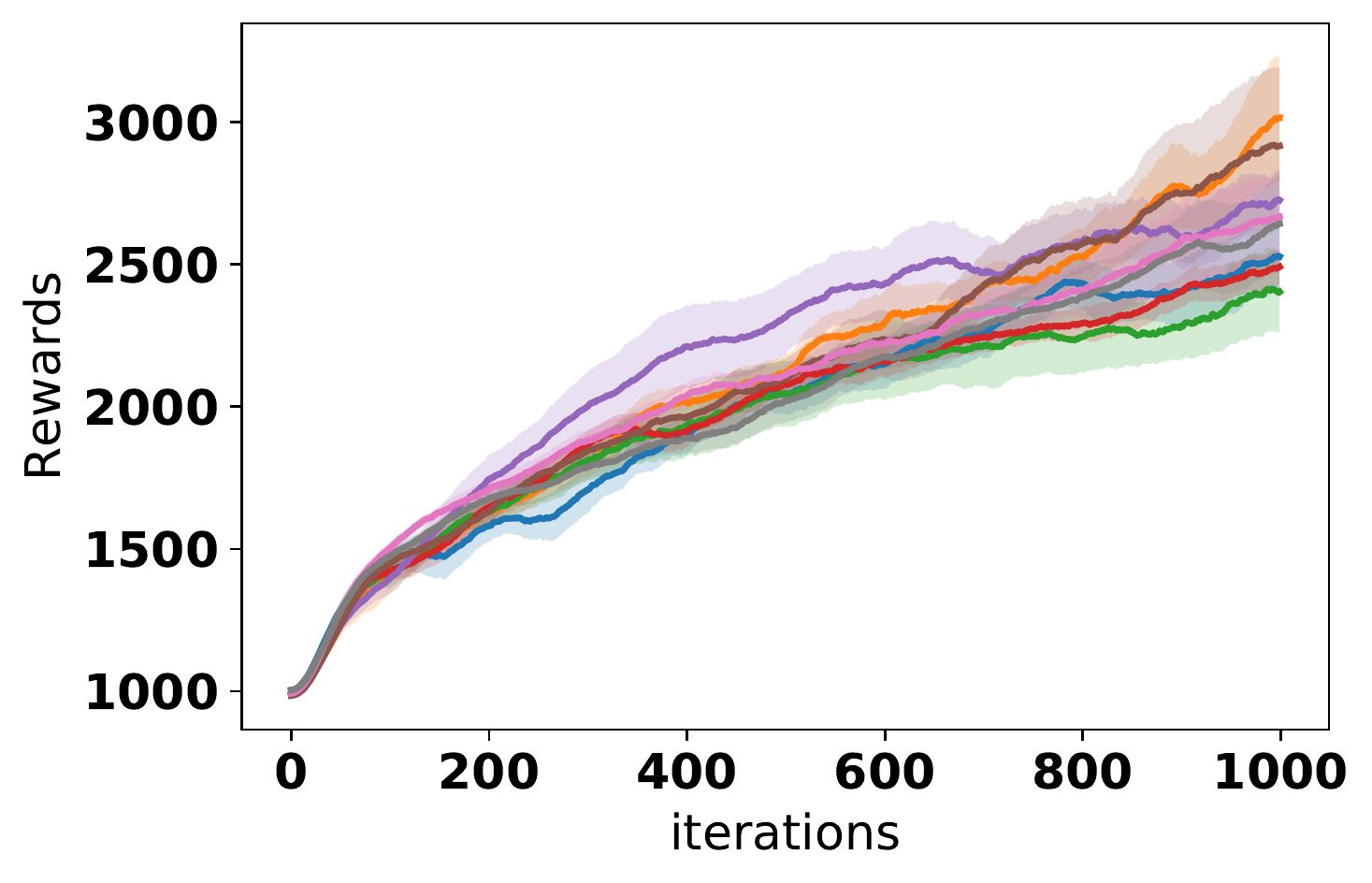}
		\caption{FPG on Ant}
		\label{fig:FPG_Ant}
	\end{subfigure}
	\begin{subfigure}{0.39\textwidth}
		\centering
		\includegraphics[width=1\columnwidth]{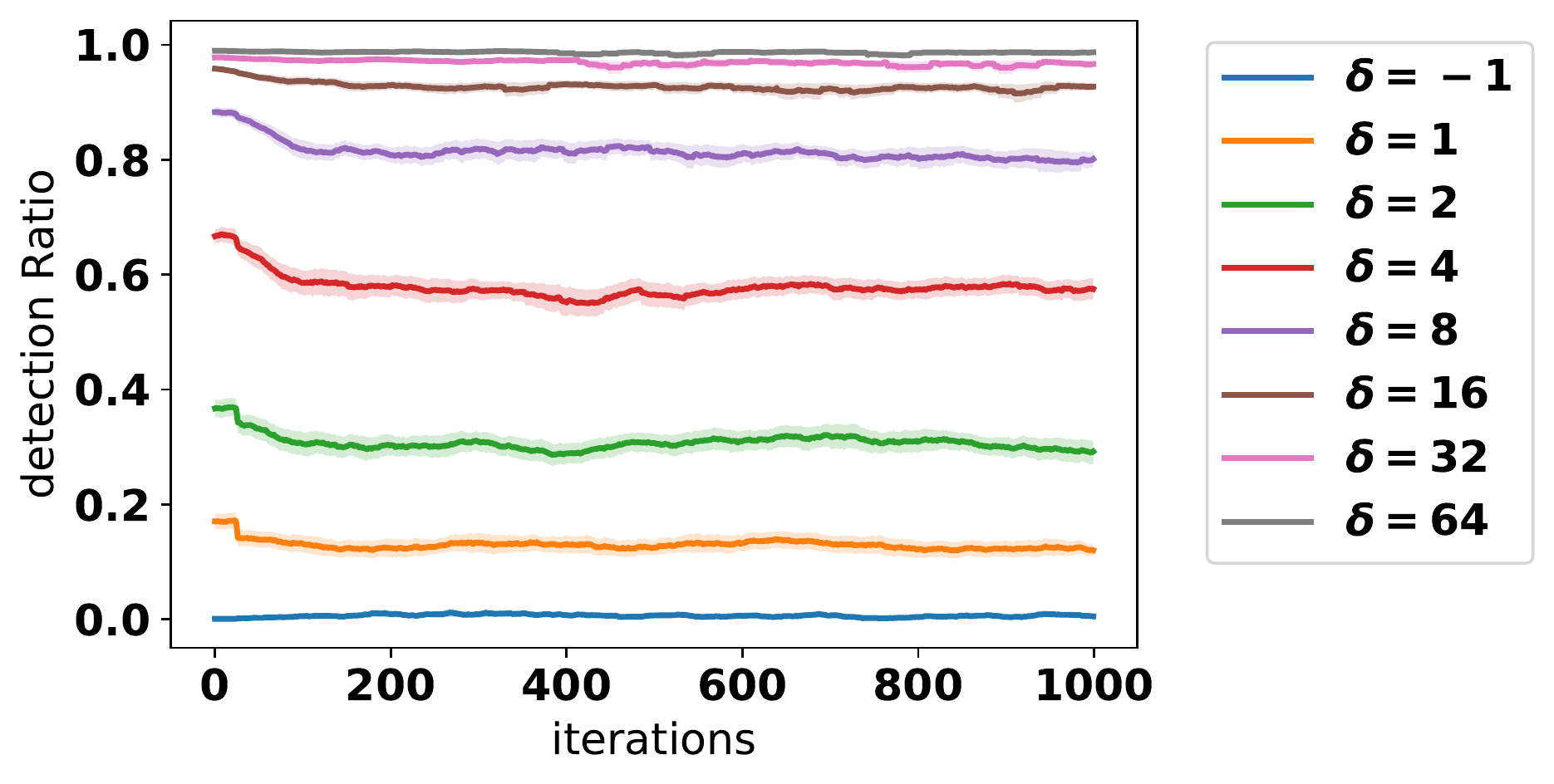}
		\caption{Detection Ratio on Ant}
		\label{fig:detection_ratio_Ant}
	\end{subfigure}
	\begin{subfigure}{0.305\textwidth}
		\centering
		\includegraphics[width=1\columnwidth]{figures/Humanoid-v3_TRPO_reward_plot}
		\caption{TRPO on Humanoid}
		\label{fig:TRPO_Humanoid}
	\end{subfigure}
	\begin{subfigure}{0.305\textwidth}
		\centering
		\includegraphics[width=1\columnwidth]{figures/Humanoid-v3_FPG_reward_plot}
		\caption{FPG on Humanoid}
		\label{fig:FPG_Humanoid}
	\end{subfigure}
	\begin{subfigure}{0.39\textwidth}
		\centering
		\includegraphics[width=1\columnwidth]{figures/Humanoid-v3_FPG_detection_plot}
		\caption{Detection Ratio on Humanoid}
		\label{fig:detection_ratio_Humanoid}
	\end{subfigure}
	\caption{Detailed Results on the MuJoCo benchmarks.}
	\label{fig:full_result}
\end{figure*}

\section{Implementation Details of \texttt{FPG-TRPO}}\label{sec:code}
\begin{algorithm}[!t]
	\begin{algorithmic}[1]	
		\State {\bf{Input:}} initial policy parameter $\theta_0$; initial value function parameter $\phi_0$.
		\State {\bf{Hyperparameters:}} KL-divergence limit $\delta$; backtracking coefficient $\alpha$; maximum number of backtracking steps $K$; upper-bound of corruption level $\epsilon$; episode length $H$; batch size $M$.
		\For{$k=0,1,\ldots$}
		\State Collect set of $M$ trajectories $D_k = \{\tau_i\}_{1:M}$ by running policy $\pi_k = \pi(\theta_k)$ in the environment.
		\State Compute rewards-to-go $\hat R_{t,i} = \sum_{h=t}^H \gamma^{h-t} r_{h,i}$. 
		\State Using GAE to compute advantage estimate $\hat A_{t,i}$ based on the current value function $V_{\phi_k}$.
		\State Compute and save $\hat g_{t,i} = \nabla_\theta \log \pi_\theta(a_{t,i},s_{t,i})\vert_{\theta_k}$ for all $t=1:H$ and $i=1:M$.
		\State Call the filtered conjugate gradient algorithm in Alg.~\ref{alg:fcg} to get $S_k\subset [M]\times [H], \hat x_k = FCG(\hat g_{t,i}, \hat A_{t,i})$.
		\State Compute policy gradient estimate $\hat g_k = \frac{1}{|S_k|}\sum_{(t,i)\in S_k}\hat g_{t,i}\hat A_{t,i}$.
		\State Update the policy by backtracking line search with
		\begin{equation}
			\theta_{k+1} = \theta_k + \alpha^j \sqrt{\frac{2\delta}{\hat x_k \hat g_k}} \hat x_k
		\end{equation}
		where $j\in \{0,1,2,...,K\}$ is the smallest value which improves the sample loss and satisfies the sample KL-divergence constraint.
		\State Fit the value function by regression on mean-squared error on the filtered trajectories $S_k$:
		\begin{equation}
			\phi_{k+1} = \argmin_\phi \frac{1}{|S_k|}\sum_{(t,i)\in S_k}\left(V_\phi(s_{t,i})-\hat R_{t,i}\right)^2
		\end{equation}
		In practice, one often only take a few gradient steps in each iteration $k$, instead of optimizing to convergence.
		\EndFor
	\end{algorithmic}
	\caption{\texttt{FPG-TRPO}}
	\label{alg:fpg_TRPO}
\end{algorithm}

\begin{algorithm}[!t]
	\begin{algorithmic}[1]	
		\State {\bf{Input:}} $\hat g_{t,i}, \hat A_{t,i}$
		\State {\bf{Hyperparameters:}} Number of iterations $r$ (default $r=4$), fraction of data filtered in each iteration $p$ (default $p = \epsilon/2$, i.e. filter out $2\epsilon$ data in total).
		\State Initialize $S = \{1,2,\ldots,M\}$.
		\For{$k=1,\ldots,r$}
		\State Call standard CG to solve for $\hat x = \hat F^{-1}\hat g$, where $\hat F = \frac{1}{S}\sum_{(t,i)\in S} \hat g_{t,i}\hat g_{t,i}^\top$ and $\hat g = \frac{1}{S}\sum_{(t,i)\in S} \hat g_{t,i}\hat A_{t,i}$.
		\State Compute the residues $r_{t,i} = \hat g_{t,i}\hat g_{t,i}^\top\hat x-\hat g_{t,i}\hat A_{t,i}$ for $(t,i)\in S$ and save in a matrix $G$ of size $d\times |S|$.
		\State Let $v$ be the top right singular vector of $G$.
		\State Compute the vector $\tau$ of \emph{outlier scores} defined via
		$\tau_{t,i} = \left(r_{t,i} ^\top v\right)^2$.
		\State Remove $(HMp)$ number of $(t,i)$ pair with the largest outlier scores from $S$.
		\EndFor
		\State Call standard CG one more time and return $(S, \hat x)$.
	\end{algorithmic}
	\caption{Filtered Conjugate Gradient (FCG)}
	\label{alg:fcg}
\end{algorithm}

In the experiment, we use a TRPO variant of FPG implementation, which differs from Alg.~\ref{alg:q_npg_sample} in several ways:
\begin{enumerate}
	\item Most existing TRPO implementation uses the conjugate gradient (CG) method instead of linear regression to solve for the matrix inverse vector product problem. We follow this convention and design FPG-TRPO to use a filtered conjugate gradient (FCG) subroutine to replace the standard CG produce. The FPG procedure is detailed in Alg.~\ref{alg:fcg}. At a high level FCG performs a filtering algorithm (a.k.a. outlier removal)  on the residues of CG with respect to each data point.
	\item Again following existing TRPO implementations, FPG-TRPO builds another network to estimate the value function for the purpose of variance reduction, effectively resulting in an actor-critic algorithm. Instead of performing robust learning procedure on both policy and value function learning, we perform the main filtering algorithm on the policy learning procedure (the CG step discussed above), which also returns a filtered subset of data as a by-product. We then use this filtered subset of data to perform the rest of the learning procedure, including value function update and the sample loss estimation in backtracking line search. This allows us to perform the robust learning procedure only once per PG iteration.
	\item FPG-TRPO uses a deterministic variant of the filtering algorithm suggested in \cite{diakonikolas2019sever}, which empirically performs better and is simpler to implement than the stochastic variant used for theoretical analysis. Specifically, the filtering algorithm will simply remove a fixed fraction of points with the largest deviation along the top singular value direction (step 9 of Alg.~\ref{alg:fcg}).
\end{enumerate} 
The pseudo-code of \texttt{FPG-TRPO} can be found in Alg.~\ref{alg:fpg_TRPO}. Similar to the NPG variant of FPG, the only difference between Alg.~\ref{alg:fpg_TRPO} and a standard TRPO implementation is the replacement of the CG subroutine with the FCG subroutine. This modular implementation allows one to easily replace Alg.~\ref{alg:fcg} with any state-of-the-art robust CG procedure in the future. Table \ref{table:hyperparameters} lists all the hyper-parameters we used in our experiments, which are taken from open-source implementations of TRPO tuned for the MuJoCo environments. Our code to reproduce the experiment result is included in the supplementary material and will be open-sourced. Finally, Figure \ref{fig:full_result} presents the detailed results on all experiments, completing the partial results shown in Figure \ref{fig:humanoid_result}.
\begin{table*}[t]
	\centering
	\begin{tabular}{| l | c | l |} 
		\hline
		Parameters & Values & Description\\ [0.5ex] 
		\hline\hline
		$\gamma$ & $0.995$ & discounting factor.\\
		$\lambda$ & 0.97 & GAE parameter \cite{schulman2015high}.\\
		l2-reg & $0.001$ & L2 regularization weight in value loss.\\
		$\delta$ & $0.01$ & KL constraint in TRPO.\\
		damping & $0.1$ & damping factor in conjugate gradient.\\
		batch-size & 25000 & number of time steps per policy gradient iteration.\\
		$\alpha$ & 0.5 & backtracking coefficient.\\
		$K$ & 10 & maximum number of backtracking steps.\\
		\hline
	\end{tabular}
	\caption{Hyperparameters for FPG-TRPO.}
	\label{table:hyperparameters}
\end{table*}
\end{document}